\newcommand{\myCodeFolder}{art}
\newcommand{\R}{\mathbb{R}}
\newcommand{\Prob}{\mathbb{P}}
\newcommand{\Ex}{\mathbb{E}}
\newcommand{\Var}{\operatorname{Var}}
\newcommand{\Cov}{\operatorname{Cov}}
\renewcommand{\Pr}{\operatorname{Pr}}
\newcommand{\tp}{{\scriptscriptstyle\operatorname{T}}} 
\newcommand{\diag}{\operatorname{diag}}
\newcommand{\Ohp}{\mathcal{O}_{\mathbb{P}}}
\newcommand{\Oh}{\mathcal{O}}
\newcommand{\ohp}{\mathsf{o}_{\mathbb{P}}}
\newcommand{\oh}{\mathsf{o}}
\newcommand{\op}{\operatorname{op}}
\newcommand{\frob}{\operatorname{F}}
\newcommand{\trace}{\operatorname{trace}}
\newcommand{\vech}{\operatorname{vech}}
\newcommand{\wh}[1]{\widehat{#1}}
\newcommand{\wt}[1]{\widetilde{#1}}
\newcommand{\emphh}[1]{\emph{#1}}
\newcommand{\Nset}[1]{\llbracket #1 \rrbracket}
\newcommand{\membmtx}{\mathbb{M}}
\newcommand{\meanmtx}{B}
\newcommand{\stdevmtx}{S}
\newcommand{\medianmtx}{\breve{B}}
\newcommand{\numSample}{m}
\newcommand{\intermRank}{\widetilde{R}_{Ni}^{\prime}}
\newcommand{\rank}{\operatorname{rank}}
\newcommand{\rankspec}{\widetilde{B}}
\newcommand{\rankspecBig}{\Theta \widetilde{B} \Theta^{\tp}}
\newcommand{\ranksdev}{\widetilde{S}}
\newcommand{\nrk}[1]{\widetilde{R}_{#1}}
\newcommand{\nrkmtx}{\widetilde{R}_{A}}
\newcommand{\sampMtx}{\widehat{M}}
\newcommand{\popMtx}{M}
\newcommand{\noiseMtx}{E}
\newcommand{\sampVecs}{\widehat{U}}
\newcommand{\popVecs}{U}
\newcommand{\sampVals}{\widehat{\Lambda}}
\newcommand{\popVals}{\Lambda}
\newcommand{\Id}{\operatorname{Id}}
\newcommand{\blockmodel}{(\mathcal{F}^{K}, \Theta^{n \times K})}
\newcommand{\PermSet}{\mathbb{J}}
\newcommand{\AsyNormVar}{\widetilde{Z}}
\newcommand{\epsP}{\nu} 
\newcommand{\epsK}{\epsilon} 
\def\@opargbegintheorem#1#2#3{\trivlist
	\item[]{\bfseries #1\ #2\ (#3)} \itshape}
\begin{document}

\title{Robust spectral clustering with rank statistics}

\author{\name Joshua Cape \email jrcape@wisc.edu \\
	\addr Department of Statistics\\
	University of Wisconsin--Madison\\
	Madison, WI 53706, USA
	\AND
	\name Xianshi Yu \email xyu384@wisc.edu \\
	\addr Department of Computer Sciences\\
	University of Wisconsin--Madison\\
	Madison, WI 53706, USA
	\AND
	\name Jonquil Z. Liao \email zliao42@wisc.edu \\
	\addr Department of Statistics\\
	University of Wisconsin--Madison\\
	Madison, WI 53706, USA \\
}

\editor{My editor}

\maketitle

\begin{abstract}
	This paper analyzes the statistical performance of a robust spectral clustering method for latent structure recovery in noisy data matrices. We consider eigenvector-based clustering applied to a matrix of nonparametric rank statistics that is derived entrywise from the raw, original data matrix. This approach is robust in the sense that, unlike traditional spectral clustering procedures, it can provably recover population-level latent block structure even when the observed data matrix includes heavy-tailed entries and has a heterogeneous variance profile. Here, the raw input data may be viewed as a weighted adjacency matrix whose entries constitute links that connect nodes in an underlying graph or network.

Our main theoretical contributions are threefold and hold under flexible data generating conditions. First, we establish that robust spectral clustering with rank statistics can consistently recover latent block structure, viewed as communities of nodes in a graph, in the sense that unobserved community memberships for all but a vanishing fraction of nodes are correctly recovered with high probability when the data matrix is large. Second, we refine the former result and further establish that, under certain conditions, the community membership of any individual, specified node of interest can be asymptotically exactly recovered with probability tending to one in the large-data limit. Third, we establish asymptotic normality results associated with the truncated eigenstructure of matrices whose entries are rank statistics, made possible by synthesizing contemporary entrywise matrix perturbation analysis with the classical nonparametric theory of so-called simple linear rank statistics. Collectively, these results demonstrate the statistical utility of rank-based data transformations when paired with spectral techniques for dimensionality reduction. Numerical examples illustrate and support our theoretical findings. Additionally, for a dataset consisting of human connectomes, our approach yields parsimonious dimensionality reduction and improved recovery of ground-truth neuroanatomical cluster structure. We conclude with a discussion of extensions, practical considerations, and future work.
\end{abstract}

\begin{keywords}
	Clustering; dimensionality reduction; eigenvector estimation; network embedding; nonparametric statistics.
\end{keywords}

\section{Introduction and background}
\label{sec:introduction}
This paper focuses on spectral methods for data analysis, namely tools and algorithms based on factorizations, truncations, and perturbations of large data matrices. We primarily consider matrices that are real-valued and symmetric which arise naturally, for example, when quantifying pairwise interactions between distinct entities, as with general (dis)similarity matrices in multivariate statistics and adjacency matrices for undirected graphs in network analysis. In such cases, the umbrella term `spectral methods' commonly refers to the behavior of matrix eigenvalues and eigenvectors, collectively comprising the spectral decomposition, or equivalently, the matrix eigendecomposition.

Within the realm of spectral methods, embedding and clustering are popular techniques that, broadly speaking, refer to producing and subsequently analyzing low-dimensional representations of large, complex data \citep{shi2000normalized,mcsherry2001spectral,ng2002spectral}. Considerable effort has been devoted to studying the statistical foundations of spectral clustering and embedding, with early works including \cite{balakrishnan2011noise,rohe2011spectral}. Multiple surveys and monographs now exist, providing an overview of spectral graph clustering \citep{von2007tutorial} and applications of spectral methods in data science \citep{chen2021spectral}. Interest in these topics extends across computer science, machine learning, engineering, and beyond, amassing methods and know-how found in recent review articles such as \cite{cui2018survey,chen2020graph}.

Spectral embedding and clustering are widely-employed techniques for exploratory data analysis that simultaneously provide a starting point for subsequent inference. Given their practical flexibility and computational feasibility, spectral embedding and clustering are often applied to matrix-valued representations of graph or network data and to large data matrices more generally. The rows and columns of such data matrices are frequently conceptualized as indexing nodes (i.e.,~vertices; entities of interest), whose connectivity patterns are reflected in the matrix entries, frequently conceptualized as links (i.e.,~edges; connections). As such, many works investigate spectral techniques in the context of random graph models and statistical network analysis \citep{le2016optimization,ali2018improved,abbe2020entrywise,paul2020spectral,zhang2020detecting,arroyo2021inference,loffler2021optimality,noroozi2021estimation,fan2022simple}, surveyed in part in \cite{abbe2018community,athreya2018statistical}. Applications of spectral embedding and clustering methodology span the physical, natural, and engineering sciences \citep{rohe2016co,liu2018global,priebe2019two}. Collectively, these works form a backdrop for the present paper.

Recent years have witnessed mounting interest in robust data analysis techniques within statistical machine learning, such as in the study of low-rank matrix estimation \citep{elsener2018robust,fan2021shrinkage}, covariance matrix estimation \citep{fan2018eigenvector,minsker2018sub,zhang2022heteroskedastic}, high-dimensional factor models \citep{fan2019farmtest,fan2021robust}, and robust clustering \citep{jana2023adversariallyARXIV,jana2024generalARXIV}. In theoretical computer science, there has been particular interest in adversarially robust algorithms \citep{chen2011robust,yi2016fast,yin2018byzantine} and learning mixture distributions whose component distributions may have heavy tails \citep{dasgupta2005learning,chaudhuri2008beyond,diakonikolas2023algorithmic}. Correspondingly, growing interest has arisen in pursuit of robust spectral methods. The need for such methods is apparent since in practice data may be structured yet contain outliers, artifacts, or significant heterogeneity more generally, qualities that can drastically degrade the performance of non-robust, baseline data analysis procedures. Still, to date, comparatively few works in the literature address robustness considerations in the context of spectral clustering and embedding. Existing works on robust spectral clustering include \cite{li2007noise,chang2008robust,zhu2014constructing,bojchevski2017robust,srivastava2023robust} and collectively contribute novel methodology and algorithms. Still, with the exception of works such as \cite{cai2015robust,srivastava2023robust}, theoretical guarantees for recently proposed methods are largely unavailable in the literature.

\subsection{Overview and contributions of this paper}
\label{sec:contribution}
The principal novelty of this paper is to establish theoretical performance guarantees for a user-friendly parameter-free robust spectral clustering paradigm based on the entrywise rank transformation $a_{ij} \mapsto \operatorname{rank}(a_{ij})/(N+1)$. We establish consistency results for recovering latent block structure from observed data, in the form of finite-sample error bounds that hold with high probability and go to zero in the large-matrix limit. Our results apply to a broad range of model settings, including \textsf{Cauchy} and \textsf{Pareto} heavy-tailed data generating distributions. We further develop distributional properties associated with the proposed methodology, via large-sample asymptotic normality results for expressions involving normalized rank statistics. Collectively, our contributions quantify large-sample patterns observed in point cloud representations of data embeddings derived from large data matrices. Code for this paper is available online on the first author's webpage.

Throughout this paper, the raw input data matrix $A = (a_{ij})$ can be interpreted as a weighted adjacency matrix. From $A$, we derive a matrix of normalized rank statistics, $\nrk{A}$, which itself can be viewed as a weighted adjacency matrix but for an implicit dependent-edge inhomogeneous undirected random graph. The vast majority of existing works on spectral clustering assume entrywise (conditional) independence of matrix elements, modulo matrix symmetry. In contrast, matrices of normalized rank statistics exhibit inherent entrywise dependencies, with heterogeneous pairwise covariances, three-wise dependence, four-wise dependence, etc., leading to additional technical challenges. Our proof analysis tracks entrywise dependencies among normalized rank statistics and establishes that dependence does not preclude probabilistic concentration or statistical consistency. Furthermore, this paper rigorously studies the truncated eigenstructure of matrices comprised of rank statistics, a non-traditional and seemingly under-explored topic.

A key ingredient that underpins our main results is a combinatorial bound for the higher-order trace of a matrix of centered rank statistics, presented in \cref{lem:ms_trace} in \cref{sec:pre_consistency}. To establish weak consistency in \cref{sec:weak_consistency}, we apply this ingredient together with a modified version of the clustering analysis paradigm in \cite{lei2015consistency} which employs the Davis--Kahan subspace perturbation theorem together with the analysis of solutions to the $(1 + \epsK)$-approximate $k$-means clustering problem. In \cref{sec:strong_consistency}, we then consider a more refined level of granularity when analyzing the row-wise perturbations of vectors comprising data embeddings, in order to quantify the large-sample behavior of individual node representations, leading to node-specific asymptotic perfect clustering guarantees. Proving node-wise strong consistency requires a well-chosen step-by-step matrix perturbation analysis to which we apply a suite of concentration inequalities and bounds, notably the expectation bound in \cref{lem:ho_cctrtn}. Thereafter, in \cref{sec:asymptotic_normality}, we establish component-wise and row-wise asymptotic normality for the proposed robust rank-based embedding. Our strong consistency and asymptotic normality results are enabled by adaptations and modifications of the recent entrywise matrix perturbation analysis programme in \cite{cape2019signal,cape2019two}, together with the classical theory of so-called simple linear rank statistics under multi-sample alternatives developed extensively in \cite{hajek1968asymptotic,dupac1969asymptotic}. All proof materials and technical tools are collected in \cref{appendix}.

\subsection{Notation}
\label{sec:notation}
The notation used in this paper is mostly standard and detailed below for completeness.

We write the product of $a$ and $b$ as $a b$, $a \cdot b$, or $a \times b$, depending on context, in an effort to improve readability. We use the equivalence symbol $\equiv$ to emphasize dependencies or relationships that are sometimes suppressed for ease of shorthand, for example $\nrk{} \equiv \nrk{A}$ denotes the reliance of the term $\nrk{}$ on the term $A$.

Given sets $\mathcal{I}$ and $\mathcal{J}$, let $\mathcal{I} \backslash \mathcal{J}$ and $\mathcal{I} - \mathcal{J}$ both denote the set difference operation. Let $\overline{\mathcal{I}}$ denote the complement of the set $\mathcal{I}$. For a general real-valued $n \times n$ matrix $M$ and sets of indices $\mathcal{I}, \mathcal{J} \subseteq \Nset{n} = \{1,2,\dots,n\}$, let $M_{\mathcal{I}\star}$ and $M_{\star\mathcal{J}}$ denote the submatrices of $M$ consisting of the corresponding rows and columns. Let $\membmtx^{n \times K}$ denote the set of all $n \times K$ matrices where each row has a single entry equal to $1$ and all remaining $K-1$ row entries are zero. In other words, for each $\Theta \in \membmtx^{n \times K}$, the $n$ rows of $\Theta$ are standard basis vectors in $\R^{K}$. We call such matrices $\Theta$ membership matrices, and the block membership of row $i$ is denoted by $g_{i} \in \Nset{K}$; in particular, $\Theta_{i g_{i}} = 1$. For each $1 \le k \le K$, denote the set of node indices with block membership $k$ by $\mathcal{G}_{k} \equiv \mathcal{G}_{k}(\Theta) = \{1 \le i \le n: g_{i} = k\}$ having cardinality $n_{k} = |\mathcal{G}_{k}|$. Let $n_{\min} = \min_{1 \le k \le K} n_{k}$, let $n_{\max} = \max_{1 \le k \le K} n_{k}$, and let $n_{\max}^{\prime}$ denote the second-largest block size, namely the second-largest community size. Let $\PermSet^{K \times K}$ denote the set of $K \times K$ permutation matrices.

We write $\doteq$ to denote approximate (truncated) numerical equality, for example, $\pi \doteq 3.14$ and $1/3 \doteq 0.333$. The vector of all ones in $\R^{n}$ is denoted by $1_{n}$, whereas the binary zero-one indicator function evaluated at a condition $\{\cdot\}$ is written as $1\{\cdot\}$. The squared Euclidean norm of $x = (x_{1},\dots,x_{n})^{\tp} \in \R^{n}$ is given by $\|x\|_{\ell_{2}}^{2} = \sum_{i=1}^{n} x_{i}^{2}$, while the Euclidean norm of the $i$-th row of a matrix $M$ is denoted by $\|M\|_{i,\ell_{2}}$. The eigenvalues of a square matrix $M \in \R^{n \times n}$ are denoted by $\lambda_{i} \equiv \lambda_{i}(M)$, for $i=1,\dots,n$, and are ordered in the manner $|\lambda_{1}| \ge |\lambda_{2}| \ge \dots \ge |\lambda_{n}|$. We call $\lambda_{1}, \dots, \lambda_{K}$ the top-$K$ eigenvalues of $M$ with associated top-$K$ orthonormal eigenvectors $u_{1}, \dots, u_{K} \in \R^{n}$ provided $|\lambda_{K}| > |\lambda_{K+1}|$. The trace of a square $n \times n$ matrix $M$ is given by $\trace(M) = \sum_{i=1}^{n} M_{ii} = \sum_{i=1}^{n} \lambda_{i}$. The Frobenius norm of a generic matrix $M$ is given by $\|M\|_{\frob} = \sqrt{\operatorname{trace}(M^{\tp}M)}$. The operator (spectral) norm of a matrix is given by $\|M\|_{\op} = \sup_{\|x\|_{\ell_{2}}=1} \|M x\|_{\ell_{2}} = \max_{1 \le i \le n} \sqrt{\lambda_{i}(M^{\tp}M)} = \sqrt{\lambda_{1}(M^{\tp}M)}$. The norm expression $\|M\|_{0}$ counts the number of nonzero entries in $M$.

Given a vector $x \in \R^{n}$, write $\diag(x) \in \R^{n \times n}$ to indicate the diagonal matrix whose main diagonal entries are given by the entries of $x$. Given a square matrix $M \in \R^{n \times n}$, write $\diag(M) \in \R^{n \times n}$ to denote the diagonal matrix consisting of the main diagonal elements of $M$. The Hadamard (entrywise) product between matrices $A, B \in \R^{n \times n}$ is denoted by $A \circ B = (A_{ij} \times B_{ij})_{1 \le i,j \le n} \in \R^{n \times n}$. Similarly, $A^{\circ 2} \equiv A \circ A$. Entry $(i, j)$ of the matrix $M$ is at times denoted by $m_{i j}$, $m_{i, j}$, $M_{i j}$, or $M_{i, j}$, depending on context.

Given a matrix $M$, its $i$-th row when viewed as a column vector, is denoted by $M_{\textnormal{row }i} \equiv M_{i\star}$ where formally $M_{\textnormal{row }i} = (e_{i}^{\tp}M)^{\tp}$ when $e_{i}$ denotes the $i$-th standard basis vector. The identity matrix in dimension $n$ is denoted by $\Id_{n}$ or $I_{n}$.

Given a random variable $\xi \equiv \xi_{n} \in \R$ indexed by $n$, write $\xi = \ohp(1)$ to denote that for any choice of $\delta > 0$, $|\xi| \le \delta$ holds with probability tending to one as $n \rightarrow \infty$. We write $\xi = \Ohp(f(n,\epsP))$ when there exists a constant $\epsP > 0$ such that for some positive constants $C^{\prime}, C, n_{0}$, it holds that $|\xi| \le C f(n,\epsP)$ with probability at least $1 - C^{\prime} n^{-\epsP}$ for all $n \ge n_{0}$. In this paper, $0 < \epsP \ll 1$. The deterministic analogues are written as $\oh(\cdot)$ and $\Oh(\cdot)$, respectively.

\section{Robust spectral embedding and clustering}
\label{sec:ptr_overview}
\cref{alg:ptr_baseline} specifies the process of converting a symmetric input matrix to a matrix of normalized rank statistics, colloquially called `passing to ranks'. This procedure amounts to a nonparametric entrywise matrix transformation $A \mapsto \nrk{A} \in [0,1]^{n \times n}$ that pre-processes the original input data matrix. Subsequently, it will be shown that this normalization enables the study of $\nrk{A} \approx \Ex[\nrk{A}]$ for establishing spectral clustering guarantees, even when $\Ex[A]$ does not exist.
\begin{algorithm}[H]
	\caption{Pass-to-ranks (PTR) rank-transform procedure}
	\label{alg:ptr_baseline}
	\begin{algorithmic}
		\Require A symmetric matrix $A = (a_{ij}) \in \R^{n \times n}$ with distinct elements $\{a_{ij}\}_{i < j}$.
		\begin{enumerate}
			\item Define $N = n(n-1)/2$.
			\item For $i<j$, define the rank of $a_{ij}$ as $r_{ij} = \sum_{1 \le i^{\prime} < j^{\prime} \le n}1\{a_{i^{\prime} j^{\prime}} \le a_{ij}\}$. Set $\widetilde{r}_{ij} = \frac{r_{ij}}{N+1}$.
			\item For $i > j$, set $\widetilde{r}_{ij} = \widetilde{r}_{ji}$. For each $1 \le i \le n$, set $\widetilde{r}_{ii} = 0$.
		\end{enumerate}
		\Return The symmetric matrix of normalized rank statistics $\widetilde{R}_{A} = (\widetilde{r}_{ij}) \in [0,1]^{n \times n}$.
	\end{algorithmic}
\end{algorithm}

The theory subsequently developed in this paper holds for stochastic input matrices $A$ that have distinct upper-triangular entries with probability one, thereby yielding $\nrk{A}$ per \cref{alg:ptr_baseline}. Elsewhere in practice, a tie-breaking procedure can be specified if the matrix $A$ is allowed to have repeated values among its entries. Setting the diagonal elements of $\nrk{A}$ to zero can be interpreted as removing possible self-loops in a graph.

Given the matrix of normalized rank statistics $\nrk{A}$, we next consider top-$d$ eigenvector-based spectral embedding and clustering as outlined in \cref{alg:ptr_clustering}. We shall focus our study on the performance of $(1+\epsK)$-approximate $k$-means clustering, though in practice applying other variants of $k$-means clustering can also be appropriate. Further, our subsequent asymptotic normality results suggest that clustering spectral embeddings via Gaussian mixture modeling can also be appropriate.

\begin{algorithm}[ht]
	\caption{Spectral embedding and clustering with matrices of rank statistics}
	\label{alg:ptr_clustering}
	\begin{algorithmic}
		\Require A symmetric matrix $A = (a_{ij}) \in \R^{n \times n}$ with distinct elements $\{a_{ij}\}_{i < j}$.
		\begin{enumerate}
			\item Obtain the matrix of normalized rank statistics $\nrk{A}$ via \cref{alg:ptr_baseline}.
			\item Compute a truncated eigendecomposition of $\nrk{A}$, keeping the $1 \le d \le n$ largest-in-magnitude eigenvalues and associated orthonormal eigenvectors, denoted $\sampVecs_{\nrk{}} \in \R^{n \times d}$. Here, $d$ could be chosen \emph{a priori} or selected, for example, via an eigenvalue ratio test, parallel analysis, or inspection of the scree plot.
			\item Cluster the rows of (the embedding) $\sampVecs_{\nrk{}}$ into $K$ groups. Here, $K$ could be chosen \emph{a priori} or selected, for example, via an information criterion.
		\end{enumerate}
		\Return A clustering $\mathcal{C} = \left\{\mathcal{C}_{1},\dots,\mathcal{C}_{K}\right\}$ that partitions the set $\{1,\dots,n\}$.
	\end{algorithmic}
\end{algorithm}

\section{Matrix-valued data with latent block structure}
\label{sec:blockmodel_data}
This section specifies a flexible data generating model for matrices with population-level block structure. Block structure can be interpreted as unobserved community structure that drives connectivity patterns between entities, such as in friendship networks derived from social media platforms or in brain graphs derived from neuroimaging scans. Thereafter, in \cref{sec:main_results} we show that the leading eigenvectors $\sampVecs \equiv \sampVecs_{\nrk{}}$ obtained in \cref{alg:ptr_clustering} accurately estimate population-level ground-truth eigenstructure.

Let $\mathcal{F} \equiv \mathcal{F}^{K} = \{F_{(k,k^{\prime})} : 1 \le k \le k^{\prime} \le K\}$ denote a collection of $K(K+1)/2$ absolutely continuous cumulative distribution functions $F_{(k, k^{\prime})}$. For notational convenience, define $F_{(k,k^{\prime})} = F_{(k^{\prime},k)}$ whenever $k > k^{\prime}$. We do not restrict the support sets of these distributions nor do we impose the existence of certain moments. For example, for $K=2$, we may associate $F_{(1,1)}$, $F_{(1,2)}$, and $F_{(2,2)}$ with, say, the distributions $\textsf{Normal}(\mu,\sigma^{2})$, $\textsf{Gamma}(\alpha,\beta)$, and $\textsf{Pareto}(m,\gamma)$, respectively, or even contaminated versions thereof. Let $\Theta \equiv \Theta^{n \times K} \in \membmtx^{n \times K}$ be a membership matrix as in \cref{sec:notation}. We consider symmetric data matrices $A$ with population-level block structure arising via \cref{def:data_matrices}. This definition is reminiscent of weighted stochastic blockmodel formulations put forth in the literature \citep{aicher2015learning, xu2020optimal, gallagher2023spectral}.

\begin{definition}[Data matrices with blockmodel structure]
	\label[definition]{def:data_matrices}
	We say that $A \in \R^{n \times n}$ is a symmetric random data matrix from the generative model $\blockmodel$ when its entries are random variables given by
	\begin{equation}
		A_{ij} =
		\begin{cases}
			\textnormal{independent } F_{(g_{i},g_{j})}
			& \textnormal{ if } i \le j,\\
			A_{ji}
			& \textnormal{ if } i > j.
		\end{cases}
	\end{equation}
	In particular, for each row (node) index $i \in \Nset{n}$, it holds that $\Theta_{i g_{i}} = 1$ and $\Theta_{i k} = 0$ for all $k \neq g_{i}$. Additionally, if $A_{ii}=0$ for all $1 \le i \le n$ is enforced, then $A$ is said to be hollow. We write $A \sim \operatorname{Blockmodel}\blockmodel$.
\end{definition}

Let $\medianmtx$ denote a $K \times K$ symmetric matrix of coefficients whose upper-triangular entries are given by $\medianmtx_{k k^{\prime}} = \operatorname{Median}_{X \sim F_{(k, k^{\prime})}}[X]$ for all $k \le k^{\prime}$. Here, $\medianmtx$ always exists since each distribution function is absolutely continuous, while the possibility of non-unique medians does not pose any problems for this paper. Analogously, let $\meanmtx$ denote the $K \times K$ symmetric matrix of coefficients whose upper-triangular entries are given by $\meanmtx_{k k^{\prime}} = \Ex_{X \sim F_{(k, k^{\prime})}}[X]$ for all $k \le k^{\prime}$, provided the expectations all exist. In particular, $\meanmtx$ is the matrix of block-wise expectations for the entries of $A$. Analogously, define the matrix of variances, $\stdevmtx^{\circ 2} \in \R^{K \times K}_{\ge 0}$, in the entrywise manner $\stdevmtx_{k k^{\prime}}^{\circ 2} = \Var_{X \sim F_{(k, k^{\prime})}}[X]$ for all $k \le k^{\prime}$, provided all the second moments exist.

\begin{remark}[Population-level block structure]
	\label[remark]{remark:pop_struct_blockmodel_data}
	We emphasize that the parameter matrices $\medianmtx$, $\meanmtx$, and $\stdevmtx$ are determined indirectly, as a consequence of specifying the collection of distributions $\mathcal{F}^{K}$. Further, the entrywise median, expectation, and variance matrices associated with $A$, provided the latter exist, are block-structured with
	\begin{equation}
		\label{eq:adj_population_block_structure}
		\operatorname{Median}[A]
		=
		\Theta \medianmtx \Theta^{\tp},
		\quad
		\Ex[A]
		=
		\Theta \meanmtx \Theta^{\tp},
		\quad
		\Var[A]
		=
		\Theta \stdevmtx^{\circ 2} \Theta^{\tp}.
	\end{equation}
	If $A$ is taken to be hollow, then $\operatorname{Median}[A] = \Theta \medianmtx \Theta^{\tp} - \diag(\Theta \medianmtx \Theta^{\tp})$ and similarly for $\Ex[A]$ and $\Var[A]$, thereby amounting to approximate population-level block structure.
\end{remark}

\subsection{Matrices of normalized rank statistics}
\label{sec:matrices_of_normalized_rank_statistics}
In \cref{def:data_matrices}, the upper-triangular entries of $A$ are independent random variables with absolutely continuous cumulative distribution functions. Hence, they are distinct with probability one, and \cref{alg:ptr_baseline} can be applied to $A$ to yield a symmetric random matrix of normalized rank statistics, $\nrk{A}$, whose entries are necessarily dependent.

At the population level, write $\rankspec \in [0,1]^{K \times K}$ to denote the symmetric matrix where
\begin{equation}
	\label{eq:nranks_entry_expectation}
	\Ex\left[(\wt{R}_{A})_{ij}\right]
	= \begin{cases}
		\rankspec_{g_{i},g_{j}}
		\quad
		& \text{if~}i \neq j,\\
		0
		& \text{if~}i = j.
	\end{cases}
\end{equation}
In words, $\rankspec$ is the matrix of block-wise expected normalized ranks derived from the model $\operatorname{Blockmodel}\blockmodel$. This matrix of expected values is always well-defined since the normalized ranks are discrete bounded random variables hence all their moments exist.

The pass-to-ranks procedure zeroes out the main diagonal elements of $\nrk{A}$, hence its expectation is a perturbation of $\rankspecBig$, namely
\begin{equation}
	\label{eq:nranks_matrix_expectation}
	\Ex[\nrk{A}]
	=
	\rankspecBig - \diag(\rankspecBig).
\end{equation}
In what follows, the additive perturbation $\diag(\rankspecBig)$ will be shown to have a negligible effect, letting us write that $\Ex[\nrk{A}]$ is itself approximately block-structured vis-\`{a}-vis $\rankspecBig$, whose entries are necessarily bounded between zero and one.

Looking ahead, \cref{sec:ranks_one_sample,sec:ranks_multi_sample} provide exact, finite-sample formulas characterizing $\Ex[(\nrk{A})_{ij}]$, $\Var[(\nrk{A})_{ij}]$, and $\Cov[(\nrk{A})_{ij},(\nrk{A})_{i^{\prime}j^{\prime}}]$ for all entries of the normalized ranks matrix $\nrk{A}$. These expressions translate directly to expressions for $\rankspec$ and $\ranksdev$; further, they are crucial for the numerical examples provided throughout this paper. Notably, the entries of $\rankspec$ and $\ranksdev$ depend on the block sizes $n_{k} = |\mathcal{G}_{k}|$, $1 \le k \le K$, unlike for $\meanmtx$ and $\stdevmtx$.

\subsection{Eigenvector preliminaries for block-structured matrices}
\label{sec:eigenvector_preliminaries}
This section highlights key aspects of the population-level eigenstructure in blockmodel data matrices that are important throughout this paper. Firstly, $\rankspec$ may have rank equal to $K$ even if $\medianmtx$ does not have rank equal to $K$. Next, suppose that the matrices $\Theta$, $\medianmtx$, and $\rankspec$ each have rank equal to $K$. Importantly, the assumption that $\rank(\medianmtx) = K$ places an implicit requirement on the collection of distributions $\mathcal{F}$ (e.g.,~$F \in \mathcal{F}$ cannot all be identical when $K > 1$), while assuming $\rank(\rankspec) = K$ places an implicit, joint requirement on $\mathcal{F}$ and $\Theta$. Next, define $\Delta = (\Theta^{\tp}\Theta)^{1/2} = \diag(\sqrt{n_{1}},\dots,\sqrt{n_{K}}) \in \R^{K \times K}$ which is itself rank $K$ since $\rank(\Theta) = K$ if and only if $n_{k} > 0$ for each $k \in \Nset{K}$, i.e.,~$\mathcal{G}_{k} \neq \emptyset$ for each $k$. The matrix $\Theta \Delta^{-1} \in \R^{n \times K}$ has orthonormal column vectors, hence its columns span the top-$K$ dimensional eigenspace of both $\Theta \medianmtx \Theta^{\tp} \equiv (\Theta \Delta^{-1}) \Delta \medianmtx \Delta (\Theta \Delta^{-1})^{\tp}$ and $\rankspecBig$. Namely, the block structure underlying both $\operatorname{Median}[A]$ and $\Ex[\nrk{A}]$ approximately share the same $K$-dimensional leading eigenspace, noting that $\Delta \medianmtx \Delta$ and $\Delta \rankspec \Delta$ are themselves full-rank $K \times K$ matrices. The same conclusion holds for $\meanmtx$ if it exists and satisfies $\rank(\meanmtx) = K$.

In fact, the leading population-level eigenvectors encode the memberships for all nodes (rows) in block-structured data matrices per \cref{result:blockmodel_eigenvectors}. This observation and the discussion above are well known in the literature on stochastic blockmodel random graphs where spectral clustering is applied to symmetric random matrices $A$ whose upper-triangular entries are independent \textsf{Bernoulli} random variables; see \cite{lei2015consistency}. Consequently, a main challenge in this paper is to show that the top-$K$ truncated eigenspace of $\nrk{A}$ accurately estimates the top-$K$ eigenspace of $\rankspecBig$ by virtue of its close proximity to $\Ex[\nrk{A}]$.

\subsection{An example with contaminated normal distributions}
\label{sec:example_illustration}

\begin{figure}[tp]
	\centering
	\begin{subfigure}{0.99\linewidth}
		\centering
		\includegraphics[width=0.7\linewidth]{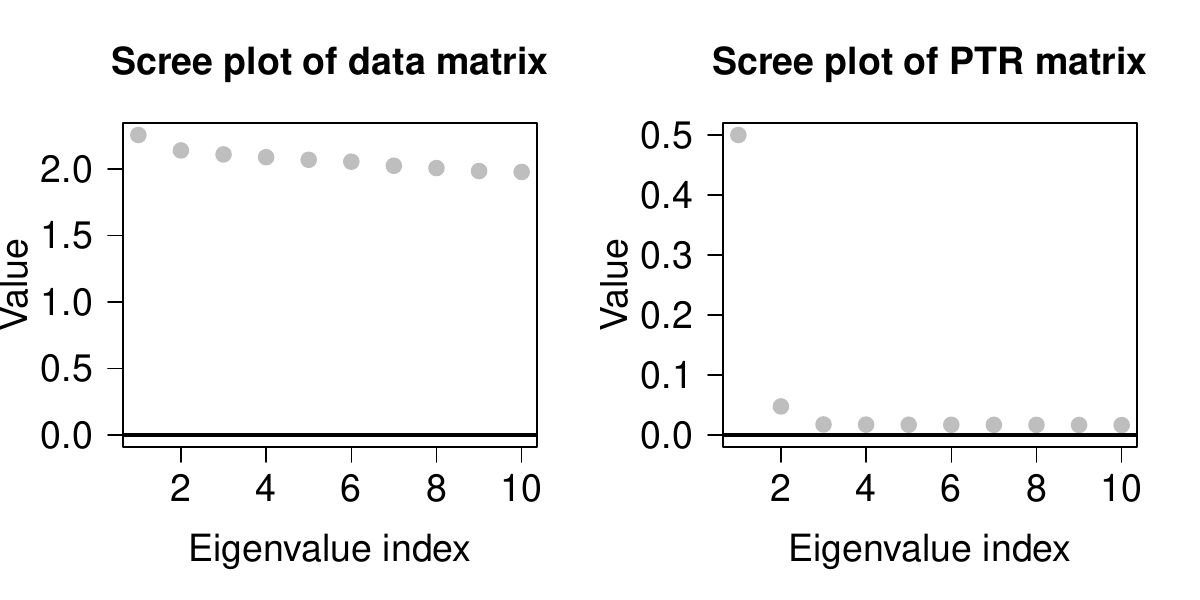}
	\end{subfigure}
	\begin{subfigure}{0.99\linewidth}
		\centering
		\includegraphics[width=0.7\linewidth]{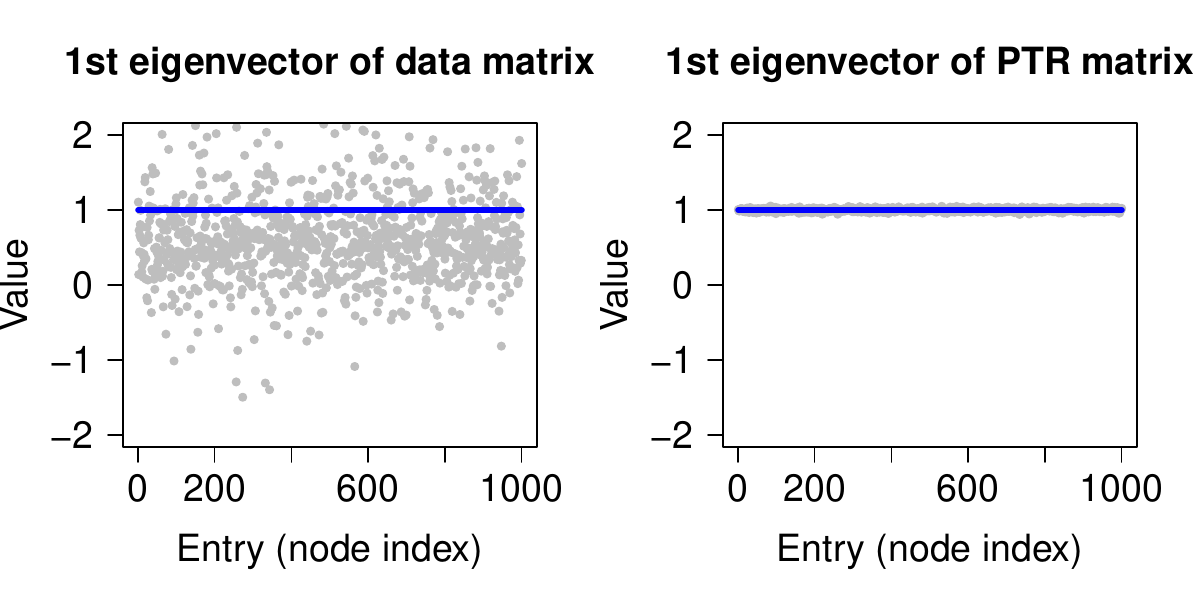}
	\end{subfigure}
	\begin{subfigure}{0.99\linewidth}
		\centering
		\includegraphics[width=0.7\linewidth]{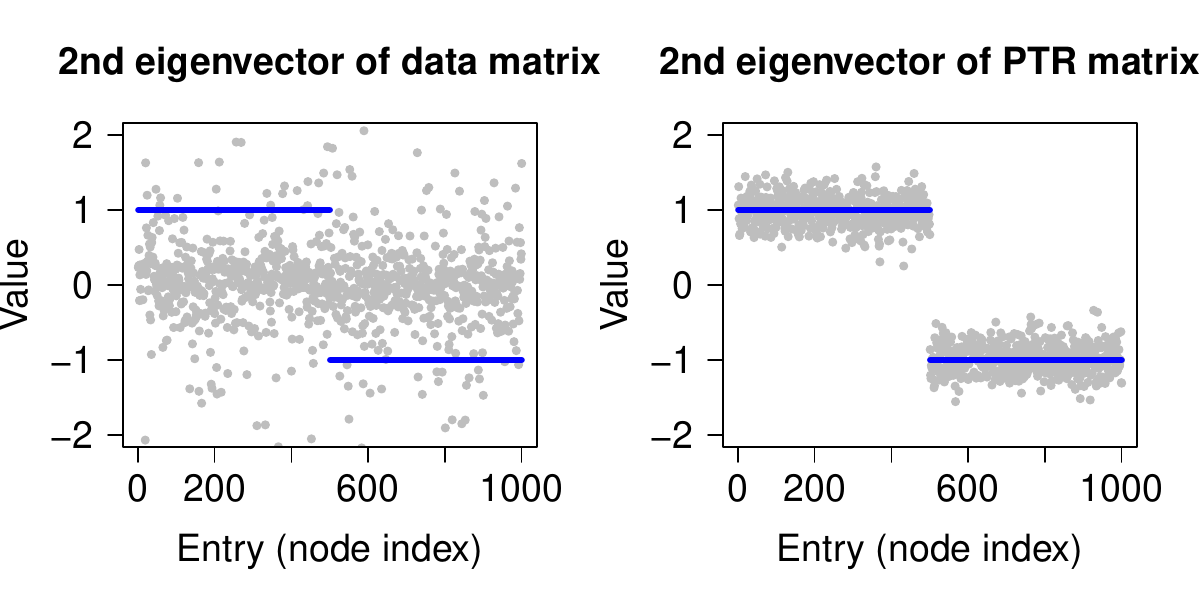}
	\end{subfigure}
	\caption{Simulation example in \cref{sec:example_illustration} showing the truncated eigenstructure of $A$ (left column) and $\nrk{A}$ (right column). Blue lines denote the population-level ground truth eigenvector behavior. Gray points are derived from the simulated data. Several outlier values outside the interval $[-2,2]$ are not shown. The two-dimensional eigentruncation of the normalized ranks matrix recovers the unobserved two-block ground truth structure.}
	\label{fig:intro_plots}
\end{figure}

We briefly pause to provide a numerical illustration. Consider a $K=2$ blockmodel per \cref{def:data_matrices} with data drawn from contaminated normal distributions of the form
\begin{equation*}
	F_{(k,k^{\prime})}
	=
	(1-\epsilon) \cdot \textsf{Normal}(\mu_{kk^{\prime}},\sigma_{kk^{\prime}}^{2})
	+
	\epsilon \cdot \textsf{Normal}(\mu_{kk^{\prime}}, 100^{2} \cdot \sigma_{kk^{\prime}}^{2}),
	\quad\quad
	1 \le k \le k^{\prime} \le 2.
\end{equation*}
Concretely, set $\epsilon = 0.01$, $\mu_{11} = \mu_{22} = 2$, $\mu_{12} = 1$, and $\sigma_{11} = \sigma_{12} = \sigma_{22} = 3$. Notice that the standard deviation of the contaminating distribution is one hundred times larger than that of the baseline normal distribution for each block.

We generate $A \in \R^{n \times n}$ with $n=1000$ according to the above model with equal block sizes $n_{1} = n_{2} = n/K = 500$. We order the block memberships for ease of presentation but emphasize that the memberships are themselves unobserved in practice. Our goal is to recover the row (equivalently, column) memberships via spectral embedding and clustering. Here, $\rank(\medianmtx) = \rank(\meanmtx) = \rank(\rankspec) = 2$. Furthermore, here the matrices $\meanmtx$ and $\rankspec$ have the same eigenvectors, enabling easier visual comparison.

For both $A$ and $\nrk{A}$, we compute their ten largest eigenvalues. While this specification technically does not guarantee that we retain the top ten largest-in-magnitude eigenvalues, we make it here for the sake of visualization and because historically some practitioners chose to ignore negative eigenvalues. \cref{fig:intro_plots} shows these eigenvalues in partial scree plots, scaled by $n^{-1}$, alongside the sign-adjusted leading two eigenvectors, scaled by $n^{1/2}$. Even for the specified small level of contamination, the spectrum of $A$ is detrimentally influenced by large-in-magnitude entries arising from the huge variance of the contaminating distributions. The eigenvalues of $A$ do not reflect the two-block ground truth structure, and neither do its eigenvectors, even upon inspecting the third eigenvector, fourth eigenvector, etc.

The spectrum of $\nrk{A}$, on the other hand, exhibits a noticeable `elbow' at the second eigenvalue, correctly indicating approximate two-block structure. The corresponding $n^{1/2}$-norm eigenvectors are plotted against the ground-truth $n^{1/2}$-norm eigenvectors, displayed in blue. Here, the ground-truth second eigenvector reveals the two-block structure which is recovered by the corresponding eigenvector of the normalized ranks matrix. Any sensible clustering algorithm, when applied to this second eigenvector, will perfectly distinguish the two latent blocks based on the entrywise signs. Our node-specific strong consistency results in \cref{sec:strong_consistency} apply to this model setting, thereby theoretically justifying spectral clustering applied to the pass-to-ranks matrix. In this simulation example, the first two eigenvalues of $\nrk{A}$ are in fact its largest-in-magnitude eigenvalues.

The truncated eigenstructure of $\nrk{A}$ continues to successfully reflect the ground-truth block structure of the model even when the contaminating distribution is more extreme, such as $\textsf{Cauchy}$, both in theory and in simulations. In such situations $\meanmtx$ may not exist, nevertheless $\rank(\rankspec) = 2$ may still hold. \cref{sec:numerical} provides numerical illustrations for data matrices whose entries have heavy-tailed distributions.

\section{Statistical considerations for clustering}
\label{sec:cluster_approx_kmeans_errors}
\cref{sec:cluster_approx_kmeans_errors} provides background and context needed for the weak consistency results in \cref{sec:weak_consistency}. The presentation here follows the paradigm for adjacency-based spectral clustering of unweighted stochastic blockmodel random graphs in \cite{lei2015consistency}.

In pursuit of weak consistency, we seek to recover the membership matrix $\Theta$ via an estimate $\wh{\Theta} \in \membmtx^{n \times K}$, modulo column-wise permutation $\Pi \in \PermSet^{K \times K}$. We consider two popular criteria for evaluating estimation error.
\begin{align*}
	\textnormal{Overall relative error:}
	\quad
	L(\wh{\Theta}, \Theta)
	&=
	\underset{\Pi \in \PermSet^{K \times K}}{ \min } \,
	\frac{1}{n} \|\wh{\Theta}\Pi - \Theta\|_{0}.\\	
	\textnormal{Worst case relative error:}
	\quad
	\wt{L}(\wh{\Theta}, \Theta)
	&=
	\min_{\Pi \in \PermSet^{K \times K}} \max_{1 \le k \le K} \,
	\frac{1}{n_{k}} \, \|(\wh{\Theta} \Pi)_{\mathcal{G}_{k}\star} - \Theta_{\mathcal{G}_{k}\star}\|_{0}.
\end{align*}
In particular, we seek conditions under which $L(\wh{\Theta},\Theta), \wt{L}(\wh{\Theta},\Theta) \rightarrow 0$ as $n \rightarrow \infty$, noting that $0 \le L(\wh{\Theta},\Theta) \le \wt{L}(\wh{\Theta},\Theta) \le 2$. Here, $\wh{\Theta}$ yielding a small value $L(\wh{\Theta},\Theta)$ may have large relative error for some blocks of comparatively smaller size, whereas $\wt{L}(\wh{\Theta},\Theta)$ is a more stringent condition that quantifies how well $\wh{\Theta}$ uniformly estimates each block relative to its size. We emphasize that the wide tilde notation in $\wt{L}(\wh{\Theta},\Theta)$ is not associated with its usage for normalized rank statistics in $\nrk{A}$.

For the purposes of this paper, the $k$-means clustering problem is expressed formally as
\begin{align}
	\label{eq:kmeans_problem}
	\begin{split}
		(\widehat{\Theta}, \widehat{T})
		=
		\underset{\Theta \in \membmtx^{n \times K}, T \in \R^{K \times K}}{\arg \min} \|\wh{U} - \Theta T\|_{\frob}^{2},
	\end{split}
\end{align}
where $\wh{U}$ denotes a generic orthonormal input matrix of leading eigenvectors, and where $T$ scales and reorients $\Theta$ to enable commensurability with $\wh{U}$. A more computationally tractable and efficient alternative is to instead relax the above problem to the $(1+\epsK)$-approximate $k$-means problem, of the form
\begin{align}
	\label{eq:approx_kmeans_problem}
	\begin{split}
		(\widehat{\Theta}, \widehat{T}) \in \membmtx^{n \times K} \times \R^{K \times K} 
		\textnormal{ s.t. }
		\|\wh{U} - \wh{\Theta}\wh{T}\|_{\frob}^{2}
		\le
		(1+\epsK) \cdot \left\{ \underset{\Theta \in \membmtx^{n \times K}, T \in \R^{K \times K}}{\min}\|\wh{U} - \Theta T\|_{\frob}^{2} \right\}.
	\end{split}
\end{align}
\cref{alg:approx_kmeans} combines these ideas in the language of traditional adjacency-based spectral clustering. In this paper, our robust procedure applies \cref{alg:approx_kmeans} to $\nrk{A}$ to obtain the analogous quantities $\wh{U}_{\nrk{}}$ and $\wh{T}_{\nrk{}}$ and $\wh{\Theta}_{\nrk{}}$.

\begin{algorithm}[H]
	\caption{Spectral clustering with approximate $k$-means; see \cite{lei2015consistency}}
	\label{alg:approx_kmeans}
	\begin{algorithmic}
		\Require A symmetric input matrix $M \in \R^{n \times n}$; number of communities (blocks) $K \ge 1$; approximation parameter $\epsK > 0$.
		\begin{enumerate}
			\item Compute $\wh{U}_{M} \in \R^{n \times K}$, whose columns consist of $K$ orthonormal leading eigenvectors of $M$, ordered in decreasing order of eigenvalue modulus.
			\item Let $(\wh{\Theta}_{M}, \wh{T}_{M})$ be a solution to the $(1+\epsK)$-approximate $k$-means problem in \cref{eq:approx_kmeans_problem} with $K$ clusters and input matrix $\wh{U}_{M}$.
		\end{enumerate}
		\Return Estimated membership matrix $\wh{\Theta}_{M} \in \membmtx^{n \times K}$.
	\end{algorithmic}
\end{algorithm}

\section{Main results}
\label{sec:main_results}
\cref{sec:pre_consistency} is a precursor to the ensuing consistency results and may be of independent interest. We shall distinguish between `weak consistency' and node-specific `strong consistency', motivated by existing terminology in the literature \citep{abbe2018community}. In \cref{sec:weak_consistency}, we establish consistency of our matrix-valued estimator for the population-level target (i.e.,~the membership matrix; its top low-rank eigenspace). In \cref{sec:strong_consistency}, we consider a more refined level of granularity by quantifying the behavior of individual nodes' vector embedding representations. Of note, our strong consistency results are node-specific, in contrast to uniform strong consistency results that simultaneously hold for all nodes. In \cref{sec:asymptotic_normality}, we establish asymptotic normality results associated to the embedding vectors in $\R^{K}$.

\subsection{Bounding matrices of normalized rank statistics}
\label{sec:pre_consistency}
This section presents two technical lemmas which are proved in \cref{app:trace_bound_proof}. The first, \cref{lem:os_trace}, bounds the higher-order trace of the matrix difference $\nrk{A} - \Ex[\nrk{A}]$ when $K=1$.

\begin{lemma}[Single block expected trace bound]\label[lemma]{lem:os_trace}
	Let $A$ be a symmetric $n \times n$ random matrix with i.i.d.~entries $A_{ij} \sim F$ for $i \leq j$, where $F$ is an absolutely continuous cumulative distribution function. Let $\nrk{A}$ denote the matrix of normalized rank statistics, per \cref{alg:ptr_baseline}. If $n \geq 4$, then
	\begin{equation}
		\label{equ::one_sample_trace_bound}
		0
		\leq
		\Ex\left[\trace\left((\nrk{A}-\Ex[\nrk{A}])^4\right)\right]
		\leq
		\frac{1}{50} n^3.
	\end{equation} 
\end{lemma}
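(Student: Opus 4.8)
The plan is to use exchangeability to recast $\nrk{A}$ as a uniformly random relabeling of a fixed centered population, thereby reducing $\Ex[\trace((\nrk{A}-\Ex[\nrk{A}])^{4})]$ to finite-population (sampling-without-replacement) moments, and then to expand the trace over closed four-walks classified by their edge-repetition pattern. Set $M := \nrk{A} - \Ex[\nrk{A}]$; the lower bound is immediate since $\trace(M^{4}) = \|M^{2}\|_{\frob}^{2} \geq 0$ as $M$ is real symmetric. Since the upper-triangular entries $\{A_{ij}\}_{i<j}$ are i.i.d.\ with $F$ absolutely continuous, they are almost surely distinct and their rank vector is uniform over the $N!$ permutations of $\{1,\dots,N\}$. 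Writing $c_{k} = \tfrac{k}{N+1}-\tfrac12$ and $X_{ij} := (\nrk{A})_{ij} - \tfrac12 = M_{ij}$ for $i \neq j$ (with $M_{ii}=0$), the family $\{X_{ij}\}_{i<j}$ is then a uniformly random assignment of the centered values $\{c_{1},\dots,c_{N}\}$ to the $N$ edges, i.e.\ a sample without replacement. Because $c_{N+1-k}=-c_{k}$, the population is symmetric about zero, so the odd power sums vanish: with $S_{r}:=\sum_{k=1}^{N}c_{k}^{r}$ one has $S_{1}=S_{3}=0$, while $S_{2}=\tfrac{N(N-1)}{12(N+1)}$ and $0 < S_{4} < S_{2}/4$ (the latter since $|c_{k}|<\tfrac12$).

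Next I would expand $\trace(M^{4}) = \sum_{i,j,k,l} M_{ij}M_{jk}M_{kl}M_{li}$. Hollowness of $M$ forces $i\neq j$, $j\neq k$, $k\neq l$, $l\neq i$, and the only remaining coincidences are $i=k$ and/or $j=l$, yielding four disjoint cases: all four indices distinct (count $n(n-1)(n-2)(n-3)$, four distinct edges); $i=k,j\neq l$ and symmetrically $j=l,i\neq k$ (each count $n(n-1)(n-2)$, product $X_{e}^{2}X_{f}^{2}$ with $e\neq f$); and $i=k,j=l$ (count $n(n-1)$, product $X_{e}^{4}$). Using $S_{1}=S_{3}=0$ with the without-replacement identities $\Ex[X_{e}^{4}]=S_{4}/N$, $\Ex[X_{e}^{2}X_{f}^{2}]=(S_{2}^{2}-S_{4})/(N(N-1))$, and (via Newton's identity $4!\,e_{4}=3S_{2}^{2}-6S_{4}$ summed over distinct ordered quadruples) $\Ex[X_{e_{1}}X_{e_{2}}X_{e_{3}}X_{e_{4}}]=(3S_{2}^{2}-6S_{4})/(N(N-1)(N-2)(N-3))$, and substituting $n(n-1)=2N$, the three case-totals collapse to
\begin{equation*}
	\Ex\!\left[\trace(M^{4})\right]
	= 2S_{4}
	+ \frac{4(n-2)(S_{2}^{2}-S_{4})}{N-1}
	+ \frac{6(n-2)(n-3)(S_{2}^{2}-2S_{4})}{(N-1)(N-2)(N-3)}.
\end{equation*}

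For the upper bound I would treat the middle term as dominant. Dropping $-S_{4}\leq 0$, inserting $S_{2}=\tfrac{N(N-1)}{12(N+1)}$, and using $\tfrac{N(N-1)}{(N+1)^{2}}<1$ gives $\frac{4(n-2)(S_{2}^{2}-S_{4})}{N-1} < \frac{(n-2)n(n-1)}{72} = \frac{n^{3}-3n^{2}+2n}{72}$, while $2S_{4} < S_{2}/2 < \frac{n(n-1)}{48}$; since $\frac{n(n-1)}{48}-\frac{3n^{2}-2n}{72} = \frac{n-3n^{2}}{144}\leq 0$, the first two terms together are at most $\frac{n^{3}}{72}$. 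The last term is $O(1)$, and after bounding $S_{4}\geq 0$ and simplifying it stays below $\frac{11}{1800}n^{3}$ for every $n\geq 4$ (it is a bounded quantity, $\approx\tfrac{1}{12}$ asymptotically, easily checked against the cubic slack). The estimate then closes exactly, since $\frac{1}{72}+\frac{11}{1800}=\frac{25+11}{1800}=\frac{1}{50}$.

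The main obstacle I anticipate is bookkeeping rather than conceptual: enumerating the closed four-walks with exact integer counts (not merely leading order), matching each edge-repetition pattern to the correct without-replacement moment, and carrying the exact lower-order terms so that the target constant $1/50$ holds for all $n\geq 4$ rather than only asymptotically. A convenient safeguard is that the leading coefficient is $1/72 < 1/50$ with genuine room to spare, so the finitely many smallest cases can, if needed, be verified directly from the displayed exact formula above.
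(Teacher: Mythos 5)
Your proposal is correct and takes essentially the same route as the paper's proof: the identical four-case classification of closed $4$-walks with the same exact counts $n(n-1)(n-2)(n-3)$, $2n(n-1)(n-2)$, and $n(n-1)$, the same reduction via exchangeability to without-replacement moments of the fixed centered values $\left\{\tfrac{k}{N+1}-\tfrac12\right\}_{k=1}^{N}$, and your Newton's-identity expression $\Ex[X_{e_1}X_{e_2}X_{e_3}X_{e_4}] = (3S_2^2-6S_4)/\bigl(N(N-1)(N-2)(N-3)\bigr)$ is exactly the paper's telescoped formula for $\Ex[abcd]$ after substituting $S_r = N\,\Ex[a^r]$ and $\Ex[a]=0$. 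The only cosmetic differences are the algebra used to obtain the mixed moments (symmetric-function identities versus the paper's iterated summation) and how the constant $\tfrac{1}{50}$ is split ($\tfrac{1}{72}+\tfrac{11}{1800}$ in your accounting versus the paper's bound via $\Ex[a^2]\le\tfrac{1}{12}$, $\Ex[a^4]\le\tfrac{1}{80}$, and $n^2 \le \tfrac14 n^3$), both of which close the estimate for all $n \ge 4$.
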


\cref{lem:ms_trace} generalizes \cref{lem:os_trace} from $K=1$ to the general case $K \ge 1$.

\begin{lemma}[Multiple block expected trace bound]\label[lemma]{lem:ms_trace}
	Let $A \sim \operatorname{Blockmodel}\blockmodel$ per \cref{def:data_matrices}. Let $\nrk{A}$ denote the matrix of normalized rank statistics, per \cref{alg:ptr_baseline}. If $n_{k} \geq 5$ for all $k \in \Nset{K}$, then
	\begin{equation}
		\label{equ::multiple_sample_trace_bound}
		0
		\leq
		\Ex\left[\trace\left((\nrk{A}-\Ex[\nrk{A}])^4\right)\right]
		\leq
		66K^{2}n^{2}+2n^{3}.
	\end{equation}
\end{lemma}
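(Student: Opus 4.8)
The plan is to run the moment method on the symmetric matrix $W := \nrk{A} - \Ex[\nrk{A}]$, which satisfies $W_{ii}=0$, $\Ex[W_{ij}]=0$, and $|W_{ij}| \le 1$ since normalized ranks lie in $[0,1]$. The lower bound is immediate because $\trace(W^4) = \sum_{i} \lambda_i(W)^4 \ge 0$, and the expectation of a nonnegative quantity is nonnegative. For the upper bound I would expand
\begin{equation*}
	\Ex\left[\trace(W^4)\right] = \sum_{i,j,k,l=1}^{n} \Ex\left[W_{ij}W_{jk}W_{kl}W_{li}\right]
\end{equation*}
as a sum over closed length-four walks $i \to j \to k \to l \to i$ and organize the $n^4$ summands by the multiset of undirected edges traversed. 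Since diagonal entries vanish, consecutive indices must differ, and a short check shows the only admissible edge-multisets are: a single edge of multiplicity four (forcing $i=k$ and $j=l$; there are $\Oh(n^2)$ such walks), two distinct edges sharing a vertex each of multiplicity two (forcing exactly one of $i=k$ or $j=l$; $\Oh(n^3)$ walks), and four distinct edges forming a genuine $4$-cycle on distinct vertices ($\Oh(n^4)$ walks).

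The key structural tool I would introduce is the edgewise decomposition $(\nrk{A})_e = \beta_e + \gamma_e$, where $\beta_e := \Ex[(\nrk{A})_e \mid a_e]$ is the conditional mean of the normalized rank given its pivot entry $a_e$, and $\gamma_e := (\nrk{A})_e - \beta_e$. Writing $(\nrk{A})_e = (N+1)^{-1}(1 + \sum_{e' \neq e} 1\{a_{e'} \le a_e\})$ and using independence of the entries $\{a_{e'}\}$ yields two facts. First, $\beta_e$ is a deterministic function of $a_e$ alone, so the centered parts $\bar\beta_e := \beta_e - \Ex[\beta_e]$ are \emph{mutually independent across distinct edges} and centered. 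Second, conditional on $a_e$ the comparison indicators are independent, whence $\Ex[\gamma_e^2] = \Ex[\Var((\nrk{A})_e \mid a_e)] \le \tfrac{1}{4(N+1)} = \Oh(n^{-2})$. Thus $W_e = \bar\beta_e + \gamma_e$ splits into an independent, bounded, centered leading part and a fluctuation that is small in $L^2$.

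With this in hand I would bound each walk class. The multiplicity-four walks satisfy $\Ex[W_e^4] \le 1$, giving an $\Oh(n^2)$ contribution. For the sharing-vertex $(2,2)$ walks, expanding $W_{e_1}^2 W_{e_2}^2$ and using independence of $\bar\beta_{e_1}, \bar\beta_{e_2}$ isolates the leading term $\Ex[\bar\beta_{e_1}^2]\,\Ex[\bar\beta_{e_2}^2]$, bounded by a constant; summed over $\Oh(n^3)$ walks this yields the dominant $2n^3$, while every cross term carries a $\gamma$ factor and is $\Oh(n^{-1})$ by Cauchy--Schwarz, contributing only $\Oh(n^2)$. For the genuine $4$-cycles the purely-$\bar\beta$ term \emph{vanishes}, since four distinct independent centered factors have zero expectation; the surviving terms all contain $\gamma$ factors, and after conditioning on the pivot entries and matching each $\bar\beta_e$ against the single comparison indicator it can correlate with, are $\Oh(n^{-4})$ per cycle, hence lower order overall. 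Finally, the crude block-wise accounting of the heterogeneous second moments $\Ex[\bar\beta_e^2]$ across the $K(K+1)/2$ block-pairs inflates the $\Oh(n^2)$ remainder to $\Oh(K^2 n^2)$, producing the stated $66K^2 n^2 + 2n^3$.

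The hard part is precisely the genuine $4$-cycle class together with the block-heterogeneity bookkeeping. Unlike the independent-entry stochastic blockmodel analysis of \cite{lei2015consistency}, here every $(\nrk{A})_e$ depends on all of $A$, so no summand vanishes by exact independence; the whole argument hinges on replacing exact independence with the $\bar\beta_e/\gamma_e$ split and showing that the residual dependence among rank statistics is weak enough ($\gamma_e = \Oh(n^{-1})$ in $L^2$, induced correlations $\Oh(n^{-2})$) to collapse the $4$-cycle contributions to lower order. Tracking the heterogeneous variances and covariances across blocks, which are not exchangeable as they are in the single-block \cref{lem:os_trace}, is what forces the extra $\Oh(K^2 n^2)$ term rather than a clean $\Oh(n^2)$ remainder.
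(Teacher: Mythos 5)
Your proposal is correct in substance but takes a genuinely different route from the paper. The paper's proof of \cref{lem:ms_trace} conditions on which multiset of normalized-rank values $\{i/(N+1)\}$ lands in each block: given that event, the entries within a block are exchangeable draws without replacement from the assigned value set and are independent across blocks, so each expectation over a pairwise-distinct index tuple is evaluated through explicit moments of sampling without replacement, controlled by $\Var[\,\overline{\mathcal{S}}_{k_1k_2}\,] \le 1/N(k_1,k_2)+8/N$ (resting on the covariance bound in \cref{prop:ranks_multi_covariance}). The $66K^{2}n^{2}$ term then arises from per-tuple bounds of order $\bigl(N(k_1,k_2)N(k_3,k_4)\bigr)^{-1/2}$ summed over block four-tuples via $\bigl(\sum_{k}\sqrt{n_k}\bigr)^{4}\le K^{2}n^{2}$, while the degenerate index tuples are bounded trivially by $2n^{3}$, exactly as in your walk classes of multiplicity four and type $(2,2)$. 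Your route is instead a H\'{a}jek-projection decomposition $W_e=\bar\beta_e+\gamma_e$ with $\bar\beta_e$ a centered function of the pivot entry alone: this is sound, since $\beta_e=\tfrac{1}{N+1}\bigl(1+\sum_{e'\neq e}F_{e'}(a_e)\bigr)$ is indeed a deterministic function of $a_e$, making the $\bar\beta_e$ exactly independent across edges, and $\Ex[\gamma_e^{2}]\le \tfrac{1}{4(N+1)}$ follows from the conditional independence of the comparison indicators given $a_e$. Your treatment of the genuine four-cycles also checks out: the pure $\bar\beta$ term vanishes by independence, the single-$\gamma$ terms vanish exactly (a fresh comparison edge has conditional mean zero given the pivots, and any matched term leaves an unmatched centered $\bar\beta$ factor), and the surviving multi-$\gamma$ terms are $\Oh(N^{-2})$ per cycle after pairing fresh comparison edges, so the entire four-cycle class contributes $\Oh(n^{4}/N^{2})=\Oh(1)$ with no $K$-dependence at all. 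This is genuinely sharper than the paper's $66K^{2}n^{2}$ handling of that class; what the paper's conditioning approach buys in exchange is fully explicit constants and a mechanical treatment of block heterogeneity that never requires exhibiting cancellation.

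Two caveats. First, your closing sentence misdiagnoses the source of $K^{2}n^{2}$: nothing in your decomposition produces a factor of $K^{2}$, and none is needed, since $K\ge 1$ means any bound of the form $2n^{3}+Cn^{2}$ with an absolute constant $C\le 66$ implies the stated inequality; attributing the term to block-wise heterogeneity of $\Ex[\bar\beta_e^{2}]$ is reverse-engineering of the paper's constant, not a derivation, and should be replaced by the observation just made. Second, to actually certify the stated constants you must track them: $\Ex[\bar\beta_e^{2}]\le 1/4$ (variance of a $[0,1]$-valued variable) keeps the leading $(2,2)$ contribution at $\le n^{3}/8$, comfortably inside $2n^{3}$, but the Cauchy--Schwarz steps on the multi-$\gamma$ four-cycle terms additionally require a fourth-moment bound such as $\Ex[\gamma_e^{4}]=\Oh(N^{-2})$ — available from the conditional-independence structure given the pivot, but nowhere stated in your sketch — and the aggregated cross-term and four-cycle constants must be verified to total at most $66$.
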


\subsection{Weak consistency}
\label{sec:weak_consistency}
This section presents the weak consistency results which are proved in \cref{app:weak_consistency}. \cref{result:weak_consistency_kmeans} states our most general weak consistency result and relies on \cref{lem:ms_trace}.

\begin{theorem}[Weak consistency of robust spectral clustering with approximate $k$-means clustering]
	\label[theorem]{result:weak_consistency_kmeans}
	Suppose $A \sim \operatorname{Blockmodel}\blockmodel$ where $\rank(\Theta) = \rank(\rankspec) = K$ and $K \le n^{1/2}$ and $n_{k} \ge 5$ for all $k \in \Nset{K}$. Let $\gamma_{n}$ denote the smallest nonzero singular value of $\rankspecBig$. Let $\widehat{\Theta}$ be the output of spectral clustering using $(1+\epsK)$-approximate $k$-means clustering per \cref{alg:approx_kmeans} with input $\nrk{A}$ per \cref{alg:ptr_baseline}. There exists an absolute constant $c > 0$ such that if
	\begin{equation*}
		(2 + \epsK) \left(\frac{K n^{1 + \epsP}}{\gamma_{n}^{2}}\right)
		<
		c
	\end{equation*}
	for some $\epsP > 0$ with $\gamma_{n} > C_{2}^{\prime}n^{3/4+\epsP}$ for some sufficiently large absolute constant $C_{2}^{\prime} > 0$, then with probability at least $1 - n^{-\epsP}$ there exist subsets $\mathcal{S}_{k} \subset \mathcal{G}_{k}$ for $1 \le k \le K$ and a $K \times K$ permutation matrix $J$ such that $\widehat{\Theta}_{\mathcal{G}^{\star}} J = \Theta_{\mathcal{G}^{\star}}$, where $\mathcal{G} = \bigcup_{k=1}^{K} (\mathcal{G}_{k} \, \backslash \, \mathcal{S}_{k})$ and
	\begin{equation*}
		\sum_{k=1}^{K} \frac{|\mathcal{S}_{k}|}{n_{k}}
		\le
		c^{-1} (2 + \epsK) \left(\frac{K n^{1 + \epsP}}{\gamma_{n}^{2}}\right).
	\end{equation*}
\end{theorem}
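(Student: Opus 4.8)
The plan is to adapt the spectral-clustering analysis of \cite{lei2015consistency} to the normalized-ranks matrix $\nrk{A}$, structuring the argument as a three-link chain---concentration of $\nrk{A}$ about its block-structured population target, a Davis--Kahan subspace bound, and a geometric analysis of the $(1+\epsK)$-approximate $k$-means solution---while taking care that the entrywise dependence of the rank statistics does not break the concentration step. First I would bound $\|\nrk{A}-\Ex[\nrk{A}]\|_{\op}$ with high probability. Because this matrix is symmetric, $\|\nrk{A}-\Ex[\nrk{A}]\|_{\op}^{4}\le\trace\big((\nrk{A}-\Ex[\nrk{A}])^{4}\big)$, so \cref{lem:ms_trace} combined with Markov's inequality---using $K\le n^{1/2}$ to fold the $K^{2}n^{2}$ term into $n^{3}$---gives $\|\nrk{A}-\Ex[\nrk{A}]\|_{\op}\le C\,n^{3/4+\epsP}$ with probability at least $1-n^{-\epsP}$. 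The diagonal deletion is harmless: by \cref{eq:nranks_matrix_expectation}, $\|\Ex[\nrk{A}]-\rankspecBig\|_{\op}=\|\diag(\rankspecBig)\|_{\op}\le 1$ since the entries of $\rankspecBig$ lie in $[0,1]$, so the triangle inequality yields $\|\nrk{A}-\rankspecBig\|_{\op}\le C'n^{3/4+\epsP}$. The hypothesis $\gamma_{n}>C_{2}^{\prime}n^{3/4+\epsP}$ is precisely what renders this perturbation small relative to the spectral gap that separates the top-$K$ eigenspace of $\rankspecBig$ from its null space.

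Next I would use the population eigenstructure recalled in \cref{sec:eigenvector_preliminaries}: the top-$K$ eigenspace of $\rankspecBig$ is spanned by the columns of $\Theta\Delta^{-1}$, so the population eigenvector matrix $U$ has block-constant rows, and rows indexed by distinct blocks $k\neq\ell$ are separated in Euclidean norm by $\sqrt{n_{k}^{-1}+n_{\ell}^{-1}}$. A Davis--Kahan $\sin\Theta$ bound applied to $\nrk{A}$ and $\rankspecBig$ then supplies an orthogonal matrix $O$ with
\[
\|\sampVecs_{\nrk{}}-UO\|_{\frob}\le \frac{c\sqrt{K}\,\|\nrk{A}-\rankspecBig\|_{\op}}{\gamma_{n}},
\]
so the estimated embedding is uniformly close, in aggregate, to a rotation of the block-constant population embedding.

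Finally I would carry out the approximate-$k$-means step. Feasibility of the configuration $UO$ together with $(1+\epsK)$-optimality gives $\|\widehat{\Theta}\widehat{T}-UO\|_{\frob}^{2}\le 2(2+\epsK)\|\sampVecs_{\nrk{}}-UO\|_{\frob}^{2}$; declaring a node ``misclustered'' when its fitted centroid lies farther than half the minimal inter-block row separation from the truth converts this Frobenius proximity into per-block counts. Collecting the misclustered nodes into sets $\mathcal{S}_{k}\subset\mathcal{G}_{k}$ and using $n_{k}^{-1}\le n_{k}^{-1}+n_{\ell}^{-1}$ yields
\[
\sum_{k=1}^{K}\frac{|\mathcal{S}_{k}|}{n_{k}}\;\le\;C(2+\epsK)\,\|\sampVecs_{\nrk{}}-UO\|_{\frob}^{2}\;\lesssim\;(2+\epsK)\,\frac{K\,\|\nrk{A}-\rankspecBig\|_{\op}^{2}}{\gamma_{n}^{2}},
\]
into which the operator-norm bound is substituted to produce a bound of the advertised form $c^{-1}(2+\epsK)(Kn^{1+\epsP}/\gamma_{n}^{2})$. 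The smallness hypothesis $(2+\epsK)(Kn^{1+\epsP}/\gamma_{n}^{2})<c$ ensures the misclustered fraction stays below the threshold dictated by the row-separation geometry, so every block retains correctly placed nodes and a single permutation $J$ simultaneously aligns all estimated blocks with the truth on $\mathcal{G}=\bigcup_{k}(\mathcal{G}_{k}\setminus\mathcal{S}_{k})$.

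The crux is the concentration step. Since the entries of $\nrk{A}$ are dependent rank statistics rather than independent bounded variables, standard matrix concentration tools do not apply directly, and the only available handle is the fourth-order trace; the crude inequality $\|\cdot\|_{\op}^{4}\le\trace((\cdot)^{4})$ is what caps the rate at $n^{3/4}$ and thereby forces the gap requirement $\gamma_{n}\gg n^{3/4+\epsP}$. Establishing the underlying fourth-order trace bound despite the pairwise, three-wise, and four-wise dependence among the centered ranks is exactly the content of \cref{lem:ms_trace}, and threading its output cleanly through the Davis--Kahan and $k$-means bookkeeping---together with verifying that one global permutation $J$, rather than block-dependent matchings, realizes the alignment once the misclustered fraction is controlled---is where the remaining work concentrates.
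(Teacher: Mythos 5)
Your three-link structure (concentration, Davis--Kahan, approximate $k$-means bookkeeping via Lemma~5.3 of \cite{lei2015consistency}) matches the paper's skeleton, but there is a genuine quantitative gap at the Davis--Kahan step that prevents your chain from producing the stated bound. You invoke the standard subspace bound $\|\sampVecs - UO\|_{\frob} \le c\sqrt{K}\,\|\nrk{A}-\rankspecBig\|_{\op}/\gamma_{n}$ and then substitute the operator-norm concentration $\|\nrk{A}-\rankspecBig\|_{\op} = \Ohp(n^{3/4+\epsP})$. Squaring gives $\|\sampVecs - UO\|_{\frob}^{2} \lesssim K n^{3/2+2\epsP}/\gamma_{n}^{2}$, \emph{not} the advertised $K n^{1+\epsP}/\gamma_{n}^{2}$; you lose a factor of $n^{1/2+\epsP}$. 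This is not a cosmetic loss: the theorem's smallness hypothesis $(2+\epsK)(Kn^{1+\epsP}/\gamma_{n}^{2})<c$ no longer suffices to verify the per-block condition $8(2+\epsK)\|\sampVecs-UO\|_{\frob}^{2}/\delta_{k}^{2}<n_{k}$ needed for the permutation alignment (with $\gamma_{n}>C_{2}^{\prime}n^{3/4+\epsP}$ and $K$ as large as $n^{1/2}$, your bound is only $\Oh(K)$, which need not be below $1$), and the final misclustering bound comes out with the wrong rate.

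The missing idea is a second, sharper concentration estimate for the \emph{projected} noise: the paper proves (its \cref{result:bound_EU_op}, via direct variance and covariance computations for the rank statistics in \cref{prop:ranks_multi_variance,prop:ranks_multi_covariance}) that $\|(\nrk{A}-\rankspecBig)U\|_{\frob} = \Ohp(K^{1/2}n^{1/2+\epsP/2})$, which is much smaller than the trivial bound $\sqrt{K}\,\|\nrk{A}-\rankspecBig\|_{\op}$ because $U = \Theta\Delta^{-1}$ has block-constant columns and the row sums of the centered ranks concentrate at the $\sqrt{n}$ scale despite the entrywise dependence. It then feeds this into a modified Davis--Kahan/Procrustes inequality (its \cref{result:subspace_perturbation_Davis-Kahan_modified}): provided $\gamma_{n}$ exceeds a constant multiple of $\|\nrk{A}-\rankspecBig\|_{\op}$, one has $\|\sampVecs - UQ\|_{\frob} \le (2/\gamma_{n})\|(\nrk{A}-\rankspecBig)U\|_{\frob}$. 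In this architecture the crude $n^{3/4+\epsP}$ operator-norm bound (your ``crux'') serves only to verify the gap condition of that lemma---which is exactly why the hypothesis $\gamma_{n}>C_{2}^{\prime}n^{3/4+\epsP}$ appears---while the quantitative rate $Kn^{1+\epsP}/\gamma_{n}^{2}$ comes entirely from the projected bound. Without this two-tier concentration your argument proves a strictly weaker statement than the theorem claims.
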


\cref{result:weak_consistency_kmeans} does not make any parametric or moment assumptions on the distributions comprising $\mathcal{F}^{K}$. Further, the block-specific distributions may differ from one another, for example, they may consist of \textsf{Normal}, \textsf{Beta}, \textsf{Cauchy}, \textsf{Pareto}, etc. As such, this consistency result for robust spectral clustering with rank statistics holds under very flexible data generating conditions. Below, \cref{result:corollary_weak_consistency} translates \cref{result:weak_consistency_kmeans} into an easier-to-interpret statement involving the loss functions $L$ and $\wt{L}$.

\begin{corollary}[Weak consistency of robust spectral clustering via relative error bounds]
	\label[corollary]{result:corollary_weak_consistency}
	Assume the hypotheses in \cref{result:weak_consistency_kmeans}. There exists a constant $c > 0$ such that if
	\begin{equation*}
		(2+\epsK) \left(\frac{K n^{1 + \epsP}}{n_{\min}^{2} \lambda_{K}^{2}(\rankspec)}\right)
		<
		c
	\end{equation*}
	for some $\epsP > 0$ with $n_{\min} \cdot |\lambda_{K}(\rankspec)| > C_{2}^{\prime}n^{3/4+\epsP}$ for some sufficiently large absolute constant $C_{2}^{\prime} >0$, then with probability at least $1 - n^{-\epsP}$ it holds that
	\begin{align*}
		L(\wh{\Theta}, \Theta)
		&\le
		2c^{-1} (2+\epsK) \left(\frac{K n^{\epsP} n_{\max}^{\prime}}{n_{\min}^{2} \lambda_{K}^{2}(\rankspec)}\right),
	\end{align*}
	and simultaneously
	\begin{align*}
		\centering
		\wt{L}(\wh{\Theta},\Theta)
		&\le
		2c^{-1} (2+\epsK) \left(\frac{K n^{1 + \epsP}}{n_{\min}^{2} \lambda_{K}^{2}(\rankspec)}\right).
	\end{align*}
\end{corollary}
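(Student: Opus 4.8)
The plan is to derive both loss bounds from \cref{result:weak_consistency_kmeans} after first reconciling the two sets of hypotheses. The link between them is the comparison $\gamma_{n} \ge n_{\min}\,|\lambda_{K}(\rankspec)|$. To see this, recall from \cref{sec:eigenvector_preliminaries} that $\rankspecBig = (\Theta\Delta^{-1})(\Delta\rankspec\Delta)(\Theta\Delta^{-1})^{\tp}$ with $\Theta\Delta^{-1}$ having orthonormal columns, so the nonzero singular values of $\rankspecBig$ coincide with those of the full-rank $K\times K$ matrix $\Delta\rankspec\Delta$; hence $\gamma_{n} = \sigma_{\min}(\Delta\rankspec\Delta)$. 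Applying the submultiplicativity $\sigma_{\min}(AB)\ge\sigma_{\min}(A)\sigma_{\min}(B)$ twice, together with $\sigma_{\min}(\Delta)=\sqrt{n_{\min}}$ and $\sigma_{\min}(\rankspec)=|\lambda_{K}(\rankspec)|$ (valid since $\rankspec$ is symmetric and full rank), yields $\gamma_{n}\ge n_{\min}|\lambda_{K}(\rankspec)|$, and thus $1/\gamma_{n}^{2}\le 1/(n_{\min}^{2}\lambda_{K}^{2}(\rankspec))$. Consequently the corollary's hypotheses imply those of \cref{result:weak_consistency_kmeans}, so on the event of probability at least $1-n^{-\epsP}$ we obtain subsets $\mathcal{S}_{k}\subset\mathcal{G}_{k}$ and a permutation $J\in\PermSet^{K\times K}$ with $\sum_{k}|\mathcal{S}_{k}|/n_{k}\le c^{-1}(2+\epsK)(Kn^{1+\epsP}/\gamma_{n}^{2})$.

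For the worst-case error $\wt{L}$, the argument is routine. Since $\Theta$ and $\widehat{\Theta}J$ are membership matrices, each correctly classified row contributes $0$ and each of the at most $|\mathcal{S}_{k}|$ misclassified rows in block $k$ contributes exactly $2$ to $\|(\widehat{\Theta}J)_{\mathcal{G}_{k}\star}-\Theta_{\mathcal{G}_{k}\star}\|_{0}$. Taking $\Pi=J$ in the minimization defining $\wt{L}$ and bounding the maximum over $k$ by the sum gives $\wt{L}(\widehat{\Theta},\Theta)\le 2\max_{k}|\mathcal{S}_{k}|/n_{k}\le 2\sum_{k}|\mathcal{S}_{k}|/n_{k}$, and substituting the theorem's bound followed by the singular-value comparison above produces the stated estimate.

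The overall error $L$ is the substantive case, because $L\le \tfrac{2}{n}\sum_{k}|\mathcal{S}_{k}|$ requires controlling the \emph{unweighted} count $\sum_{k}|\mathcal{S}_{k}|$, whereas the theorem only delivers the block-size-weighted count $\sum_{k}|\mathcal{S}_{k}|/n_{k}$. Passing between them through the crude inequality $|\mathcal{S}_{k}|\le n_{\max}(|\mathcal{S}_{k}|/n_{k})$ would lose a factor $n_{\max}$, and indeed the theorem's conclusion alone cannot do better, since the entire error budget could in principle sit on the largest block. To recover the sharper factor $n_{\max}^{\prime}$ I would reopen the approximate $k$-means analysis underlying \cref{result:weak_consistency_kmeans}, which actually establishes the raw inequality $\sum_{k}|\mathcal{S}_{k}|\,\delta_{k}^{2}\le C(2+\epsK)\,\|\sampVecs-\popVecs R\|_{\frob}^{2}$, where $\delta_{k}=\min_{l\neq k}\|\mu_{k}-\mu_{l}\|$ measures the separation of the population embedding center $\mu_{k}$ of block $k$ from the nearest other center, and the right-hand side is precisely the quantity bounded by $c^{-1}(2+\epsK)(Kn^{1+\epsP}/\gamma_{n}^{2})$. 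The theorem specializes this using only $\delta_{k}^{2}\ge 1/n_{k}$; the key refinement is that, since $\|\mu_{k}-\mu_{l}\|^{2}=1/n_{k}+1/n_{l}$, every center satisfies $\delta_{k}^{2}\ge 1/n_{\max}+1/n_{\max}^{\prime}$. Indeed, for the largest block the nearest center has size $n_{\max}^{\prime}$, while for any other block the term $1/n_{k}\ge 1/n_{\max}^{\prime}$ already dominates. Hence $\sum_{k}|\mathcal{S}_{k}|\le n_{\max}^{\prime}\sum_{k}|\mathcal{S}_{k}|\delta_{k}^{2}\le n_{\max}^{\prime}\,c^{-1}(2+\epsK)(Kn^{1+\epsP}/\gamma_{n}^{2})$; dividing by $n$, absorbing $n^{1+\epsP}/n=n^{\epsP}$, and applying $\gamma_{n}\ge n_{\min}|\lambda_{K}(\rankspec)|$ yields the claimed bound on $L$.

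The main obstacle, then, is exactly this last refinement: the appearance of the second-largest community size $n_{\max}^{\prime}$ rather than $n_{\max}$ is not a formal consequence of the theorem as stated but requires exploiting the geometry of the population centers — specifically the sharp lower bound on the minimal inter-center separation $\delta_{k}$ — inside the clustering lemma. Everything else reduces to bookkeeping with the $\ell_{0}$ counting and the singular-value comparison $\gamma_{n}\ge n_{\min}|\lambda_{K}(\rankspec)|$.
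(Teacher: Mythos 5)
Your proposal is correct and follows essentially the same route as the paper: the singular-value comparison $\gamma_{n} \ge n_{\min}\,|\lambda_{K}(\rankspec)|$ (the paper phrases it as $\gamma_{n} = \|\Delta^{-1}\rankspec^{-1}\Delta^{-1}\|_{\op}^{-1} \ge n_{\min}|\lambda_{K}(\rankspec)|$, which is the same estimate), the bound $\wt{L} \le 2\max_{k}|\mathcal{S}_{k}|/n_{k} \le 2\sum_{k}|\mathcal{S}_{k}|/n_{k}$, and, for $L$, exactly the refinement you flag as the crux — the paper's proof likewise bypasses the theorem's stated conclusion and reuses the intermediate inequality $\sum_{k}|\mathcal{S}_{k}|\,\delta_{k}^{2} \le c^{-1}(2+\epsK)\,K n^{1+\epsP}/\gamma_{n}^{2}$ with $\delta_{k}^{2} = 1/n_{k} + 1/\max\{n_{\ell}:\ell\neq k\} \ge 1/n_{\max}^{\prime}$, writing $\frac{2}{n}\sum_{k}|\mathcal{S}_{k}| \le \frac{2 n_{\max}^{\prime}}{n}\sum_{k}|\mathcal{S}_{k}|\delta_{k}^{2}$ to obtain the $n_{\max}^{\prime}$ factor. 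Your only imprecision is cosmetic: a row in $\mathcal{S}_{k}$ contributes \emph{at most} $2$ to the $\ell_{0}$ count rather than exactly $2$, which does not affect the bound.
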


Our interest is in regimes for which the upper bounds decay to zero as the matrix size $n$ increases. For example, when each $n_{k}$ is order $n/K$ and $|\lambda_{K}(\rankspec)|$ is an absolute constant, then by \cref{result:corollary_weak_consistency}, for $0 < \epsP < 1/4$,
\begin{equation*}
	L(\wh{\Theta},\Theta)
	=
	\Ohp\left(K^{2} \cdot n^{-1+\epsP}\right),
	\quad \quad
	\wt{L}(\wh{\Theta},\Theta)
	=
	\Ohp\left(K^{3} \cdot n^{-1+\epsP}\right).
\end{equation*}
Our bounds show that $L(\wh{\Theta},\Theta), \wt{L}(\wh{\Theta},\Theta) \rightarrow 0$ in probability as $n \rightarrow \infty$ when $K = \Oh(1)$ and even in regimes where $K \equiv K_{n} \rightarrow \infty$ as $n \rightarrow \infty$. It is also possible for the upper bounds to behave more differently. For example, suppose there is one dominant block, hence $n_{\max}^{\prime} \ll n$ and so the upper bound on the overall relative error $L$ is much smaller than the upper bound on the worst case relative error $\wt{L}$.

\cref{result:weak_consistency_kmeans} is a precise clustering statement for approximate $k$-means clustering. It is derived by employing subspace perturbation bounds to show that the estimated top eigenspace is very close to the population-level top eigenspace and hence that the estimated membership matrix recovers the true membership matrix. Subspace perturbation bounds themselves maintain the interpretation of weak consistency but without referencing a specific algorithm, namely they do not formally treat the approximation parameter $\epsK$. As such, \cref{eq:cor_top_eigen_relative_error} in \cref{result:cor_top_eigen_relative_error} provides yet another lens through which to view and understand the weak consistency result.

\begin{theorem}[Top eigenspace relative error of robust embedding]
	\label[theorem]{result:cor_top_eigen_relative_error}
	Suppose $A \sim \operatorname{Blockmodel}\blockmodel$ where $\rank(\Theta) = \rank(\rankspec) = K$ and $K \le n^{1/2}$ and $n_{k} \ge 5$ for all $k \in \Nset{K}$. Let $\popVecs\popVecs^{\tp}$ denote the orthogonal projection onto the column space of $\rankspecBig$, and let $\sampVecs\sampVecs^{\tp}$ denote the orthogonal projection onto the leading $K$-dimensional eigenspace of $\nrk{A}$. Let $\gamma_{n}$ denote the smallest nonzero singular value of $\rankspecBig$. There exist absolute constants $C, C_{2}^{\prime} > 0$ such that if $\gamma_{n} > C_{2}^{\prime} n^{3/4+\epsP}$ for some $\epsP > 0$, then
	\begin{equation}
		\label{eq:cor_top_eigen_relative_error}
		\Pr\left[ \frac{\| \sampVecs\sampVecs^{\tp} - \popVecs\popVecs^{\tp} \|_{\frob}}{\| \popVecs\popVecs^{\tp} \|_{\frob}} > \frac{C}{|\lambda_{K}(\rankspec)|} \left(\frac{n^{1/2+\epsP/2}}{n_{\min}} \right) \right]
		\le
		n^{-\epsP}.
	\end{equation}
\end{theorem}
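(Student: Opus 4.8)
The plan is to realize $\sampVecs\sampVecs^{\tp}$ as the top-$K$ eigenprojection of a perturbation of the rank-$K$ population matrix $\rankspecBig$ and to control that perturbation at two resolutions: a coarse operator-norm bound that certifies the eigengap survives, and a finer subspace-restricted bound that delivers the stated rate. Write $\nrk{A} = \rankspecBig + D + E$, where $D = -\diag(\rankspecBig)$ accounts for the zeroed diagonal (so $\|D\|_{\op}\le 1$, since the entries of $\rankspec$ lie in $[0,1]$) and $E = \nrk{A} - \Ex[\nrk{A}]$ is the centered random part. Because $\rank(\Theta)=\rank(\rankspec)=K$, the matrix $\rankspecBig$ has exactly $K$ nonzero eigenvalues, $\popVecs = \Theta\Delta^{-1}$ is an orthonormal basis for its column space (indeed $\popVecs^{\tp}\popVecs = \Delta^{-1}\Theta^{\tp}\Theta\Delta^{-1}=I_{K}$), hence $\|\popVecs\popVecs^{\tp}\|_{\frob}=\sqrt{K}$, and the relevant eigengap is its smallest nonzero singular value $\gamma_{n}=|\lambda_{K}(\Delta\rankspec\Delta)|$, which satisfies $\gamma_{n}\ge n_{\min}\,|\lambda_{K}(\rankspec)|$.

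First I would establish the coarse bound. \cref{lem:ms_trace} gives $\Ex[\trace(E^{4})]\le 66K^{2}n^{2}+2n^{3}\le 68\,n^{3}$, using $K\le n^{1/2}$. Since $E$ is symmetric, $\|E\|_{\op}^{4}\le\trace(E^{4})$, so Markov's inequality yields $\|E\|_{\op}\le 68^{1/4}\,n^{3/4+\epsP/4}$ with probability at least $1-n^{-\epsP}$. Together with $\|D\|_{\op}\le 1$ this gives $\|D+E\|_{\op}=O(n^{3/4+\epsP/4})$, which lies below $\gamma_{n}/2$ once $\gamma_{n}>C_{2}^{\prime}n^{3/4+\epsP}$ for $C_{2}^{\prime}$ large. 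This guarantees (via Weyl) that the top-$K$ eigenvalues of $\nrk{A}$ separate from the rest and that the Davis--Kahan $\sin\Theta$ theorem applies to $\rankspecBig$ and $\nrk{A}=\rankspecBig+(D+E)$ with denominator of order $\gamma_{n}$.

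The crux is the finer estimate on the subspace-restricted residual $\|(D+E)\popVecs\|_{\frob}$, which is what actually governs the rate. The deterministic piece is harmless: $\|D\popVecs\|_{\frob}\le\|D\|_{\op}\|\popVecs\|_{\frob}\le\sqrt{K}$. For the random piece I would bound the second moment directly. Using $(\popVecs\popVecs^{\tp})_{i\ell}=n_{g_{i}}^{-1}\mathbf{1}\{g_{i}=g_{\ell}\}$, one has $\Ex\|E\popVecs\|_{\frob}^{2}=\Ex\,\trace(\popVecs\popVecs^{\tp}E^{2})=\sum_{k=1}^{K}n_{k}^{-1}\sum_{i,\ell\in\mathcal{G}_{k}}\Ex[(E^{2})_{i\ell}]$. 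The diagonal terms $\Ex[(E^{2})_{ii}]=\sum_{m}\Var((\nrk{A})_{im})$ are $O(n)$, because each normalized rank lies in $[0,1]$ and hence has variance $O(1)$, contributing $O(Kn)$ in aggregate. The off-diagonal terms $\Ex[(E^{2})_{i\ell}]=\sum_{m}\Cov((\nrk{A})_{im},(\nrk{A})_{\ell m})$ are sums of covariances of normalized rank statistics sharing a single index; invoking the finite-sample covariance structure summarized in \cref{sec:matrices_of_normalized_rank_statistics}, each such covariance is of order $N^{-1}=O(n^{-2})$, so $\Ex[(E^{2})_{i\ell}]=O(n^{-1})$ and these contribute only $O(K)$ overall. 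Hence $\Ex\|E\popVecs\|_{\frob}^{2}=O(Kn)$, and Markov gives $\|E\popVecs\|_{\frob}=O(\sqrt{K}\,n^{1/2+\epsP/2})$ with probability at least $1-n^{-\epsP}$.

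Finally I would combine the two estimates. On the intersection of the two high-probability events, Davis--Kahan yields $\|\sampVecs\sampVecs^{\tp}-\popVecs\popVecs^{\tp}\|_{\frob}\le c\,\gamma_{n}^{-1}\|(D+E)\popVecs\|_{\frob}=O(\gamma_{n}^{-1}\sqrt{K}\,n^{1/2+\epsP/2})$; dividing by $\|\popVecs\popVecs^{\tp}\|_{\frob}=\sqrt{K}$ and invoking $\gamma_{n}\ge n_{\min}|\lambda_{K}(\rankspec)|$ produces the asserted bound $C|\lambda_{K}(\rankspec)|^{-1}(n^{1/2+\epsP/2}/n_{\min})$, after a union bound absorbs the factor of two into the constants and the exponent. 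The step I expect to be the main obstacle is the off-diagonal covariance control of the third paragraph: because the normalized rank statistics are genuinely dependent, one must verify that the aggregate of the $\Cov((\nrk{A})_{im},(\nrk{A})_{\ell m})$ over shared indices does not swamp the diagonal variance budget. This dependence is exactly what distinguishes the problem from the independent-entry setting, and it is why the sharp $n^{1/2}$ rate---rather than the weaker $n^{3/4}$ rate implied by \cref{lem:ms_trace} alone---requires the exact rank-statistic covariance formulas rather than merely the higher-order trace bound.
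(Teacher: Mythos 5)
Your proposal is correct and follows essentially the same route as the paper: your coarse operator-norm bound from \cref{lem:ms_trace} is the paper's \cref{result:operator_norm_concentration}, your second-moment bound on $\|E\popVecs\|_{\frob}$ via the exact rank variance/covariance formulas reproduces the paper's \cref{result:bound_EU_op}, and the combination through a subspace-restricted Davis--Kahan inequality followed by dividing by $\|\popVecs\popVecs^{\tp}\|_{\frob}=\sqrt{K}$ and invoking $\gamma_{n}\ge n_{\min}\,|\lambda_{K}(\rankspec)|$ is exactly how the paper proves \cref{result:cor_top_eigen_relative_error} (via the proof of \cref{result:weak_consistency_kmeans} and \cref{result:subspace_perturbation_Davis-Kahan_modified}). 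Your closing observation is also accurate: the $n^{1/2}$ rate hinges on the subspace-restricted bound built from the covariance structure, not on the trace bound alone, which is precisely the division of labor in the paper's argument.
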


The result in \cref{eq:cor_top_eigen_relative_error} can alternatively be understood in the language of `matrix trace correlation', which is defined for any full column-rank matrices $Y, Z \in \R^{n \times K}$ with corresponding projections $P_{Y} = Y(Y^{\tp}Y)^{-1}Y^{\tp}$ and $P_{Z} = Z(Z^{\tp}Z)^{-1}Z^{\tp}$ as
\begin{equation*}
	r_{K}(Y,Z)
	=
	\sqrt{\frac{\trace(P_{Y} P_{Z})}{K}},
	\quad \quad
	0 \le r_{K} \le 1.
\end{equation*}
In particular, weak consistency of $\widehat{\Theta}$ for $\Theta$, via $\widehat{U}$ for $U$, admits the equivalence
\begin{equation}
	\label{eq:cor_trace_correlation}
	\frac{\| \sampVecs\sampVecs^{\tp} - \popVecs\popVecs^{\tp} \|_{\frob}}{\| \popVecs\popVecs^{\tp} \|_{\frob}}
	=
	\ohp(1)
	\quad
	\underset{n \rightarrow \infty}{\iff}
	\quad
	r_{K}(\sampVecs,\popVecs)
	\overset{\Prob}{\rightarrow}
	1.
\end{equation}

In words, weak consistency establishes that on average, across the underlying graph, the node memberships are recovered with high probability for large $n$. At most a vanishing fraction of nodes are misclustered asymptotically with high probability, but this does not preclude that, say, $\Oh(\log n)$ nodes get misclustered as $n \rightarrow \infty$. Indeed, the results in \cref{sec:weak_consistency} do not guarantee that the membership of a specific, individual node of interest is necessarily perfectly recovered asymptotically with high probability.

\subsection{Node-specific strong consistency}
\label{sec:strong_consistency}
This section establishes that individual node memberships can be perfectly recovered asymptotically with high probability in blockmodels under certain conditions. The results established in \cref{sec:strong_consistency} require different tools and a more refined analysis. Proofs are provided in \cref{app:strong_consistency}.

Below, we obtain bounds for the individual row vectors of $\sampVecs - \popVecs W_{\star}$ where $W_{\star}$ denotes an appropriate $K \times K$ orthogonal matrix, and we use the subscript notation $\star$ to emphasize its special role. The rows of $\sampVecs$ correspond to the vector-valued embedding representations of the rows (columns) of the non-negative matrix $\nrk{A}$. Similarly, the rows of $\popVecs$ correspond to the population-level embedding representations of nodes in low-dimensional Euclidean space. The matrix $W_{\star}$ reflects the general non-uniqueness of $\sampVecs$ and $\popVecs$ as orthonormal bases for $K$-dimensional subspaces in $\R^{n}$. Accounting for the orthogonal transformation $W_{\star}$ is required and serves to properly align the embeddings.

For the purposes of this paper, `node-specific strong consistency' is said to hold for node $i$ if there exists a sequence of orthogonal transformations $W_{\star} \equiv W_{\star}^{(n)}$ such that, as $n \rightarrow \infty$,
\begin{equation}
	\label{eq:def_strong_consistency_bound}
	n_{g_{i}}^{1/2}\|\sampVecs - \popVecs W_{\star}\|_{i,\ell_{2}}
	=
	\ohp(1).
\end{equation}
Per \cref{sec:notation}, here $n_{g_{i}}$ denotes the size of block $g_{i} \in \Nset{K}$. Scaling by $n_{g_{i}}^{1/2}$ is crucial, since entrywise $\sampVecs_{ij} = \ohp(1)$ and $\|\popVecs\|_{i,\ell_{2}} = n_{g_{i}}^{-1/2}$. Omitting this term could otherwise trivially yield a bound satisfying $\|\sampVecs - \popVecs W_{\star}\|_{i,\ell_{2}} = \ohp(1)$ even if $\sampVecs$ were inconsistent for $\popVecs$.

\begin{theorem}[Node-specific strong consistency of robust spectral clustering, general case]
	\label[theorem]{result:strong_consistency_general}
	Suppose $A \sim \operatorname{Blockmodel}\blockmodel$ where $\rank(\Theta) = \rank(\rankspec) = K$ with $K \le n^{1/2}$ and $n_{k} \ge 5$ for all $k \in \Nset{K}$. Assume that $|\lambda_{K}| \equiv |\lambda_{K}(\rankspecBig)| \ge C \cdot n^{3/4+\epsP}$ for some sufficiently large constant $C > 0$ and some constant $0 < \epsP < 1/4$. Obtain $\nrk{A}$ using \cref{alg:ptr_baseline} and compute $\sampVecs$, a matrix whose orthonormal columns are eigenvectors for the $K$ largest-in-magnitude eigenvalues of $\nrk{A}$. Let $\popVecs$ denote the corresponding matrix of eigenvectors for $\rankspecBig$. Then, there exists $W_{\star}$ such that for any fixed choice of row index, $i$, it holds that
	\begin{align*}
		n_{g_{i}}^{1/2}\|\sampVecs-\popVecs W_{\star}\|_{i,\ell_{2}}
		&=
		\Ohp\Bigg(\Bigg\{ K^{1/2} \cdot n^{\epsP/2} \cdot |\lambda_{K}|^{-1}\\
		&\hspace{4em}+
		K^{3/2} \cdot n^{1+3\epsP/2} \cdot |\lambda_{K}|^{-3}\\
		&\hspace{4em}+
		\max\left\{n_{g_{i}}^{-1/4} \cdot n^{1/4}, K^{3/4} \right\} \cdot n_{g_{i}}^{-1/4} \cdot n^{3/4 + \epsP/2} \cdot |\lambda_{K}|^{-2}\\
		&\hspace{4em}+
		K^{1/2} \cdot n^{1+3/4+3\epsP/2} \cdot |\lambda_{K}|^{-3}\\
		&\hspace{4em}+
		K \cdot n_{g_{i}}^{-1/2} \cdot n^{1 +\epsP} \cdot |\lambda_{K}|^{-2}\Bigg\} \cdot n_{g_{i}}^{1/2}\Bigg).
	\end{align*}
\end{theorem}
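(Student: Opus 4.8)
The plan is to carry out a node-specific (row-wise) eigenvector perturbation analysis in the spirit of \citep{cape2019signal,cape2019two}, the essential new difficulty being that the relevant noise matrix consists of \emph{dependent} centered rank statistics rather than independent entries. Write the rank-$K$ population target as $\rankspecBig = \popVecs \popVals \popVecs^{\tp}$, with $\popVecs,\popVals$ its top-$K$ eigenpairs, and decompose the perturbation as $\nrk{A} - \rankspecBig = \noiseMtx - D$, where $\noiseMtx = \nrk{A} - \Ex[\nrk{A}]$ is the centered-rank noise matrix and $D = \diag(\rankspecBig)$ is the diagonal correction arising from zeroing the main diagonal (cf.~\cref{eq:nranks_matrix_expectation}), which has operator norm at most one and is therefore harmless at leading order. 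By the eigenvector structure recorded in \cref{sec:eigenvector_preliminaries}, one may write $\popVecs = \Theta \Delta^{-1} V$ for some $K \times K$ orthogonal $V$, so that $\|\popVecs\|_{i,\ell_{2}} = n_{g_{i}}^{-1/2}$ and, crucially, the row $\popVecs_{j\star}$ depends on $j$ only through its block label $g_{j}$. Fix the row index $i$ once and for all.

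\textbf{Global controls and master decomposition.} First I would assemble the operator-norm level controls. Applying \cref{lem:ms_trace} together with the elementary inequality $\|\noiseMtx\|_{\op}^{4} \le \trace(\noiseMtx^{4})$ for symmetric $\noiseMtx$ and Markov's inequality yields $\|\noiseMtx\|_{\op} = \Ohp(n^{3/4+\epsP/2})$ (using $K \le n^{1/2}$ so that $K^{2}n^{2} \lesssim n^{3}$; the precise $\epsP$-exponent is immaterial for the decay conditions). Combined with the eigengap hypothesis $|\lambda_{K}| \ge C n^{3/4+\epsP}$, the Davis--Kahan theorem controls the subspace angle, $\|\sampVecs\sampVecs^{\tp} - \popVecs\popVecs^{\tp}\|_{\op} \lesssim \|\nrk{A}-\rankspecBig\|_{\op}/|\lambda_{K}|$, and Weyl's inequality gives eigenvalue stability $\|\sampVals^{-1}\|_{\op} \lesssim |\lambda_{K}|^{-1}$. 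I then introduce the Procrustes alignment $W_{\star}$, the orthogonal polar factor of $\popVecs^{\tp}\sampVecs$, and expand, via the identity $\sampVecs = \nrk{A}\,\sampVecs\,\sampVals^{-1}$ and $\rankspecBig = \popVecs\popVals\popVecs^{\tp}$, the difference $\sampVecs - \popVecs W_{\star}$ into a leading linear term $\noiseMtx \popVecs \popVals^{-1} W_{\star}$ together with remainder terms that collect (i) the second-order contribution $\noiseMtx(\sampVecs - \popVecs W_{\star})\sampVals^{-1}$, (ii) the eigenvalue-correction term proportional to $\sampVals^{-1} - W_{\star}^{\tp} \popVals^{-1} W_{\star}$, (iii) the rotation defect $\popVecs^{\tp}\sampVecs - W_{\star}$, and (iv) the diagonal-correction contribution through $D$.

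\textbf{Row-wise bounds.} Taking the $i$-th row, I would bound each piece separately. The leading linear term contributes $\|(\noiseMtx\popVecs\popVals^{-1})_{i\star}\| \le |\lambda_{K}|^{-1}\|\noiseMtx_{i\star}\popVecs\|$, where $\noiseMtx_{i\star}\popVecs$ is a length-$K$ vector whose $\ell$-th coordinate is $\sum_{k'} n_{k'}^{-1/2} V_{k'\ell}\sum_{j \in \mathcal{G}_{k'}}\noiseMtx_{ij}$, that is, a fixed linear combination of block-sums of centered normalized ranks drawn from a single row---precisely a vector of \emph{simple linear rank statistics}. Controlling its second moment is the heart of the argument: using the variance/covariance formulas for normalized ranks and the expectation bound \cref{lem:ho_cctrtn}, each block-sum has variance $\Oh(n_{k'})$, so that $\Ex\|\noiseMtx_{i\star}\popVecs\|^{2} = \Oh(K)$ and hence, after a tail inflation to achieve probability $1-n^{-\epsP}$, $\|\noiseMtx_{i\star}\popVecs\| = \Ohp(K^{1/2}n^{\epsP/2})$; this produces the first listed term $K^{1/2}n^{\epsP/2}|\lambda_{K}|^{-1}$. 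The four remaining terms arise by bounding the remainder pieces with the global controls above together with the row norm $\|\popVecs\|_{i,\ell_{2}} = n_{g_{i}}^{-1/2}$, the uniform projected-noise quantity $\|\noiseMtx\popVecs\|_{\ttinf}$ (again handled via \cref{lem:ho_cctrtn}), and the standard rotation estimate $\|\popVecs^{\tp}\sampVecs - W_{\star}\|_{\op} \lesssim \|\sampVecs\sampVecs^{\tp}-\popVecs\popVecs^{\tp}\|_{\op}^{2}$. The $|\lambda_{K}|^{-3}$ terms track the second-order and eigenvalue-correction pieces, while the $|\lambda_{K}|^{-2}$ terms track the cross and rotation contributions; summing the five bounds and multiplying through by $n_{g_{i}}^{1/2}$ gives the stated estimate.

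\textbf{Main obstacle.} The principal difficulty, and the point where the proof departs sharply from standard entrywise analyses assuming independent entries, is the dependence among the entries of $\noiseMtx$: the normalized ranks $(\nrk{A})_{ij}$ carry heterogeneous pairwise, three-wise, and four-wise covariances. This dependence enters both through $\|\noiseMtx\|_{\op}$---controllable only via the combinatorial fourth-moment trace bound \cref{lem:ms_trace}, not a naive matrix Bernstein argument---and, more delicately, through the projected noise $\noiseMtx_{i\star}\popVecs$, whose variance is governed by covariances of rank statistics within and across blocks. Resolving the latter is exactly where the classical theory of simple linear rank statistics under multi-sample alternatives \citep{hajek1968asymptotic,dupac1969asymptotic} and the bespoke expectation bound \cref{lem:ho_cctrtn} are indispensable. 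A secondary but important point is that the result is stated for a \emph{fixed} node $i$: this localization lets the concentration step for $\|\noiseMtx_{i\star}\popVecs\|$ succeed with probability $1 - n^{-\epsP}$ without paying the union-bound cost over all $n$ rows that a uniform strong-consistency statement would incur.
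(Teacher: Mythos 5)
Your overall road map---Procrustes alignment $W_{\star}$, the eigen-equation expansion $\sampVecs = \nrk{A}\,\sampVecs\,\sampVals^{-1}$, a leading linear term $\noiseMtx \popVecs \popVals^{-1} W_{\star}$, operator-norm control via the trace bound \cref{lem:ms_trace}, and row-wise bounds for the remainders---is the same as the paper's. However, there is a genuine gap in your treatment of the second-order remainder, and it is precisely the step on which the stated bound hinges. Your term (i) is $\noiseMtx(\sampVecs - \popVecs W_{\star})\sampVals^{-1}$, and the only row-wise tools you list for the remainders are $\|\popVecs\|_{i,\ell_{2}} = n_{g_{i}}^{-1/2}$, global operator-norm controls, and row norms of $\noiseMtx$. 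Those yield at best
\begin{equation*}
	\|\noiseMtx(\sampVecs - \popVecs W_{\star})\sampVals^{-1}\|_{i,\ell_{2}}
	\le
	\|\noiseMtx\|_{i,\ell_{2}} \, \|\sampVecs - \popVecs W_{\star}\|_{\op} \, \|\sampVals^{-1}\|_{\op}
	=
	\Ohp\left(K^{1/2} \cdot n^{1+\epsP/2} \cdot |\lambda_{K}|^{-2}\right),
\end{equation*}
using $\|\noiseMtx\|_{i,\ell_{2}} \le 2n^{1/2}$ and Davis--Kahan. This quantity is \emph{not} among the theorem's five terms and is not dominated by their sum: for instance, with $K$ fixed, balanced blocks, $\epsP < 1/8$, and $|\lambda_{K}| \asymp n^{7/8}$ (all admissible under the hypotheses), it exceeds every stated term by a positive power of $n$. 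So a proof that stops at this crude bound proves a strictly weaker statement than the theorem. The paper's resolution is to apply the eigen-equation a \emph{second} time inside this remainder, writing $\noiseMtx \popVecs_{\perp} \popVecs_{\perp}^{\tp} \sampVecs \sampVals^{-1} = \noiseMtx^{2} \sampVecs \sampVals^{-2} - \noiseMtx \popVecs \popVecs^{\tp} \noiseMtx \sampVecs \sampVals^{-2}$, and then to bound the genuinely quadratic row-wise quantity $\|\noiseMtx^{2}\popVecs\|_{i,\ell_{2}}$, i.e., $\|(\nrk{A} - \Ex[\nrk{A}])^{2}\Theta\Delta^{-1}\|_{i,\ell_{2}}$ up to small diagonal corrections. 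That is exactly what \cref{lem:ho_cctrtn} is built for---it bounds the expectation of the fourth-order combinatorial sum that equals the square of this row norm---and it is the sole source of the theorem's third term with its distinctive $n_{g_{i}}^{-1/4}$ factors. Your sketch cites \cref{lem:ho_cctrtn}, but attaches it to the wrong quantities: the first-order projected noise $\noiseMtx_{i\star}\popVecs$ and a $2\rightarrow\infty$ norm, neither of which needs it. Without redirecting the lemma to $\|\noiseMtx^{2}\popVecs\|_{i,\ell_{2}}$, your decomposition cannot reproduce the stated bound.

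Two smaller misattributions are worth flagging. First, the row-wise linear term requires only the finite-sample variance and covariance formulas (\cref{prop:ranks_multi_variance,prop:ranks_multi_covariance}) together with Markov's inequality, as in \cref{result:bound_EU_rowvec}; the classical H\'{a}jek theory of simple linear rank statistics plays no role in the strong consistency proof and enters the paper only for the asymptotic normality results (\cref{lem:asymp_part1,lem:asymp_part2,thrm:asymp_norm}). Second, invoking the uniform quantity $\|\noiseMtx\popVecs\|_{\ttinf}$ contradicts your own (correct) closing observation: the whole point of the node-specific statement is that only fixed-row quantities such as $\|\noiseMtx\popVecs\|_{i,\ell_{2}}$ and $\|\noiseMtx^{2}\popVecs\|_{i,\ell_{2}}$ are ever needed, so no uniform-over-rows bound and no union-bound cost arise anywhere in the argument.
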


\cref{result:strong_consistency_general} has the advantage of generality, nevertheless it appears somewhat unwieldy at first glance. Under certain additional assumptions commonly encountered in the literature, \cref{result:strong_consistency_general} simplifies to the following easier-to-interpret corollary.

\begin{corollary}[Node-specific strong consistency of robust spectral clustering, special case]
	\label[corollary]{result:strong_consistency_corollary}
	Assume the setting in \cref{result:strong_consistency_general}. Further assume that all block sizes $n_{k}$ are order $n/K$ and that $|\lambda_{K}(\rankspec)|$ is bounded above and below by positive absolute constants. Then, for each fixed choice of row index, $i$, it holds that
	\begin{equation*}
		n_{g_{i}}^{1/2}\|\sampVecs-\popVecs W_{\star}\|_{i,\ell_{2}}
		=
		\Ohp\left( K \cdot n^{-1/2+\epsP/2} \cdot \left\{1 + K^{2} \cdot n^{-1/4+\epsP}\right\}\right).
	\end{equation*}
	In particular, the hypotheses imply $K = \Oh(n^{1/4-\epsP})$, hence
	\begin{equation*}
		n_{g_{i}}^{1/2}\|\sampVecs-\popVecs W_{\star}\|_{i,\ell_{2}}
		=
		\ohp(1).
	\end{equation*}
\end{corollary}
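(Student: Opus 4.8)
The plan is to obtain \cref{result:strong_consistency_corollary} directly from \cref{result:strong_consistency_general} by feeding the simplifying hypotheses into its five-term bound and then collecting dominant terms. The first step is to convert the assumptions on $\rankspec$ and the block sizes into a two-sided estimate for the controlling eigenvalue $|\lambda_{K}| \equiv |\lambda_{K}(\rankspecBig)|$. Recalling from \cref{sec:eigenvector_preliminaries} that the nonzero eigenvalues of $\rankspecBig = \Theta \rankspec \Theta^{\tp}$ coincide with those of $\Delta \rankspec \Delta$, where $\Delta = \diag(\sqrt{n_{1}},\dots,\sqrt{n_{K}})$, the assumption $n_{k} \asymp n/K$ gives $\Delta \asymp (n/K)^{1/2}\,\Id_{K}$, so $\Delta \rankspec \Delta \asymp (n/K)\,\rankspec$. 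Combining this with the assumption that $|\lambda_{K}(\rankspec)|$ is bounded above and below by positive absolute constants, an Ostrowski-type congruence comparison yields $|\lambda_{K}| \asymp n/K$. I would also record $n_{g_{i}}^{1/2} \asymp (n/K)^{1/2}$, together with $n_{g_{i}}^{-1/4} n^{1/4} \asymp K^{1/4} \le K^{3/4}$, so that the maximum appearing in the third summand of the general bound simplifies to $K^{3/4}$.

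Second, I would substitute $|\lambda_{K}|^{-1} \asymp K/n$ and $n_{g_{i}}^{1/2} \asymp (n/K)^{1/2}$ into each of the five summands of \cref{result:strong_consistency_general} and simplify the powers of $K$ and $n$. This routine computation produces the five contributions $K\, n^{-1/2+\epsP/2}$, $\ K^{4}\, n^{-3/2+3\epsP/2}$, $\ K^{5/2}\, n^{-1+\epsP/2}$, $\ K^{3}\, n^{-3/4+3\epsP/2}$, and $K^{3}\, n^{-1+\epsP}$, where the first and fourth are exactly the two terms appearing in the claimed bound once the latter is expanded as $K\, n^{-1/2+\epsP/2} + K^{3}\, n^{-3/4+3\epsP/2}$.

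Third, I would show the first and fourth contributions dominate the other three. Using $K \le n^{1/2}$ from the standing hypotheses, one checks $K^{4} n^{-3/2+3\epsP/2} \lesssim K^{3} n^{-3/4+3\epsP/2}$ (the ratio is $K n^{-3/4} \le n^{-1/4}$), and since $K \ge 1$ both $K^{5/2} n^{-1+\epsP/2}$ and $K^{3} n^{-1+\epsP}$ are bounded by $K^{3} n^{-3/4+3\epsP/2}$ (the residual powers of $n$ being $n^{-1/4-\epsP}$ and $n^{-1/4-\epsP/2}$, respectively). Collecting terms gives $\Ohp(K\, n^{-1/2+\epsP/2}(1 + K^{2}\, n^{-1/4+\epsP}))$. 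For the ``in particular'' claim, I would note that the spectral-gap hypothesis $|\lambda_{K}| \ge C\, n^{3/4+\epsP}$ of \cref{result:strong_consistency_general}, together with $|\lambda_{K}| \asymp n/K$, forces $K = \Oh(n^{1/4-\epsP})$; substituting this constraint gives $K\, n^{-1/2+\epsP/2} \le n^{-1/4-\epsP/2} = \oh(1)$ and $K^{3} n^{-3/4+3\epsP/2} \le n^{-3\epsP/2} = \oh(1)$, so the full bound is $\ohp(1)$.

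The main obstacle is the eigenvalue comparison in the first step: one must justify $|\lambda_{K}(\rankspecBig)| \asymp (n/K)\,|\lambda_{K}(\rankspec)|$ with care, since $\rankspecBig$ is an $n \times n$ matrix of rank $K$, and the passage to $\Delta \rankspec \Delta$ relies on the congruence relation recorded in \cref{sec:eigenvector_preliminaries} and on a two-sided (Ostrowski/Weyl-type) bound for how conjugation by $\Delta$ scales the extreme eigenvalues. Everything after that is bookkeeping: tracking the exponents of $K$ and $n$ across the five summands and verifying the claimed domination under $K \le n^{1/2}$, followed by the elementary substitution $K = \Oh(n^{1/4-\epsP})$ to conclude $\ohp(1)$.
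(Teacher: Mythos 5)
Your proposal is correct and takes essentially the same route as the paper's own proof: substitute $n_{k} \asymp n/K$ and $|\lambda_{K}(\rankspecBig)| \asymp n/K$ into the five-term bound of \cref{result:strong_consistency_general}, identify $K \cdot n^{-1/2+\epsP/2}$ and $K^{3} \cdot n^{-3/4+3\epsP/2}$ as the dominant contributions (using $K \le n^{1/2}$ to absorb the rest), and invoke $K = \Oh(n^{1/4-\epsP})$, forced by $|\lambda_{K}| \ge C n^{3/4+\epsP}$, to conclude $\ohp(1)$. The only difference is cosmetic: the paper leaves the two-sided comparison $|\lambda_{K}(\rankspecBig)| \asymp n/K$ implicit (it follows from $n_{\min}|\lambda_{K}(\rankspec)| \le |\lambda_{K}(\Delta\rankspec\Delta)| \le n_{\max}|\lambda_{K}(\rankspec)|$, in the spirit of the proof of \cref{result:corollary_weak_consistency}), where you justify it explicitly via an Ostrowski-type congruence bound.
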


\begin{remark}[Population spectral gap and blockmodel distributions]\label[remark]{rem:pop_gap_separation}
	The consistency results depend on the population eigenvalue $\lambda_{K}(\rankspec)$ which reflects properties of the block-specific distributions comprising $\mathcal{F}^{K} = \{F_{(k, k^{\prime})} : 1 \le k \le k^{\prime} \le K\}$. Unlike existing results in the literature, however, this population eigenvalue is a function of the block sizes and does not admit a simple closed-form expression in general. Consequently, for many choices of $\mathcal{F}^{K}$, it can only be evaluated numerically. Nevertheless, the relationship between $\lambda_{K}(\rankspec)$ and the blockmodel distributions can be succinctly exhibited in the following example. Fix $K \geq 2$, and suppose that $F_{(k,k)} = F_{(1,1)}$ for all $1 \le k \le K$ and $F_{(k, k^{\prime})} = F_{(1,2)}$ for all $k \neq k^{\prime}$. Let $X_{11}$ and $X_{12}$ denote independent random variables satisfying $X_{11} \sim F_{(1,1)}$ and $X_{12} \sim F_{(1,2)}$. If $n_{k} = n/K$ holds for all $1 \le k \le K$, then applying \cref{prop:ranks_multi_expectation} and the fact that $\Ex[F_{(1,1)}(X_{12})] = 1 - \Ex[F_{(1,2)}(X_{11})]$ yields
	\begin{align*}
		\lim_{n \rightarrow \infty} \lambda_{K}(\rankspec)
		&=
		\left(1 - \tfrac{1}{K}\right)\left(\Ex[F_{(1,2)}(X_{11})] - \tfrac{1}{2} \right)
		-
		\tfrac{1}{K}\left(\Ex[F_{(1,1)}(X_{12})] - \tfrac{1}{2} \right)\\
		&=
		\Ex[F_{(1,2)}(X_{11})] - \tfrac{1}{2}\\
		&=
		\Pr\left[X_{12} \le X_{11}\right] - \tfrac{1}{2}.
	\end{align*}
\end{remark}

\begin{remark}[Consistency under growing rank $K$ in \cref{result:strong_consistency_general,result:strong_consistency_corollary}]\label[remark]{remark:strong_consistency_comment}
	Although $\rankspec$ is a population-level quantity, the entries of $\rankspec$ change with $n$ (see \cref{prop:ranks_multi_expectation}), hence so does $|\lambda_{K}(\rankspec)|$ and $\|\rankspec^{-1}\|_{\op} = |\lambda_{K}(\rankspec)|^{-1}$. Further, $\rankspec \in [0,1]^{K \times K}$, so permitting $K$ to grow with $n$ in general implies that its spectrum depends on $K$. When seeking to permit $K_{n} \rightarrow \infty$ as $n\rightarrow\infty$, one could further require $\liminf_{n \rightarrow \infty} \{ \kappa_{K_{n}} \cdot |\lambda_{K_{n}}(\rankspec)|\} > 0$ for some sequence of positive scalars $\{\kappa_{K_{n}}\}_{n \ge 1}$, in which case the $\Ohp(\cdot)$ upper bound would inherit dependence on the behavior of $\kappa_{K_{n}}$. In short, fully general statements pertaining to $K_{n} \rightarrow \infty$ require extra care and are not pursued here.
\end{remark}

\subsection{Asymptotic normality}
\label{sec:asymptotic_normality}
This section begins with two supplementary lemmas. \cref{lem:asymp_part1} establishes univariate asymptotic normality for simple linear rank statistics appearing in block-structured data matrix settings. \cref{lem:asymp_part2} subsequently establishes multivariate asymptotic normality involving rank statistics when the population-level block structure is sufficiently balanced and non-degenerate. These lemmas enable \cref{thrm:asymp_norm}, the main result in this section, which establishes asymptotic normality of the leading eigenvector entries of $\nrk{A}$. Proofs are provided in \cref{app:asymptotic_normality}. Below, let $(\ranksdev^{\circ2})_{k k^{\prime}} = \{\Var[(\nrk{A})_{ij}] : g_{i}=k, g_{j}=k^{\prime}\}$.

\begin{lemma}[Asymptotic normality, part I]
	\label[lemma]{lem:asymp_part1}
	Fix $K$, $\mathcal{F}^{K}$, and fix a tuple $(\alpha,\beta)$. Suppose $A \sim \operatorname{Blockmodel}\blockmodel$ where $\rank(\Theta) = \rank(\rankspec) = K$. Write
	\begin{equation*}
		\AsyNormVar_{\alpha,\beta}^{(n)}
		=
		\Var\left[\left((\nrk{A} - \Ex[\nrk{A}]) \Theta\right)_{\alpha,\beta}\right]^{-1/2}
		\left((\nrk{A} - \Ex[\nrk{A}]) \Theta\right)_{\alpha,\beta}.
	\end{equation*}
	If both $\liminf_{n \rightarrow \infty}\min_{k,k^{\prime} \in \Nset{K}} (\ranksdev^{\circ2})_{k k^{\prime}} \ge \delta_{\operatorname{Var}}$ for some constant $\delta_{\operatorname{Var}} > 0$ and $n_{\beta} \rightarrow \infty$, then for any fixed choice of $\epsilon > 0$, it eventually holds for large $n$ that
	\begin{equation}
		\max_{x \in \R} \left| \Pr\left[\AsyNormVar_{\alpha,\beta}^{(n)} \le x \right] - \Phi_{1}(x)\right| < \epsilon,
	\end{equation}
	where $\Phi_{1}$ denotes the cumulative distribution function of the standard normal distribution. Namely, $\AsyNormVar_{\alpha,\beta}^{(n)}$ converges in distribution to a standard normal random variable.
\end{lemma}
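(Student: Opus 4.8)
The plan is to recognize the scalar $((\nrk{A} - \Ex[\nrk{A}])\Theta)_{\alpha,\beta}$ as a centered \emph{simple linear rank statistic} and then invoke the classical asymptotic normality theory under multi-sample alternatives. Because $\Theta$ is a membership matrix, its $\beta$-th column is the indicator of $\mathcal{G}_\beta$, so right-multiplication by $\Theta$ sums the columns of $\nrk{A}$ indexed by $\mathcal{G}_\beta$. Using the block-constant expectation in \eqref{eq:nranks_entry_expectation}, and noting that the diagonal term $j = \alpha$ (if $\alpha \in \mathcal{G}_\beta$) contributes zero since $(\nrk{A})_{\alpha\alpha} = 0 = \Ex[(\nrk{A})_{\alpha\alpha}]$, I would write
\[
	\left((\nrk{A} - \Ex[\nrk{A}])\Theta\right)_{\alpha,\beta}
	=
	\sum_{j \in \mathcal{G}_\beta} \left( (\nrk{A})_{\alpha j} - \rankspec_{g_\alpha,\beta} \right).
\]
Recalling that $(\nrk{A})_{\alpha j} = r_{\alpha j}/(N+1)$, where $r_{\alpha j}$ ranks $a_{\alpha j}$ among all $N = n(n-1)/2$ upper-triangular entries, this sum is exactly a centered simple linear rank statistic with Wilcoxon scores $a_N(i) = i/(N+1)$ and $\{0,1\}$-valued regression constants that flag the $\asymp n_\beta$ entries of the form $a_{\alpha j}$ with $j \in \mathcal{G}_\beta \setminus \{\alpha\}$.

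Second, I would situate this in the multi-sample alternatives framework of \cite{hajek1968asymptotic,dupac1969asymptotic}. By \cref{def:data_matrices}, the $N$ upper-triangular entries are independent with absolutely continuous marginals $F_{(g_{i'},g_{j'})}$, placing us in exactly the non-identically-distributed (``alternatives'') regime these references treat; the number of distinct distributions is the finite quantity $K(K+1)/2$, and each is realized by order $n^2$ entries. I would then verify the hypotheses of the corresponding Hájek-type central limit theorem. The score-generating function is the identity $\phi(u) = u$ on $[0,1]$, which is bounded, monotone, and square integrable, so the score-regularity requirement is immediate. The Noether condition reduces to controlling $\max_m (c_m - \bar c)^2 / \sum_m (c_m - \bar c)^2$; since $\asymp n_\beta$ of the constants equal one out of $N \asymp n^2$, one has $\sum_m (c_m - \bar c)^2 \asymp n_\beta$ and hence a Noether ratio of order $1/n_\beta$, which tends to zero precisely under the hypothesis $n_\beta \rightarrow \infty$.

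Third, the most delicate requirement is non-degeneracy of the variance, and this is where $\liminf_{n} \min_{k,k'}(\ranksdev^{\circ 2})_{kk'} \ge \delta_{\Var} > 0$ enters. I would show that the diagonal contribution $\sum_{j \in \mathcal{G}_\beta} \Var[(\nrk{A})_{\alpha j}] \asymp n_\beta (\ranksdev^{\circ 2})_{g_\alpha,\beta} \asymp n_\beta$ dominates the off-diagonal covariances: distinct entries $a_{\alpha j}$ and $a_{\alpha j'}$ are independent and couple only through the shared ranking pool, so each pairwise covariance is of order $1/N$ and the total off-diagonal contribution is $O(n_\beta^2/n^2) = o(n_\beta)$. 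Consequently $\Var[((\nrk{A} - \Ex[\nrk{A}])\Theta)_{\alpha,\beta}] \asymp n_\beta \rightarrow \infty$ with no single summand dominating.

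The main obstacle is the entrywise dependence among the normalized ranks, but this is precisely what the classical theory is built to absorb: the Hájek projection represents the centered statistic as a sum of independent contributions of the individual entries plus an $L^2$-remainder that, under the conditions just verified, is negligible relative to the standard deviation. Applying the Lindeberg--Feller theorem to the projection yields asymptotic normality of the standardized statistic, and since $\AsyNormVar_{\alpha,\beta}^{(n)}$ is normalized by the \emph{deterministic} true variance, no additional Slutsky step is required and convergence to $\Phi_1$ follows. I expect the bulk of the effort to lie in rigorously confirming the variance lower bound and the negligibility of the projection remainder in this dependent, heteroskedastic block setting, rather than in the Noether or score conditions, which are routine.
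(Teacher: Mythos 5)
Your proposal is correct and follows essentially the same route as the paper's proof: identify $\left((\nrk{A}-\Ex[\nrk{A}])\Theta\right)_{\alpha,\beta}$ as a centered simple linear rank statistic with identity scores and $\{0,1\}$-valued regression constants, establish the variance lower bound of order $n_{\beta}$ from the $\delta_{\operatorname{Var}}$ hypothesis together with the $\Oh(1/N)$ pairwise covariances, and invoke the H\'{a}jek (1968) central limit theorem under multi-sample alternatives. The only cosmetic difference is that the paper verifies H\'{a}jek's quantitative condition $\Var[\xi_{\alpha,\beta}] \gg K_{\epsilon}\max_{\iota}(c_{\iota}-\overline{c})^{2}$ directly (which follows immediately from your variance bound since the regression constants are bounded by one), whereas you phrase the verification through the Noether ratio and the projection/Lindeberg mechanism that is internal to H\'{a}jek's theorem.
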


\begin{lemma}[Asymptotic normality, part II]
	\label[lemma]{lem:asymp_part2}
	Fix $K$, $\mathcal{F}^{K}$, and fix a row index $\alpha$. Suppose $A \sim \operatorname{Blockmodel}\blockmodel$ where $\rank(\Theta) = \rank(\rankspec) = K$. Let $\pi = (\pi_{1},\dots,\pi_{K})^{\tp}$ be a probability vector of constants with $0 < \pi_{1}, \dots, \pi_{K} < 1$ and $\sum_{k=1}^{K}\pi_{k} = 1$ such that $n_{k} = n \times \pi_{k}$ for each $k \in \Nset{K}$. Assume that $\liminf_{n \rightarrow \infty}\min_{k,k^{\prime} \in \Nset{K}} (\ranksdev^{\circ2})_{k k^{\prime}} \ge \delta_{\operatorname{Var}}$ for some constant $\delta_{\operatorname{Var}} > 0$.
	Write
	\begin{equation*}
		\AsyNormVar_{\textnormal{row } \alpha}^{(n)}
		=
		\Cov\left[\left((\nrk{A} - \Ex[\nrk{A}]) \Theta\right)_{\textnormal{row } \alpha}\right]^{-1/2}\left((\nrk{A} - \Ex[\nrk{A}]) \Theta\right)_{\textnormal{row } \alpha},
	\end{equation*}
	while treating each occurrence of row $\alpha$ as a column vector. For any fixed choice $\epsilon > 0$, it eventually holds for large $n$ that
	\begin{equation}
		\max_{\vec{x} \in \R^{K}} \left| \Pr\left[ \AsyNormVar_{\textnormal{row } \alpha}^{(n)} \le \vec{x} \right] - \Phi_{K}(\vec{x})\right|
		<
		\epsilon,
	\end{equation}
	where $\Phi_{K}$ denotes the cumulative distribution function of the $K$-dimensional standard normal distribution. Namely, $\AsyNormVar_{\textnormal{row } \alpha}^{(n)}$ converges in distribution to a multivariate standard normal random vector.
\end{lemma}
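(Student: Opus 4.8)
The plan is to reduce the $K$-dimensional statement to the univariate one in \cref{lem:asymp_part1} via the Cramér--Wold device, exploiting the fact that every linear functional of the vector $\left((\nrk{A}-\Ex[\nrk{A}])\Theta\right)_{\textnormal{row }\alpha}$ is again a simple linear rank statistic of the multi-sample type. Write $V^{(n)} = \left((\nrk{A}-\Ex[\nrk{A}])\Theta\right)_{\textnormal{row }\alpha} \in \R^{K}$ and $\Sigma^{(n)} = \Cov[V^{(n)}]$, so that $\AsyNormVar_{\textnormal{row }\alpha}^{(n)} = (\Sigma^{(n)})^{-1/2} V^{(n)}$. By Cramér--Wold it suffices to show that for every fixed $c \in \R^{K}$ the scalar $c^{\tp}\AsyNormVar_{\textnormal{row }\alpha}^{(n)}$ converges in distribution to a mean-zero Gaussian. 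Since the symmetric whitening yields $\Var[c^{\tp}\AsyNormVar_{\textnormal{row }\alpha}^{(n)}] = c^{\tp}(\Sigma^{(n)})^{-1/2}\Sigma^{(n)}(\Sigma^{(n)})^{-1/2}c = \|c\|_{\ell_{2}}^{2}$ exactly for each $n$, the target limit is $\textsf{Normal}(0,\|c\|_{\ell_{2}}^{2})$. Once this holds for all $c$, joint convergence of $\AsyNormVar_{\textnormal{row }\alpha}^{(n)}$ to a $K$-variate standard normal follows, and the uniform-in-$\vec{x}$ bound stated in the lemma is then a consequence of the multivariate Polya theorem, using continuity of $\Phi_{K}$.

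The second step is to identify $c^{\tp}\AsyNormVar_{\textnormal{row }\alpha}^{(n)}$ with a simple linear rank statistic. Setting $a^{(n)} = (\Sigma^{(n)})^{-1/2}c \in \R^{K}$ and expanding the block sums gives $c^{\tp}\AsyNormVar_{\textnormal{row }\alpha}^{(n)} = (a^{(n)})^{\tp}V^{(n)} = \sum_{j \neq \alpha} a^{(n)}_{g_{j}}\left(\nrk{A}-\Ex[\nrk{A}]\right)_{\alpha j}$, whose coefficient on entry $(\alpha,j)$ is $a^{(n)}_{g_{j}}$, constant within each block $\mathcal{G}_{k}$. Equivalently, viewed over all $N = n(n-1)/2$ upper-triangular entries, this is a simple linear rank statistic whose regression constants are supported on the $n-1$ entries of row $\alpha$ and take block-constant values there. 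This is precisely the multi-sample alternative setting underlying \cref{lem:asymp_part1}, the only difference being that the regression constants form a general block-constant vector rather than the indicator of a single block $\beta$. I would therefore re-run the Hájek--Dupač central limit theorem for simple linear rank statistics that drives \cref{lem:asymp_part1}, now applied to the triangular array of coefficients $\{a^{(n)}_{g_{j}}\}_{j \neq \alpha}$ (and to the identity score function induced by $\wt{r}_{ij} = r_{ij}/(N+1)$), to obtain asymptotic normality of each such linear combination.

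The remaining work, and the main obstacle, is controlling the whitening matrix $\Sigma^{(n)}$. First, $\Sigma^{(n)}$ must be shown invertible for large $n$ so that $(\Sigma^{(n)})^{-1/2}$ and the limiting nondegenerate $\textsf{Normal}(0,I_{K})$ are well-defined; I would establish this using the exact finite-sample covariance formulas for normalized rank statistics from \cref{sec:ranks_multi_sample}, together with the balanced-block hypothesis $n_{k} = n \times \pi_{k}$ and the variance floor $\liminf_{n \rightarrow \infty}\min_{k,k^{\prime}}(\ranksdev^{\circ 2})_{k k^{\prime}} \ge \delta_{\operatorname{Var}} > 0$, to conclude that $n^{-1}\Sigma^{(n)}$ converges to a fixed positive-definite matrix with eigenvalues bounded away from $0$ and $\infty$. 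Second, one must verify the Noether--Lindeberg condition for the array of regression constants; here the relevant variation is between entries that lie in row $\alpha$ and those that do not, so that regardless of the block pattern of $a^{(n)}$ the ratio $\max_{e}(c_{e} - \bar{c})^{2}\big/\sum_{e}(c_{e} - \bar{c})^{2}$ is of order $n^{-1} \rightarrow 0$, which is exactly what the rank CLT requires. I expect the intricate covariance-control step to be the crux: establishing uniform well-conditioning of $\Sigma^{(n)}$ from the three-wise and four-wise dependence structure of normalized ranks, thereby ruling out asymptotic degeneracy of the whitened Gaussian limit. The balance and variance-floor assumptions in the hypotheses are precisely what make this tractable.
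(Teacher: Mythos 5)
Your proposal follows essentially the same route as the paper's own proof: reduce to the univariate case via the Cram\'{e}r--Wold device, observe that each linear combination of the entries of $\left((\nrk{A} - \Ex[\nrk{A}])\Theta\right)_{\textnormal{row }\alpha}$ is itself a simple linear rank statistic with block-constant regression coefficients under multi-sample alternatives, and invoke the H\'{a}jek central limit theorem exactly as in \cref{lem:asymp_part1}. Your explicit treatment of the invertibility and conditioning of the whitening matrix $\Sigma^{(n)}$ and of the Noether condition for the $n$-dependent coefficients spells out details the paper handles implicitly through the variance-floor hypothesis and the standardization built into H\'{a}jek's theorem, but the argument is the same one.
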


Applying the aforementioned lemmas with a careful spectral perturbation analysis yields the following theorem.

\begin{theorem}[Asymptotic multivariate normality of robust spectral embedding]
	\label[theorem]{thrm:asymp_norm}
	Assume the setting and hypotheses in \cref{lem:asymp_part2}. Further suppose that
	\begin{align*}
		\Gamma^{(k)}
		&=
		\lim_{n \rightarrow\infty} \Cov\left[n^{-1/2}((\nrk{A} - \Ex[\nrk{A}])\Theta)_{\textnormal{row } i} : g_{i}=k\right],\\
		\Xi
		&=
		\lim_{n \rightarrow\infty} \left\{n^{1/2}(\Theta^{\tp}\Theta)^{-1/2} \right\} \cdot \rankspec^{-1} \cdot \left\{n(\Theta^{\tp} \Theta)^{-1}\right\},
	\end{align*}
	where each limit matrix is rank $K$. There exist sequences of $K \times K$ orthogonal matrices $W_{\star} \equiv W_{\star}^{(n)}, W \equiv W^{(n)}$, indexed by $n$, such that as $n\rightarrow\infty$, the sequence of random vectors of the form
	\begin{equation*}
		\left(n\left(\sampVecs - \popVecs W_{\star}\right) W\right)_{\textnormal{row } i}
		:
		g_{i}=k
	\end{equation*}
	converges in distribution to a $K$-dimensional multivariate normal random vector with mean zero and covariance matrix $\Xi \cdot \Gamma^{(k)} \cdot \Xi^{\tp}$.
\end{theorem}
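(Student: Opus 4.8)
The plan is to reduce the asserted multivariate central limit theorem to the rank-statistic normality already supplied by \cref{lem:asymp_part2}, by passing through a first-order row-wise linearization of the eigenvector perturbation. Abbreviate the centered matrix by $\noiseMtx \equiv \nrk{A} - \Ex[\nrk{A}]$, and recall from \cref{sec:eigenvector_preliminaries} that the column space of $\rankspecBig$ is spanned by $\Theta\Delta^{-1}$ with $\Delta = (\Theta^{\tp}\Theta)^{1/2}$; hence $\popVecs = \Theta\Delta^{-1}V$ for an orthogonal $V \equiv V^{(n)}$ that diagonalizes the full-rank matrix $\Delta\rankspec\Delta$, whose eigenvalues populate $\popVals$. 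The first step is to establish the expansion
\[
\sampVecs - \popVecs W_{\star} = \noiseMtx\,\popVecs\,\popVals^{-1} + R,
\]
where $W_{\star} \equiv W_{\star}^{(n)}$ is the orthogonal alignment from \cref{result:strong_consistency_general} and $R$ gathers the higher-order perturbation terms, which are products of $\noiseMtx$ with the subspace difference $\sampVecs\sampVecs^{\tp} - \popVecs\popVecs^{\tp}$ and with residual factors in $\sampVals^{-1} - \popVals^{-1}$. This is the linearization underlying \cite{cape2019signal,cape2019two}. Crucially --- and more sharply than what \cref{result:strong_consistency_general} requires --- I would show that after the CLT scaling the remainder is row-wise negligible, namely $n\,\|R\|_{i,\ell_{2}} = \ohp(1)$ for the fixed node $i$.

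Granting the expansion, the $i$-th row of the leading term, viewed as a column vector, factors as
\[
\left(\noiseMtx\,\popVecs\,\popVals^{-1}\right)_{\textnormal{row }i} = \popVals^{-1}V^{\tp}\Delta^{-1}\,(\noiseMtx\,\Theta)_{\textnormal{row }i}.
\]
By \cref{lem:asymp_part2} the standardized vector built from $(\noiseMtx\,\Theta)_{\textnormal{row }i}$ converges to $N_{K}(0,\Id_{K})$; combined with $\Gamma^{(k)} = \lim_{n} n^{-1}\Cov[(\noiseMtx\,\Theta)_{\textnormal{row }i}]$ for $g_{i}=k$, this yields $n^{-1/2}(\noiseMtx\,\Theta)_{\textnormal{row }i} \overset{d}{\to} N_{K}(0,\Gamma^{(k)})$. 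Applying the deterministic map $\popVals^{-1}V^{\tp}\Delta^{-1}$, scaling by $n$, and choosing the rotation $W \equiv W^{(n)} = V^{\tp}$, I would compute the covariance of $n\,W^{\tp}(\noiseMtx\,\popVecs\,\popVals^{-1})_{\textnormal{row }i}$ and, using the identity $\Delta^{-1}\rankspec^{-1}\Delta^{-1} = V\popVals^{-1}V^{\tp}$ together with the definition of $\Xi$, verify that it converges to $\Xi\,\Gamma^{(k)}\,\Xi^{\tp}$. The role of $W$ is precisely to cancel the $n$-dependent eigenbasis $V$, so the limiting covariance is basis-free. Finally, since $n\,W^{\tp}R_{\textnormal{row }i} = \ohp(1)$, Slutsky's theorem transfers the limit to $(n(\sampVecs - \popVecs W_{\star})W)_{\textnormal{row }i}$, giving convergence to $N_{K}(0,\Xi\Gamma^{(k)}\Xi^{\tp})$.

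The main obstacle is the refined remainder control in the first step: asymptotic normality forces $R$ to be negligible at the exact rate $n^{-1}$ in the row-wise Euclidean norm, rather than merely at the coarser rate sufficient for \cref{result:strong_consistency_general}. This requires isolating the genuinely first-order term and bounding each higher-order contribution, ideally in the two-to-infinity norm $\|\cdot\|_{\ttinf}$, where the entrywise dependence among the normalized rank statistics obstructs any direct use of independent-entry concentration. I would instead control the cross terms through the operator-norm estimate furnished by the trace bound in \cref{lem:ms_trace} together with the expectation bound in \cref{lem:ho_cctrtn}. A secondary, more bookkeeping-heavy point is the simultaneous handling of the two sequences of orthogonal matrices: $W_{\star}$ aligns the estimated and population eigenbases, while $W$ undoes the $n$-dependent basis $V$ of $\Delta\rankspec\Delta$; one must confirm that the standardized leading term converges and that an appropriate square root of its covariance, after rotation by $W$, converges to a square root of $\Xi\Gamma^{(k)}\Xi^{\tp}$ so that Slutsky's theorem applies along the sequence of rotations.
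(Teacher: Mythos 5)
Your plan has the same skeleton as the paper's proof: a row-wise first-order linearization, factoring the leading term through $(\noiseMtx\Theta)_{\textnormal{row }i}$, then \cref{lem:asymp_part2} plus Slutsky, with the covariance algebra $\Delta^{-1}\rankspec^{-1}\Delta^{-1} = V\popVals^{-1}V^{\tp}$ (the paper's $W_{\Theta}$ is exactly your $V^{\tp}$, where $V$ is the orthogonal eigenbasis of $\Delta\rankspec\Delta$, so that $\popVecs = \Theta\Delta^{-1}V$). However, your first step asserts a false expansion, and this is a genuine gap. What the proof of \cref{result:strong_consistency_general} actually delivers is
\[
\sampVecs - \popVecs W_{\star} \;=\; \noiseMtx\,\popVecs\,\popVals^{-1}\,W_{\star} \;+\; E^{\prime},
\qquad \|E^{\prime}\|_{i,\ell_{2}} = \ohp(n^{-1}),
\]
with the random Procrustes alignment $W_{\star}$ sitting to the \emph{right} of the leading term; the expansion without a trailing rotation is one for $\sampVecs W_{\star}^{\tp} - \popVecs$, not for $\sampVecs - \popVecs W_{\star}$. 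Dropping $W_{\star}$ as you do forces your remainder $R$ to contain $\noiseMtx\popVecs\popVals^{-1}(W_{\star}-\Id_{K})$, which is of exactly the same order as the leading term itself: already for $K=1$, if the computed eigenvector carries the opposite sign to the population one, then $W_{\star}=-1$ and $R = -2\,\noiseMtx\popVecs\popVals^{-1} + \textnormal{h.o.t.}$; but $n\,\|\noiseMtx\popVecs\popVals^{-1}\|_{i,\ell_{2}}$ converges in distribution to the norm of a nondegenerate Gaussian (that is precisely the content of the theorem), so it is not $\ohp(1)$. No refinement of the remainder analysis via \cref{lem:ms_trace} or \cref{lem:ho_cctrtn} can repair this, because $W_{\star}$ reflects the arbitrary orientation of $\sampVecs$ and need not be close to $\Id_{K}$.

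The downstream consequence is that your choice of $W$ cannot be deterministic. With the correct expansion, right-multiplying by $W=V^{\tp}$ leaves the row vector $n\,V W_{\star}^{\tp}\popVals^{-1}V^{\tp}\Delta^{-1}(\noiseMtx\Theta)_{\textnormal{row }i} + \ohp(1)$, in which the non-convergent random rotation $W_{\star}^{\tp}$ survives and is correlated with $\noiseMtx$, so the stated Gaussian limit with covariance $\Xi\Gamma^{(k)}\Xi^{\tp}$ cannot be extracted; nor does re-choosing $W_{\star}$ help, since altering $W_{\star}$ only changes the component of $\sampVecs-\popVecs W_{\star}$ inside the column span of $\popVecs$, while the transverse fluctuation always carries the factor $\popVecs^{\tp}\sampVecs \approx W_{\star}$. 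The repair is exactly the paper's choice: take $W$ \emph{random}, namely $W = W_{\star}^{\tp}V^{\tp}$ (the paper's $W^{\tp}=W_{\Theta}^{\tp}W_{\star}$), so that $W_{\star}W = V^{\tp}$ cancels the alignment and the leading term becomes the deterministic map $\Delta^{-1}\rankspec^{-1}\Delta^{-2}$ applied to $(\noiseMtx\Theta)_{\textnormal{row }i}$, whose scaled limit is $\Xi$. From there your covariance computation, the handling of the $\diag(\rankspecBig)$ correction, and the Slutsky step go through verbatim and reproduce the paper's argument.
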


\cref{thrm:asymp_norm} establishes asymptotic normality of the leading eigenvector components of $\nrk{A}$ by leveraging the perturbation analysis used to establish \cref{result:strong_consistency_general}. At a high level,
\begin{align*}
	n \cdot (\sampVecs - \popVecs W_{\star})_{\operatorname{row} i}
	&\approx
	n \cdot \left\{\left(\nrk{A}-\Ex[\nrk{A}]\right) \Theta (\Theta^{\tp} \Theta)^{-1} \rankspec^{-1} (\Theta^{\tp}\Theta)^{-1/2} W^{\tp}\right\}_{\operatorname{row} i}.
\end{align*}
In particular, for large $n$, the individual rows of the embedding exhibit the block-conditional Gaussian approximation
\begin{equation}
	\label{eq:informal_normality_result}
	n^{1/2} \cdot \sampVecs_{\operatorname{row} i}
	:
	g_{i} = k
	\quad
	\overset{!}{\approx}
	\quad
	\vec{Z}_{k}
	\sim
	\mathcal{N}_{K}(\operatorname{mean}(k),\operatorname{cov}(k)/n)
\end{equation}
involving block-specific mean vectors and covariance matrices. \cref{eq:informal_normality_result} is useful because in practice, data embeddings are commonly plotted all at once, e.g.,~the rows of $n^{1/2} \cdot \sampVecs$ are plotted as a point cloud in $\R^{K}$. Nevertheless, \cref{eq:informal_normality_result} should not be regarded as a substitute for the stated result, hence we emphasis caution through the use of an exclamation mark.

In \cref{thrm:asymp_norm}, the scaling is $n$, rather than $n^{1/2}$. This emerges due to the delocalization behavior of $\sampVecs$ and $\popVecs$ in our blockmodel setting. Loosely speaking, a factor of $n^{1/2}$ is needed to stabilize the rows of $\sampVecs$ and then an additional factor of $n^{1/2}$ is needed to establish stability of the fluctuations giving rise to convergence in distribution.

In the numerical examples that follow, in accordance with common practice in the literature, we plot the rows of $n^{1/2} \cdot \wh{U}$ all at once given $A$. Collectively, these embedding vectors plausibly approximately resemble a mixture distribution of Gaussian components, though we emphasize that formally \cref{thrm:asymp_norm} holds only for individual vectors, not for the entire point cloud all at once.

We emphasize that \cref{lem:asymp_part1,lem:asymp_part2,thrm:asymp_norm} do not assume that any of the data are normally distributed. Rather, asymptotic normality emerges due to the interplay of having commensurate block sizes, being able to apply classical asymptotic normality results for simple linear rank statistics, and careful entry-wise and row-wise eigenvector perturbation analysis.

\subsection{Selecting the embedding dimension}
\label{subsec:selecting_number_of_communities}
Throughout the main results, $\rank(\rankspec)$ is assumed known and equals $K$, the true number of blocks or communities in the data. To facilitate discussion, define $d = \rank(\rankspec)$. If $d$ is unknown, then it is possible to devise a selection procedure based on an eigenvalue gap in the scree plot of $\nrk{A}$. Specifically, let $\wh{d} \equiv \wh{d}_{n}$ denote the number of eigenvalues of $\nrk{A}$ whose magnitude exceeds $4n^{3/4 + \epsP}$ for any user-specified constant $0 < \epsP < 1/4$. \cref{lem:selectK} provides support for this choice of selection procedure and is proved in \cref{app:additional_proofs}.
\begin{lemma}[Selecting the embedding dimension via the scree plot of $\nrk{A}$]
	\label[lemma]{lem:selectK}
	Suppose $A \sim \operatorname{Blockmodel}\blockmodel$ where $\rank(\Theta) = \rank(\rankspec) = K$ and $K = \Oh(1)$. Further suppose that $n_{\min}$ is order $n/K$, and assume that $\liminf_{n \rightarrow\infty}|\lambda_{K}(\rankspec)| \ge \delta$ for some $\delta > 0$. Fix $0<\epsP<1/4$. Then,
	\begin{align*}
		\lim_{n\rightarrow\infty}\Pr\left[|\lambda_{k}(\nrk{A})| > 4n^{3/4 + \epsP}\right]
		= 1
		&\quad \textnormal{ for each } 1 \le k \le K;\\
		\lim_{n\rightarrow\infty}\Pr\left[|\lambda_{K+i}(\nrk{A})| > 4n^{3/4 + \epsP}\right]
		 = 0
		 &\quad \textnormal{ for each } i \ge 1.
	\end{align*}
\end{lemma}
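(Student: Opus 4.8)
The plan is to control the eigenvalues of $\nrk{A}$ by comparing them to the deterministic eigenvalues of its mean $\Ex[\nrk{A}]$ through Weyl's inequality, after bounding the operator norm of the fluctuation $\nrk{A} - \Ex[\nrk{A}]$ with high probability. The key structural fact I would establish is that the eigenvalues of $\Ex[\nrk{A}]$ split into $K$ ``signal'' eigenvalues of order $n$ and $n-K$ remaining eigenvalues of magnitude at most $1$. Since $0 < \epsP < 1/4$ forces $n^{3/4+\epsP} = \oh(n)$, the prescribed threshold $4n^{3/4+\epsP}$ lies strictly between the order-$n$ signal and the $\Oh(n^{3/4+\epsP})$-sized bulk, which is exactly what separates the two asserted limits.

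For the fluctuation bound, I would invoke \cref{lem:ms_trace}. Because $\nrk{A}-\Ex[\nrk{A}]$ is symmetric, $\|\nrk{A}-\Ex[\nrk{A}]\|_{\op}^{4} \le \trace((\nrk{A}-\Ex[\nrk{A}])^{4})$, so Markov's inequality applied to the fourth moment gives
\begin{equation*}
	\Pr\left[\|\nrk{A}-\Ex[\nrk{A}]\|_{\op} > n^{3/4+\epsP}\right]
	\le
	\frac{66K^{2}n^{2}+2n^{3}}{n^{3+4\epsP}}
	=
	\Oh(n^{-4\epsP}),
\end{equation*}
using $K = \Oh(1)$, so this probability tends to zero. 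Write $\mathcal{E}$ for the complementary high-probability event on which $\|\nrk{A}-\Ex[\nrk{A}]\|_{\op} \le n^{3/4+\epsP}$.

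For the population eigenvalues, I would use $\Ex[\nrk{A}] = \rankspecBig - \diag(\rankspecBig)$. The matrix $\rankspecBig$ has rank $K$, and its nonzero eigenvalues coincide with those of the symmetric $K \times K$ matrix $\Delta \rankspec \Delta$, where $\Delta = (\Theta^{\tp}\Theta)^{1/2} = \diag(\sqrt{n_{1}},\dots,\sqrt{n_{K}})$; consequently every nonzero eigenvalue of $\rankspecBig$ has magnitude at least $\sigma_{\min}(\Delta\rankspec\Delta) \ge n_{\min}\,|\lambda_{K}(\rankspec)|$, which is of order $n$ under the hypotheses that $n_{\min}$ is of order $n/K$, $K=\Oh(1)$, and $\liminf_{n}|\lambda_{K}(\rankspec)| \ge \delta$. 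Since the entries of $\rankspec$ lie in $[0,1]$, the diagonal perturbation satisfies $\|\diag(\rankspecBig)\|_{\op} \le 1$, so Weyl's inequality shows that $\Ex[\nrk{A}]$ has exactly $K$ eigenvalues of magnitude at least $n_{\min}|\lambda_{K}(\rankspec)| - 1$ and all other eigenvalues of magnitude at most $1$. A final application of Weyl on the event $\mathcal{E}$ then yields, for each $1 \le k \le K$, that $|\lambda_{k}(\nrk{A})| \ge n_{\min}|\lambda_{K}(\rankspec)| - 1 - n^{3/4+\epsP} > 4n^{3/4+\epsP}$ for all large $n$, and, for each $i \ge 1$, that $|\lambda_{K+i}(\nrk{A})| \le 1 + n^{3/4+\epsP} < 4n^{3/4+\epsP}$. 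Combining these with $\Pr[\mathcal{E}] \to 1$ gives both limits. I expect the only delicate point to be the index bookkeeping in the magnitude ordering—confirming that the $K$ largest-in-magnitude eigenvalues of $\Ex[\nrk{A}]$ are precisely the perturbed signal eigenvalues—which follows cleanly because the order-$n$ signal eigenvalues and the at-most-$1$ bulk cannot cross under a perturbation of norm at most $1$, while the chosen constant $4$ leaves ample slack to absorb both the diagonal perturbation and the fluctuation.
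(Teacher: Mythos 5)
Your proof is correct and follows essentially the same route as the paper's: operator-norm concentration of the centered ranks matrix via the fourth-moment trace bound of \cref{lem:ms_trace}, the lower bound $|\lambda_{K}(\rankspecBig)| \ge n_{\min}\,|\lambda_{K}(\rankspec)|$, and Weyl's inequality for magnitude-ordered eigenvalues to separate the $K$ order-$n$ signal eigenvalues from the rest. The only differences are cosmetic: you center at $\Ex[\nrk{A}]$ and absorb $\diag(\rankspecBig)$ with an extra Weyl step (the paper's \cref{result:operator_norm_concentration} absorbs that diagonal internally and compares directly to the exactly rank-$K$ matrix $\rankspecBig$, whose trailing eigenvalues are identically zero), and your Markov application to the fourth power yields a slightly sharper failure probability $\Oh(n^{-4\epsP})$ versus the paper's $n^{-\epsP}$, neither of which matters for the stated limits.
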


In simulations, the above rule tends to under-select for small-to-moderate values of $n$. This happens in part because the absolute constant $4$ has been chosen for convenience and is not sharp. Moreover, the optimal rate is anticipated to be closer to $n^{1/2}$, not $n^{3/4}$. Additional empirical explorations suggest that $1.001 \cdot n^{1/2}$ is often adequate. For example, this threshold choice correctly selects $\widehat{d} = 2$ in \cref{sec:example_illustration}.

\section{Numerical examples}
\label{sec:numerical}
This section provides numerical examples that illustrate the theoretical results in \cref{sec:main_results} and suggest opportunities for future work.

\subsection{Heterogeneous heavy-tailed data}
\label{sec:numerical_pareto}
Consider a $K=2$ blockmodel per \cref{def:data_matrices} with block-specific distributions given by
\begin{equation*}
	F_{(1,1)} = \textsf{Pareto}(m_{1},1),~~
	F_{(1,2)} = \textsf{Pareto}(m_{2},2),~~
	F_{(2,2)} = \textsf{Pareto}(m_{3},3).
\end{equation*}
Of note, $\xi \sim \textsf{Pareto}(m,\gamma)$ with $m, \gamma > 0$ exhibits the tail behavior
\begin{equation*}
	\Pr[\xi > x] =
	\begin{cases}
		\left(\frac{m}{x}\right)^{\gamma}
		& \textnormal{if }x \ge m,\\
		1
		& \textnormal{if }x < m.
	\end{cases}
\end{equation*}
Furthermore,
\begin{equation*}
	\Ex[\xi] =
	\begin{cases}
		\frac{\gamma \cdot m}{\gamma - 1} & \textnormal{if } \gamma > 1, \\
		\infty & \textnormal{if } \gamma \le 1; \\	
	\end{cases}
	\quad
	\text{~and~}
	\quad
	\Var[\xi] =
	\begin{cases}
		\frac{\gamma \cdot m^{2}}{(\gamma-1)^{2}(\gamma-2)} & \textnormal{if } \gamma > 2,\\
		\infty & \textnormal{if } \gamma \le 2.
	\end{cases}
\end{equation*}
Moreover, the first $T$ moments of $\textsf{Pareto}(m,\gamma)$ exist (i.e.,~are finite) only when $\gamma > T$. Consequently, heavy-tailedness can be present and inherent to the blockmodel here. There is no concept of contamination or outliers per se.

In this model, the entry $\meanmtx_{11}$ of $\meanmtx$ does not necessarily exist. In contrast, denoting block sizes by $n_{1} = \pi_{1} \times n$, $n_{2} = (1-\pi_{1}) \times n$, and considering the large-$n$ limit yields an explicit expression for $\rankspec_{11}$ as follows.
{\small
	\begin{equation*}
		\lim_{n \rightarrow \infty} \rankspec_{11}
		=
		\frac{1}{2} \pi_{1}^{2} + 2(1-\pi_{1})\pi_{1} \times
		\begin{cases}
			\frac{2m_{1}}{3m_{2}}
			& \textnormal{if } m_{1} < m_{2}\\[10pt]
			\frac{3m_{1}^{2}-m_{2}^{2}}{3m_{1}^{2}}
			& \textnormal{else}
		\end{cases}
		+ (1-\pi_{1})^{2} \times
		\begin{cases}
			\frac{3m_{1}}{4m_{3}}
			& \textnormal{if } m_{1} < m_{3}\\[10pt]
			\frac{4m_{1}^{3}-m_{3}^{3}}{4m_{1}^{3}}
			& \textnormal{else}
		\end{cases}
	\end{equation*}
}
In fact, all limiting entries of both $\rankspec$ and $\ranksdev$ admit closed-form expressions but are omitted for brevity.

We simulate a single hollow data matrix $A \in \R^{n \times n}$ with $n=400$ and $\pi_{1}=0.25$ and $m_{i}=i$ for $1 \le i \le 3$. Consequently, for observables in block $(1,1)$, neither the mean nor the variance exists; for observables in block $(1,2)$, the mean exists but not the variance; for observables in block $(2,2)$, both the mean and the variance exist but without higher moments existing. The raw data matrix and matrix of normalized ranks are shown in \cref{fig:pareto_raster}. Here, the membership matrix $\Theta \in \{0,1\}^{400 \times 2}$ is deliberately ordered for visual convenience, hence the latent block structure is apparent, yet we emphasize that $\Theta$ is neither observed nor ordered in practice. In \cref{fig:pareto_embed} we deliberately over-embed into four-dimensional space, plotting the rows of $n^{1/2}\cdot\wh{U}_{A}$ and $n^{1/2}\cdot\wh{U}_{\nrk{}}$. In Panel~A, the embedding of the raw data is manifestly degenerate and of no use for clustering. Notice the visible `localization' of the eigenvector components; this suggests that regularization or normalization may be needed for spectral clustering. In Panel~B, perfect clustering is manifest via the first and second coordinate dimensions, even for this small value of $n$. Further, approximate multivariate normality of each block-specific point cloud is plausible in Panel~B of \cref{fig:pareto_embed}.

\begin{figure}[tp]
	\centering
	\begin{subfigure}{0.95\linewidth}
		\centering
		\includegraphics[width=0.9\linewidth]{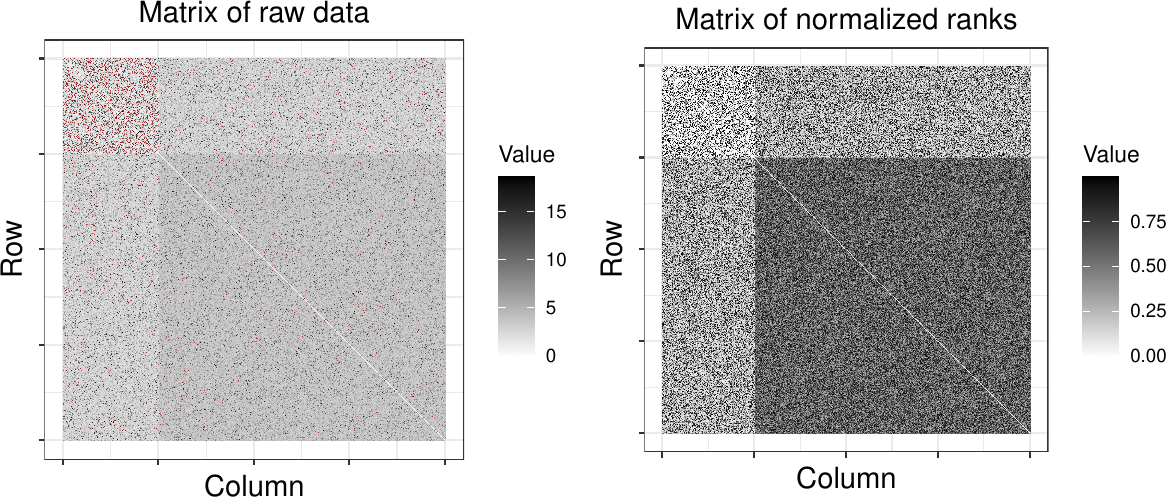}
	\end{subfigure}
	\caption{Simulation example in \cref{sec:numerical_pareto}. Row and column indices indicate corresponding node labels. Nodes are ordered to illustrate approximate block structure. Brown dots on the left represent values of the original data below the $1\%$ percentile or above the $99\%$ percentile; these values corrupt the raw data embedding in \cref{fig:pareto_embed}. Here, the matrix of normalized ranks reflects $K=2$ and satisfies the conditions for consistency in \cref{sec:main_results}.}
	\label{fig:pareto_raster}
\end{figure}

\begin{figure}[tp]
	\centering
	\begin{subfigure}{0.95\linewidth}
		\centering
		\includegraphics[width=0.95\linewidth]{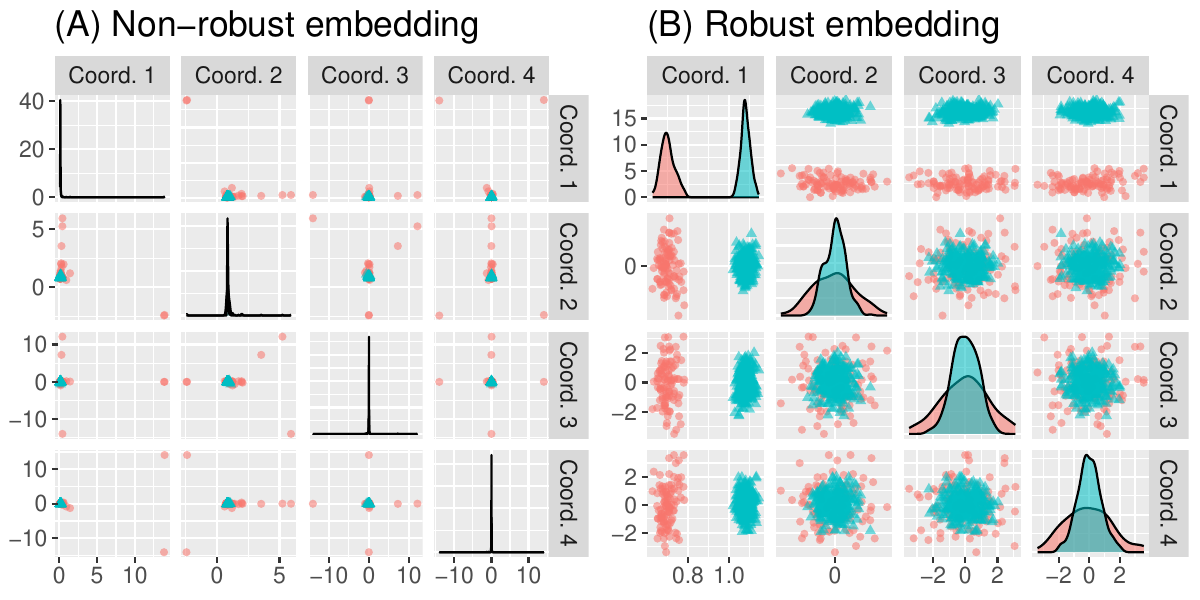}
	\end{subfigure}
	\caption{Four-dimensional embedding of the raw data (Panel~A) and after passing to ranks (Panel~B) in \cref{sec:numerical_pareto}. Coordinates correspond to the four leading eigenvectors after scaling. The underlying population-level true dimension equals two. Colors and point shapes both denote unobserved block memberships. Perfect clustering is apparent in the first column of Panel~B displaying the robust rank-based embedding.}
	\label{fig:pareto_embed}
\end{figure}

\subsection{Robustness and stability in low-dimensional embeddings}
\label{sec:numerical_overlaid_pointclouds}
This subsection illustrates that pass-to-ranks spectral embedding exhibits robust dimensionality reduction properties when heavy-tailed contamination is introduced to original data whose embedding is itself non-degenerate.

We sample a data matrix $A \in \R^{n \times n}$ from a $K=3$ blockmodel per \cref{def:data_matrices} comprised of block-specific Gaussian distributions, $F_{(k,k^{\prime})} = \textsf{Normal}(\mu_{kk^{\prime}},\sigma_{kk^{\prime}}^{2})$ for $1 \le k \le k^{\prime} \le 3$. Specifically, we consider the balanced setting, $n_{1} = n_{2} = n_{3} = n/K$, with $n=1800$ and parameter values $\mu_{11}=2$, $\mu_{12}=0.5$, $\mu_{13}=1.5$, $\mu_{22}=1$, $\mu_{23}=2.5$, $\mu_{33}=0.5$ and $\sigma_{11}=8$, $\sigma_{12}=2$, $\sigma_{13}=5$, $\sigma_{22}=2$, $\sigma_{23}=4$, $\sigma_{33}=3$. Note that the variance profile is block-wise heteroskedastic. Further, one can directly proceed with eigenvector-based spectral clustering applied to $A$, denoted as `Original' in \cref{fig:overlaid_pointclouds}.

Next, we perturb $A$ by replacing each upper-triangular entry $A_{ij}$ with probability $0.01$ by an independent draw from $\textsf{Cauchy}(\mu_{g_{i}g_{j}}, 1)$, yielding the stochastic matrix $A^{\prime}$ (preserving symmetry). We then apply \cref{alg:ptr_baseline,alg:ptr_clustering} to $A^{\prime}$, yielding the after-perturbation robust `PTR' point clouds in \cref{fig:overlaid_pointclouds}. Notice the close agreement between them. On the other hand, an embedding of $A^{\prime}$ (not shown) would exhibit the degenerate, uninformative behavior similar to Panel~A in \cref{fig:pareto_embed}.

\begin{figure}[tp]
	\centering
	\includegraphics[width=0.55\linewidth]{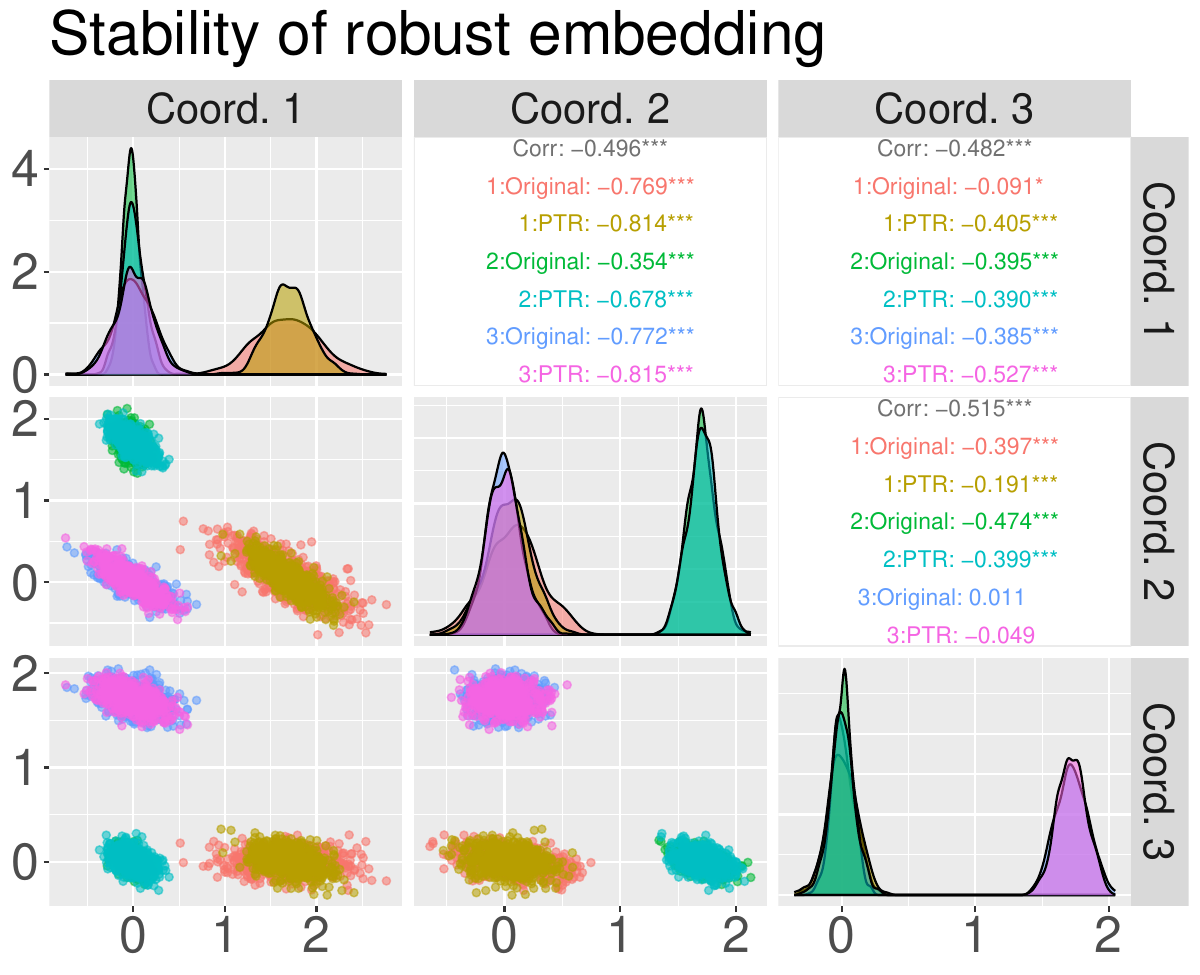}
	\caption{Shown are overlaid embedding point clouds derived from $A$ and $\nrk{A^{\prime}}$ in \cref{sec:numerical_overlaid_pointclouds}, where $A^{\prime}$ is corrupted by Cauchy entries. Coordinates correspond to the three leading eigenvectors after rotation and scaling. The above-diagonal panels show correlations for cluster-specific embeddings, asterisks indicate their level of statistical significance under the null hypothesis of zero correlation, and cluster labels are denoted by `1', `2', `3'. Using normalized rank statistics preserves properties of the original embedding.}
	\label{fig:overlaid_pointclouds}
\end{figure}

\subsection{Comparing asymptotic variances and covariances}
\label{sec:numerical_asymp_covariance}
Consider a $K=2$ blockmodel per \cref{def:data_matrices} comprised of the distributions
\begin{equation*}
	F_{(1,1)} = \textsf{Normal}(\mu_{1},\sigma_{1}^{2}), \quad
	F_{(1,2)} = \textsf{Normal}(\mu_{2},\sigma_{2}^{2}), \quad
	F_{(2,2)} = \textsf{Normal}(\mu_{3},\sigma_{3}^{2}),
\end{equation*}
specifically with $\mu_{1} = \mu_{3}$, $\mu_{1} \neq \mu_{2}$, and in the balanced regime with $n_{1} = n_{2} = n/K$. For this setting, the proof of \cref{thrm:asymp_norm} can be modified to show that for $\wh{U}_{A}$, there exists a sequence of $2 \times 2$ matrices $T^{(n)}$ such that as $n \rightarrow \infty$,
\begin{equation*}
	\sqrt{n}\left(T^{(n)} \cdot \wh{U}_{A, \operatorname{row }i} - \left[\begin{matrix} \phantom{-}1 \\ -1 \end{matrix}\right] \right)
	:
	g_{i} = 2
	\quad
	\overset{\operatorname{d}}{\longrightarrow}
	\quad
	\mathcal{N}_{2}(0,\Sigma^{(2)}),
\end{equation*}
where the asymptotic covariance matrix $\Sigma^{(2)}$ has upper-triangular entries $\Sigma_{11}^{(2)} = \frac{2(\sigma_{2}^{2}+\sigma_{3}^{2})}{(\mu_{2}+\mu_{3})^{2}}$ and $\Sigma_{12}^{(2)} = \frac{2(\sigma_{2}^{2}-\sigma_{3}^{2})}{\mu_{2}^{2}-\mu_{3}^{2}}$ and $\Sigma_{22}^{(2)} = \frac{2(\sigma_{2}^{2}+\sigma_{3}^{2})}{(\mu_{2}-\mu_{3})^{2}}$. Here, the sign difference in the second coordinate is responsible for distinguishing the first and second blocks, just as in \cref{fig:intro_plots}.

\begin{figure}[tp]
	\centering
	\begin{subfigure}{0.48\linewidth}
		\centering
		\includegraphics[width=0.95\linewidth]{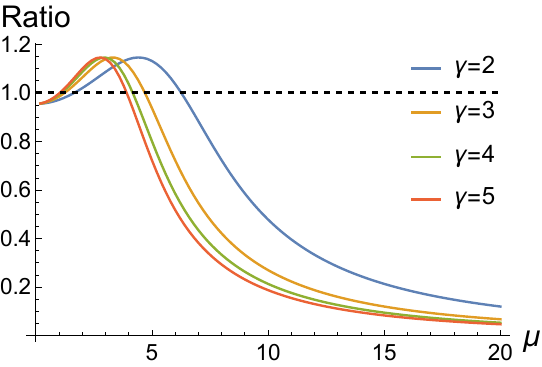}
	\end{subfigure}
	\nolinebreak
	\begin{subfigure}{0.48\linewidth}
		\centering
		\includegraphics[width=0.95\linewidth]{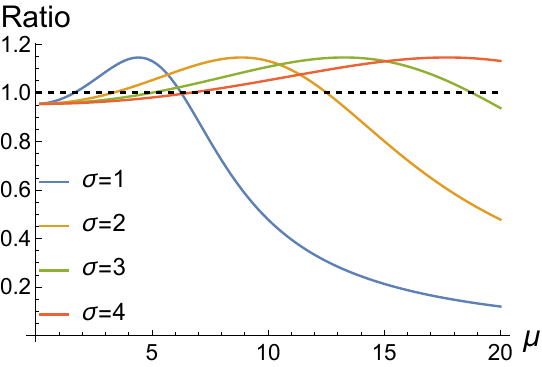}
	\end{subfigure}
	\caption{The ratio of asymptotic variances per \cref{eq:cluster_ARE_ratio} is plotted as a function of underlying model parameters $(\mu, \gamma)$ and $(\mu, \sigma)$, respectively. Ratio values larger than one (i.e.,~above the dashed black line) indicate asymptotic, theoretical grounds on which to favor robust spectral clustering, and vice versa. Plotted curves are obtained by the numerical evaluation of analytically derived functions, not by simulation. See \cref{sec:numerical_asymp_covariance} for details.}
	\label{fig:ARE_curves}
\end{figure}

Using \cref{thrm:asymp_norm}, we obtain a similar asymptotic normality result but for $\wh{U}_{\nrk{}}$, again centered around $(1, -1)^{\tp}$ but whose asymptotic covariance $\widetilde{\Sigma}^{(2)} \equiv \widetilde{\Sigma}^{(2)}(\mathcal{F},\Theta)$ is a function of $\rankspec$ and $\ranksdev$ rather than $\meanmtx$ and $\stdevmtx$. Consequently, using the concept of asymptotic relative efficiency (ARE), it becomes possible to compare pass-to-ranks spectral clustering with adjacency-based spectral clustering for large $n$ via the ratio of asymptotic variances
\begin{equation}
	\label{eq:cluster_ARE_ratio}
	\Sigma^{(2)}_{22}(\mathcal{F},\Theta)
	\Big/
	\widetilde{\Sigma}^{(2)}_{22}(\mathcal{F},\Theta).
\end{equation}
Small values of \cref{eq:cluster_ARE_ratio} indicate that non-robust spectral clustering with $A$ is relatively more efficient in the present model, whereas larger values of \cref{eq:cluster_ARE_ratio} indicate that pass-to-ranks robust spectral clustering with $\nrk{A}$ is comparatively more efficient.

\cref{fig:ARE_curves} plots the ratio in \cref{eq:cluster_ARE_ratio} for two different settings. In the left panel, the relationship between the block means is varied. In the right panel, the shared variance is varied. Specifically, $\mu = \mu_{1} = \mu_{3}$ and $\gamma = \mu/\mu_{2}$ and $\sigma = \sigma_{1} = \sigma_{2} = \sigma_{3}$. In the left panel, $\mu$ and $\gamma$ vary with $\sigma=1$, while in the right panel, $\mu$ and $\sigma$ vary with $\gamma=2$. The blue curve is identical in both plots.

The behavior in \cref{fig:ARE_curves} is instructive. All else equal, larger values of $\mu$ correspond to stronger signal strength vis-\`{a}-vis separation of cluster centroids, behavior that is directly reflected in $A$ but diminished in $\nrk{A}$. Transforming to normalized ranks typically reduces both block-wise variances and expectations, hence its overall effect is not always immediately clear, but \cref{fig:ARE_curves} shows robust spectral clustering with rank statistics can boost signal particularly when the original block-wise variances are large.

\subsection{Matrix rank versus number of communities}
\label{sec:rank_versus_blocks}
In the present paper, our theoretical results impose no explicit assumption on $\rank(\meanmtx)$ but rather require $\rank(\rankspec)=K$. This requirement is generally milder since it is often satisfied for entrywise rank transformations provided the block-specific distributions are not identical or symmetric about the same location parameter. See the explicit, finite-$n$ expression for the entries of $\rankspec$ in \cref{app:rank_statistic_properties}. Below, we provide details for two specific example settings. See also \cref{subsec:dimension_versus_number_of_communities} for more discussion.

\subsubsection{Example with block-specific Gamma distributions}
Consider the $\textsf{Gamma}(\alpha,\theta)$ distribution having expectation $\alpha \times \theta$ with shape parameter $\alpha > 0$ and scale parameter $\theta > 0$. Consider a $K=2$ blockmodel per \cref{def:data_matrices} where $F_{(1,1)} = \textsf{Gamma}(3,1/3)$ and $F_{(1,2)} = \textsf{Gamma}(2,1/2)$ and $F_{(2,2)} = \textsf{Gamma}(1,1)$ and $n_{1} = n_{2} = n/K$. Consequently, the blockwise mean and variance matrices are given by
\begin{equation*}
	\meanmtx =
	\begin{bmatrix}
		1 & 1 \\
		1 & 1
	\end{bmatrix},
	\quad
	\stdevmtx^{\circ 2} \doteq
	\begin{bmatrix}
		.333 & .5\\
		.5 & 1
	\end{bmatrix},
\end{equation*}
with $\rank(\meanmtx)=1$. On the other hand, for large $n$, the blockwise means and variances after converting to ranks are given by
\begin{equation*}
	\rankspec \doteq
	\begin{bmatrix}
		.531 & .507 \\
		.507 & .452
	\end{bmatrix},
	\quad
	\ranksdev^{\circ 2} \doteq
	\begin{bmatrix}
		.0615 & .0790\\
		.0790 & .110\phantom{0}
	\end{bmatrix},
\end{equation*}
with $\rank(\rankspec) = 2$ and $|\lambda_{K}(\rankspec)| \doteq .0169$.

\subsubsection{Example with block-specific Exponential distributions}
Consider the $\textsf{Exponential}(\theta)$ distribution with expectation $\theta > 0$. Consider a $K=2$ blockmodel per \cref{def:data_matrices} where $F_{(1,1)}, F_{(1,2)}, F_{(2,2)}$ are each $\textsf{Exponential}(\cdot)$ such that
\begin{equation*}
	\meanmtx =
	\begin{bmatrix}
		\mu_{1}^{2} & \mu_{1}\mu_{2}\\
		\mu_{1}\mu_{2} & \mu_{2}^{2}
	\end{bmatrix}
	= \begin{bmatrix}
		\mu_{1} \\ \mu_{2}
	\end{bmatrix}
	\begin{bmatrix}
		\mu_{1} & \mu_{2}
	\end{bmatrix}, \quad
	\stdevmtx^{\circ 2} = \meanmtx \circ \meanmtx,
\end{equation*}
where without loss of generality $\mu_{1} \ge \mu_{2} > 0$. Further suppose $n_{1} = \pi_{1} \times n$ and $n_{2} = (1-\pi_{1}) \times n$ for some $0 < \pi_{1} < 1$, hence $\rank(\Theta)=2$. In the large-$n$ limit, the entries of $\rankspec$ are given by the closed-form expressions
\begin{align*}
	\lim_{n\rightarrow\infty}\rankspec_{11}
	&=
	\frac{1}{2}\pi_{1}^{2}
	+ \frac{2\mu_{1}}{\mu_{1}+\mu_{2}}\pi_{1}(1-\pi_{1})
	+ \frac{\mu_{1}^{2}}{\mu_{1}^{2}+\mu_{2}^{2}}(1-\pi_{1})^{2},\\
	\lim_{n\rightarrow\infty}\rankspec_{12}
	&=
	\frac{\mu_{2}}{\mu_{1}+\mu_{2}}\pi_{1}
	+ \frac{\mu_{1}}{\mu_{1}+\mu_{2}}(1-\pi_{1}),\\
	\lim_{n\rightarrow\infty}\rankspec_{22}
	&=
	\frac{\mu_{2}^{2}}{\mu_{1}^{2}+\mu_{2}^{2}}\pi_{1}^{2}
	+ \frac{2\mu_{2}}{\mu_{1}+\mu_{2}}\pi_{1}(1-\pi_{1})
	+ \frac{1}{2}(1-\pi_{1})^{2}.
\end{align*}
Closed-form expressions are also available for the entries of $\ranksdev^{\circ 2}$ in the large-$n$ limit but are substantially more complicated and therefore omitted for brevity.

Observe that the community membership matrix has size $n \times 2$ and rank $2$, while the $2 \times 2$ matrix $\meanmtx$ of connectivity probabilities has rank one. In this case, the leading eigenvector of the raw data separates blocks, whereas the leading two eigenvectors of the pass-to-ranks matrix discriminate between the blocks.

\subsection{Extensions and complements}
\label{sec:numerical_extension}
This section concludes with empirical findings that serve to inspire future work.

\subsubsection{Towards optimal rates and constants}
\label{sec:numerical_extension_rates_constants}
The results in \cref{sec:main_results} are seemingly the first of their kind to establish consistency results for spectral clustering applied to matrices of normalized rank statistics. Still, there is room for improvement. We conjecture that for some model settings with $K$ fixed, $\|(\sampVecs\sampVecs^{\tp})_{\nrk{}} - \popVecs\popVecs^{\tp}\|_{\frob}^{2}$ converges in probability to a non-zero constant as $n \rightarrow \infty$ and that this occurs at the same scaling as the convergence of $\|(\sampVecs\sampVecs^{\tp})_{A} - \popVecs\popVecs^{\tp}\|_{\frob}^{2}$. This conjecture posits an improvement in the rate obtained in \cref{result:cor_top_eigen_relative_error} and is motivated by existing results which establish the concentration of $\|(\sampVecs\sampVecs^{\tp})_{A} - \popVecs\popVecs^{\tp}\|_{\frob}^{2}$ and closely related expressions for certain symmetric data matrices $A$ having independent entries with sufficiently light tails (e.g.,~\cite{tang2017semiparametric,tang2018limit,xu2018rates}). It remains to more deeply understand the asymptotic, theoretical distinction between using $A$ versus $\nrk{A}$.

To briefly investigate this conjecture, consider a $K=2$ blockmodel per \cref{def:data_matrices} whose data matrix entries are drawn from contaminated normal distributions of the form
\begin{equation*}
	F_{(k,k^{\prime})}
	=
	(1-\epsilon) \cdot \textsf{Normal}(\mu_{kk^{\prime}}, \sigma_{kk^{\prime}}^{2})
	+
	\epsilon \cdot \textsf{Normal}(\mu_{kk^{\prime}}, \tau \cdot \sigma_{kk^{\prime}}^{2}),
	\quad\quad
	1 \le k \le k^{\prime} \le 2.
\end{equation*}
Set $\mu_{11} = \mu_{22} = 6$, $\mu_{12} = 4$, and $\sigma_{11} = \sigma_{12} = \sigma_{22} = 0.5$. We vary $\tau > 0$, the variance multiplier in the contaminating distribution, and $0 \le \epsilon \le 1$, the contamination level.

\begin{figure}[tp]
	\centering
	\includegraphics[width=0.5\linewidth]{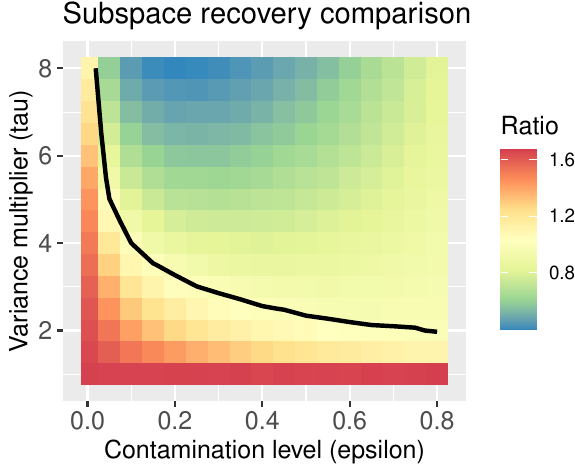}
	\caption{Monte Carlo approximation of the theoretical ratio given by \cref{eq:contour_ratio} in the contaminated normal distribution example. The solid black curve illustrates parameter values $(\epsilon,\tau)$ for which the approximated ratio equals one, namely for which there is identical asymptotic performance as quantified by limiting leading constants. Robust spectral clustering with $\nrk{A}$ outperforms traditional spectral clustering with $A$ in general for large values of contamination and (or) large values of scale in the contaminating distribution.}
	\label{fig:contour_ratio}
\end{figure}

We generate $n_{b}=50$ i.i.d.~replicates $A^{(1)},\dots,A^{(n_{b})} \in \R^{n \times n}$, $n=500$, each having equal block sizes $n_{k} = n/K$ for $1 \le k \le K$. The contour plot in \cref{fig:contour_ratio} shows 
\begin{equation*}
	\frac{1}{n_{b}} \sum_{b=1}^{n_{b}} \left\|\left(\sampVecs\sampVecs^{\tp}\right)_{\nrk{}}^{(b)} - \popVecs\popVecs^{\tp}\right\|_{\frob}^{2}
	\, \Big/ \,
	\frac{1}{n_{b}} \sum_{b=1}^{n_{b}} \left\|\left(\sampVecs\sampVecs^{\tp}\right)_{A}^{(b)} - \popVecs\popVecs^{\tp}\right\|_{\frob}^{2}
\end{equation*}
which is an empirical approximation to the theoretical ratio
\begin{equation}
	\label{eq:contour_ratio}
	\frac{\Ex\left[\|(\wh{U}\wh{U}^{\tp})_{\nrk{}} - UU^{\tp}\|_{\frob}^{2}\right]}{\Ex\left[\|(\wh{U}\wh{U}^{\tp})_{A} - UU^{\tp}\|_{\frob}^{2}\right]}
	= \frac{1-\Ex\left[r_{K}^{2}(\wh{U}_{\nrk{}},U)\right]}{1-\Ex\left[r_{K}^{2}(\wh{U}_{A},U)\right]}
\end{equation}
as a function of the parameter tuple $(\epsilon,\tau)$. Values of \cref{eq:contour_ratio} smaller than one indicate the pass-to-ranks leading eigenvectors estimate the population eigenvectors better than the leading eigenvectors of the raw data matrix, whereas the opposite is true for values of \cref{eq:contour_ratio} larger than one. In this example, neither approach dominates the other.

The empirical ratio values illustrated in \cref{fig:contour_ratio} remain stable for larger values of $n$ (not shown), namely they do not appreciably change, supporting the above conjecture. In future work, it would be interesting to derive an explicit closed-form expression for the limiting ratio of leading constants suggested by \cref{fig:contour_ratio}.

\subsubsection{Beyond block structure}
\label{sec:numerical_extension_beyond_blocks}
Suppose instead that the rows of $\Theta$ lie within the unit simplex, for example, $\Theta_{\operatorname{row }i} \sim \textsf{Dirichlet}(\vec{\alpha})$ i.i.d.~for $1 \le i \le n$. The population-level matrix $\Theta \meanmtx \Theta^{\tp}$ is then no longer of pure block structure per se but rather exhibits soft or mixed block structure, akin to mixed-membership stochastic blockmodel random graphs \citep{airoldi2008mixed}. Here then, the population embedding becomes a simplicial region, rather than a collection of discrete cluster centroids \citep{mao2021estimating,rubin2022statistical,jin2024mixed}. For the purpose of illustration, \cref{fig:mmbm_embed} visually demonstrates that eigenvector embeddings derived from normalized rank statistics are capable of reflecting simplicial latent geometry. In this example, $n=1800$, $\vec{\alpha} = (1/2, 1/3, 1/6)$, $\meanmtx_{11}=3$, $\meanmtx_{12}=\meanmtx_{22}=2$, $\meanmtx_{13}=\meanmtx_{23}=\meanmtx_{33}=1$, and a symmetric random matrix of Gaussian noise having mean zero and standard deviation one-half perturbs the expectation matrix. Developing theoretical performance guarantees for such settings will presumably require new technical tools.

\begin{figure}[tp]
	\centering
	\includegraphics[width=0.7\linewidth]{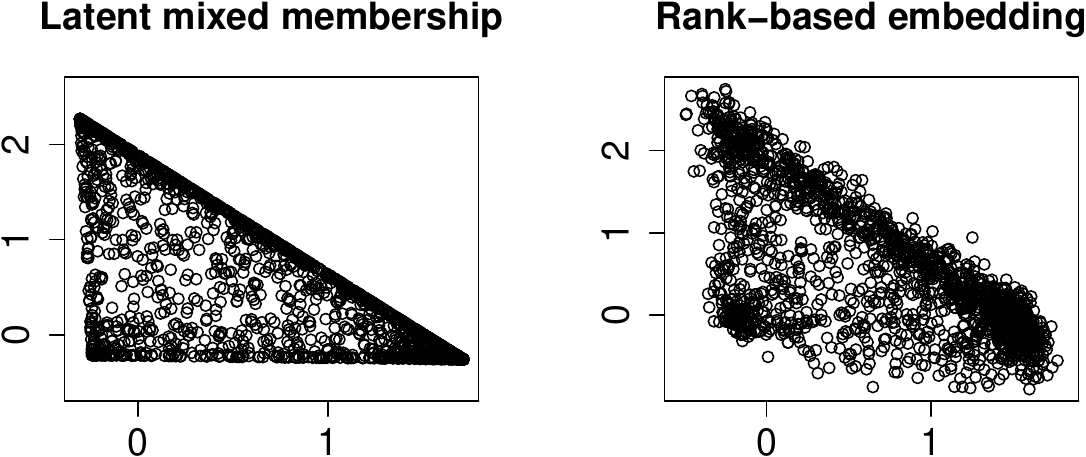}
	\caption{Pass-to-ranks robust spectral embedding (right panel) can reflect latent geometry (left panel) beyond mere block structure. Coordinates correspond to rotated and scaled leading eigenvectors. See \cref{sec:numerical_extension_beyond_blocks}.}
	\label{fig:mmbm_embed}
\end{figure}

\section{Connectome data analysis}
This section illustrates robust spectral clustering with rank statistics applied to a diffusion magnetic resonance imaging (dMRI) dataset. The dataset comprises $114$ connectomes (loosely speaking, ``brain graphs'' or ``brain networks'') derived from $57$ human subjects with two scans each. Each connectome is represented as a weighted network using the NeuroData MR Graphs data processing pipeline \citep{kiar2018high:arxiv}. Vertices or nodes represent spatially proximal brain sub-regions of interest, called ROIs, while edges or links represent tensor-based fiber streamline counts connecting different sub-regions. Importantly, each vertex in each network has an associated Left/Right brain hemisphere label and an associated Gray/White matter label. These labels serve as a benchmark for evaluating the performance of different clustering approaches and yield the ground-truth labels $\{\operatorname{LG}, \operatorname{LW}, \operatorname{RG}, \operatorname{RW}\}$.

We consider the dataset \textrm{DS01876}, publicly available from Johns Hopkins University and documented in more detail in \cite{lawrence2021standardizing}. Each network is obtained by taking the largest connected component of the induced subgraph on the vertices labeled as both Left or Right and Gray or White. For one of the connectomes, a scanning error occurred during acquisition, and consequently we omit it from our analysis and discussion. Our analysis complements the recent work on spectral clustering in \cite{priebe2019two}, though therein the authors consider different methodology and a different data resolution.

The top two panels in \cref{fig:dataConnectomes} provide a partial visual summary of the data. The top-left panel shows that all graphs have circa $1200$ vertices, while the top-right panel shows the vertex strength (i.e.,~weighted degree) distributions for twenty graphs. In fact, all graphs exhibit vertex strength heterogeneity with highly right-skewed distributions. \cref{tab:brainsSummary} provides summary statistics highlighting the severe disparity of vertex strengths as well as graph edge weights, suggesting the possible need for robust methods.

\begin{figure}[t]
	\centering
	\includegraphics[width=0.3\linewidth]{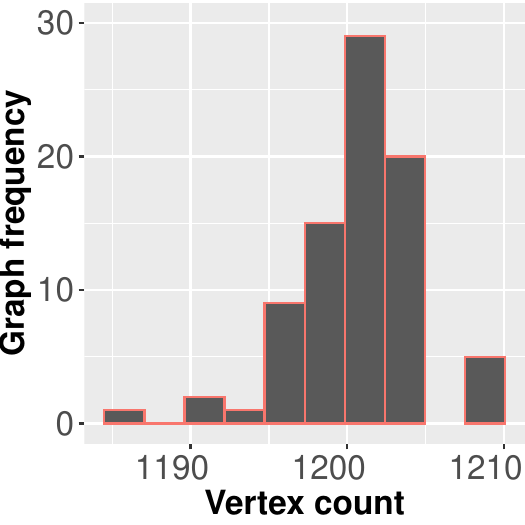}\nolinebreak\hspace{1em}
	\includegraphics[width=0.6\linewidth]{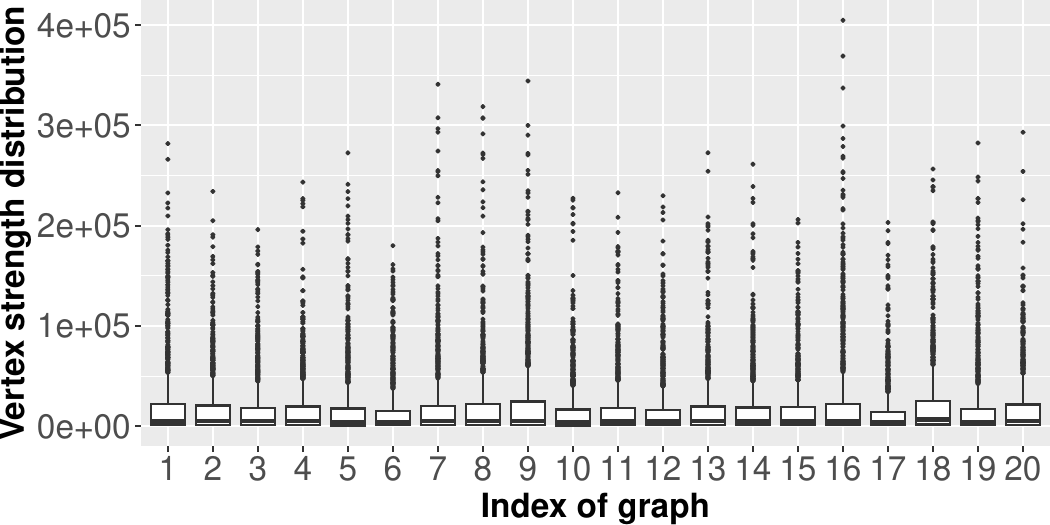}
	\includegraphics[width=0.3\linewidth]{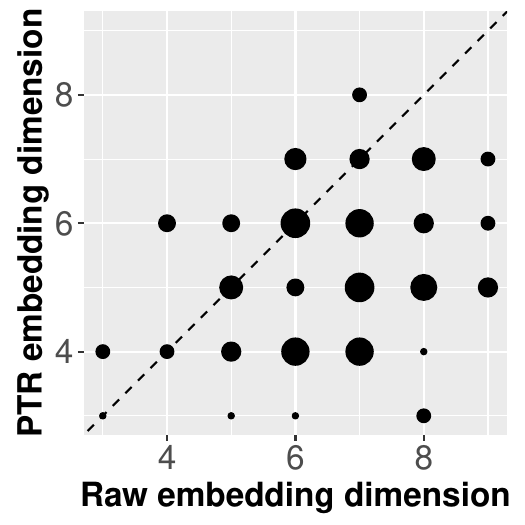}\nolinebreak\hspace{1em}
	\includegraphics[width=0.6\linewidth]{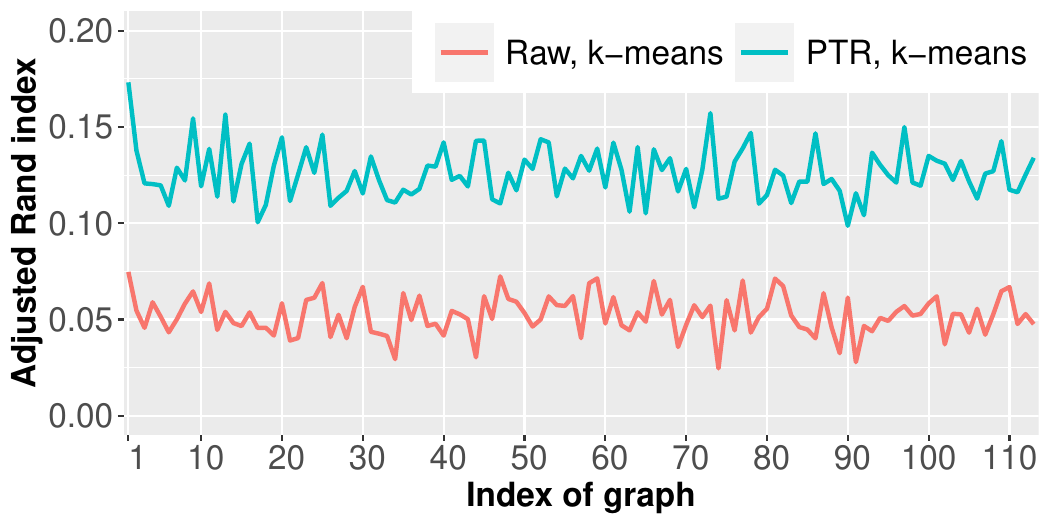}
	\caption{
		Connectome data summary and analysis. Top left: vertex counts for all graphs. Top right: vertex strength (weighted degree) distributions for twenty graphs. Bottom left: frequency of embedding dimensionality selection estimates. Bottom right: performance of $k$-means clustering for recovering the Left/Right Gray/White partition.
	}\label{fig:dataConnectomes}
\end{figure}

\begin{table}[h]
	\begin{center}
		\begin{tabular}{ c | r r r } 
			& Median & Mean & Maximum \\
			\hline
			Edge weight & 102 & 453 & 18267 \\ 
			Vertex strength & 4841 & 18365 & 253215 \\ 
		\end{tabular}
		\caption{Connectome data summary statistics. Each cell value denotes the median of the corresponding summary statistic computed across all graphs. Cell values are rounded to the nearest integer.}
		\label{tab:brainsSummary}
	\end{center}
\end{table}

For each graph index, $i$, we separately analyze the raw weighted adjacency matrix $A^{(i)}$ and its rank transformation $\widetilde{R}^{(i)}$. Edge weights are not necessarily unique for graphs in this dataset, so we use the corresponding average of normalized rank values when rank-transforming the data. We apply the profile likelihood based approach of \cite{zhu2006automatic} to the $50$ largest singular values of each matrix in order to select the number of leading eigenvectors to retain (i.e.,~embedding dimensionality), denoted by $\widehat{d}_{\operatorname{raw}}^{(i)}$ and $\widehat{d}_{\operatorname{ptr}}^{(i)}$. More precisely, to mitigate the effect of the largest singular value, we exclude it when applying this method. The bottom-left panel of \cref{fig:dataConnectomes} plots these tuples as $(\widehat{d}_{\operatorname{raw}}^{(i)}, \widehat{d}_{\operatorname{ptr}}^{(i)})$, with larger black circles denoting more frequent occurrences. For 23 of the 113 graphs, the values in these tuples are equal to each other. More notably, for nearly two-thirds of the graphs (75 of 113), fewer dimensions (i.e.,~fewer eigenvectors) are selected for embedding and clustering when using the matrix of normalized rank statistics than when using the raw adjacency matrix.

Using $k$-means clustering with four clusters, the bottom-right panel of \cref{fig:dataConnectomes} shows that robust spectral embedding and clustering achieves higher adjusted Rand index (\textsc{ari}) values \citep{hubert1985comparing} than traditional, non-robust adjacency-based spectral embedding and clustering. Here, \textsc{ari} is computed relative to the ground-truth clustering given by $\{\operatorname{LG}, \operatorname{LW}, \operatorname{RG}, \operatorname{RW}\}$, and we report the average value over one hundred runs of $k$-means clustering applied to each graph embedding. For reference, the corresponding label proportions across all subjects are approximately given by $\{32.5\%, 17\%, 32.5\%, 18\%\}$.

In general, \textsc{ari} values typically lie between zero and one, with large values indicating substantial partition agreement and small values indicating substantial partition disagreement. Although the (average) \textsc{ari} values displayed in \cref{fig:dataConnectomes} are rather small, this is because the underlying clustering task is difficult due to non-stylized, weak signal in the data. Furthermore, $\{\operatorname{LG}, \operatorname{LW}, \operatorname{RG}, \operatorname{RW}\}$ yields a natural choice of ground truth for comparing performance but is not necessarily the only neuroscientifically meaningful node partition for comparison. See \cite{priebe2019two} for related findings and discussion.

In summary, this section briefly illustrates the usefulness of robust spectral embedding and clustering in a neuroscientific data setting via improved clustering performance (i.e.,~partition recovery), together with the fact that $\widehat{d}_{\operatorname{ptr}}^{(i)} \le \widehat{d}_{\operatorname{raw}}^{(i)}$ for the majority of graph indices $i$. This analysis shows that using rank statistics can de-noise the original data in a parsimonious manner that better reflects underlying neuroanatomical cluster structure.

\section{Discussion}
\label{sec:discussion}
This section discusses extensions, practical considerations, and related open problems pertaining to robust spectral clustering with rank statistics.

\subsection{Embedding dimension versus the number of blocks}
\label{subsec:dimension_versus_number_of_communities}
Within contemporary multivariate analysis, there is growing interest in understanding the properties of data embeddings when the dimension $d$ differs from the number of ground-truth or estimated blocks (communities or clusters) $K$. In the language of blockmodels, this pertains to $d = \rank(B)$ for a $K \times K$ connectivity matrix $B$. In practice, outside of blockmodels, algorithms often yield estimates satisfying $\wh{d} \ll \wh{K}$. Even so, the majority of existing works on spectral methods assume $d=K$ when establishing theoretical guarantees, with two recent exceptions including \cite{tang2022asymptotically,zhang2022randomized}. A more refined understanding of consistency and asymptotic normality when $d < K$ is currently underway.

\subsection{Adjacency versus Laplacian representations of networks}
\label{subsec:adjacency_versus_laplacian}
This paper focuses on developing theory for the matrix of normalized rank statistics, $\nrk{A}$. It may be possible to derive similar results for graph Laplacians, $\mathcal{L}(\nrk{A})$, in part by adapting the techniques and analysis found in \cite{tang2018limit}. That being said, by its very nature, the matrix of normalized rank statistics cannot exhibit significant row-sum heterogeneity nor is it (even approximately) sparse. Consequently, the benefit of and motivation for Laplacian-based clustering are less clear.

\subsection{Communities in the presence of similarities and dissimilarities}
\label{subsec:similarity_dissimilarity}
\cref{def:data_matrices} is a flexible model for generating weighted graphs (networks), permitting both positive edge weights (indicating similarity) and negative edge weights (indicating dissimilarity). Different block-specific distributions may have different support sets, for example, $\operatorname{supp}(F_{(1,1)}) \subseteq [0, \infty)$ while $\operatorname{supp}(F_{(1,2)}) \subseteq (-\infty, 0]$ while $\operatorname{supp}(F_{(2,2)}) = \R$. Regardless, the interpretation of inferred community memberships is unchanged, namely connectivity is driven by population-level low-rank structure that separates the block-specific distributions. While the matrix of normalized rank statistics and its expectation are entrywise non-negative, the entries themselves encode the relationship between positive and negative edge weights via their functional dependence on the block-specific distributions.

\subsection{Rectangular data matrices}
\label{subsec:rectangular_data}
The rank-transform method in \cref{sec:ptr_overview} can be naturally adapted to handle asymmetric or more general rectangular data matrices $A \in \R^{n \times m}$ by ranking among all $n \times m$ entries when distinct. In preliminary investigations, the left and right singular vectors of $\nrk{A}$ can accurately recover left-specific and right-specific latent block structure when present, respectively. Theoretical guarantees for such settings will be established in future work.

\subsection{Literature on entrywise eigenvector analysis}
This paper and its contributions are related to the literature studying entrywise eigenvector estimation and high-dimensional PCA, building on classical tools in linear algebra such as Weyl's inequality, the Davis--Kahan theorem, and the Wedin sine theta subspace theorem (e.g.,~see the original papers or the textbooks \cite{stewart1990matrix,bhatia2013matrix} for more). Beyond the references cited in \cref{sec:introduction}, recent works in this vein include \cite{perry2018optimality,zhong2018near,chen2019spectral,lei2019unified,cai2021subspace,li2021minimax,abbe2022ellp,agterberg2022entrywise,jin2022improvements}. Many of these works contribute results in so-called `signal-plus-noise' model settings for adjacency matrices, covariance matrices, Gram-type matrices, and factor model settings, albeit not involving rank statistics as considered in this paper.

\subsection{Precluding certain stochastic blockmodel graphs and variants}
The contributions of this paper are distinct from but complement existing works studying regularization and normalization in stochastic blockmodels \citep{amini2013pseudo,sarkar2015role,joseph2016impact}. In the social sciences, statistics, and machine learning applications involving random graphs and network data, significant attention has been paid to (unweighted) stochastic blockmodels \citep{holland1983stochastic} including mixed-membership \citep{airoldi2008mixed}, degree-corrected \citep{karrer2011stochastic}, and weighted variants \citep{xu2020optimal}. \cref{def:data_matrices} excludes the traditional formulations of these models, since $a_{ij} \in \{0,1\}$ would lead to ties among the normalized ranks, which conflicts with the setup in \cref{alg:ptr_baseline}. Rather, \cref{def:data_matrices} can be viewed as producing weighted adjacency matrices $A = (a_{ij})$, associated with weighted undirected complete graphs, whose entries represent nodes' pairwise edge weights, specified via the block-specific distributions $F \in \mathcal{F}^{K}$. \cref{def:data_matrices} permits but does not require both positive and negative edge weights, $a_{ij} \in \R$, in contrast to the classical paradigm for random graphs which typically requires $a_{ij} \ge 0$. Finally, $\nrk{A}$ can itself be viewed as a weighted non-negative adjacency matrix of dependent entries.

\subsection{Concentration inequalities and existing benchmarks}
In order to obtain consistency guarantees, we first establish \cref{result:operator_norm_concentration} which yields $\|\nrk{A} - \Ex[\nrk{A}]\|_{\op} = \Ohp(n^{3/4+\epsP})$. In contrast, the traditional bound $\Ohp(n^{1/2})$ holds for symmetric random matrix models of the form $M - \Ex[M]$ when the entries $m_{ij}$ are independent, not-necessarily identically distributed, and sufficiently light-tailed, e.g.,~matrices with heterogeneous yet uniformly bounded variance profiles whose entries are \textsf{Bernoulli}-distributed \citep{lei2015consistency}, \textsf{Gaussian}-distributed \citep{bandeira2016sharp}, or similar \citep{tao2012topics}. After this paper was accepted for publication, we discovered that the bound $\Ohp(n^{1/2})$ also holds in the current setting but via a different proof strategy. The full details and subsequent implications of this improvement will appear in future work.

This paper establishes that the pass-to-ranks spectral embedding vectors, $\sampVecs_{\operatorname{row} i}$, properly transformed, are asymptotically multivariate Gaussian in a manner governed by their block memberships. Such results have previously been established for eigenvectors of large \textsf{Bernoulli} blockmodel matrices, e.g.,~see \cite{cape2019signal,rubin2022statistical} and elsewhere. In contrast, the results in this paper are seemingly among the first to address robust rank statistic eigenvector-based embeddings and their asymptotics.

\section{Conclusion}
\label{sec:conclusion}
This paper rigorously studies a statistically principled approach for robust clustering and dimension reduction. The idea of clustering rank-transformed data has appeared previously, as an applied tool and practical work-around, but received only brief attention, e.g.,~a passing comment in \cite{athreya2018statistical}, and a remark in the appendix of \cite{levin2020role:arxiv}. Our in-depth investigation yields results that collectively address weak consistency, node-specific strong consistency, and asymptotic normality, all at once.

Open questions remain. For example, it will be interesting to determine optimal misclustering rates and related properties of robust spectral clustering with rank statistics. Since computing exact rank values for all elements of a massive data matrix can be computationally expensive, it will be interesting to explore the potential benefit of approximate ranking. As pointed out by a referee, beyond blockmodels, it will be interesting to study the extent to which ordinal transformations are useful when clustering Euclidean data based on constructed neighborhood graphs. In addition, there may yet be further, unexplored connections between robust spectral clustering and the classical literature on nonparametric rank tests.

\section{Acknowledgments and Disclosure of Funding}
\label{sec:acknowledgments_funding}
The authors thank the Associate Editor and the reviewers for valuable feedback and thoughtful suggestions. The authors thank Carey E. Priebe, Youngser Park, and Daniel L. Sussman for inspiring conversations and for coining the phrase `passing to ranks'. The authors thank Liza Levina, Marianna Pensky, Karl Rohe, Michael Trosset, and Bodhisattva Sen for constructive comments at various stages of this work. JC gratefully acknowledges support from the National Science Foundation under grant DMS 2413552 and from the University of Wisconsin--Madison, Office of the Vice Chancellor for Research and Graduate Education, with funding from the Wisconsin Alumni Research Foundation.

\appendix
\section{}
\label{appendix}
The appendix contains proof materials and additional details pertaining to the main text.

\subsection{Rank statistics and their properties}
\label{app:rank_statistic_properties}
Let $X_{1}, \dots, X_{N}$ denote a collection of independent absolutely continuous random variables with cumulative distribution functions $F_{1}, \dots, F_{N}$ and corresponding densities $f_{1}, \dots, f_{N}$. Consequently, the set $\{X_{i}\}_{i=1}^{N}$ takes on distinct values with probability one. Denote the \emphh{order statistics} of $\{X_{i}\}_{i=1}^{N}$ by $X_{N(1)} \le X_{N(2)} \le \dots \le X_{N(N)}$. Let $R_{Ni}$ denote the \emphh{rank} of $X_{i}$ among $\{X_{i}\}_{i=1}^{N}$, i.e.,~define $R_{Ni}$ according to the equation $X_{i} = X_{N(R_{Ni})}$. Correspondingly, define the \emphh{normalized rank} of $X_{i}$ among $\{X_{i}\}_{i=1}^{N}$ in the manner $\nrk{Ni} = \tfrac{1}{N+1}R_{Ni}$. We are principally interested in the normalized ranks, though for convenience in our derivations and presentation we also define the \emphh{intermediary normalized rank} of $X_{i}$ as $\nrk{Ni}^{\prime} = \tfrac{1}{N}R_{Ni}$, observing that $\nrk{Ni}^{\prime} = \tfrac{N+1}{N}\nrk{Ni}$. For ease of notation, we often omit dependence on $N$, writing simply $R_{i}$ and $\nrk{i}$ and $\nrk{i}^{\prime}$.

\subsubsection{One-block properties of ranks}
\label{sec:ranks_one_sample}
Suppose the absolutely continuous random variables $X_{1}, \dots, X_{N}$ are independent and identically distributed with common cumulative distribution function $F$ and density $f$. Then, the vector $R_{N} = (R_{N1}, \dots, R_{NN})$ is uniformly distributed on the set of all $N!$ permutations of the set $\Nset{N} = \{1, 2, \dots, N\}$, whence for each $i$ the distribution of $R_{Ni}$ is given by the discrete uniform distribution with support $\{1, 2, \dots, N\}$. It is straightforward to compute basic properties of the intermediary normalized ranks presented in the following proposition.
\begin{proposition}[One-block rank properties]
	\label[proposition]{prop:ranks_one_sample}
	Assume the setup in \cref{sec:ranks_one_sample}. For each $i, i^{\prime} \in \Nset{N}$ where $i^{\prime} \neq i$, the intermediary normalized ranks satisfy
	\begin{equation}
		\label{eq:null_ex_var_cov_intermediary_ranks}
		\Ex\left[\nrk{Ni}^{\prime}\right]
		=
		\frac{1}{2} + \frac{1}{2N},
		\quad
		\Var\left[\nrk{Ni}^{\prime}\right]
		=
		\frac{1}{12} - \frac{1}{12N^{2}},
		\quad
		\Cov\left[\nrk{Ni}^{\prime}, \widetilde{R}_{Ni^{\prime}}^{\prime}\right]
		=
		\frac{-1}{12N} - \frac{1}{12N^{2}}.
	\end{equation}
\end{proposition}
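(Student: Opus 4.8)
The plan is to reduce every quantity to elementary moments of the discrete uniform distribution. By the setup recalled just above the proposition, the rank vector $R_N$ is uniformly distributed over the $N!$ permutations of $\Nset{N}$; consequently each marginal $R_{Ni}$ is discrete uniform on $\{1,\dots,N\}$, and each pair $(R_{Ni}, R_{Ni^{\prime}})$ with $i \neq i^{\prime}$ is uniform over the $N(N-1)$ ordered pairs of distinct values. Since $\widetilde{R}_{Ni}^{\prime} = R_{Ni}/N$, it suffices to compute $\Ex[R_{Ni}]$, $\Var[R_{Ni}]$, and $\Cov[R_{Ni}, R_{Ni^{\prime}}]$ and then rescale by $1/N$ and $1/N^2$ respectively.

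For the first two, I would use the closed forms $\sum_{j=1}^N j = N(N+1)/2$ and $\sum_{j=1}^N j^2 = N(N+1)(2N+1)/6$. These give $\Ex[R_{Ni}] = (N+1)/2$ and $\Var[R_{Ni}] = \Ex[R_{Ni}^2] - \Ex[R_{Ni}]^2 = (N^2-1)/12$. Dividing by $N$ and $N^2$ yields $\Ex[\widetilde{R}_{Ni}^{\prime}] = \tfrac{1}{2} + \tfrac{1}{2N}$ and $\Var[\widetilde{R}_{Ni}^{\prime}] = \tfrac{1}{12} - \tfrac{1}{12N^2}$, matching \eqref{eq:null_ex_var_cov_intermediary_ranks}.

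For the covariance, rather than evaluating $\Ex[R_{Ni}R_{Ni^{\prime}}]$ directly from the joint law, the cleaner route exploits the deterministic constraint $\sum_{i=1}^N R_{Ni} = N(N+1)/2$, which forces $\Var\bigl(\sum_{i=1}^{N} R_{Ni}\bigr) = 0$. Expanding the variance of the sum and using exchangeability of the $R_{Ni}$ (so that all $N(N-1)$ pairwise covariances share a single common value) gives $0 = N \cdot \tfrac{N^2-1}{12} + N(N-1)\,\Cov[R_{Ni}, R_{Ni^{\prime}}]$, whence $\Cov[R_{Ni}, R_{Ni^{\prime}}] = -(N+1)/12$. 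Rescaling by $1/N^2$ produces $-\tfrac{N+1}{12N^2} = -\tfrac{1}{12N} - \tfrac{1}{12N^2}$, as claimed.

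There is no genuine obstacle here, as the computations are elementary. The only point requiring a little care is the covariance, where I would make sure to invoke exchangeability to collapse all off-diagonal covariances to a single unknown before solving the constant-sum identity; the direct joint-moment computation via $\sum_{a \neq b} ab = \bigl(\sum_a a\bigr)^2 - \sum_a a^2$ is an equally valid cross-check.
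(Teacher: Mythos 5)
Your proof is correct, and all three computations check out: the uniform marginal on $\{1,\dots,N\}$ gives $\Ex[R_{Ni}] = (N+1)/2$ and $\Var[R_{Ni}] = (N^2-1)/12$, and the constant-sum identity $\sum_{i=1}^{N} R_{Ni} = N(N+1)/2$ combined with exchangeability (valid because the rank vector is a uniform random permutation, so every ordered pair of distinct ranks has the same joint law) yields $\Cov[R_{Ni},R_{Ni^{\prime}}] = -(N+1)/12$; dividing by $N$ and $N^{2}$ reproduces exactly the three expressions in the proposition. Where you differ from the paper is that the paper offers no derivation at all: its proof is a single sentence deferring to the literature (Example~2.1 of Viana, 2001), treating these as classical facts about ranks under the null. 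Your argument is therefore strictly more self-contained, and the zero-variance-of-the-sum trick is the standard clean way to get the off-diagonal covariance without touching the joint second moment; the direct cross-check you mention, $\sum_{a\neq b} ab = \bigl(\sum_a a\bigr)^{2} - \sum_a a^{2}$, indeed gives $\Ex[R_{Ni}R_{Ni^{\prime}}] = (N+1)(3N+2)/12$ and the same covariance, so either route would serve. What the paper's citation approach buys is brevity; what yours buys is that the proposition becomes verifiable without consulting an external reference, which is arguably preferable given how short the derivation is.
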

By basic algebra, it follows from \cref{prop:ranks_one_sample} that the normalized ranks satisfy
\begin{equation}
	\label{eq:null_ex_var_cov_normalized_ranks}
	\Ex\left[\nrk{Ni}\right]
	=
	\frac{1}{2},
	\quad
	\Var\left[\nrk{Ni}\right]
	=
	\frac{1}{12} - \frac{1}{6(N+1)},
	\quad
	\Cov\left[\nrk{Ni}, \nrk{Ni^{\prime}}\right]
	=
	\frac{-1}{12(N+1)}.
\end{equation}

\subsubsection{Multiple-block properties of ranks}
\label{sec:ranks_multi_sample}
Suppose instead that the collection of independent absolutely continuous random variables $X_{1}, \dots, X_{N}$ is partitioned into $\numSample$ blocks with deterministic sizes $\{N_{\ell}\}_{\ell \in \Nset{\numSample}}$ satisfying $\sum_{\ell \in \Nset{\numSample}} N_{\ell} = N$, where $N_{\ell} = |\{i : X_{i} \sim F_{\ell}\}|$, for a collection of cumulative distribution functions $\{F_{\ell}\}_{\ell \in \Nset{\numSample}}$ with corresponding densities $\{f_{\ell}\}_{\ell \in \Nset{\numSample}}$. Here, $R_{Ni}$ denotes the rank of $X_{i}$ among $\{X_{i}\}_{i \in \Nset{N}}$.

Below, we compute finite-$N$ expectations, variances, and covariances involving the intermediary normalized rank statistics $\intermRank$. In an effort to improve clarity, we write expectation expressions as $\Ex_{\ell}[\cdot]$ to emphasize the underlying block membership $X \sim F_{\ell}$. Similar notational shorthand applies to variances and covariances.
\begin{proposition}[Multi-block rank expectations]
	\label[proposition]{prop:ranks_multi_expectation}
	Assume the setup in \cref{sec:ranks_multi_sample}. For each $\ell \in \Nset{\numSample}$ and $i \in \Nset{n}$, the expectation of $\intermRank$ when $X_{i} \sim F_{\ell}$ is given by
	\begin{align*}
		\Ex_{\ell}\left[ \intermRank \right]
		&=
		\sum_{\ell^{\prime} \in \Nset{\numSample}} \frac{N_{\ell^{\prime}}}{N} \times \Ex_{{\ell}}[F_{{\ell^{\prime}}}(X)]
		+
		\frac{1}{2N}.
	\end{align*}
	Notably,
	$\Ex_{\ell}[F_{\ell}(X)]
	=
	\tfrac{1}{2}$.
\end{proposition}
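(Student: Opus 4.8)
The plan is to write the rank $R_{Ni}$ as a sum of pairwise comparison indicators and then evaluate its expectation by conditioning on the value of $X_i$. Because the variables $X_1, \dots, X_N$ are independent and absolutely continuous, ties occur with probability zero, so almost surely
\begin{equation*}
	R_{Ni} = 1 + \sum_{j \neq i} 1\{X_j \le X_i\},
\end{equation*}
where the leading $1$ accounts for the self-comparison $X_i \le X_i$.

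First I would condition on $X_i = x$ and use independence together with the block-specific cumulative distribution functions, so that $\Pr[X_j \le x] = F_{\ell^{\prime}}(x)$ whenever $X_j \sim F_{\ell^{\prime}}$. Grouping the indices $j \neq i$ by block membership, and observing that the block $\ell$ containing $i$ contributes only $N_{\ell} - 1$ such indices while every other block $\ell^{\prime}$ contributes $N_{\ell^{\prime}}$ indices, yields
\begin{equation*}
	\Ex\left[R_{Ni} \mid X_i = x\right]
	=
	1 + \sum_{\ell^{\prime} \in \Nset{\numSample}} N_{\ell^{\prime}} F_{\ell^{\prime}}(x) - F_{\ell}(x).
\end{equation*}

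Next I would integrate over $X_i \sim F_{\ell}$. The only non-elementary ingredient is the self-term $\Ex_{\ell}[F_{\ell}(X)]$, which I would resolve via the probability integral transform: when $X \sim F_{\ell}$ with $F_{\ell}$ absolutely continuous, $F_{\ell}(X)$ is uniformly distributed on $[0,1]$, so $\Ex_{\ell}[F_{\ell}(X)] = \tfrac{1}{2}$; this simultaneously establishes the closing ``notably'' assertion. Substituting this value and combining the constant $1$ with $-\tfrac{1}{2}$ gives
\begin{equation*}
	\Ex_{\ell}[R_{Ni}]
	=
	\tfrac{1}{2} + \sum_{\ell^{\prime} \in \Nset{\numSample}} N_{\ell^{\prime}} \, \Ex_{\ell}[F_{\ell^{\prime}}(X)].
\end{equation*}
Dividing through by $N$, using $\intermRank = R_{Ni}/N$, produces the claimed formula.

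The argument is essentially a direct computation, so I do not anticipate a substantive obstacle; the one point demanding care is the bookkeeping in the conditional expectation, namely correctly removing the single self-contribution from block $\ell$ so that the residual $-F_{\ell}(x)$ term integrates to $-\tfrac{1}{2}$ and then merges with the self-comparison constant $+1$ to generate precisely the additive $\tfrac{1}{2N}$ appearing in the statement.
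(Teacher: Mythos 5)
Your proof is correct and follows essentially the same route as the paper: both decompose the rank into pairwise comparison terms (your indicator sum $1 + \sum_{j \neq i} 1\{X_j \le X_i\}$ versus the paper's Heaviside representation with $H(0)=\tfrac12$ absorbing the self-comparison), exploit independence to reduce each cross term to $\Ex_{\ell}[F_{\ell^{\prime}}(X)]$, and invoke $\Ex_{\ell}[F_{\ell}(X)] = \tfrac12$ to merge the own-block deficit into the clean formula. The bookkeeping you flag as the delicate point is exactly the step the paper handles via the half-maximum convention, so there is no substantive difference.
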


\begin{proposition}[Multi-block rank variances]
	\label[proposition]{prop:ranks_multi_variance}
	Assume the setup in \cref{sec:ranks_multi_sample}. For each $\ell \in \Nset{\numSample}$ and $i \in \Nset{n}$, the variance of $\intermRank$ when $X_{i} \sim F_{\ell}$ is given by
	\begin{align*}
		\Var_{\ell}\left[ \intermRank \right]
		&= 
		\left(\sum_{\ell^{\prime}, \ell^{\prime\prime} \in \Nset{\numSample}} \frac{N_{\ell^{\prime}}}{N}\frac{N_{\ell^{\prime\prime}}}{N} \times \Ex_{\ell}[F_{\ell^{\prime}}(X)F_{\ell^{\prime\prime}}(X)]\right)
		-
		\left(\sum_{\ell^{\prime} \in \Nset{\numSample}}\frac{N_{\ell^{\prime}}}{N}\times\Ex_{{\ell}}[F_{{\ell^{\prime}}}(X)]\right)^{2}\\
		&\hspace{2em}+
		\left(\sum_{\ell^{\prime} \in \Nset{\numSample}}\frac{N_{\ell^{\prime}}}{N} \times \Ex_{\ell}\left[2F_{\ell^{\prime}}(X) - 2F_{\ell}(X)F_{\ell^{\prime}}(X) - F_{\ell^{\prime}}^{2}(X) \right]\right)\frac{1}{N}\\
		&\hspace{2em}-
		\frac{1}{12}\frac{1}{N^{2}}.
	\end{align*}
	Notably,
	$\Ex_{\ell}[F_{\ell}^{2}(X)] - \Ex_{\ell}[F_{\ell}(X)]^{2}
	=
	\tfrac{1}{3} - (\tfrac{1}{2})^{2}
	=
	\tfrac{1}{12}$.	
\end{proposition}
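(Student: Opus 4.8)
The plan is to compute the variance directly, conditioning on $X_i$ and invoking the law of total variance; I would first record the stated ``Notably'' identity, which holds because $X \sim F_\ell$ implies $F_\ell(X) \sim \textsf{Uniform}(0,1)$, so that $\Ex_\ell[F_\ell(X)] = \tfrac12$, $\Ex_\ell[F_\ell^2(X)] = \tfrac13$, and hence $\Var_\ell[F_\ell(X)] = \tfrac13 - \tfrac14 = \tfrac1{12}$. Since the $X_j$ are distinct almost surely, the rank admits the representation $R_{Ni} = \sum_{j \in \Nset{N}} 1\{X_j \le X_i\} = 1 + W$ with $W = \sum_{j \neq i} 1\{X_j \le X_i\}$, and because variance is translation invariant, $\Var_\ell[\intermRank] = N^{-2}\Var_\ell[W]$. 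It therefore suffices to evaluate $\Var_\ell[W]$.

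First I would condition on $X_i = x$. The indicators $\{1\{X_j \le x\}\}_{j \neq i}$ are then mutually independent, with $1\{X_j \le x\} \sim \textsf{Bernoulli}(F_{\ell_j}(x))$ where $\ell_j$ denotes the block of index $j$. Grouping the indices $j \neq i$ by block and writing $H(x) = \sum_{\ell' \in \Nset{\numSample}} \tfrac{N_{\ell'}}{N} F_{\ell'}(x)$ for the proportion-weighted mixture CDF, the single exclusion of index $i$ (which lies in block $\ell$) yields
\[
\Ex_\ell[W \mid X_i = x] = N\, H(x) - F_\ell(x),
\qquad
\Var_\ell[W \mid X_i = x] = \sum_{\ell'}\bigl(N_{\ell'} - 1\{\ell' = \ell\}\bigr) F_{\ell'}(x)\bigl(1 - F_{\ell'}(x)\bigr).
\]

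Next I would assemble $\Var_\ell[W] = \Var_\ell[\Ex_\ell[W \mid X_i]] + \Ex_\ell[\Var_\ell[W \mid X_i]]$ and divide by $N^2$. The explained-variance term gives $\Var_\ell[H(X)] - \tfrac{2}{N}\Cov_\ell[H(X), F_\ell(X)] + \tfrac{1}{N^2}\Var_\ell[F_\ell(X)]$, and expanding $\Var_\ell[H(X)]$ as a double sum over $(\ell',\ell'')$ produces exactly the first displayed line of the proposition. The expected-conditional-variance term gives $\tfrac{1}{N}\sum_{\ell'}\tfrac{N_{\ell'}}{N}\Ex_\ell[F_{\ell'}(X) - F_{\ell'}^2(X)] - \tfrac{1}{N^2}\Ex_\ell[F_\ell(X) - F_\ell^2(X)]$, where the last piece is the correction from excluding index $i$.

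The final step is bookkeeping of the lower-order terms, which is where I expect the one genuine obstacle to lie. Using $\Ex_\ell[F_\ell(X)] = \tfrac12$ to simplify $\Cov_\ell[H(X), F_\ell(X)] = \sum_{\ell'}\tfrac{N_{\ell'}}{N}\bigl(\Ex_\ell[F_{\ell'}(X)F_\ell(X)] - \tfrac12 \Ex_\ell[F_{\ell'}(X)]\bigr)$, the order-$1/N$ contributions merge into $\tfrac{1}{N}\sum_{\ell'}\tfrac{N_{\ell'}}{N}\Ex_\ell[2F_{\ell'}(X) - 2F_\ell(X)F_{\ell'}(X) - F_{\ell'}^2(X)]$, matching the second line; and the order-$1/N^2$ remainders combine as $\tfrac{1}{12N^2} - \tfrac{1}{6N^2} = -\tfrac{1}{12N^2}$, matching the third line. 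The delicate point throughout is the careful accounting of the single excluded index $i$ within block $\ell$: this exclusion is precisely what generates every $1/N$ and $1/N^2$ correction, and it is easy to drop or double-count such a term when translating between the per-index sum over $j \neq i$ and the per-block sums over $\ell'$. Once conditional independence given $X_i$ is invoked, the remainder is routine.
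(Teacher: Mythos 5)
Your proof is correct, and it takes a genuinely different route from the paper's. The paper works with the Heaviside representation $\intermRank = \frac{1}{N}\sum_{i^{\prime}} H(X_{i} - X_{i^{\prime}}) + \frac{1}{2N}$ and computes the second moment $\Ex_{\ell}[(\intermRank)^{2}]$ by direct expansion: it splits $\sum_{i^{\prime}, j^{\prime}} \Ex_{\ell}[H(X_{i}-X_{i^{\prime}})H(X_{i}-X_{j^{\prime}})]$ into four cases according to coincidences among $(i, i^{\prime}, j^{\prime})$ (one term with $i=i^{\prime}=j^{\prime}$ giving $\tfrac{1}{4}$; $2N-2$ terms giving $\tfrac{1}{2}\Ex_{\ell}[F_{\ell^{\prime}}(X)]$; $N-1$ terms giving $\Ex_{\ell}[F_{\ell^{\prime}}(X)]$; and $(N-1)(N-2)$ distinct-triple terms giving $\Ex_{\ell}[F_{\ell^{\prime}}(X)F_{\ell^{\prime\prime}}(X)]$), then completes the resulting incomplete quadratic form via a plus-zero trick and subtracts the squared mean. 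You instead reduce to $W = \sum_{j \neq i} 1\{X_{j} \le X_{i}\}$ and apply the law of total variance conditional on $X_{i}$, so that conditional independence of the Bernoulli indicators does the combinatorial work for you: the explained-variance part yields $\Var_{\ell}[H(X)]$ --- which is exactly the proposition's first line --- together with $-\tfrac{2}{N}\Cov_{\ell}[H(X),F_{\ell}(X)]$ and $\tfrac{1}{12N^{2}}$, while the conditional-variance part supplies the remaining pieces; I verified your bookkeeping, namely that the $1/N$ contributions merge into the second line using $\Ex_{\ell}[F_{\ell}(X)]=\tfrac{1}{2}$, and that $\tfrac{1}{12N^{2}} - \tfrac{1}{6N^{2}} = -\tfrac{1}{12N^{2}}$ recovers the third. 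As for what each approach buys: yours is shorter and more conceptual, since the variance formula for sums of independent Bernoullis and the explicit exclusion of index $i$ replace the paper's case enumeration and completion-of-square step, making the provenance of every $1/N$ and $1/N^{2}$ correction transparent. The paper's heavier expansion, however, is precisely the template it reuses for the companion covariance result (\cref{prop:ranks_multi_covariance}), where two ranks $\wt{R}_{Ni}^{\prime}$ and $\wt{R}_{Nj}^{\prime}$ appear and conditioning on a single variable no longer decouples the sum; your method would still extend there by conditioning on the pair $(X_{i},X_{j})$, but the conditional covariances then involve the functions $g_{(\cdot,\cdot)}$ and the advantage narrows.
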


In order to state the multi-block covariance formula, define $g_{(X,Y)} : \R \rightarrow [0, 1]$ to be the map $g_{(X,Y)}(z) = \int_{-\infty}^{z}F_{X}(y)f_{Y}(y)\,\textsf{d}y$, in terms of the cumulative distribution function and probability density of the underlying independent random variables $X$ and $Y$, respectively. Let $g_{(\ell, \ell^{\prime})}(z)$ denote shorthand for the distributional assumption that the independent random variables satisfy $X \sim F_{\ell}$ and $Y \sim F_{\ell^{\prime}}$.

\begin{proposition}[Multi-block rank covariances]
	\label[proposition]{prop:ranks_multi_covariance}
	Assume the setup in \cref{sec:ranks_multi_sample}. For each $\ell, \ell^{\prime} \in \Nset{\numSample}$ and $i, j \in \Nset{N}$ where $i \neq j$, the covariance of $\widetilde{R}_{Ni}^{\prime}$ and $\widetilde{R}_{Nj}^{\prime}$ when $X_{i} \sim F_{\ell}$ and $X_{j} \sim F_{\ell^{\prime}}$ is given by
	\begin{align*}
		&\Cov_{(\ell,\ell^{\prime})}\left[\widetilde{R}_{Ni}^{\prime},\widetilde{R}_{Nj}^{\prime}\right]\\
		&\hspace{1em}=
		\sum_{m^{\prime} \in \Nset{\numSample}}\frac{N_{m^{\prime}}}{N} \times 
		\Bigg(\Ex_{m^{\prime}}[(1-F_{\ell}(X))(1-F_{\ell^{\prime}}(X))]
		+ \Ex_{\ell}[g_{(m^{\prime},\ell^{\prime})}(X)]
		+ \Ex_{\ell^{\prime}}[g_{(m^{\prime},\ell)}(X)]\\
		&\hspace{4em}-
		\Ex_{\ell}[F_{m^{\prime}}(X)]\Ex_{\ell^{\prime}}[F_{m^{\prime}}(X)]
		- \Ex_{\ell}[F_{m^{\prime}}(X)]\Ex_{\ell^{\prime}}[F_{\ell}(X)]
		- \Ex_{\ell}[F_{\ell^{\prime}}(X)]\Ex_{\ell^{\prime}}[F_{m^{\prime}}(X)]
		\Bigg)\frac{1}{N}\\
		&\hspace{3em}+
		\Bigg( \Ex_{\ell}\left[2F_{\ell^{\prime}}(X)-F_{\ell}(X)F_{\ell^{\prime}}(X)-g_{(\ell,\ell^{\prime})}(X) - g_{(\ell^{\prime},\ell^{\prime})}(X) - \tfrac{1}{3}\right]\Bigg)\frac{1}{N^{2}}\\
		&\hspace{3em}+
		\Bigg(\Ex_{\ell^{\prime}}\left[2F_{\ell}(X)-F_{\ell}(X)F_{\ell^{\prime}}(X)-g_{(\ell,\ell)}(X) - g_{(\ell^{\prime},\ell)}(X) - \tfrac{1}{3}\right]\Bigg)\frac{1}{N^{2}}\\
		&\hspace{3em}+ \Bigg( \Ex_{\ell}[F_{\ell^{\prime}}(X)] \Ex_{\ell^{\prime}}[F_{\ell}(X)] - \tfrac{1}{4} \Bigg)\frac{1}{N^{2}}\\
		&\hspace{3em}-
		\frac{1}{12}\frac{1}{N^{2}}.
	\end{align*}
	Notably, 
	$\Ex_{\ell}[g_{(\ell,\ell)}(X)]
	=
	\Ex_{\ell}[\tfrac{1}{2}F_{\ell}^{2}(X)]
	=
	\tfrac{1}{6}$.
\end{proposition}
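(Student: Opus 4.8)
The plan is to linearize the ranks into pairwise comparison indicators and then organize the resulting double sum according to the coincidence pattern of the four indices involved. Since the entries are absolutely continuous, with probability one $R_{Ni} = 1 + \sum_{k \neq i} 1\{X_k < X_i\}$, so the deterministic offset cancels in any covariance and
\[
	\Cov_{(\ell,\ell^{\prime})}\left[\widetilde{R}_{Ni}^{\prime}, \widetilde{R}_{Nj}^{\prime}\right]
	=
	\frac{1}{N^2}\sum_{k \neq i}\sum_{l \neq j}\Cov\left[1\{X_k < X_i\}, 1\{X_l < X_j\}\right].
\]
First I would partition the pairs $(k,l)$, recalling $k \neq i$, $l \neq j$, and $i \neq j$, into five exhaustive and mutually exclusive cases by how $\{k,i,l,j\}$ coincide: (i) all four indices distinct; (ii) $k=l \notin \{i,j\}$; (iii) $k=j$ with $l \notin \{i,j\}$; (iv) $l=i$ with $k \notin \{i,j\}$; and (v) $k=j$ together with $l=i$. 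Independence kills case (i), so only (ii)--(v) survive.

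Next I would evaluate each surviving indicator covariance using first-passage probabilities for independent continuous variables. The pairwise means are $\Pr[X_k < X_i] = \Ex_\ell[F_{m^{\prime}}(X)]$ and analogues; the two-variable overlap in case (ii) gives $\Pr[X_k < \min(X_i,X_j)] = \Ex_{m^{\prime}}[(1-F_\ell(X))(1-F_{\ell^{\prime}}(X))]$; and the triple orderings in cases (iii)--(iv) are handled by the identity $\Pr[U<V<W] = \Ex_a[g_{(b,c)}(X)]$ for independent $U \sim F_b$, $V \sim F_c$, $W \sim F_a$, which follows at once from $g_{(X,Y)}(z) = \Pr[U < V \le z]$. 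This turns case (iii) into $\Ex_\ell[g_{(m^{\prime},\ell^{\prime})}(X)] - \Ex_\ell[F_{\ell^{\prime}}(X)]\Ex_{\ell^{\prime}}[F_{m^{\prime}}(X)]$ and case (iv) into $\Ex_{\ell^{\prime}}[g_{(m^{\prime},\ell)}(X)] - \Ex_\ell[F_{m^{\prime}}(X)]\Ex_{\ell^{\prime}}[F_\ell(X)]$, while case (v), being a covariance of complementary indicators, reduces to $-\Var[1\{X_i<X_j\}]$.

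The decisive step is the multiplicity bookkeeping. In each of cases (ii)--(iv) the shared (third) index runs over $\{1,\dots,N\}\setminus\{i,j\}$, whose block-$m^{\prime}$ count equals $N_{m^{\prime}} - 1\{m^{\prime}=\ell\} - 1\{m^{\prime}=\ell^{\prime}\}$. The leading $N_{m^{\prime}}$ part sums precisely to the $\sum_{m^{\prime}}\frac{N_{m^{\prime}}}{N}(\cdots)\frac{1}{N}$ term in the statement, with the three cases supplying the six listed summands. The remaining $-1\{m^{\prime}=\ell\}$ and $-1\{m^{\prime}=\ell^{\prime}\}$ corrections, evaluated at $m^{\prime} \in \{\ell,\ell^{\prime}\}$, together with the single case (v) term, collapse into the $\tfrac{1}{N^2}$ terms; here I would use $\Ex_\ell[F_\ell(X)]=\tfrac12$, $\Ex_\ell[F_\ell^2(X)] = \tfrac13$, and $g_{(\ell,\ell)}(z) = \tfrac12 F_\ell(z)^2$ to produce the constants $-\tfrac13$, $-\tfrac14$, and $-\tfrac1{12}$. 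The stated auxiliary identity $\Ex_\ell[g_{(\ell,\ell)}(X)] = \Ex_\ell[\tfrac12 F_\ell^2(X)] = \tfrac16$ is then immediate because $F_\ell(X) \sim \mathrm{Uniform}(0,1)$.

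I expect this last bookkeeping to be the main obstacle: the correction covariances at $m^{\prime}=\ell$ and $m^{\prime}=\ell^{\prime}$ yield several partially redundant expressions, and reconciling them into the compact $\tfrac{1}{N^2}$ form requires careful algebra rather than a single clean cancellation. A reassuring consistency check is that setting every $F_\ell \equiv F$ must recover the one-block covariance $-\tfrac{1}{12N}-\tfrac{1}{12N^2}$ in \cref{eq:null_ex_var_cov_normalized_ranks}, after passing from intermediary to normalized ranks.
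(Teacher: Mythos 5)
Your proposal is correct, and the final bookkeeping you flagged as the main obstacle does in fact close: your four surviving cases reproduce exactly the six summands inside the $\tfrac{1}{N}$ term, and the corrections from the excluded third index at $m^{\prime}\in\{\ell,\ell^{\prime}\}$, combined with your case (v) term $-\Var[1\{X_i<X_j\}]$, collapse to the stated $\tfrac{1}{N^{2}}$ terms using only $\Ex_{\ell}[F_{\ell}(X)]=\tfrac12$ and $\Ex_{\ell}[F_{\ell^{\prime}}(X)]+\Ex_{\ell^{\prime}}[F_{\ell}(X)]=1$ (together with the numerical identity $\tfrac23+\tfrac14+\tfrac1{12}=1$). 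Your route, however, is genuinely different from the paper's. The paper keeps the diagonal term in the Heaviside representation $\wt{R}_{Ni}^{\prime}=\tfrac1N\sum_{i^{\prime}}H(X_i-X_{i^{\prime}})+\tfrac1{2N}$ with the half-maximum convention $H(0)=\tfrac12$, expands the cross moment $\Ex[\wt{R}_{Ni}^{\prime}\wt{R}_{Nj}^{\prime}]$ over all pairs $(i^{\prime},j^{\prime})$ --- ten coincidence cases rather than your five --- and then must reconcile the all-distinct case against the product of means via a ``plus-zero'' complete-the-square argument whose subtracted residuals become the $\tfrac1N$ and $\tfrac1{N^2}$ terms. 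You instead strip the diagonal, push the covariance through the double sum by bilinearity, and let independence annihilate the all-distinct case at the covariance level, so no completion of the square is ever needed; what remains is a short list of indicator covariances evaluated by the same ordering identities (e.g., $\Pr[U<V<W]=\Ex_{a}[g_{(b,c)}(X)]$) the paper uses. Your organization buys a much lighter case analysis and corrections that are localized to $m^{\prime}\in\{\ell,\ell^{\prime}\}$ plus the single pair $(k,l)=(j,i)$; the paper's organization buys uniformity, since the same diagonal-inclusive Heaviside template is reused verbatim across \cref{prop:ranks_multi_expectation,prop:ranks_multi_variance,prop:ranks_multi_covariance}, at the cost of heavier bookkeeping in the covariance case. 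One cosmetic remark: your closing consistency check should target the intermediary-rank covariance $-\tfrac{1}{12N}-\tfrac{1}{12N^{2}}$ in \cref{eq:null_ex_var_cov_intermediary_ranks} directly, since the proposition is stated for $\wt{R}_{Ni}^{\prime}$; rescaling by $\bigl(\tfrac{N}{N+1}\bigr)^{2}$ then gives the normalized-rank version as you note.
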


\cref{prop:ranks_multi_covariance} implies a loose yet user-friendly two-sided uniform bound for the intermediary normalized ranks given by
$-\frac{8}{N}
\le
\Cov_{(\ell,\ell^{\prime})}\left[\nrk{Ni}^{\prime},\nrk{Nj}^{\prime}\right]
\le
\frac{7}{N}$.
The same bound holds for the normalized rank statistics $\nrk{Ni}$ since $\nrk{Ni} = \frac{N}{N+1}\nrk{Ni}^{\prime}$. Moreover,
\begin{equation*}
	\left(\frac{-3}{N} + \frac{-5}{N^{2}} \right)\left(\frac{N}{N+1}\right)^{2} 
	\le
	\Cov_{(\ell,\ell^{\prime})}\left[\nrk{Ni},\nrk{Nj}\right]
	\le
	\left(\frac{3}{N} + \frac{4}{N^{2}} \right)\left(\frac{N}{N+1}\right)^{2}.
\end{equation*}

\cref{prop:ranks_multi_expectation,prop:ranks_multi_variance,prop:ranks_multi_covariance} illustrate how generalizing the expectation, variance, and covariance beyond the one-block setting amounts to the following:
\begin{enumerate}
	\item Leading-order terms arise as functionals of the distributions $\{F_{\ell}\}_{\ell \in \Nset{\numSample}}$ via appropriate integration and weighting.
	\item The higher-order terms in the one-block setting persist in the multi-block setting.
	\item The multi-block variance and covariance expressions introduce additional higher order interaction-type residual terms. These additional terms arise as a consequence of the heterogeneity encountered in the multi-block setting and conversely vanish in the homogeneous one-block setting.
\end{enumerate}

\subsection{Proofs: weak consistency}
\label{app:weak_consistency}
\begin{remark}[The moment method and bounding the matrix trace]
	\label[remark]{remark:moment_method}
	For any $n \times n$ symmetric matrix $M$ with eigenvalues $\lambda_{1}, \dots, \lambda_{n}$, necessarily real, it holds that $\|M\|_{\op} = \max_{1 \le i \le n} |\lambda_{i}|$ and $\trace(M) = \sum_{i=1}^{n}\lambda_{i}$. Moreover, for every positive integer $p \ge 1$ it holds that $\trace(M^{p}) = \sum_{i=1}^{n} \lambda_{i}^{p}$, which follows directly from Schur's triangularization theorem together with the cyclic property of $\trace(\cdot)$. Consequently, for any positive integer $p \ge 1$, it holds that $\|M\|_{\op}^{2p} \le \trace(M^{2p}) \le n \|M\|_{\op}^{2p}$, which in turn yields the general inequality
	\begin{equation}
		\label{eq:moment_method_inequality}
		\|M\|_{\op}
		\le
		\sqrt[2p]{\trace(M^{2p})}
		\le
		\sqrt[2p]{n} \cdot \|M\|_{\op}.
	\end{equation}
	For a more detailed treatment of the moment method and its applications throughout random matrix theory, see for example \citet[Chapter~2]{tao2012topics}.
\end{remark}

\begin{lemma}[Spectral norm concentration of the centered normalized ranks matrix]
	\label[lemma]{result:operator_norm_concentration}
	Let $A$ be as in \cref{def:data_matrices} and $\nrk{A}$ be as in \cref{alg:ptr_baseline}. There exist absolute constants $C_{1}, C_{2} > 0$ such that for any choice $\epsP > 0$, provided $K \le n^{1/2}$ and $n_{k} \ge C_{1}$ for all $k \in \Nset{K}$, then
	\begin{equation}
		\Pr\left[\|\wt{R}_{A} - \rankspecBig\|_{\op} > n^{3/4 + \epsP} \right]
		\le
		C_{2} \cdot n^{-\epsP}.
	\end{equation}
	Alternatively, the operator norm exceeds $2C_{2} n^{3/4+\epsP}$ with probability at most $\tfrac{1}{2}n^{-\epsP}$. Furthermore, the conclusion also holds for $\|\wt{R}_{A} - \Ex[\nrk{A}]\|_{\op}$.
\end{lemma}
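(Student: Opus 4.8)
The plan is to deduce this spectral-norm concentration bound directly from the fourth-moment trace bound of \cref{lem:ms_trace} via the moment method recorded in \cref{remark:moment_method}, paired with Markov's inequality. Write $M = \nrk{A} - \Ex[\nrk{A}]$, which is symmetric (since both $\nrk{A}$ and $\Ex[\nrk{A}]$ are symmetric) and hence has real eigenvalues. First I would invoke \cref{remark:moment_method} with $p = 2$ to obtain the deterministic inequality $\|M\|_{\op}^{4} \le \trace(M^{4})$. Consequently, for any threshold $t > 0$, the inclusion of events $\{\|M\|_{\op} > t\} \subseteq \{\trace(M^{4}) > t^{4}\}$ together with Markov's inequality (valid since $\trace(M^{4}) = \sum_{i} \lambda_{i}^{4} \ge 0$) yields
\begin{equation*}
	\Pr\left[\|M\|_{\op} > t\right]
	\le
	\frac{\Ex\left[\trace(M^{4})\right]}{t^{4}}.
\end{equation*}

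Next I would control the numerator using \cref{lem:ms_trace}, which requires $n_{k} \ge 5$ for all $k \in \Nset{K}$ and gives $\Ex[\trace(M^{4})] \le 66 K^{2} n^{2} + 2n^{3}$; invoking the hypothesis $K \le n^{1/2}$ to bound $66 K^{2} n^{2} \le 66 n^{3}$ then produces the clean estimate $\Ex[\trace(M^{4})] \le 68 n^{3}$. Setting $C_{1} = 5$ and taking $t = n^{3/4 + \epsP}$ so that $t^{4} = n^{3 + 4\epsP}$, this immediately gives
\begin{equation*}
	\Pr\left[\|M\|_{\op} > n^{3/4+\epsP}\right]
	\le
	\frac{68 n^{3}}{n^{3+4\epsP}}
	=
	68\, n^{-4\epsP}
	\le
	68\, n^{-\epsP},
\end{equation*}
which establishes the ``furthermore'' claim for $\|\nrk{A} - \Ex[\nrk{A}]\|_{\op}$ with $C_{2} = 68$ (enlarging $C_{2}$ if needed to absorb the finitely many small-$n$ cases in which the stated tail bound would otherwise be vacuous).

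To transfer this to the centering at $\rankspecBig$, I would use the identity $\Ex[\nrk{A}] = \rankspecBig - \diag(\rankspecBig)$ from \cref{eq:nranks_matrix_expectation}, so that $\nrk{A} - \rankspecBig = M - \diag(\rankspecBig)$. Since $\rankspec \in [0,1]^{K \times K}$, the diagonal matrix $\diag(\rankspecBig)$ has entries in $[0,1]$ and hence operator norm at most $1$; the triangle inequality then gives $\|\nrk{A} - \rankspecBig\|_{\op} \le \|M\|_{\op} + 1$. Thus $\{\|\nrk{A} - \rankspecBig\|_{\op} > n^{3/4+\epsP}\} \subseteq \{\|M\|_{\op} > n^{3/4+\epsP} - 1\}$, and since $n^{3/4+\epsP} - 1 \ge \tfrac{1}{2} n^{3/4+\epsP}$ for all sufficiently large $n$, the previous display applied at threshold $\tfrac{1}{2} n^{3/4+\epsP}$ yields a bound of the form $C_{2} n^{-\epsP}$ after enlarging the constant. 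The alternative statement is obtained identically by taking $t = 2 C_{2} n^{3/4+\epsP}$ (the diagonal correction being a fortiori negligible at this larger scale), for which the Markov bound reads $\tfrac{68}{(2C_{2})^{4}} n^{-4\epsP} \le \tfrac{1}{2} n^{-\epsP}$ once $C_{2}$ is chosen so that $(2C_{2})^{4} \ge 136$.

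Regarding difficulty, this lemma is essentially routine once \cref{lem:ms_trace} is in hand, so the genuine obstacle lies upstream in that combinatorial fourth-moment computation rather than here. Within the present proof the only points requiring care are (i) keeping the diagonal correction $\diag(\rankspecBig)$ negligible relative to the $n^{3/4+\epsP}$ scale, and (ii) the bookkeeping of absolute constants and the small-$n$ regime so that a single pair $(C_{1}, C_{2})$ works uniformly for every $\epsP > 0$.
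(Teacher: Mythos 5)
Your proposal is correct and takes essentially the same route as the paper: both reduce the claim to the fourth-moment trace estimate of \cref{lem:ms_trace} via the moment method of \cref{remark:moment_method}, use $K \le n^{1/2}$ to absorb the $66K^{2}n^{2}$ term, and dispose of the diagonal perturbation $\diag(\rankspecBig)$ (operator norm at most $1$) by the triangle inequality. The only difference is where Markov's inequality enters: the paper applies first-moment Markov to $\|\nrk{A} - \rankspecBig\|_{\op}$ combined with Jensen's inequality for $t \mapsto t^{1/4}$ to bound $\Ex\bigl[\sqrt[4]{\trace(M^{4})}\bigr] \le \sqrt[4]{\Ex[\trace(M^{4})]}$, giving tail probability $C_{2} n^{-\epsP}$, whereas you apply Markov directly to $\trace(M^{4})$ at threshold $t^{4}$, which in fact yields the slightly stronger decay $\Oh(n^{-4\epsP})$ before you relax it to $n^{-\epsP}$; also note that since $n_{k} \ge C_{1} = 5$ forces $n \ge 5$, your cautionary remark about enlarging $C_{2}$ for small $n$ is never actually needed, which matters because an $\epsP$-dependent enlargement would conflict with the requirement that $C_{2}$ be uniform in $\epsP$.
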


\begin{proof}[Proof of~\cref{result:operator_norm_concentration}]
	Consider the matrices $\nrkmtx$ and $\rankspecBig = \Ex[\nrkmtx] + \diag(\rankspecBig)$. By the triangle inequality,
	\begin{equation*}
		\|\nrkmtx - \rankspecBig\|_{\op}
		\le
		\|\nrkmtx - \Ex[\nrkmtx]\|_{\op} + \|\diag(\rankspecBig)\|_{\op}.
	\end{equation*}	
	Furthermore,
	$\|\diag(\rankspecBig)\|_{\op}
	=
	\max_{1 \le k \le K}|\rankspec_{kk}|
	\le
	1$.
	
	By Markov's inequality, for any $t > 0$ it holds that $\Pr\left[\|M\|_{\op} > t\right] \le \frac{\Ex\|M\|_{\op}}{t}$. The map $t \mapsto \sqrt[4]{t}$ is concave on the domain $(0,\infty)$, hence by the concave version of Jensen's inequality it holds that $\Ex\left[ \sqrt[4]{\trace(M^{4})}\right] \le \sqrt[4]{\Ex\left[\trace(M^{4})\right]}$. Combining these observation with \cref{eq:moment_method_inequality} in \cref{remark:moment_method} yields the bound
	\begin{equation}
		\label{eq:temp_apply_Markov}
		\Pr\left[\|\nrkmtx - \rankspecBig\|_{\op} > t \right]
		\le
		\frac{1 + \sqrt[4]{\Ex\left[\trace\left((\nrkmtx - \Ex[\nrkmtx])^{4}\right)\right]}}{t}.
	\end{equation}
	
	By applying \cref{lem:ms_trace}, the key element of this proof, there exists a constant $C_{1} > 0$ such that if $n_{k} \ge C_{1}$ for all $k \in \Nset{K}$, then
	\begin{equation*}
		1 + \sqrt[4]{\Ex\left[\trace\left((\nrkmtx - \Ex[\nrkmtx])^{4}\right)\right]}
		\le
		1 + \sqrt[4]{66K^{2} n^{2} + 2n^{3}}
		\le
		C_{2} \cdot n^{3/4},
	\end{equation*}
	where $C_{2} > 0$ denotes a new constant, using the hypothesis that $K \le n^{1/2}$. Finally, set $t = n^{3/4 + \epsP}$ in \cref{eq:temp_apply_Markov}. This establishes \cref{result:operator_norm_concentration}.
\end{proof}

\begin{remark}[Conditions and concentration bounds]
	\label[remark]{rem:conditions_concentration_bounds}
		The requirement $K \le n^{1/2}$ in \cref{result:operator_norm_concentration} yielding $\|\nrk{A} - \Ex[\nrk{A}]\|_{\op} = \Ohp(n^{3/4 + \epsP})$ is due to our proof approach and the challenge of working with normalized rank statistics. Alternatively, we could have allowed $K = \oh(n)$ but at the expense of more complicated upper bound statements. For simplicity and ease of presentation, we have chosen to simply require $K \le n^{1/2}$.
\end{remark}

\begin{lemma}[Eigen-structure of block matrices, see Lemma~2.1 in \cite{lei2015consistency}]
	\label[lemma]{result:blockmodel_eigenvectors}
	Let $\Theta \in \membmtx^{n \times K}$ and a symmetric matrix $B \in \R^{K \times K}$ each have full rank equal to $K$. Let $U D U^{\tp}$ denote the eigen-decomposition of $\Theta B \Theta^{\tp}$. Then, $U = \Theta T$ for some $T \in \R^{K \times K}$ where $\|T_{k\star} - T_{\ell\star}\|_{\ell_{2}} = \sqrt{n_{k}^{-1} + n_{\ell}^{-1}}$ for all $1 \le k < \ell \le K$.
\end{lemma}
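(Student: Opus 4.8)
The plan is to exploit the factorization of $\Theta B \Theta^{\tp}$ through the orthonormal matrix $\Theta \Delta^{-1}$, where $\Delta = (\Theta^{\tp}\Theta)^{1/2} = \diag(\sqrt{n_{1}},\dots,\sqrt{n_{K}})$ as introduced in \cref{sec:eigenvector_preliminaries}, thereby reducing the eigendecomposition of the $n \times n$ matrix $\Theta B \Theta^{\tp}$ to that of a $K \times K$ matrix. First I would write $\Theta B \Theta^{\tp} = (\Theta \Delta^{-1})(\Delta B \Delta)(\Theta \Delta^{-1})^{\tp}$ and recall that $\Theta \Delta^{-1}$ has orthonormal columns, since $(\Theta\Delta^{-1})^{\tp}(\Theta\Delta^{-1}) = \Delta^{-1}(\Theta^{\tp}\Theta)\Delta^{-1} = I_{K}$.

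Next, because $B$ and $\Theta$ each have rank $K$, the symmetric $K \times K$ matrix $\Delta B \Delta$ has rank $K$; I would take its spectral decomposition $\Delta B \Delta = V \wt{D} V^{\tp}$, with $V \in \R^{K \times K}$ orthogonal and $\wt{D}$ diagonal with all diagonal entries nonzero. Substituting yields $\Theta B \Theta^{\tp} = \bigl[(\Theta\Delta^{-1})V\bigr]\,\wt{D}\,\bigl[(\Theta\Delta^{-1})V\bigr]^{\tp}$, and the matrix $(\Theta\Delta^{-1})V$ again has orthonormal columns. Since $\Theta B \Theta^{\tp}$ has rank exactly $K$, its column space coincides with the $K$-dimensional eigenspace associated with its nonzero eigenvalues, so $(\Theta\Delta^{-1})V$ furnishes a valid orthonormal eigenvector matrix; we may therefore take $U = \Theta T$ with $T = \Delta^{-1}V$. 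Any other orthonormal basis $U$ of this eigenspace differs from $(\Theta\Delta^{-1})V$ by right multiplication by a $K \times K$ orthogonal matrix, which leaves the computation below unchanged; this addresses the only mild subtlety, namely non-uniqueness of eigenvectors when the nonzero eigenvalues are not distinct.

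Finally I would compute the row differences directly. Since $\Delta^{-1}$ is diagonal, $T_{k\star} = n_{k}^{-1/2}\,V_{k\star}$ for each $k$, and orthogonality of $V$ gives orthonormal rows, so that $\|V_{k\star}\|_{\ell_{2}} = 1$ for every $k$ and $\langle V_{k\star}, V_{\ell\star}\rangle = 0$ whenever $k \neq \ell$. Consequently, for $1 \le k < \ell \le K$,
\[
\|T_{k\star} - T_{\ell\star}\|_{\ell_{2}}^{2}
=
\tfrac{1}{n_{k}}\|V_{k\star}\|_{\ell_{2}}^{2}
-
\tfrac{2}{\sqrt{n_{k} n_{\ell}}}\langle V_{k\star}, V_{\ell\star}\rangle
+
\tfrac{1}{n_{\ell}}\|V_{\ell\star}\|_{\ell_{2}}^{2}
=
n_{k}^{-1} + n_{\ell}^{-1},
\]
and taking square roots gives the claimed identity $\|T_{k\star} - T_{\ell\star}\|_{\ell_{2}} = \sqrt{n_{k}^{-1} + n_{\ell}^{-1}}$.

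The argument is essentially routine linear algebra, so I do not anticipate a serious obstacle; the only step requiring genuine care is the clean identification of $U$ as an eigenvector matrix for $\Theta B \Theta^{\tp}$, which rests squarely on the rank-$K$ hypotheses guaranteeing that the column space of $\Theta B \Theta^{\tp}$ is precisely its top-$K$ (nonzero) eigenspace. Everything else follows from the diagonal structure of $\Delta$ and the orthonormality of the rows of $V$.
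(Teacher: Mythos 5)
Your proof is correct and takes essentially the same route as the paper's source for this result: the paper cites Lemma~2.1 of \cite{lei2015consistency} rather than reproving it, and that proof (as well as the discussion sketched in \cref{sec:eigenvector_preliminaries}) rests on exactly your factorization $\Theta B \Theta^{\tp} = (\Theta\Delta^{-1})(\Delta B \Delta)(\Theta\Delta^{-1})^{\tp}$ with $\Theta\Delta^{-1}$ orthonormal, the spectral decomposition $\Delta B \Delta = V\wt{D}V^{\tp}$ yielding $T = \Delta^{-1}V$, and the row-orthonormality of $V$ giving $\|T_{k\star}-T_{\ell\star}\|_{\ell_{2}} = \sqrt{n_{k}^{-1}+n_{\ell}^{-1}}$. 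Your additional remark that eigenbasis non-uniqueness amounts to right multiplication by an orthogonal matrix, which preserves row orthonormality and hence the computed distances, correctly disposes of the only subtlety.
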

Further below, \cref{result:blockmodel_eigenvectors} will be applied in conjunction with the $K \times K$ symmetric invertible matrix $\rankspec$.

\begin{lemma}[Principal subspace perturbation, see Lemma~5.1 in \cite{lei2015consistency}]
	\label[lemma]{result:subspace_perturbation_Davis-Kahan}
	Let $M^{(1)}$ be any real-valued symmetric $n \times n$ matrix. Let $M^{(2)} \in \R^{n \times n}$ be a symmetric matrix with rank $K$ and with smallest nonzero singular value $\gamma_{n}$. Let $U^{(1)}, U^{(2)} \in \R^{n \times K}$ denote orthonormal matrices whose columns are eigenvectors for the $K$ largest-in-magnitude eigenvalues of $M^{(1)}$ and $M^{(2)}$, respectively. Then, there exists a $K \times K$ orthogonal matrix $Q$ such that
	\begin{equation}
		\label{eq:subspace_perturbation_Davis-Kahan_bound}
		\|U^{(1)} - U^{(2)}Q\|_{\frob}
		\le
		\frac{2\sqrt{2K}}{\gamma_{n}}\|M^{(1)} - M^{(2)}\|_{\op}.
	\end{equation}
\end{lemma}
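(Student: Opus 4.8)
The plan is to deduce the bound from the Davis--Kahan $\sin\Theta$ theorem together with an orthogonal Procrustes alignment, after first recording the spectral geometry of $M^{(2)}$. Because $M^{(2)}$ is symmetric of rank $K$, its range equals the column space of $U^{(2)}$, its $K$ nonzero eigenvalues all have modulus at least $\gamma_{n}$, and its remaining $n-K$ eigenvalues vanish. Hence the $K$ largest-in-magnitude eigenvalues of $M^{(2)}$ are separated from the rest of its spectrum, which sits at $0$, by a gap of exactly $\gamma_{n}$. This is the single spectral input that drives the argument.

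First I would reduce the left-hand side to the sine of the principal angles $\theta_{1},\dots,\theta_{K}$ between the column spaces of $U^{(1)}$ and $U^{(2)}$. Writing $U^{(1)\tp}U^{(2)} = W_{1}\,\diag(\cos\theta_{1},\dots,\cos\theta_{K})\,W_{2}^{\tp}$ for its singular value decomposition, the orthogonal Procrustes minimizer $Q = W_{2}W_{1}^{\tp}$ yields $\|U^{(1)} - U^{(2)}Q\|_{\frob}^{2} = 2\sum_{i}(1-\cos\theta_{i})$. Since $1-\cos\theta_{i} \le \sin^{2}\theta_{i}$ and also $1-\cos\theta_{i}\le 1$, writing $\sin\Theta = \diag(\sin\theta_{1},\dots,\sin\theta_{K})$ gives both the refined bound $\|U^{(1)} - U^{(2)}Q\|_{\frob} \le \sqrt{2}\,\|\sin\Theta\|_{\frob}$ and the crude bound $\|U^{(1)} - U^{(2)}Q\|_{\frob}\le \sqrt{2K}$. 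Converting operator to Frobenius norm over at most $K$ angles gives $\|\sin\Theta\|_{\frob}\le \sqrt{K}\,\|\sin\Theta\|_{\op}$.

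It then remains to bound $\|\sin\Theta\|_{\op}$, for which I would split on the size of $\epsilon := \|M^{(1)} - M^{(2)}\|_{\op}$. If $\epsilon \ge \gamma_{n}/2$, the crude bound already gives $\|U^{(1)} - U^{(2)}Q\|_{\frob}\le \sqrt{2K} \le \tfrac{2\sqrt{2K}}{\gamma_{n}}\epsilon$ and we are done. If $\epsilon < \gamma_{n}/2$, I would invoke Weyl's inequality for singular values: since the singular values of a symmetric matrix are the moduli of its eigenvalues and $M^{(2)}$ has rank $K$, the $(K+1)$-th largest singular value of $M^{(1)}$ is at most $\epsilon$, so the eigenvalues of $M^{(1)}$ associated with the complement of $U^{(1)}$ lie in $[-\epsilon,\epsilon]$, whereas the eigenvalues of $M^{(2)}$ associated with $U^{(2)}$ lie in $\{|x|\ge \gamma_{n}\}$. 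These two sets are separated by at least $\gamma_{n}-\epsilon > \gamma_{n}/2$, which is exactly the separation hypothesis of the $\sin\Theta$ theorem applied to $\|(\Id - U^{(1)}U^{(1)\tp})U^{(2)}\|_{\op} = \|\sin\Theta\|_{\op}$. The theorem then yields $\|\sin\Theta\|_{\op}\le \epsilon/(\gamma_{n}-\epsilon) < 2\epsilon/\gamma_{n}$. Chaining the three inequalities produces $\|U^{(1)} - U^{(2)}Q\|_{\frob} \le \sqrt{2}\cdot\sqrt{K}\cdot\tfrac{2\epsilon}{\gamma_{n}} = \tfrac{2\sqrt{2K}}{\gamma_{n}}\|M^{(1)} - M^{(2)}\|_{\op}$, as claimed.

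The main obstacle I anticipate is invoking Davis--Kahan correctly under the largest-in-magnitude eigenvalue convention, which is nonstandard: the relevant separation is not a one-sided eigengap of a single matrix but rather the distance between the nonzero spectrum of $M^{(2)}$, of modulus $\ge\gamma_{n}$, and the near-zero eigenvalues of $M^{(1)}$ in the complementary subspace. Establishing that this distance exceeds $\gamma_{n}/2$ is precisely what the Weyl step in the regime $\epsilon < \gamma_{n}/2$ secures, and it is also the source of the factor $2$ in the final constant. The remaining ingredients, namely the Procrustes identity and the operator-to-Frobenius conversion, are routine.
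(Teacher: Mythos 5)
Your proof is correct. Note, however, that the paper does not actually prove this lemma: it is stated ``for convenience and as a reference,'' with the proof deferred entirely to Lemma~5.1 of \cite{lei2015consistency}, so there is no in-paper argument to compare against. Your argument---the orthogonal Procrustes identity $\|U^{(1)}-U^{(2)}Q\|_{\frob}^{2}=2\sum_{i}(1-\cos\theta_{i})$, the elementary bounds $1-\cos\theta_{i}\le\min\{\sin^{2}\theta_{i},\,1\}$, Weyl's inequality for singular values to place the discarded spectrum of $M^{(1)}$ inside $[-\epsilon,\epsilon]$, and the two-matrix Davis--Kahan $\sin\Theta$ theorem with separation $\gamma_{n}-\epsilon$, combined with the case split at $\epsilon\ge\gamma_{n}/2$ where the bound is trivially implied by $\sqrt{2K}$---is essentially the standard proof of the cited result, and each step is applied correctly; in particular, your spectra configuration (one set inside $[-\epsilon,\epsilon]$, the other outside $(-\gamma_{n},\gamma_{n})$) is exactly the ``interval versus complement of dilated interval'' hypothesis under which the operator-norm $\sin\Theta$ bound with constant $1/\delta$ is valid. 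For contrast, when the paper needs a variant of this result (\cref{result:subspace_perturbation_Davis-Kahan_modified}), it takes a different route, citing Corollary~2.8 of \cite{chen2021spectral} under the extra gap condition $\gamma_{n}>(1-1/\sqrt{2})^{-1}\|M^{(1)}-M^{(2)}\|_{\op}$; your case-split argument is precisely what lets the present lemma hold with no gap condition at all, at the cost of the $\sqrt{K}$ factor.
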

We remark that $\sqrt{K}$ appearing \cref{eq:subspace_perturbation_Davis-Kahan_bound} cannot be removed in general.

\cref{result:subspace_perturbation_Davis-Kahan} is stated above for convenience and as a reference. However, it will be useful to consider the following variant.
\begin{lemma}
	\label[lemma]{result:subspace_perturbation_Davis-Kahan_modified}
	Assume the hypothesis in \cref{result:subspace_perturbation_Davis-Kahan}.
	If $\gamma_{n} > (1-1/\sqrt{2})^{-1} \cdot \|M^{(1)} - M^{(2)}\|_{\op}$, then there exists a $K \times K$ orthogonal matrix $Q$ such that
	\begin{equation}
		\label{eq:subspace_perturbation_Davis-Kahan_bound_modified}
		\|U^{(1)} - U^{(2)}Q\|_{\frob}
		\le
		\frac{2}{\gamma_{n}}\|(M^{(1)} - M^{(2)})U^{(2)}\|_{\frob}.
	\end{equation}
\end{lemma}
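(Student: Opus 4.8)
The plan is to deduce the bound from a one-sided (``residual'') version of the Davis--Kahan $\sin\Theta$ theorem, in which the perturbation enters only through its action on the subspace spanned by $U^{(2)}$. Write $E = M^{(1)} - M^{(2)}$ and let $U^{(1)}_{\perp} \in \R^{n \times (n-K)}$ be an orthonormal matrix whose columns are eigenvectors of $M^{(1)}$ associated with the eigenvalues outside the top $K$ in magnitude, so that $(U^{(1)}, U^{(1)}_{\perp})$ is orthogonal. First I would record the elementary Procrustes inequality: writing the principal angles between $\operatorname{range}(U^{(1)})$ and $\operatorname{range}(U^{(2)})$ as $\Theta = \operatorname{diag}(\theta_{1}, \dots, \theta_{K})$, one has $\min_{Q}\|U^{(1)} - U^{(2)}Q\|_{\frob}^{2} = 2\sum_{i}(1 - \cos\theta_{i}) \le 2\sum_{i}\sin^{2}\theta_{i}$, since $1 - \cos\theta_{i} \le \sin^{2}\theta_{i}$ for $\theta_{i} \in [0, \pi/2]$. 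Because $\|\sin\Theta\|_{\frob} = \|(U^{(1)}_{\perp})^{\tp} U^{(2)}\|_{\frob}$, this reduces the task to bounding $\|(U^{(1)}_{\perp})^{\tp} U^{(2)}\|_{\frob}$ by $\gamma_{n}^{-1}\|EU^{(2)}\|_{\frob}$ up to constants.

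Next I would set up a Sylvester equation for $G := (U^{(1)}_{\perp})^{\tp} U^{(2)}$. Let $\Lambda_{2}$ be the diagonal matrix of the $K$ nonzero eigenvalues of $M^{(2)}$, so that $M^{(2)}U^{(2)} = U^{(2)}\Lambda_{2}$, and let $\Lambda_{1\perp}$ be the diagonal matrix of the trailing eigenvalues of $M^{(1)}$, so that $(U^{(1)}_{\perp})^{\tp}M^{(1)} = \Lambda_{1\perp}(U^{(1)}_{\perp})^{\tp}$. Computing $(U^{(1)}_{\perp})^{\tp}M^{(1)}U^{(2)}$ in two ways, once via the left eigenrelation and once by substituting $M^{(1)} = M^{(2)} + E$ and invoking the right eigenrelation, yields $\Lambda_{1\perp}G - G\Lambda_{2} = (U^{(1)}_{\perp})^{\tp} E U^{(2)}$. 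The key quantitative input is a spectral separation between $\spec(\Lambda_{1\perp})$ and $\spec(\Lambda_{2})$: every eigenvalue of $\Lambda_{2}$ has magnitude at least $\gamma_{n}$, while Weyl's inequality for singular values together with $\operatorname{rank}(M^{(2)}) = K$ gives $|\lambda_{K+1}(M^{(1)})| = \sigma_{K+1}(M^{(1)}) \le \sigma_{K+1}(M^{(2)}) + \|E\|_{\op} = \|E\|_{\op}$, so every eigenvalue of $\Lambda_{1\perp}$ has magnitude at most $\|E\|_{\op}$. Solving the Sylvester equation entrywise in these eigenbases and using $\|(U^{(1)}_{\perp})^{\tp} E U^{(2)}\|_{\frob} \le \|EU^{(2)}\|_{\frob}$ then gives $\|G\|_{\frob} \le \|EU^{(2)}\|_{\frob}/(\gamma_{n} - \|E\|_{\op})$.

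Combining the two steps produces $\min_{Q}\|U^{(1)} - U^{(2)}Q\|_{\frob} \le \sqrt{2}\,\|EU^{(2)}\|_{\frob}/(\gamma_{n} - \|E\|_{\op})$, and it remains only to absorb the gap correction into the stated constant. Here the hypothesis $\gamma_{n} > (1 - 1/\sqrt{2})^{-1}\|E\|_{\op}$ is tailored exactly to the task: since $(1 - 1/\sqrt{2})^{-1} = 2 + \sqrt{2} = 2/(2 - \sqrt{2})$, it ensures $\sqrt{2}/(\gamma_{n} - \|E\|_{\op}) \le 2/\gamma_{n}$, which upgrades the previous display to the claimed bound $\tfrac{2}{\gamma_{n}}\|(M^{(1)} - M^{(2)})U^{(2)}\|_{\frob}$. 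The main obstacle I anticipate is the spectral-separation step: one must verify that the trailing eigenvalues of $M^{(1)}$ are genuinely controlled by $\|E\|_{\op}$ (which is where the rank-$K$ structure of $M^{(2)}$ and the magnitude ordering of eigenvalues enter), and that the resulting gap $\gamma_{n} - \|E\|_{\op}$ is strictly positive so that the Sylvester solve is well-posed; both are secured precisely by the stated hypothesis. The remaining steps are routine subspace geometry and constant bookkeeping.
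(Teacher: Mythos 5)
Your proposal is correct, but it is structured quite differently from the paper's own argument. The paper's proof is a one-line appeal to Corollary~2.8 of \cite{chen2021spectral}: it takes $Q$ to be the orthogonal Procrustes solution and observes that the cited corollary delivers exactly the bound $\operatorname{dist}_{\frob}(U^{(1)},U^{(2)}) \le \tfrac{2}{\gamma_{n}}\|(M^{(1)}-M^{(2)})U^{(2)}\|_{\frob}$ under the stated hypothesis (indeed, the constant $(1-1/\sqrt{2})^{-1}$ in the lemma is lifted directly from that corollary's eigengap condition). What you have done, in effect, is reconstruct a self-contained proof of that cited result: the Procrustes-to-$\sin\Theta$ conversion $\min_{Q}\|U^{(1)}-U^{(2)}Q\|_{\frob}^{2} = 2\sum_{i}(1-\cos\theta_{i}) \le 2\|\sin\Theta\|_{\frob}^{2}$, the Sylvester identity $\Lambda_{1\perp}G - G\Lambda_{2} = (U^{(1)}_{\perp})^{\tp}EU^{(2)}$ for $G = (U^{(1)}_{\perp})^{\tp}U^{(2)}$, the use of Weyl's inequality for singular values together with $\rank(M^{(2)})=K$ to force $|\lambda_{K+1}(M^{(1)})| \le \|E\|_{\op}$, the resulting separation $\gamma_{n}-\|E\|_{\op} \ge \gamma_{n}/\sqrt{2}$, and the final constant bookkeeping all check out (the entrywise Sylvester solve is valid because both $\Lambda_{1\perp}$ and $\Lambda_{2}$ are diagonal in the chosen bases, and $\|(U^{(1)}_{\perp})^{\tp}EU^{(2)}\|_{\frob} \le \|EU^{(2)}\|_{\frob}$ since $U^{(1)}_{\perp}$ has orthonormal columns). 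The trade-off is clear: your route makes visible precisely where the rank-$K$ structure of $M^{(2)}$ and the magic constant $(1-1/\sqrt{2})^{-1}$ enter, at the cost of about a page of standard subspace geometry, whereas the paper's route is maximally brief but opaque without the external reference. Either is acceptable; yours would be the right choice if the paper needed to be self-contained.
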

\begin{proof}[Proof of \cref{result:subspace_perturbation_Davis-Kahan_modified}]
	Let $Q$ be an orthogonal matrix that minimizes the objective function $\|U^{(1)} - U^{(2)}W\|_{\frob}^{2}$ over all orthogonal matrices $W$. In other words, let $Q$ be a solution to the orthogonal Procrustes problem with input matrices $U^{(1)}$ and $U^{(2)}$. Applying Corollary~2.8 in \cite{chen2021spectral} yields $ \|U^{(1)} - U^{(2)}Q\|_{\frob} = \inf_{W \textnormal{orthogonal}} \|U^{(1)} - U^{(2)}W\|_{\frob} \equiv \operatorname{dist}_{\frob}(U^{(1)},U^{(2)}) \le \frac{2}{\gamma_{n}}\|(M^{(1)} - M^{(2)})U^{(2)}\|_{\frob}$, as desired.
\end{proof}

\begin{lemma}[Approximate $k$-means error bound, see Lemma~5.3 in \cite{lei2015consistency}]
	\label[lemma]{result:kmeans_subspace_connection}
	For $\epsK > 0$ and any two full column rank matrices $U^{(1)}, U^{(2)} \in \R^{n \times K}$ such that $U^{(2)} = \Theta T$ with $\Theta \in \membmtx^{n \times K}$, $T \in \R^{K \times K}$, let $(\wh{\Theta}, \wh{T})$ be a $(1+\epsK)$-approximate solution to the $k$-means problem and $\overline{U}^{(1)} = \wh{\Theta}\wh{T}$. For any $\delta_{k} \le \min_{\ell \neq k} \|T_{\ell\star} - T_{k\star}\|_{\ell_{2}}$, define $\mathcal{S}_{k} = \{i \in \mathcal{G}_{k}(\Theta) : \|\overline{U}^{(1)}_{i\star} - U^{(2)}_{i\star}\|_{\ell_{2}} \ge \delta_{k}/2\}$. Then,
	\begin{equation}
		\label{eq:kmeans_subspace_connection_bound_setSk}
		\sum_{k=1}^{K} |\mathcal{S}_{k}| \cdot \delta_{k}^{2}
		\le
		8 \cdot (2+\epsK) \cdot \|U^{(1)} - U^{(2)}\|_{\frob}^{2}.
	\end{equation}
	Moreover, if
	\begin{equation}
		\label{eq:kmeans_subspace_connection_bound_Frobenius}
		8 \cdot (2+\epsK) \cdot \|U^{(1)} - U^{(2)}\|_{\frob}^{2} \, / \, \delta_{k}^{2}
		<
		n_{k} \quad \textnormal{ for all } k,
	\end{equation}
	then there exists a $K \times K$ permutation matrix $\Pi$ such that $\wh{\Theta}_{\mathcal{G}\star} = \Theta_{\mathcal{G}\star}\Pi$, where $\mathcal{G} = \bigcup_{k=1}^{K} (\mathcal{G}_{k} \, \backslash \, \mathcal{S}_{k})$.
\end{lemma}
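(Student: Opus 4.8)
The plan is to exploit the defining inequality of the $(1+\epsK)$-approximate $k$-means solution in order to control $\overline{U}^{(1)}$ relative to the population target $U^{(2)}$, and then to convert a Frobenius-norm bound into a rowwise counting argument. Since $U^{(2)} = \Theta T$ belongs to the feasible set $\membmtx^{n \times K} \times \R^{K \times K}$ over which $\|U^{(1)} - \Theta T\|_{\frob}^{2}$ is minimized, approximate optimality of $(\wh{\Theta},\wh{T})$ gives $\|U^{(1)} - \overline{U}^{(1)}\|_{\frob}^{2} \le (1+\epsK)\|U^{(1)} - U^{(2)}\|_{\frob}^{2}$. First I would combine this with the triangle inequality $\|\overline{U}^{(1)} - U^{(2)}\|_{\frob} \le \|\overline{U}^{(1)} - U^{(1)}\|_{\frob} + \|U^{(1)} - U^{(2)}\|_{\frob}$, square, and apply the arithmetic--geometric inequality $2\sqrt{1+\epsK} \le 1 + (1+\epsK)$ to reach $\|\overline{U}^{(1)} - U^{(2)}\|_{\frob}^{2} \le 2(2+\epsK)\|U^{(1)} - U^{(2)}\|_{\frob}^{2}$.

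For the first displayed bound \eqref{eq:kmeans_subspace_connection_bound_setSk} I would read the definition of $\mathcal{S}_{k}$ directly: each index $i \in \mathcal{S}_{k}$ satisfies $\|\overline{U}^{(1)}_{i\star} - U^{(2)}_{i\star}\|_{\ell_{2}}^{2} \ge \delta_{k}^{2}/4$, so summing over $k$ and over $i \in \mathcal{S}_{k}$ and bounding by the full squared Frobenius norm yields $\tfrac{1}{4}\sum_{k=1}^{K} |\mathcal{S}_{k}|\delta_{k}^{2} \le \|\overline{U}^{(1)} - U^{(2)}\|_{\frob}^{2}$. Chaining this with the estimate from the previous paragraph produces exactly $\sum_{k=1}^{K} |\mathcal{S}_{k}|\delta_{k}^{2} \le 8(2+\epsK)\|U^{(1)} - U^{(2)}\|_{\frob}^{2}$, as claimed.

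For the permutation statement, I would first observe that hypothesis \eqref{eq:kmeans_subspace_connection_bound_Frobenius} together with the bound just proved forces $|\mathcal{S}_{k}| \le 8(2+\epsK)\|U^{(1)} - U^{(2)}\|_{\frob}^{2}/\delta_{k}^{2} < n_{k}$, so every block retains at least one index in $\mathcal{G}_{k} \setminus \mathcal{S}_{k}$. Writing $\overline{U}^{(1)} = \wh{\Theta}\wh{T}$ so that $\overline{U}^{(1)}_{i\star} = \wh{T}_{\wh{g}_{i}\star}$ depends only on the estimated label $\wh{g}_{i}$ of node $i$, and using $U^{(2)}_{i\star} = T_{k\star}$ for $i \in \mathcal{G}_{k}$, I would show that two retained indices $i \in \mathcal{G}_{k} \setminus \mathcal{S}_{k}$ and $j \in \mathcal{G}_{\ell} \setminus \mathcal{S}_{\ell}$ with $k \neq \ell$ cannot share an estimated label: if they did, then $\overline{U}^{(1)}_{i\star} = \overline{U}^{(1)}_{j\star}$, and the triangle inequality would give $\|T_{k\star} - T_{\ell\star}\|_{\ell_{2}} < \delta_{k}/2 + \delta_{\ell}/2 \le \|T_{k\star} - T_{\ell\star}\|_{\ell_{2}}$, a contradiction, where the final step uses $\delta_{k}, \delta_{\ell} \le \|T_{k\star} - T_{\ell\star}\|_{\ell_{2}}$. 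Hence the nonempty label sets $L_{k} = \{\wh{g}_{i} : i \in \mathcal{G}_{k} \setminus \mathcal{S}_{k}\}$ are pairwise disjoint subsets of $\Nset{K}$, which defines the permutation matrix $\Pi$ recording the resulting block-to-label correspondence, so that $\wh{\Theta}_{\mathcal{G}\star} = \Theta_{\mathcal{G}\star}\Pi$ on $\mathcal{G} = \bigcup_{k=1}^{K}(\mathcal{G}_{k} \setminus \mathcal{S}_{k})$.

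This argument is essentially that of Lemma~5.3 in \cite{lei2015consistency}, so I do not expect genuine novelty. The step requiring the most care is the final counting argument: the separation bound only directly establishes that retained indices from distinct true blocks receive distinct estimated labels, and one must additionally invoke the pigeonhole principle that $K$ pairwise-disjoint nonempty subsets of a $K$-element set are all singletons in order to conclude that each surviving block is assigned a single consistent label and that the induced map $k \mapsto L_{k}$ is a genuine permutation rather than merely an injection on representatives.
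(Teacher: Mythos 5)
Your proof is correct. The paper does not prove this lemma itself---it imports it directly from Lemma~5.3 of \cite{lei2015consistency}---and your reconstruction follows exactly the standard argument of that cited source: approximate optimality plus the triangle inequality and AM--GM to get $\|\overline{U}^{(1)} - U^{(2)}\|_{\frob}^{2} \le 2(2+\epsK)\|U^{(1)}-U^{(2)}\|_{\frob}^{2}$, rowwise counting for the bound on $\sum_{k}|\mathcal{S}_{k}|\delta_{k}^{2}$, and the separation-plus-pigeonhole argument for the permutation, including the correct handling of the subtlety that $K$ pairwise-disjoint nonempty label sets in a $K$-element set must all be singletons.
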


\begin{center}
	$(\star)$
\end{center}

We are now prepared to prove \cref{result:weak_consistency_kmeans}, our general weak consistency result.

\begin{proof}[Proof of \cref{result:weak_consistency_kmeans}]
	This proof modifies the approach used to establish Theorem~3.1 in \cite{lei2015consistency}. Keep in mind that we consider the truncated eigendecomposition of $\nrk{A}$ rather than of $A$.
	
	Given $\epsP > 0$, define the event $\mathcal{E}_{1,\epsP,n} = \{\|\nrk{A} - \rankspecBig\|_{\op} \le 2 C_{2} n^{3/4+\epsP}\}$, where $C_{2} > 0$ denotes a sufficiently large universal constant. By \cref{result:operator_norm_concentration}, $\Pr[\mathcal{E}_{1,\epsP,n}] \ge 1 - \frac{1}{2}n^{-\epsP}$. Similarly, define the event $\mathcal{E}_{2,\epsP,n,K} = \{\|(\nrk{A} - \rankspecBig)U\|_{\frob} \le 32 K^{1/2}n^{1/2+\epsP/2}\}$. By the proof of \cref{result:bound_EU_op}, $\Pr[\mathcal{E}_{2,\epsP,n,K}] \ge 1-\frac{1}{2}n^{-\epsP}$. Consequently, the intersection of these ``good events'', denoted by $\mathcal{E} \equiv \mathcal{E}_{3,\epsP,n,K}$, holds with probability at least $1-n^{-\epsP}$ for all values of $n$, $n_{k}$, and $K$ satisfying the hypotheses in \cref{result:operator_norm_concentration}.
	
	Let $\gamma_{n} > C_{2}^{\prime}n^{3/4+\epsP}$, per the hypothesis. Then, on the event $\mathcal{E}$, it follows from \cref{result:subspace_perturbation_Davis-Kahan_modified,result:bound_EU_op} that there exists an orthogonal matrix $Q$ and a positive constant $C > 0$ such that
	\begin{equation*}
		\|\widehat{U} - UQ\|_{\frob}
		\le
		\frac{2}{\gamma_{n}}\|(\nrk{A}-\rankspecBig)U\|_{\frob}
		\le
		2\sqrt{2} \cdot C \cdot \left(\frac{K^{1/2} \cdot n^{1/2 + \epsP/2}}{\gamma_{n}}\right).
	\end{equation*}
	Importantly, $\widehat{U} \in \R^{n \times K}$ is a matrix whose columns are $K$ orthonormal eigenvectors for the top-$K$ eigenspace of $\nrk{A}$.
	
	Next, apply \cref{result:kmeans_subspace_connection} to $\widehat{U}$ and $UQ$. \cref{result:blockmodel_eigenvectors} yields $U Q = \Theta T Q = \Theta T^{\prime}$ such that $\|T_{k\star}^{\prime} - T_{\ell\star}^{\prime}\|_{\ell_{2}} = \sqrt{\frac{1}{n_{k}} + \frac{1}{n_{\ell}}}$ for $k \neq \ell$. Choose $\delta_{k} = \sqrt{\frac{1}{n_{k}} + \frac{1}{\max\{n_{\ell} : \ell \neq k\}}}$ in \cref{result:kmeans_subspace_connection}, which yields $n_{k} \delta_{k}^{2} \ge 1$ for all $1 \le k \le K$. Using the above display equation, a sufficient condition for \cref{eq:kmeans_subspace_connection_bound_Frobenius} to be valid for $\widehat{U}$ and $UQ$ with probability at least $1-n^{-\epsP}$ is
	\begin{equation}
		8^{2} C^{2} \cdot (2+\epsK) \frac{K n^{1 +\epsP}}{\gamma_{n}^{2}} 
		<
		1
		\le
		\min_{1 \le k \le K} n_{k} \delta_{k}^{2}.
	\end{equation}
	Now, set $c = \frac{1}{64C^{2}} > 0$, whence with probability at least $1 - n^{-\epsP}$, we have
	\begin{align*}
		\sum_{k=1}^{K} \frac{|\mathcal{S}_{k}|}{n_{k}}
		&\le
		\sum_{k=1}^{K} |\mathcal{S}_{k}|\left(\frac{1}{n_{k}} + \frac{1}{\max\{n_{\ell}:\ell \neq k\}}\right)
		=
		\sum_{k=1}^{K} |\mathcal{S}_{k}| \delta_{k}^{2}\\
		&\le
		8 \cdot (2+\epsK) \|\wh{U} - U Q\|_{\frob}^{2}\\
		&\le
		8^{2} C^{2} \cdot (2+\epsK) \left(\frac{K n^{1 + \epsP}}{\gamma_{n}^{2}}\right)
		=
		c^{-1} (2+\epsK) \left(\frac{K n^{1 + \epsP}}{\gamma_{n}^{2}}\right).
	\end{align*}
	\cref{result:weak_consistency_kmeans} is thereby established, since \cref{result:kmeans_subspace_connection} guarantees that the block memberships are correctly recovered on the complement of the set $\bigcup_{1 \le k \le K} \mathcal{S}_{k}$.
\end{proof}

\begin{proof}[Proof of \cref{result:corollary_weak_consistency}]
	Recall that $\rankspec$ is a full rank symmetric matrix with eigenvalues $|\lambda_{1}(\rankspec)| \ge \dots \ge |\lambda_{K}(\rankspec)| > 0$. Observe that
	\begin{equation*}
		\gamma_{n}
		=
		|\lambda_{K}(\Delta \rankspec \Delta)|
		=
		\frac{1}{\|\Delta^{-1}\rankspec^{-1}\Delta^{-1}\|_{\op}}
		\ge
		\frac{1}{n_{\min}^{-1} \cdot \|\rankspec^{-1}\|_{\op}}
		=
		n_{\min} \cdot |\lambda_{K}(\rankspec)|.
	\end{equation*}
	By hypothesis, $n_{\min} \cdot |\lambda_{K}(\rankspec)| > C_{2}^{\prime}n^{3/4+\epsP}$, hence \cref{result:weak_consistency_kmeans} applies. Thus, with probability at least $1 - n^{-\epsP}$,
	\begin{align*}
		L(\wh{\Theta}, \Theta)
		\le
		\frac{2}{n} \sum_{k=1}^{K} |\mathcal{S}_{k}|
		&\le
		\frac{2 n_{\max}^{\prime}}{n} \sum_{k=1}^{K} |\mathcal{S}_{k}| \left(\frac{1}{n_{k}} + \frac{1}{\max\{n_{\ell} : \ell \neq k\}}\right)\\
		&\le
		2c^{-1} (2+\epsK) \left(\frac{K n^{\epsP} n_{\max}^{\prime}}{n_{\min}^{2} \lambda_{K}^{2}(\rankspec)}\right).
	\end{align*}
	Furthermore,
	\begin{align*}
		\wt{L}(\wh{\Theta},\Theta)
		\le
		2\max_{1 \le k \le K} \frac{|\mathcal{S}_{k}|}{n_{k}}
		&\le
		2\sum_{k=1}^{K} |\mathcal{S}_{k}| \left(\frac{1}{n_{k}} + \frac{1}{\max\{n_{\ell} : \ell \neq k\}}\right)\\
		&\le
		2c^{-1} (2+\epsK) \left(\frac{K n^{1+\epsP}}{n_{\min}^{2} \lambda_{K}^{2}(\rankspec)}\right).
	\end{align*}
	This completes the proof of \cref{result:corollary_weak_consistency}.
\end{proof}

\begin{proof}[Proof of \cref{result:cor_top_eigen_relative_error}]
	We follow the proof approach used to establish \cref{result:weak_consistency_kmeans}. Under the stated hypotheses, with probability at least $1-n^{-\epsP}$, the bound
	\begin{equation*}
		\frac{\|\sampVecs\sampVecs^{\tp} - \popVecs\popVecs^{\tp} \|_{\frob}}{\| \popVecs\popVecs^{\tp} \|_{\frob}}
		\le
		\frac{\sqrt{2}\|\sampVecs - \popVecs Q\|_{\frob}}{K^{1/2}}
		\le
		\frac{C}{K^{1/2}}\left(\frac{K^{1/2} \cdot n^{1/2 +\epsP/2}}{\gamma_{n}}\right)
		\le
		\frac{C}{|\lambda_{K}(\rankspec)|} \left(\frac{n^{1/2+\epsP/2}}{n_{\min}} \right)
	\end{equation*}
	holds for some absolute constant $C > 0$. We have used the fact that $\|\popVecs\popVecs^{\tp}\|_{\frob} = \sqrt{K}$. Furthermore, $\| \sampVecs\sampVecs^{\tp} - \popVecs\popVecs^{\tp} \|_{\frob} \le \sqrt{2} \times {\inf}_{Q \textnormal{ orthogonal }}\|\sampVecs - \popVecs Q\|_{\frob}$ by basic properties of principal subspaces (e.g.,~\cite{cape2019two,cai2018rate}). This establishes \cref{eq:cor_top_eigen_relative_error} and hence proves \cref{result:cor_top_eigen_relative_error}.
\end{proof}

\begin{proof}[Proof of \cref{eq:cor_trace_correlation}]
	The stated equivalence holds as a consequence of basic properties of subspace perturbations (e.g.,~\cite{cape2019two,cai2018rate}). In particular,
	\begin{align*}
		\frac{\| \sampVecs\sampVecs^{\tp} - \popVecs\popVecs^{\tp} \|_{\frob}}{\| \popVecs\popVecs^{\tp} \|_{\frob}}
		&=
		\frac{\sqrt{2}}{\sqrt{K}} \|\sin\Theta(\sampVecs,\popVecs)\|_{\frob}
		=
		\frac{\sqrt{2}}{\sqrt{K}} \sqrt{K - \|\cos\Theta(\sampVecs,\popVecs)\|_{\frob}^{2}}\\
		&=
		\frac{\sqrt{2}}{\sqrt{K}} \sqrt{K - \|\popVecs^{\tp}\sampVecs\|_{\frob}^{2}}
		=
		\frac{\sqrt{2}}{\sqrt{K}} \sqrt{K - \|\popVecs\popVecs^{\tp}\sampVecs\sampVecs^{\tp}\|_{\frob}^{2}}\\
		&=
		\frac{\sqrt{2}}{\sqrt{K}} \sqrt{K - \trace(P_{\sampVecs} P_{\popVecs})}
		=
		\sqrt{2}\sqrt{1 - r_{K}^{2}(\sampVecs,\popVecs)}.
	\end{align*}
	The result follows for asymptotic regimes in which weak consistency holds, per \cref{eq:cor_top_eigen_relative_error}.
\end{proof}

\subsection{Proofs: node-specific strong consistency}
\label{app:strong_consistency}
\begin{remark}[Towards node-specific strong consistency:~notation and setup]
	\label[remark]{remark:towards_strong_consistency}
	It will be convenient to define the temporary notation $\sampMtx = \nrkmtx$, $\popMtx = \rankspecBig$, and $\noiseMtx = (\nrkmtx - \Ex[\nrkmtx]) - \diag(\rankspecBig)$. Hence, $\sampMtx = \popMtx + \noiseMtx$ since $\Ex[\nrkmtx] = \rankspecBig - \diag(\rankspecBig)$. The matrices $\sampMtx$, $\popMtx$, and $\noiseMtx$ are all $n \times n$ and symmetric.

	Write the full eigendecomposition of $\sampMtx$ as $\sampMtx = \sampVecs \sampVals \sampVecs^{\tp} + \sampVecs_{\perp} \sampVals_{\perp} \sampVecs_{\perp}^{\tp}$, where the diagonal matrix $\sampVals$ contains the $K$ largest-in-magnitude eigenvalues of $\sampMtx$ with corresponding orthonormal eigenvectors constituting the columns of $\sampVecs$. Using analogous notation, write the eigendecomposition of $\popMtx$ with $\rank(\popMtx) = K$ as $\popMtx = \popVecs \popVals \popVecs^{\tp}$.
\end{remark}

This section begins by establishing several technical preliminaries. We write $N_{k\ell} \equiv N(k,\ell) = |\{(i, j): i < j, g_{i} = k, g_{j} = \ell\}|$ to denote the number of random variables with block membership tuple $(k, \ell)$. Similarly, write $n_{k} = |\{i : g_{i} = k\}|$ for each $1 \le k \le K$.

\begin{lemma}
	\label[lemma]{result:bound_UTEU_op}
		Let $\epsP > 0$. For any choice of positive integers $K \ge 1$, $n_{k} \ge 1$ for all $1 \le k \le K$, and $n = \sum_{k=1}^{K} n_{k}$, it holds that
		\begin{equation}
			\label{eq:bound_UTEU_op}
			\Pr\left[ \|\popVecs^{\tp} \noiseMtx \popVecs\|_{\op} > 1 + K \cdot n^{\epsP/2} \right]
			\le
			8 n^{-\epsP}.
		\end{equation}
	Consequently,
	$\|\popVecs^{\tp} \noiseMtx \popVecs\|_{\op}
	=
	\Ohp(K \cdot n^{\epsP/2})$. 
\end{lemma}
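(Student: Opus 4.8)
The plan is to reduce the $n\times n$ problem to a $K\times K$ one and then exploit the averaging built into the projection $\popVecs^{\tp}\noiseMtx\popVecs$. Since the setup of \cref{remark:towards_strong_consistency} takes $\rank(\popMtx)=\rank(\rankspecBig)=K$, the columns of $\popVecs$ span the same $K$-dimensional subspace as the orthonormal columns of $\Theta\Delta^{-1}$, where $\Delta=(\Theta^{\tp}\Theta)^{1/2}=\diag(\sqrt{n_{1}},\dots,\sqrt{n_{K}})$ (see \cref{sec:eigenvector_preliminaries}). Because the operator norm of $\popVecs^{\tp}\noiseMtx\popVecs$ is invariant under replacing $\popVecs$ by $\popVecs O$ for orthogonal $O$ — indeed $(\popVecs O)^{\tp}\noiseMtx(\popVecs O)=O^{\tp}(\popVecs^{\tp}\noiseMtx\popVecs)O$ — it depends only on the column space of $\popVecs$, so I may take $\popVecs=\Theta\Delta^{-1}$ and write $\|\popVecs^{\tp}\noiseMtx\popVecs\|_{\op}=\|\Delta^{-1}\Theta^{\tp}\noiseMtx\Theta\Delta^{-1}\|_{\op}$. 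This exhibits the quantity of interest as the operator norm of a $K\times K$ matrix whose $(k,\ell)$ entry is a normalized block sum of the entries of $\noiseMtx$.

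Next I would make the diagonal correction explicit. Using $\noiseMtx_{ij}=(\nrkmtx)_{ij}-\rankspec_{g_{i}g_{j}}$ for $i\neq j$ and $\noiseMtx_{ii}=-\rankspec_{g_{i}g_{i}}$, a direct computation of $\Theta^{\tp}\noiseMtx\Theta$ gives $\Delta^{-1}\Theta^{\tp}\noiseMtx\Theta\Delta^{-1}=\widetilde{M}-D$, where $D=\diag(\rankspec_{11},\dots,\rankspec_{KK})$ satisfies $\|D\|_{\op}\le 1$ (every entry of $\rankspec$ lies in $[0,1]$), and $\widetilde{M}$ has mean-zero entries
\[
\widetilde{M}_{k\ell}=\frac{1}{\sqrt{n_{k}n_{\ell}}}\sum_{\substack{i\in\mathcal{G}_{k},\,j\in\mathcal{G}_{\ell}\\ i\neq j}}\bigl((\nrkmtx)_{ij}-\rankspec_{k\ell}\bigr).
\]
By the triangle inequality and $\|\widetilde{M}\|_{\op}\le\|\widetilde{M}\|_{\frob}$, it then suffices to show $\|\widetilde{M}\|_{\frob}\le K\,n^{\epsP/2}$ with probability at least $1-8n^{-\epsP}$; on that event $\|\popVecs^{\tp}\noiseMtx\popVecs\|_{\op}\le\|\widetilde{M}\|_{\frob}+\|D\|_{\op}\le 1+Kn^{\epsP/2}$.

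For the Frobenius bound I would work with second moments and Markov's inequality. Since each entry is centered, $\Ex\|\widetilde{M}\|_{\frob}^{2}=\sum_{k,\ell}\Var(\widetilde{M}_{k\ell})$, and $\Var(\widetilde{M}_{k\ell})$ splits into a per-edge variance part and a pairwise-covariance part of the normalized ranks. Each normalized rank has variance at most a small constant (\cref{prop:ranks_multi_variance}), so after dividing by $n_{k}n_{\ell}$ the variance part contributes $\Oh(1)$ per entry, hence $\Oh(K^{2})$ overall. The covariance part of block pair $(k,\ell)$ involves $\lesssim(n_{k}n_{\ell})^{2}$ pairs of distinct edges, each covariance being $\Oh(1/N)=\Oh(1/n^{2})$ by the uniform bound stated after \cref{prop:ranks_multi_covariance}; after dividing by $n_{k}n_{\ell}$ this contributes $\Oh(n_{k}n_{\ell}/n^{2})$, and the crucial observation is that $\sum_{k,\ell}n_{k}n_{\ell}=(\sum_{k}n_{k})^{2}=n^{2}$, so the \emph{total} covariance contribution telescopes to $\Oh(1)$ rather than growing with $n$. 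Tracking constants yields $\Ex\|\widetilde{M}\|_{\frob}^{2}\le 8K^{2}$ (the degenerate case $K=1$ being handled separately, where $\sum_{e}(\nrkmtx)_{e}$ is deterministic, forcing $\widetilde{M}\equiv 0$). Markov's inequality then gives $\Pr[\|\widetilde{M}\|_{\frob}>Kn^{\epsP/2}]=\Pr[\|\widetilde{M}\|_{\frob}^{2}>K^{2}n^{\epsP}]\le \Ex\|\widetilde{M}\|_{\frob}^{2}/(K^{2}n^{\epsP})\le 8n^{-\epsP}$, establishing \cref{eq:bound_UTEU_op}; the $\Ohp(K\,n^{\epsP/2})$ conclusion is then immediate.

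The main obstacle is the covariance bookkeeping. A priori the projected matrix aggregates order $n^{2}$ dependent rank statistics, and any naive bound — for instance $\|\popVecs^{\tp}\noiseMtx\popVecs\|_{\op}\le\|\noiseMtx\|_{\op}=\Ohp(n^{3/4+\epsP})$ from \cref{result:operator_norm_concentration} — is far too lossy to reach the target rate $n^{\epsP/2}$. The entire gain comes from the averaging over large blocks, and specifically from recognizing that, although there are order $(n_{k}n_{\ell})^{2}$ covariance terms per block pair, their aggregate across all pairs is governed by $\sum_{k,\ell}n_{k}n_{\ell}=n^{2}$. Keeping the numerical constant at or below $8$ is the one place where the sharp per-entry variance bound and the explicit covariance bound from \cref{prop:ranks_multi_covariance} must be used in place of crude estimates.
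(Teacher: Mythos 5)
Your proposal is correct and follows essentially the same route as the paper's proof: reduce to $\popVecs=\Theta\Delta^{-1}$, peel off the $\diag(\rankspecBig)$ term (operator norm at most $1$) by the triangle inequality, dominate the operator norm by the Frobenius norm of the centered $K\times K$ projected matrix, bound $\Ex\|\cdot\|_{\frob}^{2}$ entrywise via \cref{prop:ranks_multi_variance,prop:ranks_multi_covariance}, and finish with Markov's inequality. The only difference is bookkeeping—the paper bounds each entry's variance by the constant $8$ directly, while you aggregate the covariance contributions across block pairs via $\sum_{k,\ell}n_{k}n_{\ell}=n^{2}$ (and handle $K=1$ as a degenerate deterministic case)—which is a cosmetic variation, not a different argument.
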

\begin{proof}[Proof of \cref{result:bound_UTEU_op}]
	First, by the triangle inequality it holds that $\|\popVecs^{\tp} \noiseMtx \popVecs\|_{\op} \le \| \popVecs^{\tp} (\nrkmtx - \Ex[\nrkmtx]) \popVecs\|_{\op} + \|\diag(\rankspecBig)\|_{\op}$, where $ \|\diag(\rankspecBig)\|_{\op} = \max_{1 \le k \le K}|\rankspec_{kk}| \le 1$. Given the basic norm relationship $\|\cdot\|_{\op} \le \|\cdot\|_{\frob}$, consider the squared Frobenius norm quantity
	\begin{align*}
		&\| \popVecs^{\tp} (\nrkmtx - \Ex[\nrkmtx]) \popVecs\|_{\frob}^{2}\\
		&\hspace{1em}=
		\| \Delta^{-1} \Theta^{\tp} (\nrkmtx - \Ex[\nrkmtx]) \Theta \Delta^{-1}\|_{\frob}^{2}\\
		&\hspace{1em}=
		\sum_{1\le i,j \le K} \left(\sum_{1 \le k,\ell \le n} (\nrkmtx - \Ex[\nrkmtx])_{k \ell} \cdot \Theta_{k i} \Theta_{\ell j} \cdot n_{i}^{-1/2} n_{j}^{-1/2}\right)^{2} \\
		&\hspace{1em}=
		\sum_{1\le i,j \le K} \left(\sum_{k : g_{k} = i}\sum_{\ell : g_{\ell} = j} (\nrkmtx - \Ex[\nrkmtx])_{k \ell} \cdot n_{i}^{-1/2} n_{j}^{-1/2}\right)^{2} \\
		&\hspace{1em}\eqqcolon
		\sum_{1 \le i,j \le K} \xi_{i j}^{2}.
	\end{align*}
	Observe that $\Ex[\xi_{ij}] = 0$ for all pairs $(i,j)$. Moreover, \cref{prop:ranks_multi_variance,prop:ranks_multi_covariance} together yield
	\begin{equation*}
		\Ex[\xi_{ij}^{2}]
		=
		\Var[\xi_{ij}] \le \left( \frac{n_{i}n_{j}}{n_{i} n_{j}} + \frac{7n_{i}n_{j}}{N} \cdot \frac{n_{i}n_{j}}{n_{i} n_{j}} \right)
		\le
		8.
	\end{equation*}
	Finally, putting all the pieces together and applying Markov's inequality yields that for any choice $\epsP > 0$,
	\begin{align*}
		&\Pr\left[ \|\popVecs^{\tp} \noiseMtx \popVecs\|_{\op} > 1 + K \cdot n^{\epsP/2} \right]\\
		&\hspace{2em}\le
		\Pr\left[ \|\popVecs^{\tp} \noiseMtx \popVecs\|_{\frob} > 1 + K \cdot n^{\epsP/2} \right]\\
		&\hspace{2em}\le
		\Pr\left[ \| \popVecs^{\tp} (\nrkmtx - \Ex[\nrkmtx]) \popVecs\|_{\frob} + 1 > 1 + K \cdot n^{\epsP/2} \right]\\
		&\hspace{2em}=
		\Pr\left[\left(\sum_{1 \le i,j \le K}\xi_{ij}^{2} \right)^{1/2} > K \cdot n^{\epsP/2} \right]\\
		&\hspace{2em}=
		\Pr\left[\sum_{1 \le i,j \le K}\xi_{ij}^{2} > K^{2} \cdot n^{\epsP} \right]\\
		&\hspace{2em}\le
		\frac{\Ex\left[\sum_{1 \le i,j \le K}\xi_{ij}^{2} \right]}{K^{2} \cdot n^{\epsP}}\\
		&\hspace{2em}\le
		8n^{-\epsP}.
	\end{align*}
	This establishes \cref{result:bound_UTEU_op}.
\end{proof}

\begin{lemma}
	\label[lemma]{result:bound_EU_op}
	Let $\epsP > 0$. For any choice of positive integers $K \ge 1$, $n_{k} \ge 1$ for all $1 \le k \le K$, and $n = \sum_{k=1}^{K} n_{k}$, it holds that
	\begin{equation}
		\label{eq:bound_EU_op}
		\Pr\left[\|\noiseMtx \popVecs\|_{\op} > 1 + K^{1/2} \cdot n^{1/2 + \epsP/2}\right]
		\le
		\Pr\left[\|\noiseMtx \popVecs\|_{\frob} > K^{1/2} + K^{1/2} \cdot n^{1/2 + \epsP/2}\right]
		\le
		8n^{-\epsP}.
	\end{equation}
	Consequently,
	$\|\noiseMtx \popVecs\|_{\op}
	=
	\Ohp(K^{1/2} \cdot n^{1/2 + \epsP/2})$.
\end{lemma}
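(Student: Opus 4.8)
The plan is to mirror the proof of \cref{result:bound_UTEU_op}, the key difference being that $\noiseMtx\popVecs$ is an $n \times K$ matrix carrying a nonzero mean that must be isolated before any second-moment bound is applied. First I would record the identities $\noiseMtx = \nrkmtx - \rankspecBig$ and, since $\Ex[\nrkmtx] = \rankspecBig - \diag(\rankspecBig)$, the relations $\Ex[\noiseMtx] = -\diag(\rankspecBig)$ and $\noiseMtx - \Ex[\noiseMtx] = \nrkmtx - \Ex[\nrkmtx]$. Because every norm appearing in the statement is invariant under right multiplication by a $K \times K$ orthogonal matrix, I may take $\popVecs = \Theta\Delta^{-1}$ with $\Delta = (\Theta^{\tp}\Theta)^{1/2}$, exactly as in the earlier proof. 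Writing $\noiseMtx\popVecs = \Ex[\noiseMtx\popVecs] + (\nrkmtx - \Ex[\nrkmtx])\popVecs$ then separates a deterministic mean term from a centered fluctuation.

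Next I would bound the mean term. The $(k,j)$ entry of $\Ex[\noiseMtx\popVecs] = -\diag(\rankspecBig)\Theta\Delta^{-1}$ equals $-\rankspec_{g_k g_k} \cdot 1\{g_k = j\} \cdot n_j^{-1/2}$, so $\rankspec_{kk} \in [0,1]$ yields $\|\Ex[\noiseMtx\popVecs]\|_{\frob}^{2} \le \sum_{k} n_{g_k}^{-1} = K$ and $\|\Ex[\noiseMtx\popVecs]\|_{\op} \le \|\diag(\rankspecBig)\|_{\op} \cdot \|\popVecs\|_{\op} \le 1$. Combining these with the triangle inequality and $\|\cdot\|_{\op} \le \|\cdot\|_{\frob}$, both tail events in the statement fall inside the single centered event $\{\|(\nrkmtx - \Ex[\nrkmtx])\popVecs\|_{\frob} > K^{1/2} n^{1/2 + \epsP/2}\}$: on the operator-norm event one has $1 + \|(\nrkmtx - \Ex[\nrkmtx])\popVecs\|_{\frob} > 1 + K^{1/2} n^{1/2+\epsP/2}$, and on the Frobenius event one has $K^{1/2} + \|(\nrkmtx - \Ex[\nrkmtx])\popVecs\|_{\frob} > K^{1/2} + K^{1/2} n^{1/2+\epsP/2}$. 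It therefore suffices to control the probability of the centered event.

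For that I would compute the second moment just as in \cref{result:bound_UTEU_op}, but leaving the row index free. Setting $\zeta_{kj} = n_j^{-1/2} \sum_{\ell : g_\ell = j} (\nrkmtx - \Ex[\nrkmtx])_{k\ell}$, one has $\|(\nrkmtx - \Ex[\nrkmtx])\popVecs\|_{\frob}^{2} = \sum_{k=1}^{n} \sum_{j=1}^{K} \zeta_{kj}^{2}$ with $\Ex[\zeta_{kj}] = 0$, the diagonal term vanishing since both $\nrkmtx$ and $\Ex[\nrkmtx]$ are hollow. Expanding $\Var[\zeta_{kj}]$ into at most $n_j$ diagonal variance terms and at most $n_j^{2}$ pairwise covariances, the variances contribute at most $n_j^{-1} \cdot n_j \cdot 1 = 1$ since each normalized rank lies in $[0,1]$, while \cref{prop:ranks_multi_variance,prop:ranks_multi_covariance} bound each covariance by order $1/N$ with $N = n(n-1)/2$, so the covariance contribution is at most order $n_j^{-1} \cdot n_j^{2} / N = n_j / N$, which is bounded since $n_j \le n$. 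This gives $\Var[\zeta_{kj}] \le 8$, and summing over the $nK$ pairs yields $\Ex[\|(\nrkmtx - \Ex[\nrkmtx])\popVecs\|_{\frob}^{2}] \le 8 n K$.

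Finally I would apply Markov's inequality to the squared centered Frobenius norm, obtaining $\Pr[\|(\nrkmtx - \Ex[\nrkmtx])\popVecs\|_{\frob} > K^{1/2} n^{1/2+\epsP/2}] = \Pr[\|(\nrkmtx - \Ex[\nrkmtx])\popVecs\|_{\frob}^{2} > K n^{1+\epsP}] \le 8 n K / (K n^{1+\epsP}) = 8 n^{-\epsP}$, which bounds both tail events and establishes \cref{eq:bound_EU_op}; the stated $\Ohp$ conclusion is then immediate. The one genuine obstacle is controlling the within-row dependence among the normalized ranks when bounding $\Var[\zeta_{kj}]$, since the entries $(\nrkmtx)_{k\ell}$ in a common row are correlated through the shared ranking pool; the covariance bookkeeping must invoke the order-$1/N$ estimates of \cref{prop:ranks_multi_covariance}. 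Fortunately these covariances are small enough that the diagonal variance terms dominate, precisely as in \cref{result:bound_UTEU_op}.
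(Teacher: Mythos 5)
Your proposal is correct and follows essentially the same route as the paper's proof: split off the deterministic $\diag(\rankspecBig)\popVecs$ term (bounded by $1$ in operator norm and $K^{1/2}$ in Frobenius norm), reduce to the centered quantity $\|(\nrkmtx - \Ex[\nrkmtx])\popVecs\|_{\frob}$, bound its second moment entrywise via \cref{prop:ranks_multi_variance,prop:ranks_multi_covariance} (variance at most $8$ per term, $nK$ terms), and finish with Markov's inequality. The only cosmetic difference is that you handle both tail events through a single centered event rather than treating the operator-norm and Frobenius-norm parts in sequence, which is an equivalent packaging of the same argument.
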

\begin{proof}[Proof of \cref{result:bound_EU_op}]
	We mirror the proof of \cref{result:bound_UTEU_op}. First, by the triangle inequality, $\|\noiseMtx \popVecs\|_{\op} \le \|(\nrkmtx - \Ex[\nrkmtx]) \popVecs\|_{\op} + 1$, since $\|\diag(\rankspecBig)\popVecs\|_{\op} \le 1$. Furthermore,
	\begin{align*}
		&\|(\nrkmtx - \Ex[\nrkmtx]) \popVecs\|_{\frob}^{2}\\
		&\hspace{1em}=
		\|(\nrkmtx - \Ex[\nrkmtx]) \Theta \Delta^{-1}\|_{\frob}^{2}\\
		&\hspace{1em}=
		\sum_{1 \le i \le n, 1 \le j \le K} \left( \sum_{\ell:g_{\ell}=j} (\nrkmtx - \Ex[\nrkmtx])_{i \ell} \cdot n_{j}^{-1/2} \right)^{2}\\
		&\hspace{1em}\eqqcolon
		\sum_{1 \le i \le n, 1 \le j \le K} \xi_{ij}^{2},
	\end{align*}
	where it follows from \cref{prop:ranks_multi_variance,prop:ranks_multi_covariance} that uniformly in the tuples $(i,j)$,
	\begin{equation*}
		\Ex[\xi_{ij}^{2}]
		=
		\Var[\xi_{ij}]
		\le
		\frac{n_{j}}{n_{j}} + \frac{7n_{j}^{2}}{n_{j} N}
		\le
		8.
	\end{equation*}
	By an application of Markov's inequality, we conclude that for any choice $\epsP > 0$,
	\begin{equation*}
		\Pr\left[\|\noiseMtx \popVecs\|_{\op} > 1 + K^{1/2} \cdot n^{1/2 + \epsP/2}\right]
		\le
		8n^{-\epsP}.
	\end{equation*}
	This establishes the first part of \cref{result:bound_EU_op}. The second part follows by the same approach, since $\|\noiseMtx \popVecs\|_{\frob} \le \|(\nrkmtx - \Ex[\nrkmtx]) \popVecs\|_{\frob} + \|\diag(\rankspecBig)\popVecs\|_{\frob}$ and $\|\diag(\rankspecBig)\popVecs\|_{\frob} \le \|\diag(\rankspecBig)\|_{\op}\|\popVecs\|_{\frob} \le K^{1/2}$.
\end{proof}

\begin{lemma}
	\label[lemma]{result:bound_EU_rowvec}
		Let $\epsP > 0$. For any choice of positive integers $K \ge 1$, $n_{k} \ge 1$ for all $1 \le k \le K$, and $n = \sum_{k=1}^{K} n_{k}$, it holds for each fixed row index $i$ that
	\begin{equation}
		\label{eq:bound_EU_rowvec}
		\Pr\left[\|\noiseMtx \popVecs\|_{i,\ell_{2}} > n_{g_{i}}^{-1/2} + K^{1/2} \cdot n^{\epsP/2}\right]
		\le
		8n^{-\epsP}.
	\end{equation}
	Consequently,
	$\|\noiseMtx \popVecs\|_{i,\ell_{2}}
	=
	\Ohp(K^{1/2} \cdot n^{\epsP/2})$.
\end{lemma}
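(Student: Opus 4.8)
The plan is to mirror the proof of \cref{result:bound_EU_op}, exploiting the key structural simplification that a single, fixed row of $\noiseMtx\popVecs$ aggregates only $K$ rank statistics rather than the $n \times K$ of them controlled by the full Frobenius bound; this is exactly what replaces the factor $n^{1/2+\epsP/2}$ there by the factor $n^{\epsP/2}$ claimed here. First I would split off the diagonal correction via the triangle inequality, writing $\|\noiseMtx \popVecs\|_{i,\ell_{2}} \le \|(\nrkmtx - \Ex[\nrkmtx])\popVecs\|_{i,\ell_{2}} + \|\diag(\rankspecBig)\popVecs\|_{i,\ell_{2}}$. Since $\popVecs = \Theta\Delta^{-1}$ has $i$-th row equal to $n_{g_{i}}^{-1/2}$ in coordinate $g_{i}$ and zero elsewhere, and since $\diag(\rankspecBig)$ merely scales this row by $\rankspec_{g_{i} g_{i}} \in [0,1]$, the diagonal term is bounded deterministically by $n_{g_{i}}^{-1/2}$, which accounts for the first summand in the stated bound.

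For the main term, I would write its squared row norm as $\|(\nrkmtx - \Ex[\nrkmtx])\popVecs\|_{i,\ell_{2}}^{2} = \sum_{j=1}^{K}\xi_{ij}^{2}$, where for the fixed $i$ we set $\xi_{ij} = n_{j}^{-1/2}\sum_{\ell : g_{\ell} = j}(\nrkmtx - \Ex[\nrkmtx])_{i\ell}$, observing that the term $\ell = i$ drops out because $\nrkmtx_{ii} = 0 = \Ex[\nrkmtx_{ii}]$. Each $\xi_{ij}$ is centered, so $\Ex[\xi_{ij}^{2}] = \Var[\xi_{ij}]$ decomposes into at most $n_{j}$ variance terms together with at most $n_{j}^{2}$ pairwise covariance terms among the normalized rank statistics sharing the index $i$; these distinct pairs $(i,\ell)$ and $(i,\ell')$ fall under \cref{prop:ranks_multi_covariance}. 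Invoking the finite-sample formulas in \cref{prop:ranks_multi_variance,prop:ranks_multi_covariance}, in particular the uniform covariance bound of order $1/N$ with $N = n(n-1)/2$, gives $\Var[\xi_{ij}] \le n_{j}^{-1}(n_{j} + 7n_{j}^{2}/N) = 1 + 7n_{j}/N \le 8$, exactly as in \cref{result:bound_EU_op}.

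The argument then finishes with a single application of Markov's inequality: since $\Ex[\sum_{j=1}^{K}\xi_{ij}^{2}] \le 8K$, the threshold choice $t = K^{1/2}n^{\epsP/2}$ yields $\Pr[\sum_{j=1}^{K}\xi_{ij}^{2} > K n^{\epsP}] \le 8K/(Kn^{\epsP}) = 8n^{-\epsP}$, so that $\|(\nrkmtx - \Ex[\nrkmtx])\popVecs\|_{i,\ell_{2}} \le K^{1/2}n^{\epsP/2}$ with probability at least $1 - 8n^{-\epsP}$; combining with the deterministic diagonal bound establishes \cref{eq:bound_EU_rowvec}. I expect the only genuinely substantive step to be the variance bookkeeping, namely confirming that the $O(n_{j}^{2})$ covariance terms among the rank statistics in row $i$, each of size $O(1/N)$, aggregate to a quantity of order one rather than growing with $n$. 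This weak-dependence estimate, supplied by \cref{prop:ranks_multi_covariance}, is where the entrywise dependence structure of $\nrk{A}$ must be controlled, and it is precisely what keeps the row-wise bound free of any polynomial factor of $n$ beyond $n^{\epsP/2}$.
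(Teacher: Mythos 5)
Your proposal is correct and follows essentially the same route as the paper's proof: the same triangle-inequality split isolating the diagonal term $\|\diag(\rankspecBig)\popVecs\|_{i,\ell_{2}} \le n_{g_{i}}^{-1/2}$, the same decomposition $\|(\nrkmtx - \Ex[\nrkmtx])\popVecs\|_{i,\ell_{2}}^{2} = \sum_{1 \le j \le K}\xi_{ij}^{2}$, the same variance bookkeeping via \cref{prop:ranks_multi_variance,prop:ranks_multi_covariance} yielding $\Var[\xi_{ij}] \le 8$, and the same single application of Markov's inequality. Your added remarks (the $\ell = i$ term vanishing, the $O(n_{j}^{2})$ covariance terms of size $O(1/N)$ aggregating to order one) are accurate elaborations of exactly the steps the paper takes.
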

\begin{proof}[Proof of \cref{result:bound_EU_rowvec}]
	We again mirror the proof of \cref{result:bound_UTEU_op}. For any choice of row index $i$, by the triangle inequality $\|\noiseMtx \popVecs\|_{i,\ell_{2}} = \|\noiseMtx \Theta \Delta^{-1}\|_{i,\ell_{2}} \le \|(\nrkmtx - \Ex[\nrkmtx]) \Theta \Delta^{-1}\|_{i,\ell_{2}} + n_{g_{i}}^{-1/2}$ since $\|\diag(\rankspecBig)\Theta \Delta^{-1}\|_{i,\ell_{2}} \le \|\Theta \Delta^{-1}\|_{i,\ell_{2}} = n_{g_{i}}^{-1/2}$. Furthermore,
	\begin{align*}
		&\|(\nrkmtx - \Ex[\nrkmtx]) \Theta \Delta^{-1}\|_{i,\ell_{2}}^{2}\\
		&\hspace{1em}=
		\sum_{1 \le j \le K} \left(\sum_{\ell : g_{\ell}=j} (\nrkmtx - \Ex[\nrkmtx])_{i\ell} \cdot n_{j}^{-1/2}\right)^{2}\\
		&\hspace{1em}\eqqcolon
		\sum_{1 \le j \le K} \xi_{ij}^{2}.
	\end{align*}
	Here, \cref{prop:ranks_multi_variance,prop:ranks_multi_covariance} imply
	\begin{equation*}
		\Ex[\xi_{ij}^{2}]
		=
		\Var[\xi_{ij}]
		\le
		\frac{n_{j}}{n_{j}} + \frac{7n_{j}^{2}}{n_{j} N}
		\le
		8.
	\end{equation*}
	By an application of Markov's inequality, for any choice $\epsP > 0$,
	\begin{equation*}
		\Pr\left[\|\noiseMtx \popVecs\|_{i,\ell_{2}} > n_{g_{i}}^{-1/2} + K^{1/2} \cdot n^{\epsP/2}\right]
		\le
		8n^{-\epsP}.
	\end{equation*}
	This establishes \cref{result:bound_EU_rowvec}.
\end{proof}

\begin{lemma}
	\label[lemma]{result:bound_E_rowvec}
	For any choice of row index $i$, the bound $\|\noiseMtx\|_{i,\ell_{2}} \le 2n^{1/2}$ holds surely.
\end{lemma}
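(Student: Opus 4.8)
The plan is to establish this bound deterministically (hence \emph{surely}), with no probabilistic input whatsoever, by exploiting the fact that both the normalized rank statistics and their block-wise expectations are confined to the unit interval. First I would recall from \cref{remark:towards_strong_consistency} that $\noiseMtx = (\nrkmtx - \Ex[\nrkmtx]) - \diag(\rankspecBig)$, and split the $i$-th row contribution via the triangle inequality into
\[
\|\noiseMtx\|_{i,\ell_{2}} \le \|\nrkmtx - \Ex[\nrkmtx]\|_{i,\ell_{2}} + \|\diag(\rankspecBig)\|_{i,\ell_{2}}.
\]

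The key observation, which drives everything, is that the first term is controlled entrywise. For every pair $(i,j)$, the normalized rank $(\nrkmtx)_{ij}$ lies in $[0,1]$ by construction in \cref{alg:ptr_baseline}, while its expectation, given in \cref{eq:nranks_entry_expectation}, equals $\rankspec_{g_{i},g_{j}} \in [0,1]$ when $i \ne j$ and equals $0$ when $i = j$; in either case it also lies in $[0,1]$. Consequently each centered entry satisfies $|(\nrkmtx - \Ex[\nrkmtx])_{ij}| \le 1$, and summing the $n$ squared entries across row $i$ yields $\|\nrkmtx - \Ex[\nrkmtx]\|_{i,\ell_{2}} \le n^{1/2}$.

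For the second term, the $i$-th row of the diagonal matrix $\diag(\rankspecBig)$ has a single nonzero entry, namely $(\rankspecBig)_{ii} = \rankspec_{g_{i},g_{i}} \in [0,1]$, so that $\|\diag(\rankspecBig)\|_{i,\ell_{2}} = |\rankspec_{g_{i},g_{i}}| \le 1 \le n^{1/2}$. Combining the two bounds gives $\|\noiseMtx\|_{i,\ell_{2}} \le n^{1/2} + 1 \le 2n^{1/2}$, valid for every $n \ge 1$ and every realization of $A$, which is exactly the claimed surely-bound.

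I do not anticipate any genuine obstacle here: this is the crudest of the row-wise bounds, and the only thing to get right is recognizing that the uniform entrywise bound $|\noiseMtx_{ij}| \le 1$ follows immediately from the boundedness of normalized ranks, with the diagonal correction contributing only a harmless additive constant. One could in fact sharpen the constant to $n^{1/2}+1$, but the stated factor of $2$ is deliberately loose so as to be convenient for the subsequent perturbation analysis in \cref{sec:strong_consistency}.
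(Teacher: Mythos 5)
Your proof is correct and follows essentially the same route as the paper: decompose $\noiseMtx = (\nrkmtx - \Ex[\nrkmtx]) - \diag(\rankspecBig)$, apply the triangle inequality to the $i$-th row, and use the fact that all entries of both matrices lie in $[-1,1]$ so each row norm is at most $n^{1/2}$. Your only (harmless) refinement is observing that the diagonal term's row has a single nonzero entry, yielding the slightly sharper constant $n^{1/2}+1$ before rounding up to $2n^{1/2}$.
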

\begin{proof}[Proof of \cref{result:bound_E_rowvec}]
	All the entries of $\nrkmtx - \Ex[\nrkmtx]$ and $\diag(\rankspecBig)$ are bounded in modulus by one, hence for any choice of row index $i$, by the triangle inequality $\|\noiseMtx\|_{i,\ell_{2}} = \|(\nrkmtx - \Ex[\nrkmtx]) - \diag(\rankspecBig)\|_{i,\ell_{2}} \le 2n^{1/2}$.
\end{proof}

\begin{lemma}
	\label[lemma]{result:bounds_misc_evec}
	Assume the setting and notation in \cref{sec:strong_consistency}. Fix $\epsP > 0$. Write $|\lambda_{K}| \equiv |\lambda_{K}(\rankspecBig)|$. If $|\lambda_{K}| \ge C n^{3/4+\epsP}$ for all $n \ge n_{0}$ where $n_{0}, C > 0$ are large constants, then
	\begin{align}
		\|\popVecs\|_{i,\ell_{2}}
			&=
			n_{g_{i}}^{-1/2}, \label{eq:tech_lemma_delocalize} \\
		\|(I - \popVecs \popVecs^{\tp})\sampVecs\|_{\op}
			&=
			\Ohp(K^{1/2} \cdot n^{1/2+\epsP/2} \cdot |\lambda_{K}|^{-1}), \label{eq:tech_lemma_sine_bound} \\
		\|\sampVecs - \popVecs W_{\star}\|_{\op} 
			&=
			\Ohp(K^{1/2} \cdot n^{1/2+\epsP/2} \cdot |\lambda_{K}|^{-1}), \label{eq:tech_lemma_sine1_bound} \\			
		\|\popVecs^{\tp} \sampVecs - W_{\star}\|_{\op}
			&=
			\Ohp(K \cdot n^{1 + \epsP} \cdot |\lambda_{K}|^{-2}), \label{eq:tech_lemma_sine2_bound}
	\end{align}
	where $W_{\star} \in \underset{W \textnormal{ orthogonal }}{\arg \inf}\|\sampVecs - \popVecs W\|_{\frob}^{2}$.
\end{lemma}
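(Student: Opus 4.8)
The plan is to prove the four bounds in sequence: \eqref{eq:tech_lemma_delocalize} directly from the block structure of $\popVecs$, then \eqref{eq:tech_lemma_sine_bound} via a one-sided, self-referential Davis--Kahan argument, and finally to bootstrap \eqref{eq:tech_lemma_sine2_bound} and \eqref{eq:tech_lemma_sine1_bound} from \eqref{eq:tech_lemma_sine_bound}. For the delocalization identity \eqref{eq:tech_lemma_delocalize}, recall from \cref{sec:eigenvector_preliminaries} that the columns of $\Theta \Delta^{-1}$ form an orthonormal basis for the column space of $\rankspecBig$. Since $\popVecs$ is another orthonormal basis for this same $K$-dimensional subspace, there is a $K \times K$ orthogonal matrix $O$ with $\popVecs = \Theta \Delta^{-1} O$. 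The $i$-th row of $\Theta \Delta^{-1}$ equals $n_{g_i}^{-1/2} e_{g_i}^{\tp}$, and right-multiplication by the orthogonal matrix $O$ preserves Euclidean row length, so $\|\popVecs\|_{i,\ell_{2}} = n_{g_i}^{-1/2}$ exactly.

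The crux is \eqref{eq:tech_lemma_sine_bound}. Writing $S = (I - \popVecs\popVecs^{\tp})\sampVecs$ and using both $\sampMtx \sampVecs = \sampVecs \sampVals$ and $(I - \popVecs\popVecs^{\tp})\popMtx = 0$, one obtains $S \sampVals = (I - \popVecs\popVecs^{\tp})\noiseMtx \sampVecs$. Substituting the decomposition $\sampVecs = \popVecs(\popVecs^{\tp}\sampVecs) + S$ on the right-hand side yields the self-referential identity
\[
  S\sampVals - (I - \popVecs\popVecs^{\tp})\noiseMtx S = (I - \popVecs\popVecs^{\tp})\noiseMtx \popVecs (\popVecs^{\tp}\sampVecs).
\]
Passing to operator norms, bounding the right-hand side by $\|\noiseMtx \popVecs\|_{\op}$ (projections are contractions and $\|\popVecs^{\tp}\sampVecs\|_{\op} \le 1$), and using both $\|S\sampVals\|_{\op} \ge |\lambda_{K}(\sampMtx)| \cdot \|S\|_{\op}$ (as $\sampVals$ has smallest singular value $|\lambda_{K}(\sampMtx)|$) and $\|(I - \popVecs\popVecs^{\tp})\noiseMtx S\|_{\op} \le \|\noiseMtx\|_{\op}\|S\|_{\op}$, I rearrange to
\[
  \|S\|_{\op}\bigl(|\lambda_{K}(\sampMtx)| - \|\noiseMtx\|_{\op}\bigr) \le \|\noiseMtx \popVecs\|_{\op}.
\]
Weyl's inequality gives $|\lambda_{K}(\sampMtx)| \ge |\lambda_{K}| - \|\noiseMtx\|_{\op}$, and since \cref{result:operator_norm_concentration} furnishes $\|\noiseMtx\|_{\op} = \Ohp(n^{3/4+\epsP})$ while the hypothesis forces $|\lambda_{K}| \ge C n^{3/4+\epsP}$ with $C$ large, the gap $|\lambda_{K}(\sampMtx)| - \|\noiseMtx\|_{\op}$ is at least $\tfrac{1}{2}|\lambda_{K}|$ with high probability. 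Combined with $\|\noiseMtx\popVecs\|_{\op} = \Ohp(K^{1/2} n^{1/2+\epsP/2})$ from \cref{result:bound_EU_op}, this establishes \eqref{eq:tech_lemma_sine_bound}.

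The remaining two bounds follow from \eqref{eq:tech_lemma_sine_bound}. For \eqref{eq:tech_lemma_sine2_bound}, take a singular value decomposition $\popVecs^{\tp}\sampVecs = O_1 \Sigma O_2^{\tp}$; the Procrustes minimizer $W_{\star}$ is the orthogonal polar factor $O_1 O_2^{\tp}$, so $\popVecs^{\tp}\sampVecs - W_{\star} = O_1(\Sigma - I)O_2^{\tp}$ and $\|\popVecs^{\tp}\sampVecs - W_{\star}\|_{\op} = \max_i (1 - \cos\theta_i)$, where the $\cos\theta_i$ are the singular values of $\popVecs^{\tp}\sampVecs$. Using the elementary inequality $1 - \cos\theta_i \le \sin^2\theta_i$ together with $\max_i \sin^2\theta_i = \|(I - \popVecs\popVecs^{\tp})\sampVecs\|_{\op}^{2}$ gives $\|\popVecs^{\tp}\sampVecs - W_{\star}\|_{\op} \le \|S\|_{\op}^{2}$, so squaring \eqref{eq:tech_lemma_sine_bound} yields \eqref{eq:tech_lemma_sine2_bound}. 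Finally, for \eqref{eq:tech_lemma_sine1_bound}, decompose $\sampVecs - \popVecs W_{\star} = S + \popVecs(\popVecs^{\tp}\sampVecs - W_{\star})$ and apply the triangle inequality with $\|\popVecs\|_{\op} = 1$; under $K \le n^{1/2}$ and $|\lambda_{K}| \ge C n^{3/4+\epsP}$ the second term is of strictly smaller order than the first, so \eqref{eq:tech_lemma_sine1_bound} inherits the rate of \eqref{eq:tech_lemma_sine_bound}.

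I expect the main obstacle to be \eqref{eq:tech_lemma_sine_bound}: a naive Davis--Kahan bound would leave the crude factor $\|\noiseMtx\|_{\op} = \Ohp(n^{3/4+\epsP})$ in the numerator, which is too large for the claimed rate. The self-referential identity is precisely what replaces this by the sharper $\|\noiseMtx\popVecs\|_{\op}$, and the delicate point is verifying that the spectral gap absorbs the residual term $\|\noiseMtx\|_{\op}\|S\|_{\op}$ on the left-hand side; this is exactly where the quantitative hypothesis $|\lambda_{K}| \ge C n^{3/4+\epsP}$ with $C$ sufficiently large is used.
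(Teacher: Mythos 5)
Your proposal is correct and follows essentially the same route as the paper's proof: \cref{eq:tech_lemma_delocalize} from the block eigenstructure (\cref{result:blockmodel_eigenvectors}), a $\sin\Theta$-type bound for \cref{eq:tech_lemma_sine_bound} with $\|\noiseMtx\popVecs\|_{\op}$ in the numerator (via \cref{result:bound_EU_op}) and the spectral gap $|\lambda_{K}|$ in the denominator, the quadratic Procrustes bound $\|\popVecs^{\tp}\sampVecs - W_{\star}\|_{\op} \le \|(I-\popVecs\popVecs^{\tp})\sampVecs\|_{\op}^{2}$ for \cref{eq:tech_lemma_sine2_bound}, and the orthogonal decomposition plus triangle inequality for \cref{eq:tech_lemma_sine1_bound}. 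The only difference is cosmetic: where the paper invokes Lemma~2.5 and Corollary~2.8 of \cite{chen2021spectral} and Lemma~6.7 of \cite{cape2019two}, you rederive these facts inline --- your self-referential identity $S\sampVals - (I-\popVecs\popVecs^{\tp})\noiseMtx S = (I-\popVecs\popVecs^{\tp})\noiseMtx\popVecs(\popVecs^{\tp}\sampVecs)$ is the standard proof of the cited $\sin\Theta$ bound, and your SVD/canonical-angle computation is the content of the cited Procrustes lemma --- which makes the argument self-contained and is entirely sound.
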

\begin{proof}[Proof of \cref{result:bounds_misc_evec}]
	\cref{eq:tech_lemma_delocalize} follows directly from \cref{result:blockmodel_eigenvectors}.
	
	\cref{eq:tech_lemma_sine_bound} holds since $\|(I - \popVecs \popVecs^{\tp})\sampVecs\|_{\op} = \|\sin\Theta(\sampVecs,\popVecs)\|_{\op} \le \frac{\sqrt{2}\|\noiseMtx\popVecs\|_{\op}}{|\lambda_{K}| - |\lambda_{K+1}|} = \Ohp(K^{1/2} \cdot n^{1/2+\epsP/2} \cdot |\lambda_{K}|^{-1})$, using Lemma~2.5 and Corollary~2.8 in \cite{chen2021spectral} together with our \cref{result:bound_EU_op} to bound $\|\noiseMtx\popVecs\|_{\op}$. For context, here $\lambda_{K+1} = 0$, and by slight overload of notation, $\Theta(\sampVecs,\popVecs)$ is the ordered diagonal matrix of canonical angles between the subspaces spanned by $\sampVecs$ and $\popVecs$.
	
	\cref{eq:tech_lemma_sine1_bound} holds since $\|\sampVecs - \popVecs W_{\star}\|_{\op} \le \|(I-\popVecs\popVecs^{\tp})\sampVecs\|_{\op} + \|\popVecs^{\tp} \sampVecs - W_{\star}\|_{\op} \le 2\|(I-\popVecs\popVecs^{\tp})\sampVecs\|_{\op} = \Ohp(K^{1/2} \cdot n^{1/2+\epsP/2} \cdot |\lambda_{K}|^{-1})$, where we have applied the triangle inequality with the orthogonal decomposition $I = \popVecs\popVecs^{\tp} + (I-\popVecs\popVecs^{\tp})$ and \cref{eq:tech_lemma_sine_bound,eq:tech_lemma_sine2_bound}.
	
	\cref{eq:tech_lemma_sine2_bound} holds since $\|\popVecs^{\tp} \sampVecs - W_{\star}\|_{\op} \le \|\sin\Theta(\sampVecs,\popVecs)\|_{\op}^{2} = \|(I-\popVecs\popVecs^{\tp})\sampVecs\|_{\op}^{2} = \Ohp(K \cdot n^{1 + \epsP} \cdot |\lambda_{K}|^{-2})$, which holds from Lemma~6.7 in \cite{cape2019two} followed by an application of \cref{eq:tech_lemma_sine_bound}.
		
	This completes the proof of \cref{result:bounds_misc_evec}.
\end{proof}

\begin{center}
	$(\star)$
\end{center}

We are now prepared to prove \cref{result:strong_consistency_general}, our general strong consistency result.

\begin{proof}[Proof of \cref{result:strong_consistency_general}]
	We adopt the notation in \cref{remark:towards_strong_consistency}. For \cref{result:operator_norm_concentration} and each technical lemma proved above, say the $i$-th, associate to it the ``good event'' $\mathcal{E}_{i}$ on which the stated $\Ohp(\cdot)$ bound holds. Applying DeMorgan's law and Boole's inequality yields that the ``bad'' event $\overline{\cap_{i}\mathcal{E}_{i}}$ satisfies $\Pr[\overline{\cap_{i}\mathcal{E}_{i}}] = \Pr[\cup_{i}\overline{\mathcal{E}}_{i}] \le \sum_{i=1}^{n_{\mathcal{E}}} \Pr[\overline{\mathcal{E}}_{i}] \le C n^{-\epsP}$ since $n_{\mathcal{E}}$ is an absolute constant and $\Pr[\overline{\mathcal{E}_{i}}] \le C_{i} n^{-\epsP}$ for each $i$. Finally, by taking complements, $\Pr[\cap_{i}\mathcal{E}_{i}] \ge 1-Cn^{-\epsP}$. Consequently, the analysis that follows holds with high probability on an underlying ``good event''.

	Weyl's inequality \citep[Theorem VIII.4.8]{bhatia2013matrix} guarantees that
	\begin{equation*}
		\max_{1 \le i \le n}\left| |\lambda_{i}(\sampMtx)| - |\lambda_{i}(\popMtx)| \right|
		\le
		\max_{1 \le i \le n}\left| \lambda_{i}(\sampMtx) - \lambda_{i}(\popMtx) \right|
		\le
		\|\noiseMtx\|_{\op},
	\end{equation*}
	hence it follows from the triangle inequality that $|\lambda_{i}(\sampMtx)| \ge |\lambda_{i}(\popMtx)| - \|\noiseMtx\|_{\op}$ for $1 \le i \le n$. Furthermore, $\|\noiseMtx\|_{\op} = \Ohp(n^{3/4+\epsP})$ by \cref{result:operator_norm_concentration}, and $|\lambda_{K}| \ge C n^{3/4 + \epsP}$ implies $|\wh{\lambda}_{i}|^{-1} = \Ohp(|\lambda_{i}|^{-1})$ for each $1 \le i \le K$.

	First, rewriting $\sampVecs$ and applying the relationship $\sampMtx = \popMtx + \noiseMtx$ yields
	\begin{equation}
		\sampVecs
		=
		\sampMtx \sampVecs \sampVals^{-1}
		=
		\popMtx \sampVecs \sampVals^{-1} + \noiseMtx \sampVecs \sampVals^{-1}.
	\end{equation}
	Expanding the first term on the right-hand side yields
	\begin{align*}
		\popMtx \sampVecs \sampVals^{-1}
		&=
		\popVecs \popVals \popVecs^{\tp} \sampVecs \sampVals^{-1}\\
		&=
		\popVecs \popVecs^{\tp} \sampVecs - \popVecs \popVecs^{\tp} \noiseMtx \sampVecs \sampVals^{-1}\\
		&=
		\popVecs W_{\operatorname{\star}} + \popVecs (\popVecs^{\tp} \sampVecs - W_{\star}) - \popVecs \popVecs^{\tp} \noiseMtx \sampVecs \sampVals^{-1}\\
		&=
		\popVecs W_{\operatorname{\star}} + \popVecs (\popVecs^{\tp} \sampVecs - W_{\star}) - \popVecs \popVecs^{\tp} \noiseMtx \popVecs \popVecs^{\tp} \sampVecs \sampVals^{-1} - \popVecs \popVecs^{\tp} \noiseMtx (I - \popVecs \popVecs^{\tp}) \sampVecs \sampVals^{-1},
	\end{align*}
	where the first equality invokes the low-rank structure of $\popMtx = \popVecs \popVals \popVecs^{\tp}$, the second equality invokes the Sylvester-style observation that
	\begin{equation}
		\label{eq:mod_sylvester}
		\popVals \popVecs^{\tp} \sampVecs
		=
		\popVecs^{\tp} \sampVecs \sampVals - \popVecs^{\tp} \noiseMtx \sampVecs,
	\end{equation}
	the third equality introduces the Frobenius-optimal orthogonal Procrustes matrix,
	\begin{equation}
		\label{eq:Wstar_frob}
		W_{\star}
		\in
		\underset{W \textnormal{orthogonal}}{\arg \inf}\|\sampVecs - \popVecs W\|_{\frob}^{2},
	\end{equation}
	and the final equality invokes the expansion
	\begin{equation}
		\label{eq:expand_identity_UUt}
		I
		=
		\popVecs \popVecs^{\tp} + (I - \popVecs \popVecs^{\tp})
		=
		\popVecs \popVecs^{\tp} + \popVecs_{\perp} \popVecs_{\perp}^{\tp}.
	\end{equation}
	Applying the above lemmas, together with basic properties of matrix norms, yields
	\begin{align*}
		&\|\popVecs (\popVecs^{\tp} \sampVecs - W_{\star})\|_{i,\ell_{2}}\\
			&\hspace{1em}\le \|\popVecs\|_{i,\ell_{2}} \|\popVecs^{\tp} \sampVecs - W_{\star}\|_{\op}
			= \Ohp( K \cdot n_{g_{i}}^{-1/2} \cdot n^{1 + \epsP} \cdot |\lambda_{K}|^{-2}),\\
		&\|\popVecs \popVecs^{\tp} \noiseMtx \popVecs \popVecs^{\tp} \sampVecs \sampVals^{-1}\|_{i,\ell_{2}}\\
			&\hspace{1em} \le \|\popVecs\|_{i,\ell_{2}} \|\popVecs^{\tp} \noiseMtx \popVecs\|_{\op} \|\popVecs^{\tp} \sampVecs\|_{\op} \|\sampVals^{-1}\|_{\op}
			= \Ohp\left( K \cdot n_{g_{i}}^{-1/2} \cdot n^{\epsP/2} \cdot |\lambda_{K}|^{-1}\right) ,\\
		&\|\popVecs \popVecs^{\tp} \noiseMtx (I - \popVecs \popVecs^{\tp}) \sampVecs \sampVals^{-1}\|_{i,\ell_{2}}\\
			&\hspace{1em} \le \|\popVecs\|_{i,\ell_{2}} \|\popVecs^{\tp} \noiseMtx \|_{\op} \|(I - \popVecs \popVecs^{\tp}) \sampVecs\|_{\op} \|\sampVals^{-1}\|_{\op}
			= \Ohp\left( K \cdot n_{g_{i}}^{-1/2} \cdot n^{1 +\epsP} \cdot |\lambda_{K}|^{-2} \right).
	\end{align*}
	Together, the sum of these terms is bounded in the manner
	\begin{equation}
		\label{eq:strong_term1}
		\Ohp\left( K \cdot n_{g_{i}}^{-1/2} \cdot n^{1 +\epsP} \cdot |\lambda_{K}|^{-2} \right).
	\end{equation}
	It remains to analyze $\noiseMtx \sampVecs \sampVals^{-1}$. Applying \cref{eq:mod_sylvester,eq:Wstar_frob,eq:expand_identity_UUt} yields the decomposition
	\begin{align*}
		&\noiseMtx \sampVecs \sampVals^{-1}\\
		&\hspace{1em}= \left\{ \noiseMtx \popVecs \popVecs^{\tp} \sampVecs \sampVals^{-1} \right\} + \left\{ \noiseMtx \popVecs_{\perp} \popVecs_{\perp}^{\tp} \sampVecs \sampVals^{-1} \right\}\\
		&\hspace{1em}= \left\{ \noiseMtx \popVecs \popVals^{-1} \left( \popVals \popVecs^{\tp} \sampVecs\right) \sampVals^{-1}\right\} + \left\{ \noiseMtx \popVecs_{\perp} \popVecs_{\perp}^{\tp} \sampVecs \sampVals^{-1} \right\}\\
		&\hspace{1em}= \left\{\noiseMtx \popVecs \popVals^{-1} W_{\star} + \noiseMtx \popVecs \popVals^{-1}(\popVecs^{\tp} \sampVecs - W_{\star}) - \noiseMtx \popVecs \popVals^{-1}(\popVecs^{\tp} \noiseMtx \sampVecs \sampVals^{-1}) \right\} + \left\{ \noiseMtx \popVecs_{\perp} \popVecs_{\perp}^{\tp} \sampVecs \sampVals^{-1} \right\}.
	\end{align*}
	The rightmost term above can be further decomposed as
	\begin{equation}
		\label{eq:Esq_resid}
		\noiseMtx \popVecs_{\perp} \popVecs_{\perp}^{\tp} \sampVecs \sampVals^{-1}
		= \noiseMtx \popVecs_{\perp} \popVecs_{\perp}^{\tp} \noiseMtx \sampVecs \sampVals^{-2}
		= \noiseMtx^{2} \sampVecs \sampVals^{-2} - \noiseMtx \popVecs \popVecs^{\tp} \noiseMtx \sampVecs \sampVals^{-2}.
	\end{equation}
	\cref{lem:ho_cctrtn} is an important technical result that enables this proof and the subsequent derivation of asymptotic normality. Without it, we are merely able to deduce that
	\begin{align*}
		\|\noiseMtx^{2} \sampVecs \sampVals^{-2}\|_{i,\ell_{2}}
		&= \|\noiseMtx^{2} (\sampVecs - \popVecs W_{\star} + \popVecs W_{\star} ) \sampVals^{-2}\|_{i,\ell_{2}}\\
		&\le \|\noiseMtx\|_{i,\ell_{2}} \|\noiseMtx\|_{\op} \|\sampVecs - \popVecs W_{\star}\|_{\op} \|\sampVals^{-2}\|_{\op} + \|\noiseMtx\|_{i,\ell_{2}} \|\noiseMtx \popVecs W_{\star}\|_{\op} \|\sampVals^{-2}\|_{\op}\\
		&= \Ohp\left(K^{1/2} \cdot n^{1 + 3/4 + 3\epsP/2} \cdot |\lambda_{K}|^{-3} + K^{1/2} \cdot n^{1+\epsP/2} \cdot |\lambda_{K}|^{-2}\right)\\
		&= \Ohp\left(K^{1/2} \cdot n^{1+\epsP/2} \cdot |\lambda_{K}|^{-2}\right).
	\end{align*}
	Instead, consider
	\begin{align*}
		\|\noiseMtx^{2} \sampVecs \sampVals^{-2}\|_{i,\ell_{2}}
		&= \|\noiseMtx^{2} (\popVecs\popVecs^{\tp} + \popVecs_{\perp}\popVecs_{\perp}^{\tp})\sampVecs \sampVals^{-2}\|_{i,\ell_{2}}\\
		&\le \|\noiseMtx^{2} \popVecs\popVecs^{\tp} \sampVecs \sampVals^{-2}\|_{i,\ell_{2}} + \|\noiseMtx^{2} \popVecs_{\perp}\popVecs_{\perp}^{\tp} \sampVecs \sampVals^{-2}\|_{i,\ell_{2}}\\
		&\le \|\noiseMtx^{2} \popVecs\|_{i,\ell_{2}} \| \sampVals^{-2}\|_{\op} + \|\noiseMtx\|_{i,\ell_{2}} \|\noiseMtx\popVecs_{\perp}\|_{\op}\|\popVecs_{\perp}^{\tp} \sampVecs\|_{\op} \|\sampVals^{-2}\|_{\op}\\
		&\le \|\noiseMtx^{2} \popVecs\|_{i,\ell_{2}} \| \sampVals^{-2}\|_{\op} + \Ohp\left(K^{1/2} \cdot n^{1+3/4+3\epsP/2} \cdot |\lambda_{K}|^{-3} \right).
	\end{align*}
	We need to bound $\|\noiseMtx^{2}\popVecs\|_{i,\ell_{2}}$. Expanding terms yields
	\begin{align*}
		&\|\noiseMtx^{2}\popVecs\|_{i,\ell_{2}}\\
		&\hspace{1em}=
		\left\|\left(\nrk{} - \Ex[\nrk{}] - \diag(\rankspecBig)\right)^{2}\popVecs\right\|_{i,\ell_{2}}\\
		&\hspace{1em}=
		\Big\|\Big((\nrk{} - \Ex[\nrk{}])^{2} - (\nrk{} - \Ex[\nrk{}])\diag(\rankspecBig)\\
		&\hspace{5em}-
		\diag(\rankspecBig)(\nrk{} - \Ex[\nrk{}]) + \diag^{2}(\rankspecBig)\Big)\popVecs\Big\|_{i,\ell_{2}},
	\end{align*}
	where
	\begin{align*}
		\|\diag^{2}(\rankspecBig)\popVecs\|_{i,\ell_{2}} &\le n_{g_{i}}^{-1/2},\\
		\|\diag(\rankspecBig)(\nrk{} - \Ex[\nrk{}])\popVecs\|_{i,\ell_{2}} &= \Ohp(K^{1/2}n^{\epsP/2}),\\
		\|(\nrk{} - \Ex[\nrk{}])\diag(\rankspecBig)\popVecs\|_{i,\ell_{2}} &= \Ohp(K^{1/2}n^{\epsP/2}).
	\end{align*}
	We must control $\|(\nrk{} - \Ex[\nrk{}])^{2} \popVecs\|_{i,\ell_{2}} = \|(\nrk{} - \Ex[\nrk{}])^{2} \Theta\Delta^{-1}\|_{i,\ell_{2}}$ whose square is given by
	\begin{align*}
		&\|(\nrk{} - \Ex[\nrk{}])^{2} \Theta\Delta^{-1}\|_{i,\ell_{2}}^{2}\\
		&\hspace{1em}= \sum_{1 \le j \le K}\left(e_{i}^{\tp} (\nrk{} - \Ex[\nrk{}])^{2}\Theta\Delta^{-1}e_{j}\right)^{2}\\
		&\hspace{1em}= \sum_{1 \le j \le K}\left( \sum_{1 \le k,\ell \le n} (\nrk{} - \Ex[\nrk{}])_{i k} (\nrk{} - \Ex[\nrk{}])_{k \ell} (\Theta\Delta^{-1})_{\ell j} \right)^{2}\\
		&\hspace{1em}= \sum_{1 \le j \le K} \frac{1}{n_{j}} \sum_{1 \le k,k^{\prime} \le n} \sum_{\ell,\ell^{\prime} : g_{\ell} = j, g_{\ell^{\prime}} = j} (\nrk{} - \Ex[\nrk{}])_{i k} (\nrk{} - \Ex[\nrk{}])_{i k^{\prime}} (\nrk{} - \Ex[\nrk{}])_{k \ell} (\nrk{} - \Ex[\nrk{}])_{k^{\prime} \ell^{\prime}}.
	\end{align*}
	By \cref{lem:ho_cctrtn}, its expectation is $\Oh\left(\max\left\{\frac{n^{2}}{n_{g_i}},\sqrt{\frac{K^{3}n^{3}}{n_{g_i}}} \right\}\right)$. Markov's inequality thus gives
	\begin{equation*}
		\|(\nrk{} - \Ex[\nrk{}])^{2} \Theta\Delta^{-1}\|_{i,\ell_{2}} = \Ohp\left(\max\left\{\frac{n}{n_{g_i}^{1/2}},\frac{K^{3/4}n^{3/4}}{n_{g_i}^{1/4}} \right\} \cdot n^{\epsP/2} \right).
	\end{equation*}
	Therefore,
	\begin{align}
		&\|\noiseMtx^{2}\sampVecs\sampVals^{-2}\|_{i,\ell_{2}} \nonumber\\
		&\hspace{1em}\le \|\noiseMtx^{2} \popVecs\popVecs^{\tp} \sampVecs \sampVals^{-2}\|_{i,\ell_{2}} + \|\noiseMtx^{2} \popVecs_{\perp}\popVecs_{\perp}^{\tp} \sampVecs \sampVals^{-2}\|_{i,\ell_{2}} \nonumber\\
		&\hspace{1em}= \Ohp\Bigg(\left\{\max\left\{\frac{n}{n_{g_i}^{1/2}},\frac{K^{3/4}n^{3/4}}{n_{g_i}^{1/4}} \right\} \cdot n^{\epsP/2} + K^{1/2} n^{\epsP/2} + n_{g_{i}}^{-1/2} \right\} \cdot |\lambda_{K}|^{-2} \\
		&\hspace{6em} +
		\left\{K^{1/2} n^{1+3/4+3\epsP/2}\right\} \cdot |\lambda_{K}|^{-3} \Bigg) \nonumber\\
		&\hspace{1em}= \Ohp\Bigg(\left\{\max\left\{\frac{n}{n_{g_i}^{1/2}},\frac{K^{3/4} \cdot n^{3/4}}{n_{g_i}^{1/4}} \right\} \cdot n^{\epsP/2} \right\} \cdot |\lambda_{K}|^{-2} \label{eq:strong_term2} \\
		&\hspace{6em} +
		\left\{K^{1/2} \cdot n^{1+3/4+3\epsP/2}\right\} \cdot |\lambda_{K}|^{-3} \Bigg). \nonumber
	\end{align}

	For the remaining term in \cref{eq:Esq_resid}, a preliminary bound is given by
	\begin{align*}
		\|\noiseMtx \popVecs \popVecs^{\tp} \noiseMtx \sampVecs \sampVals^{-2}\|_{i,\ell_{2}}
		&\le \|\noiseMtx \popVecs\|_{i,\ell_{2}} \|\noiseMtx \popVecs\|_{\op} \|\sampVals^{-2}\|_{\op} = \Ohp\left(K \cdot n^{1/2+\epsP} \cdot |\lambda_{K}|^{-2}\right).
	\end{align*}
	Alternatively, the above term can be bounded in the manner
	\begin{align*}
		&\|\noiseMtx \popVecs \popVecs^{\tp} \noiseMtx \sampVecs \sampVals^{-2}\|_{i,\ell_{2}}\\
		&\hspace{1em}\le \|\noiseMtx \popVecs \popVecs^{\tp} \noiseMtx \popVecs \popVecs^{\tp} \sampVecs \sampVals^{-2}\|_{i,\ell_{2}} + \|\noiseMtx \popVecs \popVecs^{\tp} \noiseMtx \popVecs_{\perp} \popVecs_{\perp}^{\tp} \sampVecs \sampVals^{-2}\|_{i,\ell_{2}}\\
		&\hspace{1em}\le \|\noiseMtx \popVecs\|_{i,\ell_{2}} \left\{ \|\popVecs^{\tp} \noiseMtx \popVecs\|_{\op} \|\popVecs^{\tp} \sampVecs\|_{\op} + \|\noiseMtx \popVecs\|_{\op} \|\popVecs_{\perp} \popVecs_{\perp}^{\tp} \sampVecs\|_{\op} \right\} \|\sampVals^{-2}\|_{\op}\\
		&\hspace{1em}= \Ohp\left(K^{1/2} \cdot n^{\epsP/2} \cdot \left\{ K \cdot n^{\epsP/2} + K \cdot n^{1+\epsP} \cdot |\lambda_{K}|^{-1} \right\} \cdot |\lambda_{K}|^{-2}\right)\\
		&\hspace{1em}= \Ohp\left(K^{3/2} \cdot n^{1 + 3\epsP/2} \cdot |\lambda_{K}|^{-3}\right).
	\end{align*}
	Hence, for this term,
		\begin{equation}
			\label{eq:strong_term3}
			\|\noiseMtx \popVecs \popVecs^{\tp} \noiseMtx \sampVecs \sampVals^{-2}\|_{i,\ell_{2}}
			= \Ohp\left(\min\left\{K \cdot n^{1/2+\epsP} \cdot |\lambda_{K}|^{-2}, K^{3/2} \cdot n^{1 + 3\epsP/2} \cdot |\lambda_{K}|^{-3}\right\}\right).
		\end{equation}
	Under the current assumptions, $|\lambda_{K}| \ge C n^{3/4+\epsP}$ and $K \le n^{1/2}$, whence it follows that $|\lambda_{K}| \ge C K^{1/2} n^{1/2+\epsP/2}$. Finally, a preliminary analysis of the earlier leading order term yields
	\begin{align}
		&\|\noiseMtx \popVecs \popVals^{-1} W_{\star} + \noiseMtx \popVecs \popVals^{-1}(\popVecs^{\tp} \sampVecs - W_{\star}) - \noiseMtx \popVecs \popVals^{-1}(\popVecs^{\tp} \noiseMtx \sampVecs \sampVals^{-1})\|_{i,\ell_{2}} \nonumber\\
		&\hspace{1em}\le \|\noiseMtx \popVecs\|_{i,\ell_{2}} \cdot \|\popVals^{-1}\|_{\op} \cdot \left\{ 1 + \|\popVecs^{\tp} \sampVecs - W_{\star}\|_{\op} + \|\popVecs^{\tp} \noiseMtx \sampVecs \sampVals^{-1}\|_{\op}\right\} \nonumber\\
		&\hspace{1em}= \Ohp\left(K^{1/2} \cdot n^{\epsP/2} \cdot |\lambda_{K}|^{-1} \cdot \left\{1 + \min\{1, K \cdot n^{1+\epsP} \cdot |\lambda_{K}|^{-2}\} + K^{1/2} \cdot n^{1/2 + \epsP/2} \cdot |\lambda_{K}|^{-1}\right\}\right) \nonumber\\
		&\hspace{1em}= \Ohp\left(K^{1/2} \cdot n^{\epsP/2} \cdot |\lambda_{K}|^{-1} \right). \label{eq:strong_term4}
	\end{align}
	Combining \cref{eq:strong_term1,eq:strong_term2,eq:strong_term3,eq:strong_term4} yields
	\begin{align*}
		\|\sampVecs-\popVecs W_{\star}\|_{i,\ell_{2}}
		&= \Ohp\Bigg( K^{1/2} \cdot n^{\epsP/2} \cdot |\lambda_{K}|^{-1}\\
		&\qquad\qquad + \min\left\{K \cdot n^{1/2+\epsP} \cdot |\lambda_{K}|^{-2}, K^{3/2} \cdot n^{1 + 3\epsP/2} \cdot |\lambda_{K}|^{-3}\right\}\\
		&\qquad\qquad + \max\left\{n_{g_{i}}^{-1/2} \cdot n,K^{3/4} \cdot n_{g_{i}}^{-1/4} \cdot n^{3/4} \right\} \cdot n^{\epsP/2} \cdot |\lambda_{K}|^{-2}\\
		&\qquad\qquad + K^{1/2} \cdot n^{1+3/4+3\epsP/2} \cdot |\lambda_{K}|^{-3}\\
		&\qquad\qquad + K \cdot n_{g_{i}}^{-1/2} \cdot n^{1 +\epsP} \cdot |\lambda_{K}|^{-2}\Bigg).
	\end{align*}
	Rewriting terms gives \cref{result:strong_consistency_general}, as desired.
\end{proof}

\begin{proof}[Proof of \cref{result:strong_consistency_corollary}]
	Since $n_{k}$ is order $n / K$ for each $1 \le k \le K$, the condition $|\lambda_{K}| \ge C n^{3/4+\epsP}$ further implies $K \le C^{\prime} n^{1/4-\epsP}$. We therefore have
	\begin{align*}
		\begin{split}
			&\|\sampVecs-\popVecs W_{\star}\|_{i,\ell_{2}}\\ 
			&\hspace{2em}= \Ohp\Bigg( K^{3/2} \cdot n^{-1+\epsP/2}\\
			&\hspace{6em}+ \min\left\{K^{3} \cdot n^{-3/2+\epsP}, K^{9/2} \cdot n^{-2 + 3\epsP/2}\right\}\\
			&\hspace{6em}+ \left\{\max\left\{K^{1/2}\cdot n^{1/2}, K \cdot n^{1/2} \right\} \cdot K^{2} \cdot n^{-2+\epsP/2} \right\} + \left\{K^{7/2} \cdot n^{-1-1/4+3\epsP/2}\right\}\\
			&\hspace{6em}+ K^{7/2} \cdot n^{-3/2 +\epsP}\Bigg)\\ 
			&\hspace{2em}= \Ohp\Bigg( K^{3/2} \cdot n^{-1+\epsP/2}\\
			&\hspace{6em}+ \min\left\{K^{3} \cdot n^{-3/2+\epsP}, K^{9/2} \cdot n^{-2 + 3\epsP/2}\right\}\\
			&\hspace{6em}+ K^{3} \cdot n^{-3/2 +\epsP/2} + K^{7/2} \cdot n^{-1-1/4+3\epsP/2}\Bigg)\\
			&\hspace{2em}= \Ohp\Bigg( K^{3/2} \cdot n^{-1+\epsP/2}\\
			&\hspace{6em}+ \min\left\{K^{3} \cdot n^{-3/2+\epsP}, K^{9/2} \cdot n^{-2 + 3\epsP/2}\right\}\\
			&\hspace{6em}+ K^{7/2} \cdot n^{-1-1/4+3\epsP/2}\Bigg)\\ 
			&\hspace{2em}= \Ohp\Bigg( K^{3/2} \cdot n^{-1+\epsP/2} + K^{7/2} \cdot n^{-1-1/4+3\epsP/2}\Bigg).
		\end{split}
	\end{align*}
	Consequently,
	\begin{equation*}
		\|\sampVecs-\popVecs W_{\star}\|_{i,\ell_{2}}
		=
		\Ohp\left( K^{3/2} \cdot n^{-1+\epsP/2} \cdot \left\{1 + K^{2} \cdot n^{-1/4+\epsP}\right\}\right),
	\end{equation*}
	and
	\begin{equation*}
		n_{g_{i}}^{1/2}\|\sampVecs-\popVecs W_{\star}\|_{i,\ell_{2}}
		=
		\Ohp\left( K \cdot n^{-1/2+\epsP/2} \cdot \left\{1 + K^{2} \cdot n^{-1/4+\epsP}\right\}\right)
		=
		\ohp(1).
	\end{equation*}
	This establishes \cref{result:strong_consistency_corollary}.
\end{proof}

A further refinement of the leading order term in the above proof is possible. In fact, it will be needed to establish asymptotic normality. Namely, we shall characterize $\noiseMtx \popVecs \popVals^{-1} W_{\star}$ and $\|\noiseMtx \popVecs \popVals^{-1}\|_{i,\ell_{2}} = \|\noiseMtx \popVecs \popVals^{-1} W_{\star}\|_{i,\ell_{2}}$ appearing in
\begin{equation*}
	\|\noiseMtx \popVecs \popVals^{-1} W_{\star} + \noiseMtx \popVecs \popVals^{-1}(\popVecs^{\tp} \sampVecs - W_{\star}) - \noiseMtx \popVecs \popVals^{-1}(\popVecs^{\tp} \noiseMtx \sampVecs \sampVals^{-1})\|_{i,\ell_{2}}
	=
	\|\noiseMtx \popVecs \popVals^{-1} W_{\star}\|_{i,\ell_{2}} \cdot (1 + \ohp(1)).
\end{equation*}

\begin{lemma}
	\label[lemma]{result:bound_EULambdaInvWstar_rownorm}
		Let $\epsP > 0$. For any choice of positive integers $K \ge 1$, $n_{k} \ge 1$ for all $1 \le k \le K$, and $n = \sum_{k=1}^{K} n_{k}$ with $\rank(\rankspec)=K$, it holds for each fixed row index $i$ that
	\begin{equation*}
		\Pr\left[\|\noiseMtx \popVecs \popVals^{-1}\|_{i,\ell_{2}}
		>
		\left\{ n_{\min}^{-1} + \sqrt{\sum_{j=1}^{K} n_{j}^{-1}} \cdot n^{\epsP/2} \right\} \cdot \|\rankspec^{-1}\|_{\op} \cdot n_{\min}^{-1/2} \right]
		\le
		8 n^{-\epsP}.
	\end{equation*}
\end{lemma}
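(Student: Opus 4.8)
The plan is to reduce $\|\noiseMtx \popVecs \popVals^{-1}\|_{i,\ell_{2}}$ to an expression of exactly the type already controlled in \cref{result:bound_EU_rowvec}, the only new feature being the extra factor $\popVals^{-1}$, which I will rewrite in terms of $\rankspec^{-1}$ and $\Delta$. First I would record the structural identity for the population eigenvectors. Since $\popMtx = \rankspecBig = (\Theta\Delta^{-1})(\Delta\rankspec\Delta)(\Theta\Delta^{-1})^{\tp}$ with $\Theta\Delta^{-1}$ having orthonormal columns, writing the $K \times K$ eigendecomposition $\Delta\rankspec\Delta = V\popVals V^{\tp}$ with $V$ orthogonal yields $\popVecs = \Theta\Delta^{-1}V$. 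Hence $\popVecs\popVals^{-1} = \Theta\Delta^{-1}V\popVals^{-1}$, and using $V\popVals^{-1} = (\Delta\rankspec\Delta)^{-1}V = \Delta^{-1}\rankspec^{-1}\Delta^{-1}V$ gives the clean factorization
\[
	\noiseMtx\popVecs\popVals^{-1} = \noiseMtx\,\Theta\Delta^{-2}\rankspec^{-1}\Delta^{-1}V.
\]
Because right multiplication by the orthogonal matrix $V$ preserves the Euclidean norm of every row, the target row norm equals $\|e_{i}^{\tp}\noiseMtx\Theta\Delta^{-2}\rankspec^{-1}\Delta^{-1}\|_{\ell_{2}}$.

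Next I would peel off the deterministic operator-norm factor through $\|e_{i}^{\tp}\noiseMtx\Theta\Delta^{-2}\rankspec^{-1}\Delta^{-1}\|_{\ell_{2}} \le \|e_{i}^{\tp}\noiseMtx\Theta\Delta^{-2}\|_{\ell_{2}} \cdot \|\rankspec^{-1}\Delta^{-1}\|_{\op}$, and bound $\|\rankspec^{-1}\Delta^{-1}\|_{\op} \le \|\rankspec^{-1}\|_{\op}\|\Delta^{-1}\|_{\op} = \|\rankspec^{-1}\|_{\op}\cdot n_{\min}^{-1/2}$, which accounts precisely for the two rightmost factors in the claimed bound. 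It then remains to show that $\|e_{i}^{\tp}\noiseMtx\Theta\Delta^{-2}\|_{\ell_{2}} \le n_{\min}^{-1} + (\sum_{j=1}^{K}n_{j}^{-1})^{1/2}n^{\epsP/2}$ with probability at least $1 - 8n^{-\epsP}$.

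For this last bound I would mirror the argument in \cref{result:bound_EU_rowvec}, splitting $\noiseMtx = (\nrkmtx - \Ex[\nrkmtx]) - \diag(\rankspecBig)$. The diagonal contribution is handled deterministically: $e_{i}^{\tp}\diag(\rankspecBig)\Theta\Delta^{-2} = \rankspec_{g_{i}g_{i}}n_{g_{i}}^{-1}e_{g_{i}}^{\tp}$, whose $\ell_{2}$ norm is $|\rankspec_{g_{i}g_{i}}|n_{g_{i}}^{-1} \le n_{\min}^{-1}$ since $\rankspec \in [0,1]^{K \times K}$. For the centered-ranks contribution, the column-$j$ entry of $e_{i}^{\tp}(\nrkmtx - \Ex[\nrkmtx])\Theta\Delta^{-2}$ equals $n_{j}^{-1/2}\xi_{ij}$, where $\xi_{ij} = n_{j}^{-1/2}\sum_{\ell:g_{\ell}=j}(\nrkmtx - \Ex[\nrkmtx])_{i\ell}$ is exactly the centered coordinate variable of \cref{result:bound_EU_rowvec}, so that $\|e_{i}^{\tp}(\nrkmtx - \Ex[\nrkmtx])\Theta\Delta^{-2}\|_{\ell_{2}}^{2} = \sum_{j=1}^{K} n_{j}^{-1}\xi_{ij}^{2}$. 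Reusing the second-moment computation there — which rests on the uniform variance and covariance bounds flowing from \cref{prop:ranks_multi_variance,prop:ranks_multi_covariance}, together with $(\nrkmtx)_{ii} = 0$ contributing nothing — gives $\Ex[\xi_{ij}^{2}] \le \tfrac{n_{j}}{n_{j}} + \tfrac{7n_{j}^{2}}{n_{j}N} \le 8$, hence $\sum_{j=1}^{K} n_{j}^{-1}\Ex[\xi_{ij}^{2}] \le 8\sum_{j=1}^{K}n_{j}^{-1}$. Markov's inequality with threshold $(\sum_{j}n_{j}^{-1})n^{\epsP}$ then yields a tail of at most $8n^{-\epsP}$, giving $\|e_{i}^{\tp}(\nrkmtx - \Ex[\nrkmtx])\Theta\Delta^{-2}\|_{\ell_{2}} \le (\sum_{j}n_{j}^{-1})^{1/2}n^{\epsP/2}$ off a bad event. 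Combining the two contributions by the triangle inequality and reinstating the $\|\rankspec^{-1}\|_{\op}n_{\min}^{-1/2}$ factor completes the proof.

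I expect the only genuinely nonroutine step to be the eigenvector factorization of $\popVecs\popVals^{-1}$ together with the exponent bookkeeping: the point is to recognize that the additional $\popVals^{-1}$ surfaces as precisely one extra power of $\Delta^{-1}$ relative to \cref{result:bound_EU_rowvec}, which is exactly what converts the uniform $\Oh(1)$ second-moment bound there into the $\Oh(n_{j}^{-1})$ bound needed here. The probabilistic portion is essentially identical to the earlier row-norm lemmas, so no new concentration machinery is required.
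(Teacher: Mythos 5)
Your proposal is correct and follows essentially the same route as the paper's proof: the same key factorization $\noiseMtx\popVecs\popVals^{-1} = \noiseMtx\,\Theta\Delta^{-2}\rankspec^{-1}\Delta^{-1}\cdot(\textnormal{orthogonal factor})$ (which the paper reaches via the Moore--Penrose inverse of $\rankspecBig$ rather than your eigendecomposition of $\Delta\rankspec\Delta$, a purely cosmetic difference), the same split of $\noiseMtx$ into centered ranks plus $\diag(\rankspecBig)$, and the same second-moment-plus-Markov argument yielding $\Ex[\xi_{ij}^{2}] \le 8/n_{j}$ and the $8n^{-\epsP}$ tail. Your handling of the exponent bookkeeping, the $\|\rankspec^{-1}\Delta^{-1}\|_{\op} \le \|\rankspec^{-1}\|_{\op}\,n_{\min}^{-1/2}$ peel-off, and the right-orthogonal invariance of $\|\cdot\|_{i,\ell_{2}}$ all match the paper exactly.
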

\begin{proof}[Proof of~\cref{result:bound_EULambdaInvWstar_rownorm}]
	Let $\popMtx^{\dagger}$ denote the necessarily unique Moore--Penrose inverse of $\popMtx$, given by
	\begin{equation*}
		\popMtx^{\dagger}
		=
		(\Theta \rankspec \Theta^{\tp})^{\dagger}
		=
		\Theta (\Theta^{\tp} \Theta)^{-1} \rankspec^{-1} (\Theta^{\tp}\Theta)^{-1} \Theta^{\tp},
	\end{equation*}
	which when squared equals
	\begin{equation*}
		(\popMtx^{\dagger})^{2}
		=
		\Theta (\Theta^{\tp} \Theta)^{-1} \rankspec^{-1} (\Theta^{\tp}\Theta)^{-1} \rankspec^{-1} (\Theta^{\tp} \Theta)^{-1} \Theta^{\tp}.
	\end{equation*}
	The preceding matrix is positive semi-definite, hence there exists an orthogonal matrix $W_{\Theta}$ such that
	\begin{align*}
		\noiseMtx \popVecs \popVals^{-1} W_{\star}
		&=
		\noiseMtx \cdot \left\{ \popVecs \popVals^{-1} W_{\Theta}\right\} \cdot(W_{\Theta}^{\tp} W_{\star})\\
		&=
		\noiseMtx \cdot \left\{ \Theta (\Theta^{\tp} \Theta)^{-1} \rankspec^{-1} (\Theta^{\tp}\Theta)^{-1/2} \right\} \cdot (W_{\Theta}^{\tp} W_{\star}).
	\end{align*}
	Here, the treatment of orthogonal transformations is crucial but does not modify the $\ell_{2}$ row norm values.
	
	From the above expansion, it holds by deterministic properties of norms that
	\begin{align*}
		&\|\noiseMtx \popVecs \popVals^{-1} W_{\star}\|_{i,\ell_{2}}\\
		&\hspace{1em}= \left\|\noiseMtx \cdot \left\{ \Theta (\Theta^{\tp} \Theta)^{-1} \rankspec^{-1} (\Theta^{\tp}\Theta)^{-1/2} \right\}\right\|_{i,\ell_{2}}\\
		&\hspace{1em}= \left\|\left( (\nrkmtx - \Ex[\nrkmtx]) - \diag(\rankspecBig) \right) \cdot \left\{ \Theta \Delta^{-2} \rankspec^{-1} \Delta^{-1} \right\}\right\|_{i,\ell_{2}}\\
		&\hspace{1em}\le \left\{\|(\nrkmtx - \Ex[\nrkmtx]) \Theta \Delta^{-2}\|_{i,\ell_{2}} + n_{\min}^{-1} \right\} \cdot \|\rankspec^{-1} \Delta^{-1}\|_{\op}.
	\end{align*}
	Furthermore,
	\begin{align*}
		&\|(\nrkmtx - \Ex[\nrkmtx]) \Theta \Delta^{-2}\|_{i,\ell_{2}}^{2}\\
		&\hspace{1em}= \sum_{1 \le j \le K} \left(\sum_{\ell:g_{\ell}=j} (\nrkmtx 	- \Ex[\nrkmtx])_{i\ell} \cdot n_{j}^{-1}\right)^{2}\\
		&\hspace{1em}\eqqcolon \sum_{1 \le j \le K} \xi_{ij}^{2}.
	\end{align*}
	For each value of $j$, invoking \cref{prop:ranks_multi_variance,prop:ranks_multi_covariance} yields
	\begin{equation*}
		\Ex[\xi_{ij}^{2}]
		=
		\Var[\xi_{ij}]
		\le
		\frac{n_{j}}{n_{j}^{2}} + \frac{n_{j}^{2}}{n_{j}^{2}} \cdot \frac{7}{N}
		\le
		\frac{8}{n_{j}}.
	\end{equation*}
	Consequently, for $\epsP > 0$, by Markov's inequality,
	\begin{equation*}
		\Pr\left[\|(\nrkmtx - \Ex[\nrkmtx]) \Theta \Delta^{-2}\|_{i,\ell_{2}}
		> \sqrt{\sum_{j=1}^{K} n_{j}^{-1}} \cdot n^{\epsP/2}\right]
		\le
		8 n^{-\epsP}.
	\end{equation*}
	Combining the above observations yields
	\begin{equation*}
		\Pr\left[\|\noiseMtx \popVecs \popVals^{-1} W_{\star}\|_{i,\ell_{2}} > 	\left\{ n_{\min}^{-1} + \sqrt{\sum_{j=1}^{K} n_{j}^{-1}} \cdot n^{\epsP/2} \right\} \cdot \|\rankspec^{-1}\|_{\op} \cdot n_{\min}^{-1/2}\right]
		\le
		8 n^{-\epsP}.
	\end{equation*}
	This concludes the proof of \cref{result:bound_EULambdaInvWstar_rownorm} since $\|\cdot\|_{i,\ell_{2}}$ is right-orthogonal invariant.
\end{proof}

\subsection{Proofs: asymptotic normality}
\label{app:asymptotic_normality}
We divide the proof of our theorem into multiple parts.

\begin{proof}[Proof of \cref{lem:asymp_part1}]
	Assume the hypotheses and fix a tuple $(\alpha,\beta) \in \Nset{n} \times \Nset{K}$. Define the associated random variable
	\begin{equation}
		\label{eq:CLT:temp:rv1}
		\xi_{\alpha,\beta}
		=
		(\nrkmtx \Theta)_{\alpha,\beta}
		=
		\sum_{\ell=1}^{n} (\nrkmtx)_{\alpha, \ell} \Theta_{\ell, \beta}
		=
		\sum_{\ell : g_{\ell} = \beta} (\nrkmtx)_{\alpha,\ell}.
	\end{equation}
	It will be convenient to work with the following intermediary notation for converting properties of the vector $(X_{1}, \dots, X_{N})$ into matrix form. Specifically, define the hollow symmetric $n \times n$ matrix $Y = (Y_{ij})$ as $Y = \vech_{0}^{-1}(\{X_{i}\}_{i=1}^{N})$, formed by the inverse half-vectorization operation taking the upper-triangular elements by row and ignoring the main diagonal. Define the hollow symmetric matrix $\Upsilon = (\upsilon_{ij}) \in \{0,1\}^{n \times n}$ according to the binary indicator function $1\{\cdot\}$ as $\upsilon_{ij} = 1\{i=\alpha, g_{j}=\beta\}$, hence $\Upsilon$ depends on $\alpha$ and $\beta$. Further, let $c = (c_{1}, \dots, c_{N})^{\tp} = \vech_{0}(\Upsilon)$, again defined in terms of the upper-triangular entries excluding the diagonal, hence $c_{\iota} \in \{0,1\}$ for all $1 \le \iota \le N$. Define $a_{N}(\iota) = \varphi(\iota/(N+1))$ with $\varphi(t) = t$ for $0 < t < 1$.
	
	Per \cite{hajek1968asymptotic}, $c_{1}, \dots, c_{N}$ denote the \emphh{regression constants}, while $a_{N}(1),\dots,a_{N}(N)$ denote the \emphh{scores} generated by $\varphi$. We seek to verify the asymptotic normality of the simple linear rank statistic \citep[Equation 2.3]{hajek1968asymptotic} given by \cref{eq:CLT:temp:rv1}, equivalently written as
	\begin{equation}
		\xi_{\alpha,\beta}
		=
		\sum_{1 \le i, j \le n} \upsilon_{ij} \cdot (\nrk{A})_{ij}
		=
		\sum_{\iota = 1}^{N} c_{\iota} \cdot a_{N}(R_{\iota}),
	\end{equation}
	where $R_{\iota}$ denotes the rank associated with the random variable $X_{\iota}$ having distribution function $F_{\iota}$.
	
	As in \citet[Equation 2.6]{hajek1968asymptotic}, define $\overline{c}$ and $H(y)$, respectively, as
	\begin{equation}
		\overline{c} 
		=
		\frac{1}{N}\sum_{\iota=1}^{N} c_{\iota}
		\asymp
		\frac{n_{\beta}}{N},
		\quad \quad \quad
		H(y)
		=
		\frac{1}{N} \sum_{\iota=1}^{N} F_{\iota}(y)
		=
		\sum_{m^{\prime} \in \Nset{m}} \frac{N_{m^{\prime}}}{N} F_{m^{\prime}}(y),
	\end{equation}
	where the second expression (property) holds for the present setting.
	
	Fix $\epsilon > 0$. For $n$ sufficiently large (hence $N$ sufficiently large), the following deterministic relationship holds for some $K_{\epsilon} > 0$.
	\begin{align*}
		\Var[\xi_{\alpha,\beta}]
		&=
		\sum_{\ell:g_{\ell}=\beta} \Var[(\nrkmtx)_{\alpha,\ell}]
		+
		\sum_{1 \le \ell < \ell^{\prime} \le n: g_{\ell} = \beta, g_{\ell^{\prime}}=\beta} 2 \cdot \Cov[(\nrkmtx)_{\alpha,\ell},(\nrkmtx)_{\alpha,\ell^{\prime}}]\\
		&\ge
		C^{\prime} n_{\beta} \cdot \delta_{\operatorname{Var}}
		-
		C^{\prime\prime} \cdot\frac{n_{\beta}^{2}}{N}\\
		&\ge
		C^{\prime\prime\prime} n_{\beta}\\
		&\gg
		K_{\epsilon} \max_{1 \le \iota \le N}(c_{\iota} - \bar{c})^{2}.
	\end{align*}
	The first line holds by the law of total variance. The second line holds by invoking a uniform lower bound on the (co)variances and $\liminf_{n\rightarrow\infty}\min_{i,j}\Var[(\nrk{A})_{ij}] \ge \delta_{\operatorname{Var}} > 0$ and $n_{\beta} \rightarrow \infty$. The third line holds provided $n$ is sufficiently large. For the fourth line, in the language of \citet[Equation 5.5]{hajek1968asymptotic},
	\begin{equation*}
		K_{\epsilon}
		=
		\left[2 \cdot \delta_{\epsilon}^{-1} \cdot 1 + (2 \cdot \epsilon^{-1/2} \cdot \beta_{\epsilon}^{-1} + 1) \sqrt{39} \right]^{2},
		\quad \quad
		0 <
		\max_{1\le \iota \le N} (c_{\iota} - \overline{c})^{2}
		<
		1,
	\end{equation*}
	using the fact that $\delta \equiv \delta_{\epsilon} > 0$ is chosen according to an application of the Lindeberg theorem \citep[Page 340]{hajek1968asymptotic}, that $\sup_{0 < t < 1}|\varphi^{\prime}(t)| = 1$, that $\beta \equiv \beta_{\epsilon} > 0$ is governed by the cumulative distribution function of the standard normal distribution \citep[Page 340]{hajek1968asymptotic}, and $M^{1/2} = \sqrt{39}$ where $M = 2K + 36 K_{2}^{2}$, $K = K_{1} + \frac{1}{2}K_{2}$, and $K_{i}$ denotes an upper bound on the squared $i$-th derivative of $\varphi$, with $K_{1} = K_{2} = 1$.
	
	Thus, invoking \citet[Theorem 2.1]{hajek1968asymptotic} for the aforementioned arbitrary choice $\epsilon > 0$ yields asymptotic normality of the form
	\begin{equation}
		\label{eq:Hajek_normality}
		\max_{x \in \R} \left| \Pr\left[\frac{\xi_{\alpha,\beta} - \Ex[\xi_{\alpha,\beta}]}{\sqrt{\Var[\xi_{\alpha,\beta}]}} \le x\right] - \Phi_{1}(x)\right| < \epsilon,
	\end{equation}
	where $\Phi_{1}$ denotes the cumulative distribution function of the standard normal distribution in $\R$. This concludes the proof of \cref{lem:asymp_part1} modulo a change of notation.
\end{proof}

\begin{proof}[Proof of \cref{lem:asymp_part2}]
	Under the stated assumptions, the argument in the proof of \cref{lem:asymp_part1} holds for each choice of $\beta \in \Nset{K}$ with index $\alpha$ held fixed. Now, consider the $K$-dimensional random vector $\xi_{\alpha} \equiv \xi_{\alpha}^{(n)} = (\xi_{\alpha,1}^{(n)}, \dots, \xi_{\alpha,K}^{(n)})^{\tp}$. Let $a \in \R^{K}$ be any specified non-zero (say unit norm) vector of constants. The random variable $a^{\tp} \xi_{\alpha}$ is itself a linear combination of simple linear rank statistics under so-called multi-sample (block) alternatives, so by another application of \citet[Theorem 2.1]{hajek1968asymptotic} and the above proof approach for \cref{eq:Hajek_normality}, we obtain asymptotic (univariate) normality of $a^{\tp} \xi_{\alpha}$ after standardization. An application of the Cram\'{e}r--Wold theorem \citep[Theorem 29.4]{billingsley2012probability} yields the asymptotic multivariate normality of $\xi_{\alpha}$ after standardization, namely
	\begin{equation*}
		\max_{x \in \R^{K}} \left| \Pr\left[ \Cov(\xi_{\alpha})^{-1/2}(\xi_{\alpha} - \Ex[\xi_{\alpha}]) \le x \right] - \Phi_{K}(x)\right| < \epsilon.
	\end{equation*}
	This concludes the proof of \cref{lem:asymp_part2}. 
\end{proof}

\begin{center}
	$(\star)$
\end{center}

We are now in a position to prove \cref{thrm:asymp_norm}, the main asymptotic normality result.
\begin{proof}[Proof of \cref{thrm:asymp_norm}]
	\newcommand{\resid}{E^{\prime}}
	From the proof of \cref{result:strong_consistency_general}, there exists a random matrix $\resid \in \R^{n \times K}$ such that with probability one, \begin{equation*}
		\sampVecs - \popVecs W_{\star}
		=
		\noiseMtx \popVecs \popVals^{-1} W_{\star} + \resid.
	\end{equation*}
	Here, $K$ is fixed, $|\lambda_{K}|$ is order $n/K$, and $\|\resid\|_{i,\ell_{2}} = \ohp(n^{-1})$ for each row index $i$.
	
	From the proof of \cref{result:bound_EULambdaInvWstar_rownorm}, there exist orthogonal matrices $W_{\star}$ and $W_{\Theta}$, each depending on $n$, such that
	\begin{align*}
		\noiseMtx \popVecs \popVals^{-1} W_{\star}
		&= \left\{\noiseMtx \Theta\right\} \cdot \left\{ (\Theta^{\tp} \Theta)^{-1} \rankspec^{-1} (\Theta^{\tp}\Theta)^{-1/2} \right\} \cdot (W_{\Theta}^{\tp} W_{\star}).
	\end{align*}
	Write $W^{\tp} = W_{\Theta}^{\tp}W_{\star}$. Recall that $\noiseMtx = (\nrkmtx - \Ex[\nrkmtx]) - \diag(\rankspecBig)$, whence these observations together yield
	\begin{align*}
		&\left(n\left(\sampVecs - \popVecs W_{\star}\right) W\right)_{\operatorname{row} i}\\
		&\hspace{1em}= \left(\left\{n^{-1/2}(\nrk{A} - \Ex[\nrk{A}]) \Theta\right\} \cdot \left\{ n(\Theta^{\tp} \Theta)^{-1} \cdot \rankspec^{-1} \cdot n^{1/2}(\Theta^{\tp}\Theta)^{-1/2} \right\}\right)_{\operatorname{row} i} + \ohp(1).
	\end{align*}
	By hypothesis,
	\begin{align*}
		\Gamma^{(k)}
		&=
		\lim_{n \rightarrow\infty} \Cov\left[n^{-1/2}((\nrk{A} - \Ex[\nrk{A}])\Theta)_{\operatorname{row} i} : g_{i}=k\right],\\
		\Xi
		&=
		\lim_{n \rightarrow\infty} \left\{n^{1/2}(\Theta^{\tp}\Theta)^{-1/2} \right\} \cdot \rankspec^{-1} \cdot \left\{n(\Theta^{\tp} \Theta)^{-1}\right\},
	\end{align*}
	where each limiting matrix has full rank. Finally, by an application of \cref{lem:asymp_part2} and Slutsky's theorem, the sequence of row vectors of the form
	\begin{equation*}
		\left(n\left(\sampVecs - \popVecs W_{\star}\right) W\right)_{\operatorname{row} i} : g_{i}=k
	\end{equation*}
	converges in distribution to a $K$-dimensional multivariate normal random vector with mean zero and covariance matrix $\Xi \cdot \Gamma^{(k)} \cdot \Xi^{\tp}$. This concludes the proof of \cref{thrm:asymp_norm}.
\end{proof}

\subsection{Proofs: dimension selection}
\label{app:additional_proofs}
\begin{proof}[Proof of \cref{lem:selectK}]
	By the proof of \cref{result:operator_norm_concentration}, for $n$ sufficiently large it holds that
	\begin{equation*}
		\Pr\left[\|\nrk{A} - \rankspecBig\|_{\op} > 4n^{3/4+\epsP}\right]
		\le
		n^{-\epsP}.
	\end{equation*}
	Furthermore, by the proof of \cref{result:corollary_weak_consistency} it holds that
	\begin{equation*}
		|\lambda_{K}(\rankspecBig)|
		\ge
		n_{\min} \cdot |\lambda_{K}(\rankspec)|
		>
		0.
	\end{equation*}
	Weyl's inequality \citep[Theorem VIII.4.8]{bhatia2013matrix} guarantees that for each $i \in \Nset{n}$,
	\begin{equation*}
		\left| |\lambda_{i}(\nrk{A})| - |\lambda_{i}(\rankspecBig)|\right|
		\le
		\|\nrk{A} - \rankspecBig\|_{\op}.
	\end{equation*}
	Hence, for each $k \in \Nset{K}$ and $n$ large, it holds with probability at least $1-n^{-\epsP}$ that
	\begin{equation*}
		|\lambda_{k}(\nrk{A})|
		\ge
		|\lambda_{k}(\rankspecBig)| - \|\nrk{A} - \rankspecBig\|_{\op}
		\ge
		n_{\min} \delta - 4n^{3/4+\epsP}
		\gg
		4n^{3/4+\epsP}.
	\end{equation*}
	Similarly, for each $i \ge 1$ and $n$ sufficiently large, we have $|\lambda_{K+i}(\nrk{A})| > 0 + 4n^{3/4+\epsP}$ with probability at most $n^{-\epsP}$. Taking limits with $n\rightarrow\infty$ establishes \cref{lem:selectK} by the squeeze principle.
\end{proof}

\subsection{Proofs: properties of rank statistics}
\label{app:rank_statistic_proofs}
This section begins by establishing notation and several preliminary observations.

Let $X, Y, Z$ be independent absolutely continuous random variables with cumulative distribution functions $F_{X}, F_{Y}, F_{Z}$ and densities $f_{X}, f_{Y}, f_{Z}$. A basic but important observation is that
\begin{equation*}
	\Pr_{(X,Y)}[X \le Y]
	=
	\Ex_{Y}[F_{X}(Y)].
\end{equation*}
Let $g_{(X,Y)} : \R \rightarrow \R$ be the deterministic map $g_{(X,Y)}(z) = \int_{-\infty}^{z}F_{X}(y)f_{Y}(y)\,\textsf{d}y$. One can show that
\begin{equation}
	\Pr_{(X,Y,Z)}[X \le Y \le Z]
	=
	\Ex_{Z}[g_{(X,Y)}(Z)].
\end{equation}
Also, keep in mind that
$\Ex_{X}[F_{X}(X)] = \tfrac{1}{2}$
and
$\Ex_{X}[F_{X}^{2}(X)]=\tfrac{1}{3}$.

As before, let $\{X_{i}\}_{i \in \Nset{N}}$ be a collection of $N$ independent random variables, and let $\{f_{\ell}\}_{\ell \in \Nset{\numSample}}$ be a collection of $\numSample$ density functions, where $\{N_{\ell}\}_{\ell=1}^{\numSample}$ is a deterministic collection of integer counts satisfying $N_{\ell} = |\{i : X_{i} \sim f_{\ell}\}|$ with $ \sum_{\ell=1}^{\numSample} N_{\ell} = N$. Let $F_{\ell}$ denote the cumulative distribution function associated with density $f_{\ell}$, and as before let $\Ex_{\ell}[X_{i}]$ denote taking expectation when $X_{i} \sim f_{\ell}$, at times also written as $X_{i} \sim F_{\ell}$.

Let $H : \R \rightarrow \{0, \tfrac{1}{2}, 1\}$ denote the Heaviside step function using the half-maximum convention, namely
\begin{equation*}
	H(x)
	=
	\begin{cases}
		1 & \text{if } x>0,\\
		\tfrac{1}{2} & \text{if } x=0,\\
		0 & \text{if } x<0.
	\end{cases}
\end{equation*}

\begin{proof}[Proof of \cref{prop:ranks_one_sample}]	
	The stated result follows directly from numerous textbooks and published papers. For example, see Example~2.1 in \cite{viana2001covariance}.
\end{proof}

\begin{proof}[Proof of \cref{prop:ranks_multi_expectation}]
	Let the random variable $\intermRank$ denote the normalized rank value of $X_{i}$ among $X_{1}, \dots, X_{N}$, such that the possible outcomes of $\intermRank$ are given by $\left\{\frac{1}{N}, \frac{2}{N}, \dots, 1\right\}$. Conveniently, $\intermRank$ is expressible in the form
	\begin{equation}
		\label{eq:nnrank}
		\intermRank
		=
		\frac{1}{N}\sum_{i^{\prime} \in \Nset{N}} H(X_{i} - X_{i^{\prime}}) + \frac{1}{2N}.
	\end{equation}
	Above, the term $\tfrac{1}{2N}$ is a consequence of $H(0) = \frac{1}{2}$ when $i^{\prime} = i$ and having specified that $\intermRank \in \left\{\frac{1}{N}, \tfrac{2}{N}, \dots, 1\right\}$.
	
	For $k \in \Nset{m}$, it follows from the above display equation that
	\begin{align*}
		\Ex_{k}\left[\intermRank\right]
		=
		\sum_{\ell \in \Nset{\numSample}}\frac{N_{\ell}}{N}\times\Ex_{{k}}[F_{{\ell}}(X)]
		+
		\frac{1}{2N}.
	\end{align*}
	Finally, replace $\ell$ with $\ell^{\prime}$ and $k$ with $\ell$ in the index notation. This establishes \cref{prop:ranks_multi_expectation}.
\end{proof}

\begin{proof}[Proof of \cref{prop:ranks_multi_variance}]	
	We pursue an explicit variance calculation. Towards this end, consider
	\begin{equation*}
		\frac{1}{N^{2}} \times \Ex_{k}\left[\sum_{i^{\prime}, j^{\prime} \in \Nset{N}} H(X_{i}-X_{i^{\prime}}) \cdot H(X_{i}-X_{j^{\prime}})\right], \quad X_{i} \sim F_{k}.
	\end{equation*}
	Applying linearity of expectation, we shall count and characterize the various enumerations of $\Ex_{k}[H(X_{i} - X_{i^{\prime}}) \cdot H(X_{i} - X_{j^{\prime}})]$. Here, $X_{i} \sim F_{k}$ is a representative of $N_{k}$ i.i.d.~random variables, whereas $X_{i^{\prime}}$ and $X_{j^{\prime}}$ range over all $N$ variables and $\numSample$ distributions.

\begin{enumerate}
	\item \label{case:var_pf_1} For $i = i^{\prime} = j^{\prime}$, there is one term of the form
	\begin{equation*}
		\Ex_{k}\left[H(X_{i}-X_{i^{\prime}})H(X_{i}-X_{j^{\prime}})\right]
		=
		\Ex_{k}[H^{2}(0)]
		=
		\tfrac{1}{4}.
	\end{equation*}
	
	\item \label{case:var_pf_2} For $i = i^{\prime}, i^{\prime} \neq j^{\prime}$ where $X_{j^{\prime}}\sim F_{\ell}$, there are, by symmetry, $2N - 2$ terms of the form
	\begin{equation*}
		\Ex_{k}\left[H(X_{i}-X_{i^{\prime}})H(X_{i}-X_{j^{\prime}})\right]
		=
		\Ex_{k}\left[H(0)H(X_{i}-X_{j^{\prime}})\right]
		=
		\tfrac{1}{2}\Ex_{{k}}[F_{{\ell}}(X)].
	\end{equation*}
	
	\item \label{case:var_pf_3} For $i \neq i^{\prime}, i^{\prime}=j^{\prime}$ with $X_{i^{\prime}}\sim F_{\ell}$, there are $N-1$ terms of the form 
	\begin{equation*}
		\Ex_{k}\left[H(X_{i}-X_{i^{\prime}})H(X_{i}-X_{j^{\prime}})\right]
		=
		\Ex_{k}\left[H^{2}(X_{i}-X_{i^{\prime}})\right]
		=
		\Ex_{{k}}[F_{{\ell}}(X)].
	\end{equation*}
	
	\item \label{case:var_pf_4} For distinct triples $\{i,i^{\prime}, j^{\prime}\}$ where $X_{i^{\prime}} \sim F_{\ell}$ and $X_{j^{\prime}} \sim F_{m}$, there are $N^{2} - 3N + 2$ terms of the form
	\begin{align*}
		\Ex_{k}\left[H(X_{i}-X_{i^{\prime}})H(X_{i}-X_{j^{\prime}})\right]
		&= \Prob[\{X_{i^{\prime}} < X_{i}\} \cap \{X_{j^{\prime}} < X_{i}\}]\\
		&= \Prob[\max(X_{i^{\prime}},X_{j^{\prime}}) < X_{i}]\\
		&= \Ex_{{k}}[F_{{\ell}}(X)F_{{m}}(X)].
	\end{align*}
	Importantly, when $\ell=\numSample=k$, then $\Ex_{{k}}[F_{{k}}(X)F_{k}(X)]
	=\Ex_{{k}}[F_{{k}}^{2}(X)]
	=\frac{1}{3}$.
\end{enumerate}
\cref{case:var_pf_1} above contributes the (scaled) term $\frac{1}{4}\frac{1}{N^{2}}$. For each of \cref{case:var_pf_2} and \cref{case:var_pf_3} above, scaling and aggregating yields $\frac{1}{N^{2}}\times\left((N_{k}-1)\frac{1}{2}+\sum_{\ell \neq k}N_{\ell}\Ex_{k}[F_{\ell}(X)]\right)$. In total, these three cases contribute
\begin{equation}
	\frac{2}{N}\sum_{\ell \in \Nset{\numSample}}\frac{N_{\ell}}{N}\Ex_{k}[F_{\ell}(X)]
	- \frac{3}{4}\frac{1}{N^{2}}.
\end{equation}
\cref{case:var_pf_4} above, the primary term of interest, contributes
\begin{align*}
	&\frac{1}{N^{2}} \times \Bigg((N_{k}-1)(N_{k}-2)\Ex_{k}[F_{k}^{2}(X)]
	+ 2\sum_{\ell \neq k}(N_{k}-1)N_{\ell}\Ex_{k}[F_{k}(X)F_{\ell}(X)]\\
	&\hspace{2em}+ \sum_{\ell \neq k, \ell^{\prime} \neq k, \ell \neq \ell^{\prime}} N_{\ell}N_{\ell^{\prime}}\Ex_{k}[F_{\ell}(X)F_{\ell^{\prime}}(X)]
	+ \sum_{\ell \neq k, \ell^{\prime} \neq k, \ell = \ell^{\prime}} N_{\ell}(N_{\ell^{\prime}}-1)\Ex_{k}[F_{\ell}(X)F_{\ell^{\prime}}(X)] \Bigg).
\end{align*}
The above expression can be viewed as an incomplete expansion of a quadratic form. To complete the above expansion, we need to invoke a plus-zero trick with
\begin{equation}
	\pm\frac{1}{N^{2}}\left\{\left((3N_{k}-2)\frac{1}{3}\right)
	+ \left(2\sum_{\ell \neq k}N_{\ell}\Ex_{k}[F_{k}(X)F_{\ell}(X)]\right)
	+ \left(\sum_{\ell\neq k}N_{\ell} \Ex_{k}[F_{\ell}^{2}(X)]\right) \right\}.
\end{equation}
The ``plus'' part of the zero trick yields the completed quadratic form
\begin{equation}
	\sum_{\ell, \ell^{\prime} \in \Nset{m}}\frac{N_{\ell}}{N}\frac{N_{\ell^{\prime}}}{N} \times \Ex_{k}[F_{\ell}(X)F_{\ell^{\prime}}(X)].
\end{equation}
The variance computation is nearly complete. Observe that the squared expectation of $\intermRank$ is given by
\begin{equation*}
	\left(\Ex_{k}[\intermRank]\right)^{2}
	= \left(\sum_{\ell \in \Nset{m}}\frac{N_{\ell}}{N}\times\Ex_{{k}}[F_{{\ell}}(X)]\right)^{2} + \frac{1}{N}\sum_{\ell \in \Nset{m}}\frac{N_{\ell}}{N}\times\Ex_{{k}}[F_{{\ell}}(X)] + \frac{1}{4}\frac{1}{N^{2}},
\end{equation*}
while the second moment is given by
\begin{align*}
	&\Ex_{k}\left[(\intermRank)^{2}\right]\\
	&= \Ex_{k}\left[\left(\frac{1}{N}\sum_{i^{\prime} \in \Nset{N}} H(X_{i}-X_{i^{\prime}})+\frac{1}{2N}\right)^{2}\right]\\
	&= \frac{1}{N^{2}} \times \Ex_{k}\left[\left(\sum_{i^{\prime} \in \Nset{N}} H(X_{i}-X_{i^{\prime}})\right)^{2} + \sum_{i^{\prime}\in\Nset{\numSample}} H(X_{i}-X_{i^{\prime}}) + \frac{1}{4}\right]\\
	&= \frac{1}{N^{2}} \times \Ex_{k}\left[\sum_{i^{\prime}, j^{\prime} \in \Nset{N}} H(X_{i}-X_{i^{\prime}})H(X_{i}-X_{j^{\prime}})\right] + \frac{1}{N}\sum_{\ell \in \Nset{\numSample}}\frac{N_{\ell}}{N}\times\Ex_{{k}}[F_{{\ell}}(X)]
	+ \frac{1}{4}\frac{1}{N^{2}}.
\end{align*}
More immediately, by the shift-invariance of variance,
\begin{equation*}
	\Var_{k}[\intermRank]
	= \frac{1}{N^{2}} \times \Ex_{k}\left[\sum_{i^{\prime}, j^{\prime} \in \Nset{N}} H(X_{i}-X_{i^{\prime}})H(X_{i}-X_{j^{\prime}})\right]
	- \left(\sum_{\ell \in \Nset{\numSample}}\frac{N_{\ell}}{N}\times\Ex_{{k}}[F_{{\ell}}(X)]\right)^{2}.
\end{equation*}
Thus,
\begin{align*}
	\Var_{k}[\intermRank]
	&= 
	\left(\sum_{\ell, \ell^{\prime} \in \Nset{\numSample}}\frac{N_{\ell}}{N}\frac{N_{\ell^{\prime}}}{N} \times\Ex_{k}[F_{\ell}(X)F_{\ell^{\prime}}(X)]
	- \left(\sum_{\ell \in \Nset{\numSample}}\frac{N_{\ell}}{N}\times\Ex_{{k}}[F_{{\ell}}(X)]\right)^{2}\right)\\
	&\hspace{2em}+ \left(\sum_{\ell \in \Nset{\numSample}}\frac{N_{\ell}}{N}\times\left\{\Ex_{k}[2F_{\ell}(X)] - \Ex_{k}[2F_{k}(X)F_{\ell}(X)] - \Ex_{k}[F_{\ell}^{2}(X)]\right\}\right)\frac{1}{N}\\
	&\hspace{2em}-
	\frac{1}{12}\frac{1}{N^{2}}.
\end{align*}
The result follows by replacing $\ell^{\prime}$ with $\ell^{\prime\prime}$, $\ell$ with $\ell^{\prime}$, and $k$ with $\ell$ in the notation above. This establishes \cref{prop:ranks_multi_variance}.
\end{proof}

\begin{proof}[Proof of \cref{prop:ranks_multi_covariance}]
	By the shift-invariance of covariance,
	\begin{align*}
		&\operatorname{Cov}_{(k,\ell)}(\wt{R}_{N i}^{\prime}, \wt{R}_{N j}^{\prime})\\
		&= \Ex[\wt{R}_{N i}^{\prime} \wt{R}_{N j}^{\prime}] - \Ex[\wt{R}_{N i}^{\prime}]\Ex[\wt{R}_{N j}^{\prime}]\\
		&= \Ex\left[\frac{1}{N^{2}}\sum_{i^{\prime}, j^{\prime} \in \Nset{N}}H(X_{i} - X_{i^{\prime}})H(X_{j} - X_{j^{\prime}})\right]\\
		&\hspace{5em}-
		\Ex\left[\frac{1}{N} \sum_{i^{\prime} \in \Nset{N}}H(X_{i} - X_{i^{\prime}})\right]
		\times \Ex\left[\frac{1}{N}\sum_{j^{\prime} \in \Nset{N}}H(X_{j} - X_{j^{\prime}})\right].
	\end{align*}
	We proceed to consider $\wt{R}_{N i}^{\prime}$ and $\wt{R}_{N j}^{\prime}$ for $i \neq j$ where $X_{i} \sim F_{k}, X_{j} \sim F_{\ell}$ and $X_{i^{\prime}} \sim F_{k^{\prime}}, X_{j^{\prime}} \sim F_{\ell^{\prime}}$. Namely, we pursue both within-block covariances and between-block covariances.
	Keeping these underlying distributions in mind, we analyze $\Ex[H(X_{i}-X_{i^{\prime}})H(X_{j}-X_{j^{\prime}})]$.
\begin{enumerate}
	\item \label{item-case-ii} For $(i^{\prime},j^{\prime})=(i,i)$,
	\begin{equation*}
		\Ex[H(X_{i}-X_{i^{\prime}})H(X_{j}-X_{j^{\prime}})]
		= \Ex[H(0)H(X_{j}-X_{i})]
		= \tfrac{1}{2}\Ex_{\ell}[F_{k}(X)].
	\end{equation*}
	\item \label{item-case-jj} For $(i^{\prime},j^{\prime})=(j,j)$,
	\begin{equation*}
		\Ex[H(X_{i}-X_{i^{\prime}})H(X_{j}-X_{j^{\prime}})]
		= \Ex[H(X_{i}-X_{j})H(0)]
		= \tfrac{1}{2}\Ex_{k}[F_{\ell}(X)].
	\end{equation*}	
	\item \label{item-case-ij} For $(i^{\prime},j^{\prime})=(i,j)$,
	\begin{equation*}
		\Ex[H(X_{i}-X_{i^{\prime}})H(X_{j}-X_{j^{\prime}})]
		= \Ex[H(0)H(0)]
		= \tfrac{1}{4}.
	\end{equation*}
	\item \label{item-case-ji} For $(i^{\prime},j^{\prime})=(j,i)$,
	\begin{equation*}
		\Ex[H(X_{i}-X_{i^{\prime}})H(X_{j}-X_{j^{\prime}})]
		= \Prob[\{X_{j} < X_{i}\}\cap\{X_{i} < X_{j}\}]
		= 0.
	\end{equation*}	
	\item \label{item-case-iPrime=i-partial} For $i^{\prime}=i$ and $j^{\prime} \notin \{i,j\}$, occurring $N-2$ times,
	\begin{equation*}
		\Ex[H(X_{i}-X_{i^{\prime}})H(X_{j}-X_{j^{\prime}})]
		= \Ex[H(0)H(X_{j}-X_{j^{\prime}})]
		= \tfrac{1}{2}\Ex_{\ell}[F_{\ell^{\prime}}(X)].
	\end{equation*}
	\item \label{item-case-iPrime=j-partial} For $i^{\prime}=j$ and $j^{\prime} \notin \{i,j\}$, occurring $N-2$ times,
	\begin{equation*}
		\Ex[H(X_{i}-X_{i^{\prime}})H(X_{j}-X_{j^{\prime}})]
		= \Prob[X_{j^{\prime}} < X_{j} < X_{i}]
		= \Ex_{k}[g_{(\ell^{\prime},\ell)}(X)].
	\end{equation*}
	\item \label{item-case-jPrime=j-partial} For $i^{\prime} \notin \{i,j\}$ and $j^{\prime}=j$, occurring $N-2$ times,
	\begin{equation*}
		\Ex[H(X_{i}-X_{i^{\prime}})H(X_{j}-X_{j^{\prime}})]
		= \Ex[H(X_{i}-X_{i^{\prime}})H(0)]
		= \tfrac{1}{2}\Ex_{k}[F_{k^{\prime}}(X)].
	\end{equation*}
	\item \label{item-case-jPrime=i-partial} For $i^{\prime} \notin \{i,j\}$ and $j^{\prime} = i$, occurring $N-2$ times,
	\begin{equation*}
		\Ex[H(X_{i}-X_{i^{\prime}})H(X_{j}-X_{j^{\prime}})]
		= \Prob[X_{i^{\prime}} < X_{i} < X_{j}] = \Ex_{\ell}[g_{(k^{\prime},k)}(X)].
	\end{equation*}
	\item \label{item-case-diag-partial} For $i^{\prime} = j^{\prime}$ with $i^{\prime} \notin \{i,j\}$, occurring $N-2$ times,
	\begin{align*}
		\Ex[H(X_{i}-X_{i^{\prime}})H(X_{j}-X_{j^{\prime}})]
		&= \Prob[\{X_{i^{\prime}} < X_{i}\}\cap\{X_{i^{\prime}} < X_{j}\}]\\
		&= \Prob[X_{i^{\prime}} < \min(X_{i},X_{j})]\\
		&= 1 - \Prob[\min(X_{i},X_{j}) \le X_{i^{\prime}}]\\
		&= \Ex_{k^{\prime}}[(1-F_{k}(X))(1-F_{\ell}(X))].
	\end{align*}
	\item \label{item-case-PartialCompleteSquare} In all remaining cases, the independence and distinctness of $i, j, i^{\prime}, j^{\prime}$ yields
	\begin{equation*}
		\Ex[H(X_{i}-X_{i^{\prime}})H(X_{j}-X_{j^{\prime}})]
		= \Ex[H(X_{i}-X_{i^{\prime}})]\Ex[H(X_{j}-X_{j^{\prime}})]
		= \Ex_{k}[F_{k^{\prime}}(X)] \Ex_{\ell}[F_{\ell^{\prime}}(X)].
	\end{equation*}		
\end{enumerate}
	Aggregating over $i^{\prime}, j^{\prime}$ in Case~\ref{item-case-PartialCompleteSquare} yields a partial collection of terms in the expansion of $\left(\sum_{k^{\prime} \in \Nset{\numSample}}N_{k^{\prime}}\Ex_{k}[F_{k^{\prime}}(X)]\right) \times \left(\sum_{\ell^{\prime} \in \Nset{\numSample}}N_{\ell^{\prime}}\Ex_{\ell}[F_{\ell^{\prime}}(X)]\right)$. We proceed to complete this expansion by adding and subtracting required terms, keeping the additional terms as remainders which themselves can be suitably completed and organized.

	First, in light of Case~\ref{item-case-diag-partial}, introducing
	\begin{equation}
		\pm\Big\{\Ex_{k}[(1-F_{k}(X))(1-F_{\ell}(X))] + \Ex_{\ell}[(1-F_{k}(X))(1-F_{\ell}(X))]\Big\}
	\end{equation}
	contributes a (subtracted) residual term and (via addition) completes the expansion
	\begin{equation}
		\sum_{m^{\prime} \in \Nset{\numSample}}N_{m^{\prime}}\Ex_{m^{\prime}}[(1-F_{k}(X))(1-F_{\ell}(X))].
	\end{equation}
	Furthermore, introducing
	\begin{equation}
		\pm \Bigg\{ \sum_{m^{\prime} \in \Nset{\numSample}} N_{m^{\prime}} \Ex_{k}[F_{m^{\prime}}(X)]\Ex_{\ell}[F_{m^{\prime}}(X)] \Bigg\}
	\end{equation}
	contributes a (subtracted) residual term and (via addition) completes the diagonal portion of the complete-the-square problem.

	Next, in light of Case~\ref{item-case-iPrime=j-partial}, introducing
	\begin{equation}
		\pm\Big\{\Ex_{k}[g_{(k,\ell)}(X)] + \Ex_{k}[g_{(\ell,\ell)}(X)]\Big\}
	\end{equation}
	contributes a (subtracted) residual term and (via addition) completes the expansion
	\begin{equation}
		\sum_{m^{\prime} \in \Nset{\numSample}} N_{m^{\prime}} \Ex_{k}[g_{(m^{\prime},\ell)}(X)].
	\end{equation}
	Furthermore, introducing
	\begin{equation}
		\pm\Bigg\{ \sum_{m^{\prime} \in \Nset{\numSample}} N_{m^{\prime}} \Ex_{k}[F_{\ell}(X)]\Ex_{\ell}[F_{m^{\prime}}(X)] - \tfrac{1}{2}\Ex_{k}[F_{\ell}(X)] \Bigg\}
	\end{equation}
	contributes a (subtracted) residual term and (via) addition completes the corresponding partial row of the complete-the-square problem.

	Next, in light of Case~\ref{item-case-jPrime=i-partial}, introducing
	\begin{equation}
		\pm\Big\{ \Ex_{\ell}[g_{(k,k)}(X)] + \Ex_{\ell}[g_{(\ell,k)(X)}] \Big\}
	\end{equation}
	contributes a (subtracted) residual term and (via addition) completes the expansion
	\begin{equation}
		\sum_{m^{\prime} \in \Nset{\numSample}} N_{m^{\prime}} \Ex_{\ell}[g_{(m^{\prime},k)}(X)].
	\end{equation}
	Furthermore, introducing
	\begin{equation}
		\pm\Bigg\{ \sum_{m^{\prime} \in \Nset{\numSample}} N_{m^{\prime}} \Ex_{k}[F_{m^{\prime}}(X)]\Ex_{\ell}[F_{k}(X)] - \tfrac{1}{2}\Ex_{\ell}[F_{k}(X)] \Bigg\}
	\end{equation}
	contributes a (subtracted) residual term and (via) addition completes the corresponding partial row of the complete-the-square problem.

	Putting all the pieces together yields the completed square, namely all the terms in the expansion of 
	\begin{equation}
		\left(\sum_{k^{\prime} \in \Nset{\numSample}}N_{k^{\prime}}\Ex_{k}[F_{k^{\prime}}(X)]\right)
		\times
		\left(\sum_{\ell^{\prime} \in \Nset{\numSample}}N_{\ell^{\prime}}\Ex_{\ell}[F_{\ell^{\prime}}(X)]\right).
	\end{equation}
	Additionally, the remaining residual can be organized into two expressions. The first, dominant expression is given by
\begin{align*}
	&\sum_{m^{\prime} \in \Nset{\numSample}}N_{m^{\prime}}
	\Bigg[
	\Ex_{m^{\prime}}[(1-F_{k}(X))(1-F_{\ell}(X))]
	+ \Ex_{k}[g_{(m^{\prime},\ell)}(X)]
	+ \Ex_{\ell}[g_{(m^{\prime},k)}(X)]\\
	&\hspace{6em}- \Ex_{k}[F_{m^{\prime}}(X)]\Ex_{\ell}[F_{m^{\prime}}(X)]
	- \Ex_{k}[F_{m^{\prime}}(X)]\Ex_{\ell}[F_{k}(X)]
	- \Ex_{k}[F_{\ell}(X)]\Ex_{\ell}[F_{m^{\prime}}(X)]
	\Bigg].
\end{align*} 
The second expression is given by
\begin{align*}
	&\Big\{
	\tfrac{1}{2}\Ex_{\ell}[F_{k}(X)]
	+ \tfrac{1}{2}\Ex_{k}[F_{\ell}(X)]
	+ \tfrac{1}{4}
	+ 0
	\Big\}\\
	&\hspace{4em}-
	\Big\{
	\Ex_{k}[(1-F_{k}(X))(1-F_{\ell}(X))]
	+ \Ex_{\ell}[(1-F_{k}(X))(1-F_{\ell}(X))]
	\Big\}\\
	&\hspace{4em}-
	\Big\{
	\Ex_{k}[g_{(k,\ell)}(X)]
	+ \Ex_{k}[g_{(\ell,\ell)}(X)]
	\Big\}\\
	&\hspace{4em}-
	\Big\{
	\Ex_{\ell}[g_{(k,k)}(X)]
	+ \Ex_{\ell}[g_{(\ell,k)}(X)]
	\Big\}\\
	&\hspace{4em}+
	\Big\{
	\tfrac{1}{2}\Ex_{k}[F_{\ell}(X)]
	+ \tfrac{1}{2}\Ex_{\ell}[F_{k}(X)]
	\Big\}\\
	&\hspace{4em}+ \Big\{\Ex_{k}[F_{\ell}(X)]\Ex_{\ell}[F_{k}(X)] - \tfrac{1}{4}\Big\}
\end{align*}
which can be written in the more interpretable form
\begin{align*}
	&\Bigg\{ \Ex_{k}\left[2F_{\ell}(X)-F_{k}(X)F_{\ell}(X)-g_{(k,\ell)}(X) - g_{(\ell,\ell)}(X) - \tfrac{1}{3}\right]\Bigg\}\\
	&\hspace{2em}+
	\Bigg\{\Ex_{\ell}\left[2F_{k}(X)-F_{k}(X)F_{\ell}(X)-g_{(k,k)}(X) - g_{(\ell,k)}(X) - \tfrac{1}{3}\right]\Bigg\}\\
	&\hspace{4em}+ \Big\{\Ex_{k}[F_{\ell}(X)]\Ex_{\ell}[F_{k}(X)] - \tfrac{1}{4}\Big\}\\
	&\hspace{6em}-
	\Bigg\{\frac{1}{12}\Bigg\}.
\end{align*}
\cref{prop:ranks_multi_covariance} follows by scaling and re-indexing the above expressions. 
\end{proof}

\subsection{Proofs: a combinatorial trace bound}
\label{app:trace_bound_proof}
\newcommand{\Diff}{\Gamma}
\newcommand{\diff}{\gamma}
\newcommand{\nrke}{\widetilde{r}}
\newcommand{\set}{\mathcal{S}}
\newcommand{\grp}{\mathcal{G}}
\newcommand{\fset}{\Psi}
\newcommand{\prob}{{\rm Pr}}

\newcommand{\bra}[1]{\left( #1\right)}
\newcommand{\sqbra}[1]{\left[ #1\right]}
\newcommand{\crbra}[1]{\left\{ #1\right\}}
\newcommand{\ABS}[1]{\left\vert #1\right\vert}

\begin{proof}[Proof of \cref{lem:os_trace}]
	Define the matrix $\Diff = \nrk{A} - \Ex[\nrk{A}]$ where $\Diff \equiv (\diff_{ij})$, hence $\diff_{ij} = \nrke_{ij} - \Ex[\nrke_{ij}] = \frac{r_{ij}}{N+1} - \frac{1}{2}$ for $i \neq j$. We focus on
	\begin{equation}
		\label{equ::trace}
		\trace\left({\Diff^{4}}\right)
		=
		\sum_{1 \leq i_{1}, i_{2}, i_{3}, i_{4} \leq n} \diff_{i_{1} i_{2}}\diff_{i_{2} i_{3}}\diff_{i_{3} i_{4}}\diff_{i_{4} i_{1}}.
	\end{equation}
	Here, $\Diff$ is a random symmetric matrix with zero-valued diagonal entries and is thus determined by $N = n(n-1)/2$ random variables. For different choices of indices $i_{1}, i_{2}, i_{3}, i_{4} \in \Nset{n}$, the tuple $(\diff_{i_{1} i_{2}}, \diff_{i_{2} i_{3}}, \diff_{i_{3} i_{4}}, \diff_{i_{4} i_{1}})$ may be entrywise comprised of either four, two, or a single unique random variable(s). We shall proceed to characterize all possible cases of $(\diff_{i_{1} i_{2}}, \diff_{i_{2} i_{3}}, \diff_{i_{3} i_{4}}, \diff_{i_{4} i_{1}})$. In particular, we only need to consider situations when $i_{1} \neq i_{2}$, $i_{2} \neq i_{3}$, $i_{3} \neq i_{4}$, and $i_{4} \neq i_{1}$, since otherwise $\diff_{i_{1} i_{2}}\diff_{i_{2} i_{3}}\diff_{i_{3} i_{4}}\diff_{i_{4} i_{1}} = 0$ almost surely.
	
	In what follows, it will be convenient to use the following new definition and notation.
	\begin{definition}
		\label[definition]{def:abcd_vec_unif_distrib}
		Let $(a,b,c,d)$ denote a random vector generated by uniformly selecting a sequence of four distinct elements from the set of shifted, normalized rank statistic values
		\begin{equation*}
			\left\{\frac{i}{N+1} - \frac{1}{2}
			\quad
			:
			\quad
			i \in \Nset{N}\right\}.
		\end{equation*}
	\end{definition}

	\noindent{\bf Case 1: $i_1$, $i_2$, $i_3$, $i_4$ are pairwise different.} In this case, the distribution of the tuple $(\diff_{i_1 i_2}, \diff_{i_2 i_3}, \diff_{i_3 i_4}, \diff_{i_4 i_1})$ is the same as that of $(a,b,c,d)$. By elementary counting, there are $n(n-1)(n-2)(n-3)$ such tuples of indices $i_1, i_2, i_3, i_4$.

	\noindent{\bf Case 2: $i_1\neq i_3$, $i_2=i_4$.} In this case, the distribution of the tuple $(\diff_{i_1 i_2}, \diff_{i_2 i_3}, \diff_{i_3 i_4}, \diff_{i_4 i_1})$ is the same as that of $(a,b,b,a)$. By elementary counting, there are $n(n-1)(n-2)$ such tuples of indices $i_1, i_2, i_3, i_4$.

	\noindent{\bf Case 3: $i_1 = i_3$, $i_2 \neq i_4$.} In this case, the distribution of the tuple $(\diff_{i_1 i_2}, \diff_{i_2 i_3}, \diff_{i_3 i_4}, \diff_{i_4 i_1})$ is the same as that of $(a,a,b,b)$. By elementary counting, there are $n(n-1)(n-2)$ such tuples of indices $i_1, i_2, i_3, i_4$.

	\noindent{\bf Case 4: $i_1 = i_3$, $i_2 = i_4$.} In this case, the distribution of the tuple $(\diff_{i_1 i_2}, \diff_{i_2 i_3}, \diff_{i_3 i_4}, \diff_{i_4 i_1})$ is the same as that of $(a,a,a,a)$. By elementary counting, there are $n(n-1)$ such sequences of indices $i_1, i_2, i_3, i_4$.

	By considering all possible configurations of $i_1, i_2, i_3, i_4$ in \cref{equ::trace}, we obtain
	\begin{equation}
		\label{equ::trace_terms}
		\begin{split}
		&\Ex\left[\trace({\Diff^4})\right]\\
		&\hspace{1em}=
		\sum_{1 \leq i_1,i_2,i_3,i_4 \leq n} \Ex[\diff_{i_1 i_2}\diff_{i_2 i_3}\diff_{i_3 i_4}\diff_{i_4 i_1}]\\
		&\hspace{1em}=
		n(n-1)(n-2)(n-3)\Ex[abcd] + 2n(n-1)(n-2)\Ex [a^{2}b^{2}] + n(n-1)\Ex[a^{4}],
		\end{split}
	\end{equation}
	where expectations $\Ex[\cdot]$ are taken with respect to the uniform distribution over $(a,b,c,d)$ per \cref{def:abcd_vec_unif_distrib}.
	
	Next, we explicitly compute $\Ex[abcd]$, $\Ex [a^{2}b^{2}]$, and $\Ex[a^{4}]$. First,
	
	{\small
	\begin{equation}
		\label{equ::expro}
	\begin{split}
		&\Ex[abcd]\\
		&= \tfrac{1}{N(N-1)(N-2)(N-3)}\sum_{i\in\Nset{N}} \left(\tfrac{i}{N+1}-\tfrac{1}{2}\right)\sum_{j\in\Nset{N}-\{i\}}\left(\tfrac{j}{N+1}-\tfrac{1}{2}\right)\sum_{k\in\Nset{N}-\{i,j\}}\left(\tfrac{k}{N+1}-\tfrac{1}{2}\right)\\
			&\hspace{2em} \sum_{l\in\Nset{N}-\{i,j,k\}}\left(\tfrac{l}{N+1}-\tfrac{1}{2}\right)\\
		&= \tfrac{1}{N(N-1)(N-2)(N-3)}\sum_{i\in\Nset{N}} \left(\tfrac{i}{N+1}-\tfrac{1}{2}\right)\sum_{j\in\Nset{N}-\{i\}}\left(\tfrac{j}{N+1}-\tfrac{1}{2}\right)\sum_{k\in\Nset{N}-\{i,j\}}\left(\tfrac{k}{N+1}-\tfrac{1}{2}\right)\\
			&\hspace{2em} \left[N\Ex[a]-\left(\tfrac{i}{N+1}-\tfrac{1}{2}\right)-\left(\tfrac{j}{N+1}-\tfrac{1}{2}\right)-\left(\tfrac{k}{N+1}-\tfrac{1}{2}\right)\right]\\
		&= \tfrac{1}{N(N-1)(N-2)(N-3)}\sum_{i\in\Nset{N}} \left(\tfrac{i}{N+1}-\tfrac{1}{2}\right)\sum_{j\in\Nset{N}-\{i\}}\left(\tfrac{j}{N+1}-\tfrac{1}{2}\right)\\
			&\hspace{2em} \left\{\left[N\Ex[a]-\left(\tfrac{i}{N+1}-\tfrac{1}{2}\right)-\left(\tfrac{j}{N+1}-\tfrac{1}{2}\right)\right]^2-N\Ex[a^2]+\left(\tfrac{i}{N+1}-\tfrac{1}{2}\right)^2+\left(\tfrac{j}{N+1}-\tfrac{1}{2}\right)^2\right\}\\
		&= \tfrac{1}{N(N-1)(N-2)(N-3)}\sum_{i\in\Nset{N}} \left(\tfrac{i}{N+1}-\tfrac{1}{2}\right)\sum_{j\in\Nset{N}-\{i\}}\left(\tfrac{j}{N+1}-\tfrac{1}{2}\right)\\
			&\hspace{2em} \left\{2\left(\tfrac{j}{N+1}-\tfrac{1}{2}\right)^2- 2\left[N\Ex[a]-\left(\tfrac{i}{N+1}-\tfrac{1}{2}\right)\right]\left(\tfrac{j}{N+1}-\tfrac{1}{2}\right)-N\Ex[a^2]+\left(\tfrac{i}{N+1}-\tfrac{1}{2}\right)^2\right.\\
			&\hspace{3em}\left.+\left[N\Ex[a]-\left(\tfrac{i}{N+1}-\tfrac{1}{2}\right)\right]^2\right\}\\
		&= \tfrac{1}{N(N-1)(N-2)(N-3)}\sum_{i\in\Nset{N}} \left(\tfrac{i}{N+1}-\tfrac{1}{2}\right)\left\{2N\Ex[a^3]-2\left(\tfrac{i}{N+1}-\tfrac{1}{2}\right)^3\right.\\
			&\hspace{2em}\left.-3\left[N\Ex[a]-\left(\tfrac{i}{N+1}-\tfrac{1}{2}\right)\right]\left[N\Ex[a^2]-\left(\tfrac{i}{N+1}-\tfrac{1}{2}\right)^2\right]+\left[N\Ex[a]-\left(\tfrac{i}{N+1}-\tfrac{1}{2}\right)\right]^3\right\}\\
		&= \tfrac{1}{N(N-1)(N-2)(N-3)}\sum_{i\in\Nset{N}} \left(\tfrac{i}{N+1}-\tfrac{1}{2}\right)\left\{-6\left(\tfrac{i}{N+1}-\tfrac{1}{2}\right)^3+6N\Ex[a]\left(\tfrac{i}{N+1}-\tfrac{1}{2}\right)^2\right.\\
			&\hspace{2em}\left.+3\left[N\Ex[a^2]-N^2(\Ex[a])^2\right]\left(\tfrac{i}{N+1}-\tfrac{1}{2}\right)+N^3(\Ex[a])^3+2N\Ex[a^3]-3N^2\Ex[a^2]\Ex[a]\right\}\\
		&= \cfrac{8N^2\Ex[a]\Ex[a^3]+N^4(\Ex[a])^4-6N^3\Ex[a^2](\Ex[a])^2-6N\Ex[a^4]+3N^2(\Ex[a^2])^2}{N(N-1)(N-2)(N-3)}.
	\end{split}
	\end{equation}
	}
	Since $\Ex[a]=0$, the above expression simplifies to the form
	\begin{equation*}
		\Ex[abcd]
		=
		\frac{3N^2(\Ex[a^2])^2-6N\Ex[a^4]}{N(N-1)(N-2)(N-3)}.
	\end{equation*}
	Next, again by direct computation, we evaluate $\Ex[a^{2}b^{2}]$ in the manner
	\begin{equation*}
	\begin{split}
		\Ex[a^2b^2] &= \frac{1}{N(N-1)}\sum_{i\in\Nset{N}}\left(\frac{i}{N+1}-\frac{1}{2}\right)^2\sum_{j\in\Nset{N}-\{i\}}\left(\frac{j}{N+1}-\frac{1}{2}\right)^2\\
		&= \frac{1}{N(N-1)}\sum_{i\in\Nset{N}}\left(\frac{i}{N+1}-\frac{1}{2}\right)^2\left\{N\Ex[a^2]-\left(\frac{i}{N+1}-\frac{1}{2}\right)^2\right\}\\
		&= \frac{N(\Ex[a^2])^2-\Ex[a^4]}{N-1}.
	\end{split}
	\end{equation*}
	Thus, from \cref{equ::trace_terms},
	\begin{equation*}
	\begin{split}
		\Ex\left[\trace({\Diff^{4}})\right]
		&= (\Ex[a^2])^2\left\{\frac{3Nn(n-1)(n-2)(n-3)}{(N-1)(N-2)(N-3)}+\frac{2Nn(n-1)(n-2)}{N-1}\right\}\\
				&\hspace{2em}- \Ex[a^4]\left\{\frac{6n(n-1)(n-2)(n-3)}{(N-1)(N-2)(N-3)}+\frac{2n(n-1)(n-2)}{N-1}-n(n-1)\right\}\\
		&= (\Ex[a^2])^2\left(\frac{2n^2(n-1)^2}{n+1}\right)\left\{\frac{3}{(N-2)(n+2)}+1\right\}\\
				&\hspace{2em}+ \Ex[a^4]\left(\frac{n(n-1)(n-3)}{n+1}\right)\left\{1-\frac{24}{(N-2)(n+2)(n-3)}\right\}.
	\end{split}
	\end{equation*}
	Recall that $N = n(n-1)/2$, hence by direct computation,
	\begin{align*}
		\Ex[a^{2}]
		&=
		\frac{1}{N}\sum_{i \in \Nset{N}}\left(\frac{i}{N+1}-\frac{1}{2}\right)^{2} = \frac{N-1}{12(N+1)} \le \frac{1}{12},\\
		\Ex[a^{4}]
		&=
		\frac{1}{N}\sum_{i \in \Nset{N}}\left(\frac{i}{N+1}-\frac{1}{2}\right)^4
		=
		\frac{3N^{3}-3N^{2}-7N+7}{240(N+1)^{3}}
		\le
		\frac{3}{240}
		=
		\frac{1}{80}.
	\end{align*}
	The hypothesis $n \ge 4$ implies $n^{2} \le \frac{1}{4}n^{3}$, whence plugging in the above observations yields
		\begin{align*}
			\Ex\left[\trace({\Diff^{4}})\right]
			&\le
			\left(\frac{1}{12^{2}} \cdot 2 \cdot \frac{9}{8}\right)n^{3} + \left(\frac{1}{80}\right) n^{2}
			\le
			\frac{1}{50}n^{3}.
		\end{align*}
	This concludes the proof of \cref{lem:os_trace}.
\end{proof}

\begin{proof}[Proof of \cref{lem:ms_trace}]
	Define the matrix $\Diff = \nrk{A} - \Ex[\nrk{A}]$ where $\Diff \equiv (\diff_{ij})$. Thus, $\diff_{ij} = \nrke_{ij} - \Ex[\nrke_{ij}] = \frac{r_{ij}}{N+1} - \rankspec_{g_i g_j}$ for $i \neq j$. We focus on
	\begin{equation}
		\label{equ::trace2}
		\trace\left({\Diff^{4}}\right)
		=
		\sum_{1 \leq i_1, i_2, i_3, i_4 \leq n} \diff_{i_1 i_2}\diff_{i_2 i_3}\diff_{i_3 i_4}\diff_{i_4 i_1}.
	\end{equation}
	In this proof, we shall mirror the case-by-case discussion in the proof of \cref{lem:os_trace}. We first observe that there are at most $2n^{3}$ terms in the summation \cref{equ::trace2} that satisfy the condition $i_1 \neq i_2, i_2 \neq i_3, i_3 \neq i_4$ and $i_4 \neq i_1$ and with $(i_1,i_2,i_3,i_4)$ having fewer than four distinct entries. Note that $\vert\diff_{ij}\vert \leq 1$ for all $i, j \in \Nset{n}$, which implies that $\vert\diff_{i_1 i_2}\diff_{i_2 i_3}\diff_{i_3 i_4}\diff_{i_4 i_1}\vert\leq 1$ for any choice of indices $i_1,i_2,i_3,i_4 \in \Nset{n}$. Hence,
	\begin{equation}
		\label{inequ::xy1}
		\left| \sum_{1\leq i_1,i_2,i_3,i_4\leq n} \diff_{i_1 i_2}\diff_{i_2 i_3}\diff_{i_3 i_4}\diff_{i_4 i_1}-\sum_{
			\begin{array}{c}
				i_1,i_2,i_3,i_4\in\Nset{n}\\
				\text{pairwise different}
			\end{array}}
		\diff_{i_1 i_2}\diff_{i_2 i_3}\diff_{i_3 i_4}\diff_{i_4 i_1}\right| \leq 2n^{3}.
	\end{equation} 
	To bound the second summation term in \cref{inequ::xy1}, we shall characterize the distribution of $\diff_{ij} = \nrke_{ij}-\Ex[\nrke_{ij}]$ by using the population-level block structure governing $A$ and $\nrk{A}$.

	For any choices $k_{1}, k_{2} \in \Nset{K}$, define an associated collection of random variables $\set_{k_1 k_2} = \{\nrke_{ij} : i \in \grp_{k_1}, j \in \grp_{k_2}, i < j\}$. By the definition of $\nrke_{ij}$, the sets $\set_{k_1k_2}$ for $1 \leq k_1 \leq k_2 \leq K$ are pairwise disjoint and collectively satisfy
	\begin{equation*}
		\bigcup_{1 \leq k_1 \leq k_2 \leq K} \set_{k_1 k_2}
		=
		\left\{\frac{i}{N+1} : i \in \Nset{N} \right\}.
	\end{equation*}
	Next, in order to facilitate counting, for any $k_1, k_2 \in \Nset{K}$, define
	\begin{equation*}
		N(k_1,k_2) =
		\begin{cases}
			n_{k_1}n_{k_2}
			&\text{ if } k_1\neq k_2,\\
			\cfrac{n_{k_1}(n_{k_1}-1)}{2}
			&\text{ if } k_1 = k_2.
		\end{cases}
	\end{equation*}
	We emphasize that $N(k_1,k_2)$ is the cardinality of the set $\set_{k_1k_2}$. Further below, we use the notation $\operatorname{card}(\cdot) \equiv |\cdot|$ to indicate the cardinality of an underlying set.

	For any partition of the set $\{i/(N+1) : i \in \Nset{N}\}$ that satisfies $\vert\fset_{k_1k_2}\vert=N(k_1,k_2)$, denoted by $\{\fset_{k_1 k_2} : 1 \leq k_1 \leq k_2 \leq K\}$, condition on the event that $\set_{k_1 k_2} = \fset_{k_1 k_2}$ for all $1 \leq k_1 \leq k_2 \leq K$. Then, for any choice of $k_1, k_2$, the joint conditional distribution of the random variables in $\set_{k_1 k_2}$ is simply the distribution of uniformly sampling elements in $\fset_{k_1 k_2}$ without replacement. Moreover, by conditioning on the event that $\set_{k_1 k_2} = \fset_{k_1 k_2}$ for all $1 \leq k_1 \leq k_2 \leq K$, the conditional distribution is independent between blocks. This conditional within-block uniform distribution property is due to the fact that the entries of $A$ are i.i.d.~within blocks and are independent across blocks. These observations will enable us to analyze the summation in \cref{equ::trace2}.

	Next, we shall characterize several moments of the within-block uniform conditional distribution. Let $(a,b,c,d)$ be a random vector defined as selecting a four-element subsequence from $\fset_{k_1 k_2}$ uniformly at random, for an arbitrary, finite set satisfying $\vert\fset_{k_1k_2}\vert = N(k_1,k_2) \geq 4$. Write $\Ex_{\fset_{k_1k_2}}[\cdot]$ to denote taking an expectation with respect to this underlying distribution. Consequently,
	\begin{equation*}
		\Ex_{\fset_{k_1k_2}}\sqbra{a - \rankspec_{k_1k_2}}
		=
		\overline{\fset}_{k_1k_2} - \rankspec_{k_1k_2},
	\end{equation*}
	where $\overline{\fset}_{k_1k_2}$ denotes the average of all elements in $\fset_{k_1k_2}$. We compute
	\begin{equation}\label{equ::xy3}
		\begin{split}
			&\Ex_{\fset_{k_1k_2}}\sqbra{(a-\rankspec_{k_1k_2})(b-\rankspec_{k_1k_2})}\\
			&\hspace{1em}= \cfrac{\sum_{x\in \fset_{k_1k_2}}(x-\rankspec_{k_1k_2})\left\{\sum_{y\in\fset_{k_1k_2}-\{x\}}(y-\rankspec_{k_1k_2})\right\}}{N(k_1,k_2)[N(k_1,k_2)-1]}\\
			&\hspace{1em}= \cfrac{\sum_{x\in \fset_{k_1k_2}}(x-\rankspec_{k_1k_2})\left\{N(k_1,k_2)\Ex_{\fset_{k_1k_2}}[a-\rankspec_{k_1k_2}]-(x-\rankspec_{k_1k_2})\right\}}{N(k_1,k_2)[N(k_1,k_2)-1]}\\
			&\hspace{1em}= \cfrac{N(k_1,k_2)\left(\Ex_{\fset_{k_1k_2}}[a-\rankspec_{k_1k_2}]\right)^2-\Ex_{\fset_{k_1k_2}}[(a-\rankspec_{k_1k_2})^2]}{N(k_1,k_2)-1}.
		\end{split}
	\end{equation}
	The same, direct approach yields the expansion
	\begin{equation}\label{equ::xy4}
		\begin{split}
		&\Ex_{\fset_{k_1k_2}}\sqbra{(a-\rankspec_{k_1k_2})^2(b-\rankspec_{k_1k_2})(c-\rankspec_{k_1k_2})}\\
		&\hspace{1em}=\cfrac{\sum_{x\in \fset_{k_1k_2}}(x-\rankspec_{k_1k_2})^2\sum_{y\in\fset_{k_1k_2}-\{x\}}(y-\rankspec_{k_1k_2})\sum_{z\in\fset_{k_1k_2}-\{x,y\}}(z-\rankspec_{k_1k_2})}{N(k_1,k_2)[N(k_1,k_2)-1][N(k_1,k_2)-2]}\\
		&\hspace{1em}=\frac{1}{N(k_1,k_2)\sqbra{N(k_1,k_2)-1}\sqbra{N(k_1,k_2)-2}}\\
		&\hspace{2em}\times \left\{N(k_1,k_2)^3\bra{\Ex_{\fset_{k_1k_2}}[a-\rankspec_{k_1k_2}]}^2\Ex_{\fset_{k_1k_2}}[(a-\rankspec_{k_1k_2})^2]\right.\\
		&\hspace{3em}-N(k_1,k_2)^2\bra{\Ex_{\fset_{k_1k_2}}[(a-\rankspec_{k_1k_2})^2]}^2+2N(k_1,k_2) \Ex_{\fset_{k_1k_2}}[(a-\rankspec_{k_1k_2})^4]\\
		&\hspace{3em}\left.-2N(k_1,k_2)^2\Ex_{\fset_{k_1k_2}}[a-\rankspec_{k_1k_2}]\Ex_{\fset_{k_1k_2}}[(a-\rankspec_{k_1k_2})^3]\right\}
		\end{split}
	\end{equation}
	and
	\begin{equation}
		\label{equ::xy1}
		\begin{split}
			&\Ex_{\fset_{k_1k_2}}\sqbra{(a-\rankspec_{k_1k_2})(b-\rankspec_{k_1k_2})(c-\rankspec_{k_1k_2})(d-\rankspec_{k_1k_2})}\\
			&\hspace{1em}=\frac{1}{N(k_1,k_2)\sqbra{N(k_1,k_2)-1}\sqbra{N(k_1,k_2)-2}\sqbra{N(k_1,k_2)-3}}\\
				&\hspace{2em}\times \Big\{ 8N(k_1,k_2)^2\Ex_{\fset_{k_1k_2}}[a-\rankspec_{k_1k_2}]\Ex_{\fset_{k_1k_2}}[(a-\rankspec_{k_1k_2})^3]\\
				&\hspace{3em}+ N(k_1,k_2)^4\bra{\Ex_{\fset_{k_1k_2}}[a-\rankspec_{k_1k_2}]}^4\\
				&\hspace{3em}- 6N(k_1,k_2)^3\Ex_{\fset_{k_1k_2}}[(a-\rankspec_{k_1k_2})^2]\bra{\Ex_{\fset_{k_1k_2}}[a-\rankspec_{k_1k_2}]}^2\\
				&\hspace{3em}- 6N(k_1,k_2)\Ex_{\fset_{k_1k_2}}[(a-\rankspec_{k_1k_2})^4]\\
				&\hspace{3em}+ 3N(k_1,k_2)^2\bra{\Ex_{\fset_{k_1k_2}}[(a-\rankspec_{k_1k_2})^2]}^2\Big\}\\
			&\hspace{1em}=\frac{N(k_1,k_2)^4\bra{\Ex_{\fset_{k_1k_2}}[a-\rankspec_{k_1k_2}]}^4}{N(k_1,k_2)\sqbra{N(k_1,k_2)-1}\sqbra{N(k_1,k_2)-2}\sqbra{N(k_1,k_2)-3}}+\eta(\fset_{k_1k_2},\rankspec_{k_1k_2}),
		\end{split}
	\end{equation}
	where $\eta(\cdot,\cdot)$ is a constant depending on $\fset_{k_1k_2}$ and $\rankspec_{k_1k_2}$ that is bounded in the manner
	\begin{equation}
		\label{inequ::xy2}
		\left\vert \eta(\fset_{k_1k_2},\rankspec_{k_1k_2})\right\vert
		\leq \frac{6\sqbra{N(k_1,k_2)+1}^2}{\sqbra{N(k_1,k_2)-1}\sqbra{N(k_1,k_2)-2}\sqbra{N(k_1,k_2)-3}}.\\
	\end{equation}
	The first expression in \cref{equ::xy1} is obtained from deductions that mirror those in \cref{equ::expro}. \cref{inequ::xy2} holds since $\vert a-\rankspec_{k_1k_2}\vert\leq 1$ almost surely.

	Below, we bound the second summation term in \cref{inequ::xy1} by enumerating four different cases for $g_{i_1},g_{i_2},g_{i_3},g_{i_4}$ when $i_1,i_2,i_3,i_4$ are pairwise different.

	\noindent\textbf{Case~1: $k_1=k_3$, $k_2=k_4$ and $g_{i_1}=k_1$, $g_{i_2}=k_2$, $g_{i_3}=k_3$, $g_{i_4}=k_4$.} In this case, the variables $\diff_{i_1 i_2},\diff_{i_2 i_3},\diff_{i_3 i_4},\diff_{i_4 i_1}$ are all entries in the block $\Diff_{\grp_{k_1}\grp_{k_2}}$. Thus,
	\begin{equation}
		\label{equ::xy2}
		\begin{split}
			&\Ex\left[\diff_{i_1 i_2}\diff_{i_2 i_3}\diff_{i_3 i_4}\diff_{i_4 i_1}\right]\\
			&=
			\sum_{\fset_{k_1k_2}\subset\crbra{\frac{i}{N+1} : i\in\Nset{N}}}\prob[\set_{k_1k_2}=\fset_{k_1k_2}]\\
			&\hspace{5em}\times \Ex_{\fset_{k_1k_2}}[(a-\rankspec_{k_1k_2})(b-\rankspec_{k_1k_2})(c-\rankspec_{k_1k_2})(d-\rankspec_{k_1k_2})]\\
			&= \sum_{\fset_{k_1k_2}\subset\crbra{\frac{i}{N+1} : i\in\Nset{N}}}\frac{\prob[\set_{k_1k_2}=\fset_{k_1k_2}]N(k_1,k_2)^3(\Ex_{\fset_{k_1k_2}}[a-\rankspec_{k_1k_2}])^4}{\sqbra{N(k_1,k_2)-1}\sqbra{N(k_1,k_2)-2}\sqbra{N(k_1,k_2)-3}}\\
			&\hspace{5em}+ \prob[\set_{k_1k_2}=\fset_{k_1k_2}]\eta(\fset_{k_1k_2},\rankspec_{k_1k_2})\\
			&= \sum_{\fset_{k_1k_2}\subset\crbra{\frac{i}{N+1} : i\in\Nset{N}}}\frac{\prob[\set_{k_1k_2}=\fset_{k_1k_2}]N(k_1,k_2)^3(\overline{\fset}_{k_1k_2}-\rankspec_{k_1k_2})^4}{\sqbra{N(k_1,k_2)-1}\sqbra{N(k_1,k_2)-2}\sqbra{N(k_1,k_2)-3}}\\
			&\hspace{5em}+ \Ex\left[\eta(\set_{k_1k_2},\rankspec_{k_1k_2})\right]\\
			&= \frac{N(k_1,k_2)^3\Ex\sqbra{(\overline{\set}_{k_1k_2}-\rankspec_{k_1k_2})^4}}{\sqbra{N(k_1,k_2)-1}\sqbra{N(k_1,k_2)-2}\sqbra{N(k_1,k_2)-3}} + \Ex\left[\eta(\set_{k_1k_2},\rankspec_{k_1k_2})\right],
		\end{split}
	\end{equation}
	where the random variable $\overline{\set}_{k_1k_2}$ denotes the average of all random variables in ${\set}_{k_1k_2}$. In order to bound the above expression further, we note that
	\begin{equation}
		\label{inequ::xy3}
		0
		\leq 
		\Ex\sqbra{(\overline{\set}_{k_1k_2}-\rankspec_{k_1k_2})^4} \leq \Ex\sqbra{(\overline{\set}_{k_1k_2} - \rankspec_{k_1k_2})^2}
		=
		\Var\sqbra{\overline{\set}_{k_1k_2}},
	\end{equation}
	where
	\begin{equation*}
		\Var\sqbra{\overline{\set}_{k_1k_2}}
		=
		\cfrac{1}{N(k_1,k_2)^2} \times
		\begin{cases}
			\Var\sqbra{\sum_{i , j \in \grp_{k_1}, i < j}\nrke_{ij}}
				&\text{ if } k_1 = k_2,\\ \\
			\Var\sqbra{\sum_{i \in \grp_{k_1}, j \in \grp_{k_2}}\nrke_{ij}}
				&\text{ if } k_1 \neq k_2.
		\end{cases}
	\end{equation*}
	The above variance can be expanded as the sum of covariance terms, in the manner
	\begin{equation*}
		\Var\sqbra{\overline{\set}_{k_1k_2}}
		=\cfrac{1}{N(k_1,k_2)^2} \times
		\begin{cases}
			\sum_{i,j\in\grp_{k_1},i<j}\sum_{i',j'\in\grp_{k_1},i'<j'}\Cov\sqbra{\nrke_{ij}\nrke_{i'j'}}
				&\text{ if } k_1 = k_2,\\ \\
			\sum_{i\in\grp_{k_1},j\in\grp_{k_2}}\sum_{i'\in\grp_{k_1},j'\in\grp_{k_2}}\Cov\sqbra{\nrke_{ij}\nrke_{i'j'}}
				&\text{ if } k_1 \neq k_2.
			\end{cases}
	\end{equation*}
	Since $\nrke_{ij} \in (0,1]$,
	\begin{equation*}
		\Var[\nrke_{ij}]
		\leq
		1
		\quad\quad
		\text{ for all } i,j \in \Nset{n}.
	\end{equation*}
	Likewise, as a consequence of \cref{prop:ranks_multi_covariance}, it holds that
	\begin{equation*}
		|\Cov[\nrke_{ij},\nrke_{i'j'}]|
		\le
		\frac{8}{N},
		\quad\quad
		\text{ for all } (i,j) \neq (i^{\prime},j^{\prime}).
	\end{equation*}
	Thus,
	\begin{equation}\label{inequ::var}
			\begin{split}
			\Var\sqbra{\overline{\set}_{k_1k_2}}
			&\le
			\cfrac{1}{N(k_1,k_2)^2}\sqbra{N(k_1,k_2) + \bra{N(k_1,k_2)^2-N(k_1,k_2)} \times \bra{\frac{8}{N}}}\\
			&\le
			\cfrac{1}{N(k_1,k_2)} + \frac{8}{N}.
			\end{split}
	\end{equation}
	Using \cref{equ::xy2,inequ::xy3,inequ::xy2} together with the inequality above yields
	\begin{equation*}
		\left\vert\Ex\left[\diff_{i_1 i_2}\diff_{i_2 i_3}\diff_{i_3 i_4}\diff_{i_4 i_1}\right]\right\vert\leq\cfrac{N(k_1,k_2)^2 + 8\frac{N(k_1,k_2)^3}{N} + 6\sqbra{N(k_1,k_2)+1}^2}{\sqbra{N(k_1,k_2)-1}\sqbra{N(k_1,k_2)-2}\sqbra{N(k_1,k_2)-3}}.
	\end{equation*}
	Provided $n_{k} \geq 5$ for all $k \in \Nset{K}$, then $N(k_1,k_2) \geq 10$ for all $k_1, k_2 \in \Nset{K}$. Furthermore, $N(k_1,k_2) - m \ge (1-\frac{m}{10})N(k_1,k_2)$ for each integer $0 \le m \le 9$, and $N(k_1,k_2) + 1 \le \frac{11}{10}N(k_1,k_2)$. This yields
	\begin{equation*}
		\left\vert\Ex\left[\diff_{i_1 i_2}\diff_{i_2 i_3}\diff_{i_3 i_4}\diff_{i_4 i_1}\right]\right\vert
		\leq \frac{10}{9}\frac{10}{8}\frac{10}{7}\left(\frac{1}{N(k_1,k_2)} + \frac{8}{N} + 6\cdot\frac{11^{2}}{10^{2}}\frac{1}{N(k_1,k_2)}\right)
		\leq \frac{33}{N(k_{1},k_{2})}.
	\end{equation*}
	Given any choice of $k_{1},k_{2},k_{3},k_{4}$ satisfying $k_{1}=k_{3}$ and $k_{2}=k_{4}$, there are two possibilities when counting the number of tuples (terms) with distinct entries $i_{1},i_{2},i_{3},i_{4}\in\Nset{n}$.
	\begin{enumerate}
		\item Provided $k_{1} = k_{2}$, there are $n_{k_{1}}(n_{k_{1}}-1)(n_{k_{1}}-2)(n_{k_{1}}-3)$ terms, and $N(k_{1},k_{2})=n_{k_{1}}(n_{k_{1}}-1)/2$. Since $(n_{k_{1}}-2)(n_{k_{1}}-3) \le \sqrt{n_{k_{1}}n_{k_{2}}n_{k_{3}}n_{k_{4}}}$, it holds that
		\begin{equation*}
			\sum_{\begin{array}{c} 
					i_1,i_2,i_3,i_4\in\Nset{n}\text{ distinct}\\ g_{i_1}=k_1,g_{i_2}=k_2,\\g_{i_3}=k_3,g_{i_4}=k_4
			\end{array}}
			\left\vert\Ex [\diff_{i_1 i_2}\diff_{i_2 i_3}\diff_{i_3 i_4}\diff_{i_4 i_1}] \right\vert\leq 2 \times 33 \times \sqrt{n_{k_1}n_{k_2}n_{k_3}n_{k_4}}.
		\end{equation*}
		\item Provided $k_{1} \neq k_{2}$, there are $n_{k_{1}}(n_{k_{1}}-1)n_{k_{2}}(n_{k_{2}}-1)$ terms, and $N(k_{1},k_{2})=n_{k_{1}}n_{k_{2}}$. Since $(n_{k_{1}}-1)(n_{k_{2}}-1) \le \sqrt{n_{k_{1}}n_{k_{2}}n_{k_{3}}n_{k_{4}}}$, it holds that
		\begin{equation*}
			\sum_{\begin{array}{c} 
					i_1,i_2,i_3,i_4\in\Nset{n}\text{ distinct}\\ g_{i_1}=k_1,g_{i_2}=k_2,\\g_{i_3}=k_3,g_{i_4}=k_4
			\end{array}}
			\left\vert\Ex [\diff_{i_1 i_2}\diff_{i_2 i_3}\diff_{i_3 i_4}\diff_{i_4 i_1}] \right\vert\leq 1 \times 33 \times \sqrt{n_{k_1}n_{k_2}n_{k_3}n_{k_4}}.
		\end{equation*}
	\end{enumerate}

\noindent\textbf{Case~2: $k_1=k_2$, $k_3=k_4$, $k_1\neq k_3$ (or $k_2=k_3$, $k_1=k_4$, $k_1\neq k_2$) and $g_{i_1}=k_1$, $g_{i_2}=k_2$, $g_{i_3}=k_3$, $g_{i_4}=k_4$.}
	We first analyze the situation $k_1=k_2$, $k_3=k_4$, $k_1 \neq k_3$. In this situation, $\diff_{i_1 i_2}$ is an entry in block $\Diff_{\grp_{k_1}\grp_{k_1}}$ while $\diff_{i_3i_4}$ is an entry in block $\Diff_{\grp_{k_3}\grp_{k_3}}$, and both $\diff_{i_2i_3}$ and $\diff_{i_4i_1}$ are entries in block $\Diff_{\grp_{k_1}\grp_{k_3}}$. Consequently, using \cref{equ::xy3} yields
	\begin{equation*}
		\begin{split}
			&\Ex[\diff_{i_1 i_2}\diff_{i_2 i_3}\diff_{i_3 i_4}\diff_{i_4 i_1}]\\
			&= \sum_{\fset_{k_1k_1},\fset_{k_3k_3},\fset_{k_1k_3}\subset\crbra{\frac{i}{N+1} : i\in\Nset{N}}}\prob\sqbra{\set_{k_1k_1}=\fset_{k_1k_1},\set_{k_3k_3}=\fset_{k_3k_3},\set_{k_1k_3}=\fset_{k_1k_3}}\\
			&\hspace{2em}\times \Ex_{\fset_{k_1k_1}}\sqbra{a-\rankspec_{k_1k_1}} \times \Ex_{\fset_{k_3k_3}}\sqbra{a-\rankspec_{k_3k_3}} \times \Ex_{\fset_{k_1k_3}}\sqbra{\bra{a-\rankspec_{k_1k_3}}\bra{b-\rankspec_{k_1k_3}}}\\
			&= \sum_{\fset_{k_1k_1},\fset_{k_3k_3},\fset_{k_1k_3}\subset\crbra{\frac{i}{N+1} : i\in\Nset{N}}}\prob\sqbra{\set_{k_1k_1}=\fset_{k_1k_1},\set_{k_3k_3}=\fset_{k_3k_3},\set_{k_1k_3}=\fset_{k_1k_3}}\\
			&\hspace{2em}\times \bra{\overline{\fset}_{k_1k_1}-\rankspec_{k_1k_1}} \times \bra{\overline{\fset}_{k_3k_3}-\rankspec_{k_3k_3}} \times \left(\tfrac{N(k_1,k_3)\left(\overline{\fset}_{k_1k_3}-\rankspec_{k_1k_3}\right)^2-\Ex_{\fset_{k_1k_3}}[(a-\rankspec_{k_1k_3})^2]}{N(k_1,k_3)-1}\right).
		\end{split}
	\end{equation*}
	All entries of both $\nrk{A}$ and $\rankspec$ are in the interval $[0,1]$, which together with \cref{inequ::var} yields
	\begin{equation*}
		\begin{split}
			\left\vert\Ex\diff_{i_1 i_2}\diff_{i_2 i_3}\diff_{i_3 i_4}\diff_{i_4 i_1} \right\vert
			&\leq \frac{N(k_1,k_3)}{N(k_1,k_3)-1}\Ex\sqbra{\bra{\overline{\set}_{k_1k_3}-\rankspec_{k_1k_3}}^2}+\frac{1}{N(k_1,k_3)-1}\\
			&\leq \frac{N(k_1,k_3)}{N(k_1,k_3)-1}\Var\sqbra{\overline{\set}_{k_1k_3}}+\frac{1}{N(k_1,k_3)-1}\\
			&\leq \frac{N(k_1,k_3)}{N(k_1,k_3)-1}\sqbra{\cfrac{1}{N(k_1,k_3)}+\frac{8}{N}}+\frac{1}{N(k_1,k_3)-1}\\
			&\leq \frac{12}{N(k_{1},k_{3})} \quad \textnormal{ provided } N(k_{1},k_{3}) \ge 10.
		\end{split}
	\end{equation*}
	In this situation, $N(k_{1},k_{3})=n_{k_{1}}n_{k_{3}}$ and there are $n_{k_{1}}(n_{k_{1}}-1)n_{k_{3}}(n_{k_{3}}-1)$ distinct tuples of $i_{1},i_{2},i_{3},i_{4}$. Since $(n_{k_{1}}-1)(n_{k_{3}}-1) \le \sqrt{n_{k_{1}}n_{k_{2}}n_{k_{3}}n_{k_{4}}}$, it holds that
	\begin{equation*}
		\sum_{
			\begin{array}{c} 
				i_1,i_2,i_3,i_4\in\Nset{n}\text{ distinct}\\
				g_{i_1}=k_1,g_{i_2}=k_2,\\g_{i_3}=k_3,g_{i_4}=k_4
			\end{array}}
		\left\vert\Ex \diff_{i_1 i_2}\diff_{i_2 i_3}\diff_{i_3 i_4}\diff_{i_4 i_1}\right \vert \leq 1 \times 12 \times \sqrt{n_{k_1}n_{k_2}n_{k_3}n_{k_4}}.
	\end{equation*}
	The same analysis applies to the remaining situation $k_2=k_3$, $k_1=k_4$, $k_1\neq k_2$, yielding the same bound in the above display equation.
	
\noindent\textbf{Case~3: $k_1=k_3$, $k_2\neq k_4$ (or $k_1\neq k_3$, $k_2= k_4$) and $g_{i_1}=k_1$, $g_{i_2}=k_2$, $g_{i_3}=k_3$, $g_{i_4}=k_4$.}
	We first analyze the situation $k_1=k_3$, $k_2\neq k_4$. Without loss of generality, we assume that $n_{k_2} \geq n_{k_4}$. In this case, $\diff_{i_1 i_2}$, $\diff_{i_2 i_3}$ are entries of block $\Diff_{\grp_{k_1}\grp_{k_2}}$ and $\diff_{i_3 i_4}$, $\diff_{i_4 i_1}$ are entries of block $\Diff_{\grp_{k_1}\grp_{k_4}}$. Thus,
	\begin{equation*}
		\begin{split}
			&\Ex[\diff_{i_1 i_2}\diff_{i_2 i_3}\diff_{i_3 i_4}\diff_{i_4 i_1}]\\
			&= \sum_{\fset_{k_1k_2},\fset_{k_1k_4}\subset\crbra{\frac{i}{N+1} : i\in\Nset{N}}}\prob\sqbra{\set_{k_1k_2}=\fset_{k_1k_2},\set_{k_1k_4}=\fset_{k_1k_4}}\\
			&\hspace{2em} \times \Ex_{\fset_{k_1k_2}}\sqbra{\bra{a-\rankspec_{k_1k_2}}\bra{b-\rankspec_{k_1k_2}}} \times \Ex_{\fset_{k_1k_4}}\sqbra{\bra{a-\rankspec_{k_1k_4}}\bra{b-\rankspec_{k_1k_4}}}
		\end{split}
	\end{equation*}
	and
	\begin{equation*}
		\begin{split}
			&\left\vert\Ex\diff_{i_1 i_2}\diff_{i_2 i_3}\diff_{i_3 i_4}\diff_{i_4 i_1} \right\vert\\
			&\leq \sum_{\fset_{k_1k_2},\fset_{k_1k_4}\subset\crbra{\frac{i}{N+1} : i\in\Nset{N}}}\prob\sqbra{\set_{k_1k_2}=\fset_{k_1k_2},\set_{k_1k_4}=\fset_{k_1k_4}}\\
			&\hspace{2em}\times \cfrac{N(k_1,k_2)\left(\overline{\fset}_{k_1k_2}-\rankspec_{k_1k_2}\right)^2+\Ex_{\fset_{k_1k_2}}[(a-\rankspec_{k_1k_2})^2]}{N(k_1,k_2)-1}.
		\end{split}
	\end{equation*}
	The above inequality is obtained using \cref{equ::xy3} and $\vert a-\rankspec_{k_1k_4} \vert\leq 1$ for any $a\in \fset_{k_1k_4}$. Furthermore, applying \cref{inequ::var} yields 
	\begin{equation*}
		\begin{split}
			\left\vert\Ex\diff_{i_1 i_2}\diff_{i_2 i_3}\diff_{i_3 i_4}\diff_{i_4 i_1} \right\vert
			&\le \frac{N(k_1,k_2)}{N(k_1,k_2)-1}\Var[\overline{\set}_{k_1,k_2}]+\frac{1}{N(k_1,k_2)-1}\\
			&\leq \frac{12}{N(k_{1},k_{2})} \quad \textnormal{ provided } N(k_{1},k_{2}) \ge 10.
		\end{split},
	\end{equation*}
	In particular,
	\begin{equation*}
		\sum_{
			\begin{array}{c} 
				i_1,i_2,i_3,i_4\in\Nset{n}\text{ distinct}\\
				g_{i_1}=k_1,g_{i_2}=k_2,\\
				g_{i_3}=k_3,g_{i_4}=k_4
			\end{array}}
		\left\vert\Ex \diff_{i_1 i_2}\diff_{i_2 i_3}\diff_{i_3 i_4}\diff_{i_4 i_1}\right\vert\leq 2 \times 12 \times \sqrt{n_{k_1}n_{k_2}n_{k_3}n_{k_4}}.
	\end{equation*}
	Here the constant $2$ comes from considering the separate cases $k_2 = k_1$ and $k_2 \neq k_1$.
	
	The above analysis also applies to the situation $k_1\neq k_3$, $k_2=k_4$, yielding the same upper bound in the display equation.

\noindent\textbf{Case~4: Either of the five situations for $k_1,k_2,k_3,k_4$ and $g_{i_1}=k_1$, $g_{i_2}=k_2$, $g_{i_3}=k_3$, $g_{i_4}=k_4$.
	\begin{itemize}
		\item $k_1,k_2,k_3,k_4$ are pairwise different.
		\item $k_2,k_3,k_4$ are pairwise different with $k_1=k_2$.
		\item $k_3,k_4,k_1$ are pairwise different with $k_2=k_3$.
		\item $k_1,k_2,k_3$ are pairwise different with $k_4=k_3$.
		\item $k_1,k_2,k_3$ are pairwise different with $k_4=k_1$.
	\end{itemize}
 }
	In all five situations, $\diff_{i_1 i_2}$, $\diff_{i_2 i_3}$, $\diff_{i_3 i_4}$, $\diff_{i_4 i_1}$ belong to four different blocks of $\Diff$. We have
	{\small
	\begin{equation*}
		\begin{split}
			&\Ex[\diff_{i_1 i_2}\diff_{i_2 i_3}\diff_{i_3 i_4}\diff_{i_4 i_1}]\\
			&= \sum_{\substack{\fset_{k_1k_2},\fset_{k_2k_3},\fset_{k_3k_4},\fset_{k_4k_1} \\
			\subset\crbra{\frac{i}{N+1} : i\in\Nset{N}}}}
			\prob\sqbra{\set_{k_1k_2}=\fset_{k_1k_2},\set_{k_2k_3}=\fset_{k_2k_3},\set_{k_3k_4}=\fset_{k_3k_4},\set_{k_4k_1}=\fset_{k_4k_1}}\\
			&\hspace{4em} \times \Ex_{\fset_{k_1k_2}}\sqbra{a-\rankspec_{k_1k_2}} \times \Ex_{\fset_{k_2k_3}}\sqbra{a-\rankspec_{k_2k_3}} \times \Ex_{\fset_{k_3k_4}}\sqbra{a-\rankspec_{k_3k_4}} \times \Ex_{\fset_{k_4k_1}}\sqbra{a-\rankspec_{k_4k_1}}\\
			&= \Ex\crbra{\sqbra{\overline{\set}_{k_1k_2}-\rankspec_{k_1k_2}}\sqbra{\overline{\set}_{k_2k_3}-\rankspec_{k_2k_3}}\sqbra{\overline{\set}_{k_3k_4}-\rankspec_{k_3k_4}}\sqbra{\overline{\set}_{k_4k_1}-\rankspec_{k_4k_1}}}.
		\end{split}
	\end{equation*}
	}
	Furthermore,
	{\small
	\begin{equation}\label{inequ::xy4}
		\begin{split}
			&\left\vert\Ex\diff_{i_1 i_2}\diff_{i_2 i_3}\diff_{i_3 i_4}\diff_{i_4 i_1}\right\vert\\
			&\hspace{2em}\leq \sqrt{\Ex\crbra{\sqbra{\overline{\set}_{k_1k_2}-\rankspec_{k_1k_2}}^2\sqbra{\overline{\set}_{k_2k_3}-\rankspec_{k_2k_3}}^2}\Ex\crbra{\sqbra{\overline{\set}_{k_3k_4}-\rankspec_{k_3k_4}}^2\sqbra{\overline{\set}_{k_4k_1}-\rankspec_{k_4k_1}}^2}} \\
			&\hspace{2em}\le \sqrt{\Ex\crbra{\sqbra{\overline{\set}_{k_1k_2}-\rankspec_{k_1k_2}}^2}\Ex\crbra{\sqbra{\overline{\set}_{k_3k_4}-\rankspec_{k_3k_4}}^2}}\\
			&\hspace{2em}\leq \sqrt{\Var\sqbra{\overline{\set}_{k_1k_2}}\Var\sqbra{\overline{\set}_{k_3k_4}}}\\
			&\hspace{2em}\leq \frac{9}{\sqrt{N(k_{1},k_{2}) N(k_{3},k_{4})}} \qquad \textnormal{ provided } N(k_{1},k_{2}), N(k_{3},k_{4}) \ge 10.
		\end{split}
	\end{equation}
	}
	Here, the first inequality is obtained via the Cauchy--Schwarz inequality, and the fourth inequality is obtained using \cref{inequ::var}. Consequently, in the present case,
	\begin{equation*}
		\sum_{
			\begin{array}{c} 
				i_1,i_2,i_3,i_4\in\Nset{n}\text{ distinct}\\
				g_{i_1}=k_1,g_{i_2}=k_2,\\
				g_{i_3}=k_3,g_{i_4}=k_4
			\end{array}}
		\left\vert\Ex \diff_{i_1 i_2}\diff_{i_2 i_3}\diff_{i_3 i_4}\diff_{i_4 i_1}\right\vert
		\leq
		2 \times 9 \times \sqrt{n_{k_1}n_{k_2}n_{k_3}n_{k_4}}.
	\end{equation*}

	\begin{center}
		$(\star)$
	\end{center}

	The above analysis considers all possible configurations of $g_{i_1}$, $g_{i_2}$, $g_{i_3}$, $g_{i_4}$ when $i_{1}$, $i_2$, $i_3$, $i_4$ are pairwise different. These four cases cover all possible scenarios for $g_{i_1},g_{i_2},g_{i_3},g_{i_4}$. Provided $n_k\geq 5$ for all $k\in\Nset{K}$, then for any $k_1,k_2,k_3,k_4\in\Nset{K}$, we have established the upper bound
	\begin{equation}
		\sum_{
			\begin{array}{c} 
				i_1,i_2,i_3,i_4\in\Nset{n}\text{ distinct}\\
				g_{i_1}=k_1,g_{i_2}=k_2,\\g_{i_3}=k_3,g_{i_4}=k_4
			\end{array}}
		\left\vert\Ex \diff_{i_1 i_2}\diff_{i_2 i_3}\diff_{i_3 i_4}\diff_{i_4 i_1}\right\vert
		\leq
		66\sqrt{n_{k_1}n_{k_2}n_{k_3}n_{k_4}}
	\end{equation}
	which holds irrespective of the values taken by $k_{1}, k_{2}, k_{3}, k_{4}$.

	Finally,
	\begin{equation*}
		\begin{split}
			&\left\vert\sum_{
				\begin{array}{c}
					i_1,i_2,i_3,i_4\in\Nset{n}\text{}\\
					\text{ pairwise different}
				\end{array}}
			\Ex[\diff_{i_1 i_2}\diff_{i_2 i_3}\diff_{i_3 i_4}\diff_{i_4 i_1}]\right\vert \\
			&\hspace{8em}\leq \sum_{k_1,k_2,k_3,k_4\in\Nset{K}}\sum_{
				\begin{array}{c} 
					i_1,i_2,i_3,i_4\in\Nset{n}\text{}\\
					\text{ pairwise different,}\\
					g_{i_1}=k_1,g_{i_2}=k_2,\\g_{i_3}=k_3,g_{i_4}=k_4
				\end{array}}
			\left\vert\Ex \diff_{i_1 i_2}\diff_{i_2 i_3}\diff_{i_3 i_4}\diff_{i_4 i_1}\right\vert\\
			&\hspace{8em}\leq
			66\bra{\sum_{k_1,k_2,k_3,k_4\in\Nset{K}}\sqrt{n_{k_1}n_{k_2}n_{k_3}n_{k_4}}}\\
			&\hspace{8em}\leq
			66\bra{\sum_{k\in\Nset{K}}\sqrt{n_k}}^4\\
			&\hspace{8em}\leq
			66\bra{\sqrt{K n}}^4\\
			&\hspace{8em}\leq
			66 K^{2} n^{2}.
		\end{split}
	\end{equation*}
	Above, we use the fact that for the $K$-dimensional vector of all ones, $1_{K}$ and $\sqrt{\vec{n}} = (\sqrt{n_{1}},\dots,\sqrt{n_{K}})^{\tp}$, we have $\|\sqrt{\vec{n}}\|_{\ell_{2}}^{2}=n$. By the Cauchy--Schwarz inequality,
	\begin{equation*}
		|\langle \sqrt{\vec{n}},1_{K}\rangle|
		\le
		\|\sqrt{\vec{n}}\|_{\ell_{2}}\|1_{K}\|_{\ell_{2}}
		=
		\sqrt{K n}.
	\end{equation*}
	Using the result above together with \cref{inequ::xy1}, we finally have
	\begin{equation*}
		\Ex\left[\trace({\Diff^4})\right]
		\leq
		66K^{2}n^{2} + 2n^{3}.
	\end{equation*}
	This concludes the proof of \cref{lem:ms_trace}.
\end{proof}

\subsection{Proofs: a combinatorial expectation bound}
\label{app:higher_order_concentration}
\begin{lemma}[A combinatorial expectation bound]\label[lemma]{lem:ho_cctrtn}
	Let $A \sim \operatorname{Blockmodel}\blockmodel$ per \cref{def:data_matrices}. Let $\nrk{A}$ denote the matrix of normalized rank statistics of $A$, per \cref{alg:ptr_baseline}. There exists a universal constant $C > 0$ such that if $n_{k} \geq 4$ for all $k \in \Nset{K}$, then for each $i \in \Nset{n}$,
	{\small
	\begin{equation}
		\label{equ::higher_order_concentration_bound}
	\begin{split}
		&\left\vert 
		\sum_{r\in\Nset{K}}\frac{1}{n_r}\Ex\left[\sum_{h,l\in\Nset{n};s,t\in \grp_{r}} \left\{\nrk{A}-\Ex[\nrk{A}]\right\}_{ih}\left\{\nrk{A}-\Ex[\nrk{A}]\right\}_{il}\left\{\nrk{A}-\Ex[\nrk{A}]\right\}_{hs}\left\{\nrk{A}-\Ex[\nrk{A}]\right\}_{lt} \right] \right\vert\\
		&\leq
		C\max\left\{\frac{n^{2}}{n_{g_{i}}},\sqrt{\frac{K^{3}n^{3}}{n_{g_{i}}}} \right\}.
	\end{split}	
	\end{equation}
	}
\end{lemma}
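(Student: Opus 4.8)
The plan is to recognize the displayed sum as the expected squared row norm $\Ex\bigl[\|(\nrk{A}-\Ex[\nrk{A}])^{2}\Theta\Delta^{-1}\|_{i,\ell_{2}}^{2}\bigr]$, which is exactly the quantity whose control is invoked in the proof of \cref{result:strong_consistency_general} (by linearity of $\Ex$ and the expansion of $\|(\nrk{A}-\Ex[\nrk{A}])^{2}\Theta\Delta^{-1}\|_{i,\ell_{2}}^{2}$ given there). Writing $\Diff = \nrk{A}-\Ex[\nrk{A}]$ with entries $\diff_{ab}$, the summand is the monomial $\diff_{ih}\diff_{il}\diff_{hs}\diff_{lt}$, whose four ``edges'' $(i,h),(i,l),(h,s),(l,t)$ form the path $s\!-\!h\!-\!i\!-\!l\!-\!t$ rooted at the fixed index $i$, with $s,t$ constrained to the common block $\grp_{r}$. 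Since $\Diff$ is hollow, any term in which an edge degenerates to the diagonal vanishes, so I only retain configurations with $h\neq i$, $l\neq i$, $s\neq h$, $t\neq l$. As the quantity is an expectation of a nonnegative sum, the outer absolute value is harmless.

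The engine of the argument is the conditional independence structure already exploited in the proof of \cref{lem:ms_trace}: conditioning on the event $\mathcal{A}$ that assigns each normalized rank value to its block, i.e.\ $\set_{k_{1}k_{2}}=\fset_{k_{1}k_{2}}$ for all $k_{1}\le k_{2}$, the within-block arrangements are uniform-without-replacement and mutually independent across blocks. Under this conditioning, $\Ex[\diff_{ab}\mid\mathcal{A}]=\overline{\set}_{g_{a}g_{b}}-\rankspec_{g_{a}g_{b}}$, whose second moment obeys the key bound $\Var[\overline{\set}_{k_{1}k_{2}}]\le \frac{1}{N(k_{1},k_{2})}+\frac{8}{N}$ established in \cref{inequ::var}, while the elementary facts $|\diff_{ab}|\le 1$ and $|\Cov[\nrke_{ij},\nrke_{i'j'}]|\le 8/N$ from \cref{prop:ranks_multi_covariance} control the remaining contributions. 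After factoring the conditional expectation over the blocks visited by the four edges and integrating over $\mathcal{A}$, I would organize the terms according to (i) the coincidence pattern among $\{h,l,s,t\}$ and (ii) the induced block memberships $(g_{i},g_{h}),(g_{i},g_{l}),(g_{h},r),(g_{l},r)$ of the four edges.

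Two families produce the stated bound. The first is the ``collapsed'' family $s=t=i$, which forces $r=g_{i}$ and leaves the monomial equal to $\diff_{ih}^{2}\diff_{il}^{2}$; bounding its expectation by $1$, summing over the $n^{2}$ free choices of $(h,l)$, and multiplying by the weight $1/n_{g_{i}}$ yields the term $n^{2}/n_{g_{i}}$. The second is the ``spread'' family in which the four edges lie in essentially distinct blocks: here each conditional edge-mean has mean zero, so after integrating over $\mathcal{A}$ a Cauchy--Schwarz pairing (splitting the four-fold product as $(e_{1}e_{2})(e_{3}e_{4})$ and bounding the $e_{2},e_{4}$ factors by $1$) gives the per-term estimate $9\,(N(g_{i},g_{h})\,N(g_{h},r))^{-1/2}\le 9\,\bigl(n_{g_{h}}\sqrt{n_{g_{i}}n_{r}}\bigr)^{-1}$. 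Summing this against the weight $1/n_{r}$, over $s,t\in\grp_{r}$ (a factor $n_{r}^{2}$), over $l$ (a factor $n$), over $h$ using $\sum_{h}n_{g_{h}}^{-1}=K$, and finally over $r$ using $\sum_{r}n_{r}^{1/2}\le\sqrt{Kn}$, assembles to $9\,K^{3/2}n^{3/2}n_{g_{i}}^{-1/2}=9\sqrt{K^{3}n^{3}/n_{g_{i}}}$. The outer $\max$ then captures whichever of these dominates.

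The main obstacle is the exhaustive case bookkeeping needed to confirm that every remaining configuration is dominated by one of these two. I would group the intermediate cases by the number of distinct edges and distinct visited blocks, treating partial coincidences (e.g.\ $h=l$, or a single edge pairing such as $s=i$ with $t\neq i$) by combining $|\diff_{ab}|\le1$ with the conditional variance/covariance bounds, exactly as in the four-case split of \cref{lem:ms_trace}; each such family contributes at most a constant multiple of $Kn$ (and $Kn\le\sqrt{K^{3}n^{3}/n_{g_{i}}}$ since $n_{g_{i}}\le n$), hence is absorbed. The delicate point throughout is carrying the correct powers of $n_{g_{i}}$ and $K$ simultaneously through both the per-term expectation bound and the weighted counting, so that the two surviving families reproduce precisely the exponents in $\max\{n^{2}/n_{g_{i}},\sqrt{K^{3}n^{3}/n_{g_{i}}}\}$; verifying that the diagonal-block variants (where some $g_{h}=g_{i}$ or $g_{h}=r$ changes $N(k_{1},k_{2})$ from $n_{k}n_{k'}$ to $n_{k}(n_{k}-1)/2$) do not spoil these counts is the most error-prone step and will rely on the hypothesis $n_{k}\ge4$ (ensuring $N(k_{1},k_{2})\ge 6$) to keep all the $N(k_{1},k_{2})^{-1}$ factors comparable.
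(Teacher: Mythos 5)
Your proposal is correct and takes essentially the same route as the paper's proof: the same conditioning-on-block-contents device inherited from the proof of \cref{lem:ms_trace}, the same exhaustive case split over coincidence patterns of $(h,l,s,t)$ and the blocks of the four edges, and the same two dominant families --- the collapsed case $s=t=i$ (forcing $r=g_{i}$) giving $n^{2}/n_{g_{i}}$, and the four-distinct-block case giving $O\bigl(\sqrt{K^{3}n^{3}/n_{g_{i}}}\bigr)$ via Cauchy--Schwarz with $\Var[\overline{\mathcal{S}}_{k_{1}k_{2}}]\le C_{1}/N(k_{1},k_{2})$, with all remaining configurations absorbed at $O(Kn)$ exactly as you claim. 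Your only deviations are cosmetic and check out: you retain the blocks of edges $(i,h)$ and $(h,s)$ in the Cauchy--Schwarz pairing where the paper retains those of $(i,h)$ and $(l,t)$, and you sum over node indices via $\sum_{h}n_{g_{h}}^{-1}=K$ and $\sum_{r}\sqrt{n_{r}}\le\sqrt{Kn}$ rather than over block indices via $\bigl(\sum_{k}\sqrt{n_{k}}\bigr)^{3}\le (Kn)^{3/2}$, both assembling to the same bound.
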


\begin{proof}[Proof of \cref{lem:ho_cctrtn}]
We adopt the notation used in the proof of \cref{lem:os_trace}, namely $\Diff = \nrk{A} - \Ex[\nrk{A}]$ whose entry $(i,h)$ is denoted by $\gamma_{ih}$. With this convention, we express the left-hand side of \cref{equ::higher_order_concentration_bound} in the more compact form
\begin{equation}
	\label{equ::XY_higher_order_concentration_term}
	\left\vert \sum_{r\in\Nset{K}}\frac{1}{n_r}\sum_{h,l\in\Nset{n};s,t\in \grp _{r}} \Ex[\diff_{ih}\diff_{il}\diff_{hs}\diff_{lt}]
	\right\vert.
\end{equation}
In what follows, we bound $\vert \Ex[\diff_{ih}\diff_{il}\diff_{hs}\diff_{lt}] \vert$ by considering different cases for indices $h,l,s,t$, restricting to situations when $i\neq h$, $i\neq l$, $h\neq s$ and $l\neq t$, since otherwise the expectation vanishes. We group all possible configurations of $h,l,s,t$, under the above restriction, into four cases. Importantly, $\{i,h\}$, $\{i,l\}$, $\{h,s\}$ and $\{l,t\}$ are four distinct sets if and only if $h\neq l$, $s\neq i$, $t\neq i$ and $(s,t)\neq (l,h)$. The four cases of interest for $h,l,s,t$ are given below.

\begin{itemize}
	\item[] \noindent\textbf{Case~1:} $h\neq l$, $s\neq i$, $t\neq i$, and $(s,t)\neq (l,h)$,
	\item[] \noindent\textbf{Case~2:} $h= l$,
	\item[] \noindent\textbf{Case~3:} $h\neq l$, $i\in\{s,t\}$, and $(s,t)\neq (l,h)$,
	\item[]\noindent\textbf{Case~4:} $(s,t)=(l,h)$.
\end{itemize}
We shall handle these cases by further dividing each one into several subcases which can be analyzed separately. In this proof, we will repeatedly use the fact that, when $n_k \geq 4$ for all $k \in \Nset{K}$, then $N(k_1,k_2)\geq 6$ for all $k_1,k_2\in\Nset{K}$, with $N(k_1,k_2)$ denoting the number of unique matrix entries in the $(k_{1},k_{2})$ block, as defined in the proof of \cref{lem:ms_trace}.\\

\noindent\textbf{Case~1.1:} $h\neq l$, $s\neq i$, $t\neq i$, $(s,t)\neq (l,h)$, $g_h=g_l$ and $r\neq g_i$.\\ Denote $g_h$ by $k$. In this case, the variables $\diff_{ih}$, $\diff_{il}$ are entries of the block $\Diff_{\grp _{g_i}\grp _{k}}$, and $\diff_{hs}$, $\diff_{lt}$ are entries of the block $\Diff_{\grp _{k}\grp _{r}}$. Using the notation for $\set_{k_1k_2}$, $\fset_{k_1k_2}$ and $\Ex_{\fset_{k_1k_2}}$ as in the proof of \cref{lem:ms_trace}, we compute
\begin{equation}
	\label{inequ::xy5}
	\begin{split}
		&\ABS{\Ex\sqbra{\diff_{ih}\diff_{il}\diff_{hs}\diff_{lt}}}\\
		&\hspace{1em}=\Bigg\lvert\sum_{\fset_{g_ik},\fset_{kr}\subset \crbra{\frac{i}{N+1} : i\in\Nset{N}}}\prob\sqbra{\set_{g_ik}=\fset_{g_ik},\set_{kr}=\fset_{kr}}\\
		&\hspace{3em}\times \Ex_{\fset_{g_ik}}\sqbra{(a-\rankspec_{g_ik})(b-\rankspec_{g_ik})}\Ex_{\fset_{kr}}\sqbra{(a-\rankspec_{kr})(b-\rankspec_{kr})} \Bigg\rvert\\
		&\hspace{1em}\leq \sum_{\fset_{kr}\subset \crbra{\frac{i}{N+1} : i\in\Nset{N}}}\prob\sqbra{\set_{kr}=\fset_{kr}}\cfrac{N(k,r)\Ex^2_{\fset_{kr}}[a-\rankspec_{kr}]+\Ex_{\fset_{kr}}[(a-\rankspec_{kr})^2]}{N(k,r)-1}\\
		&\hspace{1em}\leq \frac{6}{5}\sum_{\fset_{kr}\subset \crbra{\frac{i}{N+1} : i\in\Nset{N}}}\prob\sqbra{\set_{kr}=\fset_{kr}}(\overline{\fset}_{kr}-\rankspec_{kr})^2+\frac{1}{N(k,r)-1}\\
		&\hspace{1em}\leq \frac{6}{5}\bra{\Var[\overline{\set}_{kr}]+\frac{1}{N(k,r)}}.
	\end{split}
\end{equation}
The first inequality above is established using \cref{equ::xy3} and the fact that $\Ex_{\fset_{g_ik}}[(a-\rankspec_{g_ik})(b-\rankspec_{g_ik})]\in [-1,1]$. The second inequality invokes the relation $N(k,r) - 1 \ge \frac{5}{6}N(k,r)$ since $N(k,r) \ge 6$. We showed in \cref{inequ::var} that there exists a constant $C_{1} > 0$, such that $\Var[\overline{\set}_{k_1 k_2}]\leq C_1/N(k_1,k_2)$ for any $k_1,k_2\in\Nset{K}$. Thus,
\begin{equation}
	\ABS{\Ex\sqbra{\diff_{ih}\diff_{il}\diff_{hs}\diff_{lt}}}\leq \frac{6(C_1+1)}{5N(k,r)}.
\end{equation}
For any fixed $k$ and fixed $r\neq g_i$, the number of $h,l,s,t$ tuples satisfying Case~1.1 does not exceed $n_{k}^{2}n_{r}^{2}$. Therefore,
\begin{equation}
	\begin{split}
		&\left\vert \sum_{r\in\Nset{K}}\frac{1}{n_r}\sum_{\substack{h,l\in\Nset{n};s,t\in \grp _{r}\\ h,l,s,t \text{ satisfy Case~1.1}}} \Ex[\diff_{ih}\diff_{il}\diff_{hs}\diff_{lt}] \right\vert\\
		&\hspace{10em}\leq
		\sum_{r\in\Nset{K}-\{g_i\}}\frac{1}{n_r}\sum_{k\in\Nset{K}}\frac{6\bra{C_1+1}n_k^2n_r^2}{5N(k,r)}\\
		&\hspace{10em}\leq
		\frac{6 \cdot 2 \cdot 4}{5 \cdot 3}\bra{C_1+1}\sum_{r\in\Nset{K}-\{g_i\}}\frac{1}{n_r}\sum_{k\in\Nset{K}}n_kn_r\\
		&\hspace{10em}\leq
		\frac{16}{5}(C_1+1)(K-1)n.
	\end{split}
\end{equation}
Above, we have used the fact that $n_{k} \ge 4$ implies $n_{k}/(n_{k}-1) \le 4/3$.\\

\noindent\textbf{Case~1.2:} $h\neq l$, $s\neq i$, $t\neq i$, $(s,t)\neq (l,h)$, $g_h=g_l$, and $r= g_i$.\\
We denote $g_h$ by $k$. In this case, $\diff_{ih}$, $\diff_{il},\diff_{hs}$, $\diff_{lt}$ are all entries of the block $\Diff_{\grp _{g_i}\grp _{k}}$. Thus,
{\small
\begin{equation}
	\begin{split}
		&\ABS{\Ex\sqbra{\diff_{ih}\diff_{il}\diff_{hs}\diff_{lt}}}\\
		&\hspace{1em}=\left\vert\sum_{\fset_{g_ik}\subset \crbra{\frac{i}{N+1} : i\in\Nset{N}}}\prob\sqbra{\set_{g_ik}=\fset_{g_ik}}\Ex_{\fset_{g_ik}}\sqbra{(a-\rankspec_{g_ik})(b-\rankspec_{g_ik})(c-\rankspec_{g_ik})(d-\rankspec_{g_ik})}\right\vert\\
		&\hspace{1em}\leq \frac{N(g_i,k)^3\Ex\sqbra{(\overline{\set}_{g_i,k}-\rankspec_{g_ik})^4}+6\sqbra{N(g_i,k)+1}^2}{\sqbra{N(g_i,k)-1}\sqbra{N(g_i,k)-2}\sqbra{N(g_i,k)-3}} \\
		&\hspace{1em}\leq \frac{N(g_i,k)^3\Var[\overline{\set}_{g_ik}]+6\sqbra{N(g_i,k)+1}^2}{\sqbra{N(g_i,k)-1}\sqbra{N(g_i,k)-2}\sqbra{N(g_i,k)-3}}\\
		&\hspace{1em}\leq \frac{C_1N(g_i,k)^2+6\sqbra{N(g_i,k)+1}^2}{\sqbra{N(g_i,k)-1}\sqbra{N(g_i,k)-2}\sqbra{N(g_i,k)-3}}\\
		&\hspace{1em}\leq \bra{\frac{18C_1}{5}+\frac{147}{5}}\frac{1}{N(g_i,k)}.
	\end{split}
\end{equation}
}
Above, the first inequality is established using \cref{equ::xy1} and \cref{inequ::xy2}, and the second inequality is established using \cref{inequ::xy3}. For any fixed $k$, the number of $h,l,s,t$ tuples satisfying Case~1.2 does not exceed $n_{g_i}^{2} n_{k}^{2}$. Thus,
\begin{equation}
	\begin{split}
		\left\vert \sum_{r\in\Nset{K}}\frac{1}{n_r}\sum_{\substack{h,l\in\Nset{n};s,t\in \grp _{r}\\ h,l,s,t \text{ satisfy Case~1.2}}} \Ex[\diff_{ih}\diff_{il}\diff_{hs}\diff_{lt}] \right\vert
		&\leq \frac{1}{n_{g_i}}\sum_{k\in\Nset{K}}\bra{\frac{18C_1}{5}+\frac{147}{5}}\frac{n_{g_i}^2n_{k}^2}{N(g_i,k)}\\
		&\leq \bra{\frac{48C_1}{5}+\frac{392}{5}}n.
	\end{split}
\end{equation}

\noindent\textbf{Case~1.3:} $h\neq l$, $s\neq i$, $t\neq i$, $(s,t)\neq (l,h)$, $g_h\neq g_l$, $r\neq g_i$, and $\{g_h,g_l\}\neq\{g_i,r\}$.\\ In this case, $\diff_{ih}$, $\diff_{il}$, $\diff_{hs}$, $\diff_{lt}$ are in four different blocks of $\Diff$, determined by the structure in the membership matrix $\Theta$. Denote $g_h$ and $g_l$ by $k$ and $k'$, respectively. Then,
{\small
\begin{equation}
	\begin{split}
		&\ABS{\Ex\sqbra{\diff_{ih}\diff_{il}\diff_{hs}\diff_{lt}}}\\
		&\hspace{1em}=\Bigg\lvert \sum_{\fset_{g_ik},\fset_{g_ik'},\fset_{kr},\fset_{k'r}\subset \crbra{\frac{i}{N+1} : i\in\Nset{N}}}\prob\sqbra{\set_{g_ik}=\fset_{g_ik},\set_{g_ik'}=\fset_{g_ik'},\set_{kr}=\fset_{kr},\set_{k'r}=\fset_{k'r}}\\
		&\hspace{4em}\times \Ex_{\fset_{g_ik}}\sqbra{(a-\rankspec_{g_ik})}\Ex_{\fset_{g_ik'}}\sqbra{(a-\rankspec_{g_ik'})}\Ex_{\fset_{kr}}\sqbra{(a-\rankspec_{kr})}\Ex_{\fset_{k'r}}\sqbra{(a-\rankspec_{k'r})} \Bigg\rvert\\
		&\hspace{1em}=\left\vert\Ex\sqbra{(\overline{\set}_{g_ik}-\rankspec_{g_ik})(\overline{\set}_{g_ik'}-\rankspec_{g_ik'})(\overline{\set}_{kr}-\rankspec_{kr})(\overline{\set}_{k'r}-\rankspec_{k'r})} \right\vert\\
		&\hspace{1em}\leq \sqrt{\Var\sqbra{\overline{\set}_{g_ik}}\Var\sqbra{\overline{\set}_{k'r}}}\\
		&\hspace{1em}\leq \frac{C_1}{\sqrt{N(g_i,k)N(k',r)}}.
	\end{split}
\end{equation}
}
Above, the first inequality is established using \cref{inequ::xy4}. For any fixed $r,k,k'\in\Nset{K}$ satisfying $r\neq g_i$, $k\neq k'$ and $\{k,k'\}\neq\{g_i,r\}$, the number of $h,l,s,t$ tuples satisfying Case~1.3 does not exceed $n_{k} n_{k'} n_{r}^{2}$. Thus,
\begin{equation}
	\begin{split}
		\left\vert \sum_{r\in\Nset{K}}\frac{1}{n_r}\sum_{\substack{h,l\in\Nset{n};s,t\in \grp _{r}\\ h,l,s,t \text{ satisfy Case~1.3}}} \Ex[\diff_{ih}\diff_{il}\diff_{hs}\diff_{lt}] \right\vert&\leq\sum_{r\in\Nset{K}-\{g_i\}} \frac{1}{n_{r}}\sum_{k,k'\in\Nset{K}}\frac{C_1n_kn_{k'}n_r^2}{\sqrt{N(g_i,k)N(k',r)}}\\
		&\leq C_1\sqrt{\frac{8}{3}}\frac{1}{\sqrt{n_{g_i}}} \sum_{k,k',r\in\Nset{K}}\sqrt{n_kn_{k'}n_r}\\
		&\leq C_1\sqrt{\frac{8}{3}}\frac{1}{\sqrt{n_{g_i}}} \bra{\sum_{k\in\Nset{K}}\sqrt{n_k}}^3\\
		&\leq C_1\sqrt{\frac{8}{3}}\sqrt{\frac{K^{3} n^{3} }{n_{g_i}}}.
	\end{split}
\end{equation}

\noindent\textbf{Case~1.4:} $h\neq l$, $s\neq i$, $t\neq i$, $(s,t)\neq (l,h)$, $g_h\neq g_l$, $r\neq g_i$ and $\{g_h,g_l\}=\{g_i,r\}$.\\
If $(g_h,g_l)=(g_i,r)$, then $\diff_{ih}$ and $\diff_{lt}$ are entries of blocks $\Diff_{\grp_{g_i}\grp_{g_i}}$ and $\Diff_{\grp_{r}\grp_{r}}$, respectively, while $\diff_{il}$ and $\diff_{hs}$ are entries of the block $\Diff_{\grp_{g_i}\grp_{r}}$. Thus,
\begin{equation}
	\begin{split}
		&\ABS{\Ex\sqbra{\diff_{ih}\diff_{il}\diff_{hs}\diff_{lt}}}\\
		&\hspace{1em}=\Bigg\lvert\sum_{\fset_{g_ig_i},\fset_{rr},\fset_{g_ir}\subset \crbra{\frac{i}{N+1} : i\in\Nset{N}}}\prob\sqbra{\set_{g_ig_i}=\fset_{g_ig_i},\set_{rr}=\fset_{rr},\set_{g_ir}=\fset_{g_ir}}\\
		&\hspace{4em}\times \Ex_{\fset_{g_ig_i}}\sqbra{(a-\rankspec_{g_ig_i})}\Ex_{\fset_{rr}}\sqbra{(a-\rankspec_{rr})}\Ex_{\fset_{g_ir}}\sqbra{(a-\rankspec_{g_ir})(b-\rankspec_{g_ir})} \Bigg\rvert\\
		&\hspace{1em}\leq \sum_{\fset_{g_ir}\subset \crbra{\frac{i}{N+1} : i\in\Nset{N}}}\prob\sqbra{\set_{g_ir}=\fset_{g_ir}}\cfrac{N(g_i,r)\Ex^2_{\fset_{g_ir}}[a-\rankspec_{g_ir}]+\Ex_{\fset_{g_ir}}[(a-\rankspec_{g_ir})^2]}{N(g_i,r)-1}\\
		&\hspace{1em}\leq \frac{6}{5}\bra{\Var[\overline{\set}_{g_ir}]+\frac{1}{N(g_i,r)}}\\
		&\hspace{1em}\leq \frac{6}{5}\bra{C_1+1}\frac{1}{N(g_i,r)}.
	\end{split}
\end{equation}
The inequalities above are established in the same way as \cref{inequ::xy5}. Note that the last inequality also holds for the situation where $(g_l,g_h)=(g_i,r)$. For any fixed $r\in\Nset{K}$, the number of $h,l,s,t$ tuples satisfying Case~1.4 does not exceed $2n_{g_i}n_r^3$. Therefore,
\begin{equation}
	\begin{split}
		\left\vert \sum_{r\in\Nset{K}}\frac{1}{n_r}\sum_{\substack{h,l\in\Nset{n};s,t\in \grp _{r}\\ h,l,s,t \text{ satisfy Case~1.4}}} \Ex[\diff_{ih}\diff_{il}\diff_{hs}\diff_{lt}] \right\vert
		&\leq \frac{6}{5}\bra{C_1+1}\sum_{r\in\Nset{K}-\{g_i\}} \frac{1}{n_{r}}\frac{2n_{g_i}n_r^3}{N(g_i,r)}\\
		&\leq \frac{12}{5}\bra{C_1+1} \sum_{r\in\Nset{K}-\{g_i\}} n_r\\
		&\leq \frac{12}{5}\bra{C_1+1}(n-n_{g_i}).
	\end{split}
\end{equation}

\noindent\textbf{Case~1.5:} $h\neq l$, $s\neq i$, $t\neq i$, $(s,t)\neq (l,h)$, $g_h\neq g_l$ and $r = g_i$.\\
Denote $g_h$ and $g_l$ by $k$ and $k'$, respectively. In this case, $\diff_{ih}$, $\diff_{hs}$ are entries of the block $\Diff_{\grp_{g_i}\grp_{k}}$ and $\diff_{il}$, $\diff_{lt}$ are entries of the block $\Diff_{\grp_{g_i}\grp_{k'}}$. Thus,
\begin{equation}
	\begin{split}
		&\ABS{\Ex\sqbra{\diff_{ih}\diff_{il}\diff_{hs}\diff_{lt}}}\\
		&\hspace{1em}=\Bigg\lvert\sum_{\fset_{g_ik},\fset_{g_ik'}\subset \crbra{\frac{i}{N+1} : i\in\Nset{N}}}\prob\sqbra{\set_{g_ik}=\fset_{g_ik},\set_{g_ik'}=\fset_{g_ik'}}\\
		&\hspace{4em}\times \Ex_{\fset_{g_ik}}\sqbra{(a-\rankspec_{g_ik})(b-\rankspec_{g_ik})}\Ex_{\fset_{g_ik'}}\sqbra{(a-\rankspec_{g_ik'})(b-\rankspec_{g_ik'})} \Bigg\rvert\\
		&\hspace{1em}\leq \frac{6}{5}\min\crbra{\bra{\Var[\overline{\set}_{g_ik}]+\frac{1}{N(g_i,k)}},\bra{\Var[\overline{\set}_{g_ik'}]+\frac{1}{N(g_i,k')}}}\\
		&\hspace{1em} \leq \frac{6}{5}(C_1+1)\frac{1}{\max\crbra{N(g_i,k),N(g_i,k')}}.
	\end{split}
\end{equation}
The inequalities above are established in the same way as \cref{inequ::xy5}. For fixed $k,k'\in\Nset{K}$, the number of $h,l,s,t$ tuples satisfying Case~1.5 does not exceed $n_{k} n_{k'} n_{g_{i}}^{2}$. Therefore,
\begin{equation}
	\begin{split}
		&\left\vert \sum_{r\in\Nset{K}}\frac{1}{n_r}\sum_{\substack{h,l\in\Nset{n};s,t\in \grp _{r}\\ h,l,s,t \text{ satisfy Case~1.5}}} \Ex[\diff_{ih}\diff_{il}\diff_{hs}\diff_{lt}] \right\vert\\
		&\hspace{5em}
		\leq
		\frac{1}{n_{g_i}}\sum_{k,k'\in\Nset{K},k\neq k'}\frac{6}{5}\bra{C_1+1}\frac{n_kn_{k'}n_{g_i}^2}{\max\crbra{N(g_i,k),N(g_i,k')}}\\
		&\hspace{5em}
		\leq
		\frac{6}{5}\bra{C_1+1} K n.
	\end{split}
\end{equation}

\noindent\textbf{Case~2.1:} $h= l$ and $s=t=i$. The total number of $h,l,s,t$ tuples satisfying Case~2.1 is $n-1$. Using the trivial bound $\vert\Ex[\diff_{ih}\diff_{il}\diff_{hs}\diff_{lt}]\vert\leq 1$ yields
\begin{equation}
	\left\vert \sum_{r\in\Nset{K}}\frac{1}{n_r}\sum_{\substack{h,l\in\Nset{n};s,t\in \grp _{r}\\ h,l,s,t \text{ satisfy Case~2.1}}} \Ex[\diff_{ih}\diff_{il}\diff_{hs}\diff_{lt}] \right\vert \leq \frac{n-1}{n_{g_i}}.
\end{equation}

\noindent\textbf{Case~2.2:} $h= l$ and $s=t\neq i$. With any fixed $r$, the number of $h,l,s,t$ tuples satisfying Case~2.2 does not exceed $n_{r} n$. Therefore,
\begin{equation}
	\left\vert \sum_{r\in\Nset{K}}\frac{1}{n_r}\sum_{\substack{h,l\in\Nset{n};s,t\in \grp _{r}\\ h,l,s,t \text{ satisfy Case~2.2}}} \Ex[\diff_{ih}\diff_{il}\diff_{hs}\diff_{lt}] \right\vert \leq \sum_{r\in\Nset{K}} \frac{n_rn}{n_{r}}\leq K n.
\end{equation}

\noindent\textbf{Case~2.3:} $h= l$, $s\neq t$ and $i\in\{s,t\}$. In this case, since $i\in\{s,t\}$, we have $r=g_i$. Consequently, the total number of $h,l,s,t$ tuples satisfying Case~2.3 does not exceed $2n_{g_i}n$, hence
\begin{equation}
	\left\vert \sum_{r\in\Nset{K}}\frac{1}{n_r}\sum_{\substack{h,l\in\Nset{n};s,t\in \grp _{r}\\ h,l,s,t \text{ satisfy Case~2.3}}} \Ex[\diff_{ih}\diff_{il}\diff_{hs}\diff_{lt}] \right\vert \leq \frac{2n_{g_i}n}{n_{g_i}}\leq 2n.
\end{equation}

\noindent\textbf{Case~2.4:} $h= l$, $s\neq t$, $i\notin\{s,t\}$ and $r=g_i$. Let $k$ denote $g_{h}$. In this case, $\diff_{ih}\diff_{il}\diff_{hs}\diff_{lt}=\diff_{ih}^2\diff_{hs}\diff_{ht}$ and $\diff_{ih},\diff_{hs},\diff_{ht}$ are all entries of the block $\Diff_{\grp_{g_i}\grp_{k}}$. Thus,
{\small
\begin{equation}\label{inequ::xy6}
	\begin{split}
		&\ABS{\Ex\sqbra{\diff_{ih}\diff_{il}\diff_{hs}\diff_{lt}}}\\
		&\hspace{1em}=\Bigg\lvert\sum_{\fset_{g_ik}\subset \crbra{\frac{i}{N+1} : i\in\Nset{N}}}\prob\sqbra{\set_{g_ik}=\fset_{g_ik}}\\
		&\hspace{4em}\times \Ex_{\fset_{g_ik}}\sqbra{(a-\rankspec_{g_ik})^2(b-\rankspec_{g_ik})(c-\rankspec_{g_ik})} \Bigg\rvert\\
		&\hspace{1em}= \Bigg\lvert\sum_{\fset_{g_ik}\subset \crbra{\frac{i}{N+1} : i\in\Nset{N}}}\prob\sqbra{\set_{g_ik}=\fset_{g_ik}}\frac{1}{N(g_i,k)\sqbra{N(g_i,k)-1}\sqbra{N(g_i,k)-2}}\\
		&\hspace{4em}\times \left\{N(g_i,k)^3\Ex^2_{\fset_{g_ik}}[a-\rankspec_{g_ik}]\Ex_{\fset_{g_ik}}[(a-\rankspec_{g_ik})^2]\right.\\
		&\hspace{5em}- N(g_i,k)^2\Ex^2_{\fset_{g_ik}}[(a-\rankspec_{g_ik})^2]+2N(g_i,k) \Ex_{\fset_{g_ik}}[(a-\rankspec_{g_ik})^4]\\
		&\hspace{5em}\left. -2N(g_i,k)^2\Ex_{\fset_{g_ik}}[a-\rankspec_{g_ik}]\Ex_{\fset_{g_ik}}[(a-\rankspec_{g_ik})^3]\right\}\Bigg\rvert\\
		&\hspace{1em}\leq \sum_{\fset_{g_ik}\subset \crbra{\frac{i}{N+1} : i\in\Nset{N}}}\prob\sqbra{\set_{g_ik}=\fset_{g_ik}}\frac{N(g_i,k)^3(\overline{\fset}_{g_ik}-\rankspec_{g_ik})^2+3N(g_i,k)^2+2N(g_i,k)}{N(g_i,k)\sqbra{N(g_i,k)-1}\sqbra{N(g_i,k)-2}}\\
		&\hspace{1em}\leq \frac{9}{5}\bra{\Var\sqbra{\overline{\set}_{g_ik}}+\frac{3}{N(g_i,k)}+\frac{2}{N(g_i,k)^2}}\\
		&\hspace{1em}\leq \frac{9}{5}\bra{C_1+\frac{10}{3}}\frac{1}{N(g_i,k)}.
	\end{split}
\end{equation}
}
Above, the second equation is established using \cref{equ::xy4}. With any fixed $k$, the number of $h,l,s,t$ tuples satisfying Case~2.4 does not exceed $n_{k} n_{g_i}^2$. Thus,
\begin{equation}
	\begin{split}
		\left\vert \sum_{r\in\Nset{K}}\frac{1}{n_r}\sum_{\substack{h,l\in\Nset{n};s,t\in \grp _{r}\\ h,l,s,t \text{ satisfy Case~2.4}}} \Ex[\diff_{ih}\diff_{il}\diff_{hs}\diff_{lt}] \right\vert
		&\leq\frac{1}{n_{g_i}}\sum_{k\in\Nset{K}}\frac{9}{5}\bra{C_1+\frac{10}{3}}\frac{n_kn_{g_i}^2}{N(g_i,k)}\\
		&\leq \bra{\frac{24}{5}C_1+16}K.
	\end{split}
\end{equation}

\noindent\textbf{Case~2.5:} $h= l$, $s\neq t$, $i\notin\{s,t\}$ and $r\neq g_i$. Let $k$ denote $g_{h}$. In this case, $\diff_{ih}\diff_{il}\diff_{hs}\diff_{lt}=\diff_{ih}^2\diff_{hs}\diff_{ht}$, where $\diff_{ih}$ is an entry of the block $\Diff_{\grp_{g_i}\grp_{k}}$. Further, $\diff_{hs}$ and $\diff_{ht}$ are both entries of the block $\Diff_{\grp_{k}\grp_{r}}$. Thus,
\begin{equation}\label{inequ::xy7}
	\begin{split}
		&\ABS{\Ex\sqbra{\diff_{ih}\diff_{il}\diff_{hs}\diff_{lt}}}\\
		&\hspace{1em}= \Bigg\lvert\sum_{\fset_{g_ik},\fset_{kr}\subset \crbra{\frac{i}{N+1} : i\in\Nset{N}}}\prob\sqbra{\set_{g_ik}=\fset_{g_ik},\set_{kr}=\fset_{kr}}\\
		&\hspace{4em}\times \Ex_{\fset_{g_ik}}\sqbra{(a-\rankspec_{g_ik})^2}\Ex_{\fset_{kr}}\sqbra{(a-\rankspec_{kr})(b-\rankspec_{kr})} \Bigg\rvert\\
		&\hspace{1em}\leq \sum_{\fset_{kr}\subset 	\crbra{\frac{i}{N+1} : i\in\Nset{N}}}\prob\sqbra{\set_{kr}=\fset_{kr}}\cfrac{N(k,r)\Ex^2_{\fset_{kr}}[a-\rankspec_{kr}]+\Ex_{\fset_{kr}}[(a-\rankspec_{kr})^2]}{N(k,r)-1}\\
		&\hspace{1em}\leq \frac{6}{5}\sum_{\fset_{kr}\subset 	\crbra{\frac{i}{N+1} : i\in\Nset{N}}}\prob\sqbra{\set_{kr}=\fset_{kr}}(\overline{\fset}_{kr}-\rankspec_{kr})^2+\frac{1}{N(k,r)-1}\\
		&\hspace{1em}\leq
		\frac{6}{5}\bra{\Var[\overline{\set}_{kr}]+\frac{1}{N(k,r)}}\\
		&\hspace{1em}\leq \frac{6}{5}(C_1+1)\frac{1}{N(k,r)}.
	\end{split}
\end{equation}
The first inequality above is established using \cref{equ::xy3} and the fact that $\Ex_{\fset_{g_ik}}[(a-\rankspec_{g_ik})^2]\leq 1$. For any fixed $r$ and $k$, the number of $h,l,s,t$ tuples satisfying Case~2.5 does not exceed $n_{k} n_{r}^{2}$. Therefore,
\begin{equation}
	\begin{split}
		\left\vert \sum_{r\in\Nset{K}}\frac{1}{n_r}\sum_{\substack{h,l\in\Nset{n};s,t\in \grp _{r}\\ h,l,s,t \text{ satisfy Case~2.5}}} \Ex[\diff_{ih}\diff_{il}\diff_{hs}\diff_{lt}] \right\vert&\leq\sum_{r\in\Nset{K}-\{g_i\}} \frac{1}{n_{r}}\sum_{k\in\Nset{K}}\frac{6}{5}(C_1+1)\frac{n_kn_r^2}{N(k,r)}\\
		&\leq \frac{16}{5}(C_1+1)(K-1)K.
	\end{split}
\end{equation}

\noindent\textbf{Case~3.1:} $h\neq l$ and $s=t=i$. The total number of $h,l,s,t$ tuples satisfying Case~3.1 does not exceed $n^2$. Therefore, applying the simple bound $\vert\Ex[\diff_{ih}\diff_{il}\diff_{hs}\diff_{lt}]\vert\leq 1$ yields
\begin{equation}
	\left\vert \sum_{r\in\Nset{K}}\frac{1}{n_r}\sum_{\substack{h,l\in\Nset{n};s,t\in \grp _{r}\\ h,l,s,t \text{ satisfy Case~3.1}}} \Ex[\diff_{ih}\diff_{il}\diff_{hs}\diff_{lt}] \right\vert \leq \frac{n^2}{n_{g_i}}.
\end{equation}

\noindent\textbf{Case~3.2:} $h\neq l$, $s\neq t$, $i\in\{s,t\}$, $(s,t)\neq (l,h)$ and $g_h=g_l$. Denote $g_h$ by $k$. When $s\neq t=i$, we have $\diff_{ih}\diff_{il}\diff_{hs}\diff_{lt}=\diff_{ih}\diff_{il}^2\diff_{hs}$. Further, $\diff_{ih}$,$\diff_{il}$,$\diff_{hs}$ are all entries of the block $\Diff_{\grp_{g_i}\grp_{k}}$. Thus,
\begin{equation}
	\begin{split}
		&\ABS{\Ex\sqbra{\diff_{ih}\diff_{il}\diff_{hs}\diff_{lt}}}\\
		&\hspace{1em}= \Bigg\lvert\sum_{\fset_{g_ik}\subset \crbra{\frac{i}{N+1} : i\in\Nset{N}}}\prob\sqbra{\set_{g_ik}=\fset_{g_ik}}\\
		&\hspace{4em}\times \Ex_{\fset_{g_ik}}\sqbra{(a-\rankspec_{g_ik})^2(b-\rankspec_{g_ik})(c-\rankspec_{g_ik})} \Bigg\rvert\\
		&\hspace{1em}\leq \frac{9}{5}\bra{C_1+\frac{10}{3}}\frac{1}{N(g_i,k)}.
	\end{split}
\end{equation}
The inequality above is established in the same way as \cref{inequ::xy6}. Moreover, this inequality is also true for the situation where $t\neq s=i$. For any fixed $k$, the number of $h,l,s,t$ tuples satisfying Case~3.2 does not exceed $2n_k^2n_{g_i}$. Thus,
\begin{equation}
	\begin{split}
		\left\vert \sum_{r\in\Nset{K}}\frac{1}{n_r}\sum_{\substack{h,l\in\Nset{n};s,t\in \grp _{r}\\ h,l,s,t \text{ satisfy Case~3.2}}} \Ex[\diff_{ih}\diff_{il}\diff_{hs}\diff_{lt}] \right\vert
		&\leq\frac{1}{n_{g_i}}\sum_{k\in\Nset{K}} \frac{18}{5}\bra{C_1+\frac{10}{3}}\frac{n_k^2n_{g_i}}{N(g_i,k)}\\
		&\leq \bra{\frac{48}{5}C_1+32}\frac{n}{n_{g_i}}.
	\end{split}
\end{equation}

\noindent\textbf{Case~3.3:} $h\neq l$, $s\neq t$, $i\in\{s,t\}$, $(s,t)\neq (l,h)$ and $g_h\neq g_l$. Denote $g_h$ and $g_j$ by $k$ and $k'$, respectively. In the situation when $s\neq t=i$, we have $\diff_{ih}\diff_{il}\diff_{hs}\diff_{lt}=\diff_{ih}\diff_{il}^2\diff_{hs}$, where $\diff_{ih},\diff_{hs}$ are both entries of the block $\Diff_{\grp_{g_i}\grp_{k}}$, and $\diff_{il}$ is an entry of the block $\Diff_{\grp_{g_i}\grp_{k'}}$. Thus,
\begin{equation}
	\begin{split}
		&\ABS{\Ex\sqbra{\diff_{ih}\diff_{il}\diff_{hs}\diff_{lt}}}\\
		&\hspace{1em}= \Bigg\lvert\sum_{\fset_{g_ik'},\fset_{g_ik}\subset \crbra{\frac{i}{N+1} : i\in\Nset{N}}}\prob\sqbra{\set_{g_ik'}=\fset_{g_ik'},\set_{g_ik}=\fset_{g_ik}}\\
		&\hspace{4em}\times \Ex_{\fset_{g_ik'}}\sqbra{(a-\rankspec_{g_ik'})^2}\Ex_{\fset_{g_ik}}\sqbra{(a-\rankspec_{g_ik})(b-\rankspec_{g_ik})} \Bigg\rvert\\
		&\hspace{1em}\leq \frac{6}{5}(C_1+1)\frac{1}{N(g_i,k)}.
	\end{split}
\end{equation}
The inequality above is established in the same way as \cref{inequ::xy7}. In the situation that $t\neq s=i$, the same method yields
\begin{equation}
	\ABS{\Ex\sqbra{\diff_{ih}\diff_{il}\diff_{hs}\diff_{lt}}}\leq \frac{6}{5}(C_1+1)\frac{1}{N(g_i,k')}.
\end{equation}
For any fixed $k$ and $k'$, the number of $h,l,s,t$ tuples satisfying Case~3.3 with $s\neq t=i$ ($t\neq s=i$) does not exceed $n_kn_{k'}n_{g_i}$. Therefore,
\begin{equation}
	\begin{split}
		&\left\vert \sum_{r\in\Nset{K}}\frac{1}{n_r}\sum_{\substack{h,l\in\Nset{n};s,t\in \grp _{r}\\ h,l,s,t \text{ satisfy Case~3.3}}} \Ex[\diff_{ih}\diff_{il}\diff_{hs}\diff_{lt}] \right\vert\\
		&\hspace{5em}
		\leq
		\frac{1}{n_{g_i}}\sum_{k,k'\in\Nset{K}} \frac{6}{5}(C_1+1)n_kn_{k'}n_{g_i}\sqbra{\frac{1}{N(g_i,k)}+\frac{1}{N(g_i,k')}}\\
		&\hspace{5em}
		\leq
		\frac{16}{5}\bra{C_1+1}\frac{K n}{n_{g_i}}.
	\end{split}
\end{equation}

\noindent\textbf{Case~4: $(s,t)=(l,h)$.} For any fixed $r$, the number of $h,l,s,t$ tuples satisfying Case~4 is $n_r^2$. Thus, since $\ABS{\Ex\sqbra{\diff_{ih}\diff_{il}\diff_{hs}\diff_{lt}}}\leq 1$, we have
\begin{equation}
	\left\vert \sum_{r\in\Nset{K}}\frac{1}{n_r}\sum_{\substack{h,l\in\Nset{n};s,t\in \grp _{r}\\ h,l,s,t \text{ satisfy Case~4}}} \Ex[\diff_{ih}\diff_{il}\diff_{hs}\diff_{lt}] \right\vert\leq \sum_{r\in\Nset{K}}\frac{n_r^2}{n_r}\leq n.
\end{equation}

We have finished analyzing all possible cases for the $h,l,s,t$ tuple, while restricting to the situation $i\neq h$, $i\neq l$, $h\neq s$, and $l\neq t$. Combining the above results yields
\begin{equation}
	\begin{split}
		&\left\vert \sum_{r\in\Nset{K}}\frac{1}{n_r}\sum_{h,l\in\Nset{n};s,t\in \grp _{r}} \Ex[\diff_{ih}\diff_{il}\diff_{hs}\diff_{lt}] \right\vert\\
		&\hspace{2em}\leq
		\left(\frac{22C_1+27}{5}\right)Kn + \left(\frac{44C_1+403}{5}\right)n + \bra{\frac{24}{5}C_1+16}K^2\\
		&\hspace{4em}+
		\left(C_1\sqrt{\frac{8}{3}}\right)\sqrt{\frac{K^{3}n^{3}}{n_{gi}}} + \frac{n^2}{n_{g_i}} + \left(\frac{16}{5}(C_1+1)\right)\frac{Kn}{n_{g_i}} + \bra{\frac{48}{5}C_1+33}\frac{n}{n_{g_i}}\\
		&\hspace{2em}\leq
		C\max\left\{\frac{n^2}{n_{g_i}},\sqrt{\frac{K^{3} n^{3}}{n_{g_i}}} \right\},
	\end{split}	
\end{equation}
where $C > 0$ is a constant. This concludes the proof of \cref{lem:ho_cctrtn}.
\end{proof}

\vskip 0.2in
\bibliography{jrc_RSCR_biblio}

\end{document}